\def\PaperMode{arxiv}
\ifdefined\PaperMode
\else
    \def\PaperMode{dev}
\fi

\newif\ifPaperIncludeBody
\newif\ifPaperIncludeAppendix
\newif\ifPaperAppendixOnly
\newif\ifPaperIncludeBiographies
\newif\ifPaperBiographyPlaceholders
\newif\ifPaperJournalHeader
\newif\ifPaperUseExternalReferences
\newif\ifPaperWriteAppendixCounters

\def\PaperModeDev{dev}
\def\PaperModeArxiv{arxiv}
\def\PaperModeTPAMIReviewMain{tpami-review-main}
\def\PaperModeTPAMIReviewAppendix{tpami-review-appendix}
\def\PaperModeTPAMIFinalMain{tpami-final-main}
\def\PaperModeTPAMIFinalAppendix{tpami-final-appendix}

\ifx\PaperMode\PaperModeDev
    \PaperIncludeBodytrue
    \PaperIncludeAppendixtrue
    \PaperJournalHeadertrue
\else\ifx\PaperMode\PaperModeArxiv
    \PaperIncludeBodytrue
    \PaperIncludeAppendixtrue
\else\ifx\PaperMode\PaperModeTPAMIReviewMain
    \PaperIncludeBodytrue
    \PaperJournalHeadertrue
    \PaperUseExternalReferencestrue
    \PaperWriteAppendixCounterstrue
\else\ifx\PaperMode\PaperModeTPAMIReviewAppendix
    \PaperIncludeAppendixtrue
    \PaperAppendixOnlytrue
    \PaperUseExternalReferencestrue
\else\ifx\PaperMode\PaperModeTPAMIFinalMain
    \PaperIncludeBodytrue
    \PaperIncludeBiographiestrue
    \PaperBiographyPlaceholderstrue
    \PaperJournalHeadertrue
    \PaperUseExternalReferencestrue
    \PaperWriteAppendixCounterstrue
\else\ifx\PaperMode\PaperModeTPAMIFinalAppendix
    \PaperIncludeAppendixtrue
    \PaperAppendixOnlytrue
    \PaperUseExternalReferencestrue
\else
    \errmessage{Unknown PaperMode `\PaperMode'}
\fi\fi\fi\fi\fi\fi

\documentclass[10pt,journal,letterpaper]{IEEEtran}

\usepackage{amsmath,amsfonts,amssymb,amsthm}
\usepackage{algorithmic}
\usepackage{algorithm}
\usepackage{array}
\usepackage{textcomp}
\usepackage{url}
\usepackage{verbatim}
\usepackage{graphicx}
\usepackage{cite}
\usepackage{booktabs}
\usepackage{multirow}
\usepackage{enumitem}
\usepackage{xspace}
\usepackage{xcolor}
\usepackage[hidelinks]{hyperref}
\ifPaperUseExternalReferences
    \usepackage{xr}
\fi
\usepackage{bm}
\usepackage{tikz}
\usetikzlibrary{arrows.meta,calc,positioning,shapes.geometric}
\newcommand{\ours}{\texttt{HELLO}\xspace}
\newcommand{\halo}{\texttt{HALO}\xspace}
\newcommand{\hiref}{\texttt{HiRef}\xspace}
\newcommand{\sinkhorn}{\texttt{Sinkhorn}\xspace}
\newcommand{\ipot}{\texttt{IPOT}\xspace}
\newcommand{\minibatch}{\texttt{MB-OT}\xspace}

\newcommand{\emd}{\texttt{EMD}\xspace}
\newcommand{\mdot}{\texttt{MDOT-TNT}\xspace}
\newcommand{\lotgw}{\texttt{LOT-GW}\xspace}
\newcommand{\oursgw}{\texttt{HELLO-GW}\xspace}
\newcommand{\sinkhorngw}{\texttt{Sinkhorn-GW}\xspace}
\newcommand{\oursuot}{\texttt{HELLO-UOT}\xspace}
\newcommand{\sinkhornuot}{\texttt{Sinkhorn-UOT}\xspace}
\newcommand{\ourssdot}{\texttt{HELLO-SDOT}\xspace}
\newcommand{\so}{\texttt{Adagrad-SDOT}\xspace}

\newcommand{\cupdlpx}{\texttt{cuPDLPx}\xspace}
\newcommand{\hprlp}{\texttt{HPRLP}\xspace}

\newcommand{\batchcg}{\texttt{BatchCG}\xspace}
\newcommand{\update}{\texttt{UpdateSupp}\xspace}
\newcommand{\solve}{\texttt{SolveLP}\xspace}
\newcommand{\hierarchy}{\texttt{BuildHier}\xspace}
\newcommand{\warmstart}{\texttt{Init}\xspace}
\newcommand{\nw}{\texttt{NW}\xspace}
\newcommand{\reweight}{\operatorname{Reweight}}
\newcommand{\supp}{\operatorname{supp}}

\newcommand{\obj}{\ensuremath{\mathrm{Obj}}\xspace}
\newcommand{\objerr}{\ensuremath{\mathrm{Obj\ Err}}\xspace}
\newcommand{\pfeas}{\ensuremath{\mathrm{pFeas}}\xspace}
\newcommand{\dfeas}{\ensuremath{\mathrm{dFeas}}\xspace}
\newcommand{\gap}{\ensuremath{\mathrm{Gap}}\xspace}
\newcommand{\kkt}{\ensuremath{\mathrm{KKT}}\xspace}
\newcommand{\recall}{\ensuremath{\mathrm{Recall}}\xspace}
\newcommand{\rtime}{\ensuremath{\mathrm{Time}}\xspace}
\newcommand{\timesec}{\ensuremath{\mathrm{Time\ (s)}}\xspace}
\newcommand{\timehr}{\ensuremath{\mathrm{Time\ (h)}}\xspace}
\newcommand{\mem}{\ensuremath{\mathrm{Mem}}\xspace}
\newcommand{\memgb}{\ensuremath{\mathrm{Mem\ (GiB)}}\xspace}
\newcommand{\iter}{\ensuremath{\mathrm{Iter}}\xspace}
\newcommand{\suppmetric}{\ensuremath{\mathrm{Supp}}\xspace}
\newcommand{\conv}{\ensuremath{\mathrm{Conv}}\xspace}
\newcommand{\fid}{\ensuremath{\mathrm{FID}}\xspace}
\newcommand{\fd}{\ensuremath{\mathrm{FD}}\xspace}
\newcommand{\chisq}{\ensuremath{\chi^2}\xspace}
\newcommand{\OOM}{\ensuremath{\mathrm{OOM}}\xspace}

\theoremstyle{plain}
\newtheorem{theorem}{Theorem}[section]
\newtheorem{proposition}[theorem]{Proposition}
\newtheorem{lemma}[theorem]{Lemma}
\newtheorem{corollary}[theorem]{Corollary}
\theoremstyle{definition}
\newtheorem{definition}[theorem]{Definition}
\newtheorem{assumption}[theorem]{Assumption}
\theoremstyle{remark}
\newtheorem{remark}[theorem]{Remark}

\renewcommand{\algorithmiccomment}[1]{\hfill $\triangleright$ #1}

\ifPaperUseExternalReferences
    \ifdefined\PaperExternalLabelsFile
        \input{\PaperExternalLabelsFile}
    \else
    \fi
\fi

\makeatletter
\newcommand{\PaperExternalEqref}[1]{\textup{\tagform@{\ref*{#1}}}}
\makeatother

\newwrite\PaperCounterFile
\newcommand{\PaperWriteCounters}{    \immediate\openout\PaperCounterFile=\jobname-counters.tex
    \immediate\write\PaperCounterFile{\string\setcounter{equation}{\number\value{equation}}}
    \immediate\write\PaperCounterFile{\string\setcounter{figure}{\number\value{figure}}}
    \immediate\write\PaperCounterFile{\string\setcounter{table}{\number\value{table}}}
    \immediate\write\PaperCounterFile{\string\setcounter{algorithm}{\number\value{algorithm}}}
    \immediate\closeout\PaperCounterFile
}

\begin{document}

\ifPaperIncludeBody
    \ifPaperJournalHeader
        \markboth{IEEE Transactions on Pattern Analysis and Machine Intelligence}        {Xia \MakeLowercase{\textit{et al.}}: Dual-guided Hierarchical Edge Localization for Large-scale Optimal Transport Across Dimensions}
    \fi
    \title{Dual-guided Hierarchical Edge Localization for Large-scale Optimal Transport Across Dimensions} 

\author{Wenzhou Xia, Qiaoqiao Ding, Jingwei Liang, and Xiaoqun Zhang\thanks{Wenzhou Xia, Jingwei Liang, and Xiaoqun Zhang are with the School of Mathematical Sciences, Shanghai Jiao Tong University, Shanghai 200240, China.}\thanks{Qiaoqiao Ding, Jingwei Liang, and Xiaoqun Zhang are with the Institute of Natural Sciences, Shanghai Jiao Tong University, Shanghai 200240, China.}\thanks{Xiaoqun Zhang is the corresponding author (e-mail: xqzhang@sjtu.edu.cn).}\thanks{The source code is available at \url{https://github.com/WenzhouXia/hello-ot}.}}

\maketitle

\begin{abstract}
Optimal transport (OT) compares distributions and aligns datasets in machine learning, yet unregularized discrete OT requires a linear program with quadratically many transport variables.
We propose \ours, a hierarchical solver that casts large-scale discrete OT as edge localization and uses dual potentials to guide both coarse-to-fine initialization and within-level refinement.
Initialization propagates coarse dual potentials across a recursive subsampling hierarchy to assign candidate edges.
Refinement then iteratively inserts the largest dual violators in each row and column until the relative KKT residual meets a prescribed tolerance, while budgeted pruning ensures linear memory complexity.
For exact-arithmetic refinement, we prove finite termination at a global optimum under a symbolic lexicographic rule.
At the million-point scale, \ours attains lower transport objectives with order-of-magnitude runtime improvements over strong baselines across feature dimensions from single digits to thousands. 
It further scales to 1.28 million samples per marginal in 8192 dimensions on a single H100, using 41.6 GiB peak GPU memory while satisfying a full relative KKT residual below \(10^{-6}\).
Beyond standard discrete OT, the framework supports general pairwise costs and serves as a scalable balanced-OT oracle for semi-discrete OT, Gromov--Wasserstein, unbalanced OT, and OT-based Flow Matching.
\end{abstract}

\begin{IEEEkeywords}
Optimal transport, hierarchical solvers, large-scale optimization, GPU acceleration.
\end{IEEEkeywords}

\section{Introduction}
\IEEEPARstart{O}{ptimal} transport (OT) provides a principled framework for comparing distributions and aligning datasets in machine learning~\cite{peyre2025optimal}.
Its applications include domain adaptation~\cite{courty2016optimal}, computational biology~\cite{schiebinger2019optimal,klein2025mapping}, and generative modeling~\cite{tong2023improving,mousavi2026flow}.
Modern instances can involve millions of samples and high-dimensional representations, posing substantial computational and memory challenges for exact OT solvers.

In this paper, we mainly consider discrete optimal transport between two finite sample sets, $\mathcal{S}=\{s_i\}_{i=1}^m\subset\mathbb{R}^d$ and $\mathcal{D}=\{d_j\}_{j=1}^n\subset\mathbb{R}^d$.
Let $\mathbf{a}\in\operatorname{int}(\Delta_m)$ and $\mathbf{b}\in\operatorname{int}(\Delta_n)$ be their associated marginal weights, where $\Delta_k$ denotes the probability simplex in $\mathbb{R}^k$.
We refer to $(m,n)$ as the problem \textit{scale} and $d$ as the feature \textit{dimension}.
Given a pairwise cost function $c$, let $\mathbf{C}:=\bigl(c(s_i,d_j)\bigr)_{i\in[m],j\in[n]}$ be the cost matrix, whose $(i,j)$-th entry is the cost of transporting one unit of mass from $s_i$ to $d_j$.
A transport plan is represented by a nonnegative matrix $\mathbf{X}\in\mathbb{R}_+^{m\times n}$, where $X_{ij}$ is the mass transported from $s_i$ to $d_j$.
The mass-conservation constraints define the transport polytope of feasible plans,
\[
U(\mathbf{a},\mathbf{b})
=
\left\{
\mathbf{X}\in\mathbb{R}_+^{m\times n}:
\mathbf{X}\mathbf{1}_n=\mathbf{a},\
\mathbf{X}^{\top}\mathbf{1}_m=\mathbf{b}
\right\}.
\]
The Kantorovich formulation of OT finds a feasible transport plan of minimum total cost:
\begin{equation}\label{eq:discrete_ot}
\min_{\mathbf{X}\in U(\mathbf{a},\mathbf{b})}
\sum_{i=1}^m\sum_{j=1}^n C_{ij}X_{ij}.
\end{equation}
A widely used choice in machine learning is the squared Euclidean cost $c(s,d)=\lVert s-d\rVert_2^2$, for which the optimal value is the squared $2$-Wasserstein distance between the corresponding empirical measures.

The discrete OT problem~\eqref{eq:discrete_ot} is a linear program with one nonnegative transport variable for each of the $mn$ source--target pairs, coupled by $m+n$ marginal constraints.
For comparably sized marginals, its dense representation has quadratic memory complexity, while classical network-simplex and interior-point approaches have cubic computational complexity up to logarithmic factors~\cite{pele2009fast}.
At million-sample scales, the formulation contains trillions of transport variables, making direct exact solution prohibitive in both memory and computation.

\paragraph{Related work}
Among scalable OT approaches, three families are most relevant to our setting: entropic regularization, low-rank OT, and sparse active-support methods.

Entropic regularization yields a smooth OT dual objective and enables the highly parallelizable Sinkhorn algorithm with nearly quadratic dependence on the number of samples~\cite{cuturi2013sinkhorn,dvurechensky2018computational,schmitzer2019stabilized}.
However, the entropic term introduces regularization bias, so the resulting solution generally differs from the unregularized OT solution.
Moreover, the resulting primal coupling is generally dense, so materializing it for downstream use requires $\Theta(mn)$ memory.

Low-rank OT takes a different route by restricting the coupling to have low nonnegative rank~\cite{scetbon2021low,halmos2024low}.
At a fixed rank, the factorized representation makes both per-iteration computation and memory linear in the number of samples, at the cost of a rank-constrained approximation to unconstrained OT.
\hiref~\cite{halmos2025hierarchical} goes further by progressively refining low-rank couplings toward a bijective Monge map.
However, its exact-recovery guarantee is limited to equally sized, uniformly weighted assignment problems and depends on globally solving nonconvex low-rank subproblems.

By contrast, sparse active-support methods target the unregularized discrete OT problem.
The transport polytope admits an optimal basic solution with at most $m+n-1$ positive entries, motivating a sequence of smaller LP subproblems over sparse active supports rather than all $mn$ candidate edges~\cite{bertsimas1997introduction}.
\textit{Hierarchical methods} initialize finer-level active supports by propagating coarse primal supports through grid subdivision, spatial partitioning, or adaptive metric trees~\cite{schmitzer2016sparse,gerber2017multiscale,xia2026memory}.
For within-level refinement, however, missing edges are located through geometric rules such as neighborhood expansion and shielding, or through geometry-based branch-and-bound pricing~\cite{gerber2017multiscale}.
Related \textit{column-generation-type methods} use dual potentials when updating active support~\cite{lubbecke2005selected,zanetti2023interior,friesecke2022genetic}.
Yet, Operating at a single scale, they initialize active supports heuristically with low-cost or random candidates.
Although these updates do not inherently require geometry-specific search~\cite{friesecke2025convergence}, practical implementations still exploit locality through low-cost filtering~\cite{zanetti2023interior} or nearest-neighbor mutations~\cite{friesecke2022genetic} to accelerate refinement.
As the feature dimension grows, these sparse active-support methods face a common bottleneck: they either incur substantially higher computational and memory costs to maintain effective updates, or narrow the search and risk stalling refinement.
This leaves open how a common signal can guide both initialization and refinement across dimensions.

\paragraph{Our approach}
We propose \ours, a dual-guided \textbf{H}ierarchical \textbf{E}dge \textbf{L}ocalization for \textbf{L}arge-scale \textbf{O}ptimal transport.
Its central idea is to use dual scores as a common signal for both coarse-to-fine initialization and within-level refinement.
Across a recursive subsampling hierarchy whose adjacent levels share one marginal, coarse dual potentials are propagated to finer levels, where the resulting dual scores rank edges for the initial active support.
Within each level, restricted OT solves alternate with active-support updates: dual-violation detection adds high-scoring violating edges selected separately for each source and target point, while budgeted pruning maintains an $\mathcal{O}(m+n)$ active support.
The procedure terminates once the full-space relative KKT residual falls below a prescribed tolerance.
Because both initialization and refinement rely only on the prescribed pairwise costs and dual potentials, the same framework applies across dimensions and general pairwise cost functions.

Our main contributions are summarized as follows:
\begin{itemize}
    \item \textit{Dual-guided edge localization:} We introduce a recursive hierarchy tailored to cross-level dual propagation and a common dual score for edge localization via both coarse-to-fine initialization and within-level refinement.

    \item \textit{Full-space KKT verification:} The implemented solver returns a primal--dual solution with full relative KKT residual below a prescribed tolerance;
    for exact-arithmetic refinement, we prove finite termination at a global optimum under a symbolic lexicographic rule.

    \item \textit{Scalability:} Budgeted pruning maintains an $\mathcal{O}(m+n)$ active support and ensures $\mathcal{O}(m+n)$ peak GPU memory for fixed feature dimension.
    Together with GPU-parallel streamed score evaluation and GPU-based restricted LP solves, this memory efficiency enables efficient computation at the million-point scale on a single GPU.
    Experiments further validate this scalability by demonstrating substantial runtime and memory improvements over strong scalable baselines.
    
    \item \textit{Generality:} We demonstrate the framework across dimensions ranging from single digits to thousands and formulate it for general pairwise cost functions.
    It further serves as a scalable balanced-OT oracle for semi-discrete OT, Gromov--Wasserstein, unbalanced OT, and OT-based Flow Matching.
\end{itemize}

\section{\ours: Dual-guided Hierarchical Edge Localization for Large-scale OT}\label{sec:our_algo}
We develop \ours as a dual-guided hierarchical solver that localizes the supporting edges of an optimal coupling through sparse restricted OT problems.
Its central design is to use dual scores as a common signal for constructing coarse-to-fine initial supports and refining them toward full-space KKT satisfaction.

\subsection{The Overall Framework}
\label{sec:framework}
We denote the full discrete OT problem by $\mathcal P=(\mathcal S,\mathcal D,\mathbf a,\mathbf b,c)$, where $\mathcal S$ and $\mathcal D$ are the source and target supports, $\mathbf a$ and $\mathbf b$ are their marginal weights, and $c$ is the prescribed pairwise cost function.
The full OT problem and its dual can be written in the following vectorized form:
\begin{equation}\label{eq:LP_form_OT}
    \begin{aligned}
        \min_{\mathbf{x}\in\mathbb{R}_+^{mn}}
        \ \langle\mathbf{c},\mathbf{x}\rangle
        \quad
        &\text{s.t.}\quad
        \mathbf{A}\mathbf{x}=\mathbf{q},\\
        \max_{\mathbf{f},\mathbf{g}}
        \ \langle\mathbf{a},\mathbf{f}\rangle
        +\langle\mathbf{b},\mathbf{g}\rangle
        \quad
        &\text{s.t.}\quad
        f_i+g_j\leq c_{ij}.
    \end{aligned}
\end{equation}
Here, $\mathbf f\in\mathbb R^m$ and $\mathbf g\in\mathbb R^n$ are the dual variables, conventionally called dual potentials in OT.
We use $\mathbf{x}=\operatorname{vec}(\mathbf{X})$, $\mathbf{c}=\operatorname{vec}(\mathbf{C})$, $\mathbf{q}=[\mathbf{a};\mathbf{b}]$, and $x_{ij}:=X_{ij}$ and $c_{ij}:=C_{ij}$ index the vectorized variables and costs by source--target pairs.
The mass-conservation matrix is $\mathbf{A}=[\mathbf{1}_n^\top\otimes\mathbf{I}_m;\mathbf{I}_n\otimes\mathbf{1}_m^\top]$.

The full formulation contains $mn$ transport variables, yet standard linear programming theory guarantees a basic optimal solution supported on at most $m+n-1$ edges~\cite{bertsimas1997introduction}.
Large-scale OT therefore reduces to an \textit{edge localization} problem~\cite{lubbecke2005selected,schmitzer2016sparse,xia2026memory}, namely finding the $\mathcal{O}(m+n)$ edges on which an optimal solution is supported.
We solve this localization problem by maintaining a sparse set of candidate edges called an \textit{active support} $\mathcal{N}\subseteq[m]\times[n]$, over which we solve the \textit{restricted OT} problem and its dual:
\begin{equation}\label{eq:restricted_ot_combined}
    \begin{aligned}
        \min_{\mathbf{x}\in\mathbb{R}_+^{mn}}
        \ \langle\mathbf{c},\mathbf{x}\rangle
        \quad
        &\text{s.t.}\quad
        \mathbf{A}\mathbf{x}=\mathbf{q},\;
        \operatorname{supp}(\mathbf{x})\subseteq\mathcal{N},\\
        \max_{\mathbf{f},\mathbf{g}}
        \ \langle\mathbf{a},\mathbf{f}\rangle
        +\langle\mathbf{b},\mathbf{g}\rangle
        \quad
        &\text{s.t.}\quad
        f_i+g_j\leq c_{ij},\;
        \forall(i,j)\in\mathcal{N},
    \end{aligned}
\end{equation}
where $\operatorname{supp}(\mathbf{x}):=\{(i,j)\in[m]\times[n]:x_{ij}>0\}$.

The connection between the restricted and full OT problems is formalized by the following proposition.
\begin{proposition}[Primal recovery and dual certification]
\label{prop:restricted_recovery}
Let $\mathcal{N}\subseteq[m]\times[n]$ be an active support, and let $(\hat{\mathbf{x}},\hat{\mathbf{f}},\hat{\mathbf{g}})$ be a primal--dual optimal solution of the restricted OT problem on $\mathcal{N}$.
\begin{enumerate}
    \item If the full OT problem admits an optimal solution $\mathbf{x}^{\star}$ satisfying $\operatorname{supp}(\mathbf{x}^{\star})\subseteq\mathcal{N}$, then $\hat{\mathbf{x}}$ is globally optimal for the full OT problem.
    \item If $\hat{f}_i+\hat{g}_j\leq c_{ij}$ for all $(i,j)\in([m]\times[n])\setminus\mathcal{N}$, then $(\hat{\mathbf{x}},\hat{\mathbf{f}},\hat{\mathbf{g}})$ is a globally optimal primal--dual solution of the full OT problem.
\end{enumerate}
\end{proposition}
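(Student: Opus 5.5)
Both parts follow from comparing feasible sets and applying weak duality for the LP pair \eqref{eq:LP_form_OT}. We first record the basic observation. Every $\mathbf{x}$ feasible for the restricted primal in \eqref{eq:restricted_ot_combined} is feasible for the full primal, since it is the same problem with the extra constraint $\operatorname{supp}(\mathbf{x})\subseteq\mathcal{N}$. Hence the restricted optimal value $\hat v:=\langle\mathbf c,\hat{\mathbf x}\rangle$ is at least the full optimal value $v^\star$. The full primal is feasible (e.g.\ $\mathbf a\mathbf b^\top$) and bounded below by weak duality, so $v^\star$ is finite and attained. Existence of $(\hat{\mathbf x},\hat{\mathbf f},\hat{\mathbf g})$ is part of the hypothesis.

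For part~1, the hypothesis gives an optimal $\mathbf x^\star$ with $\operatorname{supp}(\mathbf x^\star)\subseteq\mathcal N$. Such an $\mathbf x^\star$ satisfies $\mathbf A\mathbf x^\star=\mathbf q$, $\mathbf x^\star\ge 0$, and the support constraint, so it is feasible for the restricted problem. Therefore $\hat v\le\langle\mathbf c,\mathbf x^\star\rangle=v^\star$. Combined with $\hat v\ge v^\star$, this yields $\hat v=v^\star$, and since $\hat{\mathbf x}$ is feasible for the full problem, it is globally optimal.

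For part~2, the restricted dual constraints cover $\mathcal N$, and the hypothesis covers the complement, so $(\hat{\mathbf f},\hat{\mathbf g})$ satisfies $\hat f_i+\hat g_j\le c_{ij}$ for all $(i,j)\in[m]\times[n]$. It is therefore feasible for the full dual. Restricted LP strong duality holds because the restricted primal is feasible with an optimal solution, and $\hat{\mathbf x},(\hat{\mathbf f},\hat{\mathbf g})$ are an optimal pair; this gives $\langle\mathbf c,\hat{\mathbf x}\rangle=\langle\mathbf a,\hat{\mathbf f}\rangle+\langle\mathbf b,\hat{\mathbf g}\rangle$. The same equality also follows from complementary slackness, which forces $\hat x_{ij}>0\Rightarrow \hat f_i+\hat g_j=c_{ij}$ on $\mathcal N$; summing against $\hat{\mathbf x}$ and using the marginals gives it directly. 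Now $\hat{\mathbf x}$ is full-primal feasible and $(\hat{\mathbf f},\hat{\mathbf g})$ is full-dual feasible with equal objectives. By weak duality for \eqref{eq:LP_form_OT}, namely $\langle\mathbf c,\mathbf x\rangle\ge\langle\mathbf a,\mathbf f\rangle+\langle\mathbf b,\mathbf g\rangle$ for any feasible pair, both are optimal.

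The only delicate point is the meaning of ``primal--dual optimal solution'' in part~2. We need the zero duality gap to hold for the restricted pair, which is standard LP strong duality. We will state explicitly that the restricted problem is assumed feasible, as implied by the existence of $\hat{\mathbf x}$. Everything else is routine feasibility bookkeeping.
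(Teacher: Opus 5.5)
Your proof is correct and follows essentially the same route as the paper. Part~1 sandwiches the restricted optimal value against the full one using the feasibility of $\mathbf{x}^{\star}$, and part~2 combines full dual feasibility with restricted strong duality and weak duality; your side remarks (finiteness and attainment of $v^\star$, and the complementary-slackness derivation of the zero gap) are valid but not needed.
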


The support-containment condition provides a structural target for active-support initialization, while full dual feasibility provides a verifiable certificate for active-support refinement.
The proof is given in Appendix~\ref{app:optimality_certificate}, where Proposition~\ref{prop:restricted_kkt_completion} formalizes the corresponding $\varepsilon$-KKT certificate.

\begin{remark}[Restricted OT implementation]
After fixing an ordering of $\mathcal{N}$, let $\mathbf{x}_{\mathcal{N}}:=(x_{ij})_{(i,j)\in\mathcal{N}}$ and $\mathbf{c}_{\mathcal{N}}:=(c_{ij})_{(i,j)\in\mathcal{N}}$, and let $\mathbf{A}_{\mathcal{N}}$ be the submatrix of $\mathbf{A}$ formed by the corresponding columns.
The restricted problem in \eqref{eq:restricted_ot_combined} is solved as $\min_{\mathbf{x}_{\mathcal{N}}\ge\mathbf{0}} \langle\mathbf{c}_{\mathcal{N}},\mathbf{x}_{\mathcal{N}}\rangle\ \text{s.t.}\ \mathbf{A}_{\mathcal{N}}\mathbf{x}_{\mathcal{N}}=\mathbf{q}$, never explicitly materializing the full $mn$-variable LP.
In practice, we instantiate $\solve(\mathcal{P},\mathcal{N};\varepsilon)$ via GPU-based first-order LP solvers~\cite{lu2025cupdlpx} to return a primal--dual solution satisfying the restricted KKT conditions to tolerance $\varepsilon$ (Appendix~\ref{app:optimality_certificate}).
\end{remark}

Given dual potentials $(\mathbf f,\mathbf g)$ returned by a restricted solve, we define the \textit{dual score} of each candidate edge as
\begin{equation}\label{eq:dual_score}
\sigma_{ij}:=f_i+g_j-c_{ij},
\quad (i,j)\in[m]\times[n].
\end{equation}
For a restricted solution, a positive dual score is the amount by which the corresponding full dual constraint is violated; we call this value a \textit{dual violation} and the associated edge a \textit{dual violator}.
\ours applies the same dual-score rule in two roles: scores induced by propagated coarse dual potentials rank candidate edges to initialize the active support, whereas scores recomputed after each restricted solve rank dual violators for active-support refinement.

Following this restricted-OT perspective, \ours realizes edge localization through three modules, using dual scores as the common signal for both initialization and refinement:
\begin{itemize}
    \item \textit{Hierarchy.} 
    The $\hierarchy$ operator recursively subsamples the supports to construct a sequence of OT problems $\{\mathcal{P}^{(\ell)}\}_{\ell=0}^{L}$, in which adjacent levels share one side so that the corresponding dual potentials can be inherited directly from coarse to fine.
    \item \textit{Dual-Guided Initialization.} 
    At each finer level, $\warmstart$ propagates the adjacent coarse dual potentials and ranks candidate edges by the resulting finer-level dual scores to construct the initial active support for subsequent refinement.
    For each level $\ell$, let $\mathcal{Z}^{(\ell)}:=(\mathbf{x}^{(\ell)},\mathbf{f}^{(\ell)},\mathbf{g}^{(\ell)})$ denote its current primal--dual solution.
    \item \textit{Dual-Guided Refinement.} 
    Each restricted $\solve$ recomputes the primal--dual solution and thereby refreshes the dual scores, while $\update$ inserts edges with the largest positive dual scores and prunes the active support under a linear-size budget.
    The refinement alternates between these two operators until the full relative KKT residual $\kkt(\mathcal{P}^{(\ell)},\mathcal{Z}^{(\ell)})$ falls below the tolerance $\varepsilon$.
\end{itemize}

\begin{figure*}[!t]
    \centering
    \resizebox{0.95\textwidth}{!}{      \input{main_overview}    }
    \caption{\textbf{Overview of \ours.}
    \textbf{(A) Hierarchy (\hierarchy):} Recursively subsamples the supports to build a sequence of OT problems with shared marginals.
    \textbf{(B) Dual-Guided Initialization (\warmstart):} Propagates coarse dual potentials to score candidate edges and assign the initial active support.
    \textbf{(C) Dual-Guided Refinement:} Alternates between restricted LP solves (\textbf{\solve}) and active-support updates (\textbf{\update}) until the full relative KKT tolerance is satisfied.}
    \label{fig:ours-outline}
\end{figure*}

Algorithm~\ref{alg:main} outlines the full procedure, and Figure~\ref{fig:ours-outline} shows its three main modules and their associated operators.

\begin{algorithm}[!t]
\caption{\ours}
\label{alg:main}
\begin{algorithmic}[1]
\STATE {\bfseries Input:} finest-level OT problem $\mathcal{P}^{(0)}$, sampling ratio $\rho$, coarsest-scale threshold $\tau$, assignment budget $\kappa$, detection factor $\gamma$, budget factor $\beta\geq\gamma+2$, and tolerance $\varepsilon$.
\STATE {\bfseries Output:} finest-level primal--dual solution $\mathcal{Z}^{(0)}$.
\medskip
\STATE $\{\mathcal{P}^{(\ell)}\}_{\ell=0}^{L}\gets\hierarchy(\mathcal{P}^{(0)};\rho,\tau)$
\STATE Solve the coarsest OT problem $\mathcal{P}^{(L)}$ to tolerance $\varepsilon$ to obtain $\mathcal{Z}^{(L)}$.
\FOR{$\ell=L-1,\ldots,0$}
\STATE $\mathcal{B}^{(\ell)}\gets\nw(\mathcal{P}^{(\ell)})$ \STATE $\mathcal{N}^{(\ell)},\mathcal{Z}^{(\ell)}\gets\warmstart(\mathcal{P}^{(\ell)},\mathcal{P}^{(\ell+1)},\mathcal{Z}^{(\ell+1)},\mathcal{B}^{(\ell)};\kappa,\varepsilon)$ \label{line:initial}
\WHILE{$\kkt(\mathcal{P}^{(\ell)},\mathcal{Z}^{(\ell)})>\varepsilon$}
\STATE $\mathcal{N}^{(\ell)}\gets\update(\mathcal{P}^{(\ell)},\mathcal{Z}^{(\ell)},\mathcal{N}^{(\ell)},\mathcal{B}^{(\ell)};\beta,\gamma)$
\STATE $\mathcal{Z}^{(\ell)}\gets\solve(\mathcal{P}^{(\ell)},\mathcal{N}^{(\ell)};\varepsilon)$
\ENDWHILE
\ENDFOR
\medskip
\STATE {\bfseries Return:} $\mathcal{Z}^{(0)}$.
\end{algorithmic}
\end{algorithm}

\subsection{Hierarchy}
\label{sec:hierarchy}
Coarse problems provide estimates of the finer problem's dual potentials at low cost, serving as the basis for the initialization in Section~\ref{sec:hierarchical_warmstart}.
To improve the quality of these estimates, adjacent levels are designed to share one side, so that the shared side's dual potentials can be inherited directly.
Theorem~\ref{thm:dual_guided_warmstart_error} in Appendix~\ref{app:hierarchy_stability} bounds the fine-level semi-dual suboptimality of the inherited potential by the Wasserstein distance between adjacent source measures.
The $\hierarchy$ operator constructs such a hierarchy by recursively subsampling the larger side of the supports: starting from $(I^{(0)},J^{(0)})=([m],[n])$, the adjacent index sets are defined by
\[
(I^{(\ell+1)},J^{(\ell+1)})
=
\begin{cases}
\bigl(\operatorname{Rand}_{\rho}(I^{(\ell)}),J^{(\ell)}\bigr),
& |I^{(\ell)}|\geq |J^{(\ell)}|,\\
\bigl(I^{(\ell)},\operatorname{Rand}_{\rho}(J^{(\ell)})\bigr),
& |I^{(\ell)}|<|J^{(\ell)}|,
\end{cases}
\]
where $\tau\in\mathbb Z_{>0}$ is the coarsest-scale threshold and $\operatorname{Rand}_{\rho}(K)$ uniformly samples $\lceil\rho|K|\rceil$ elements, with the sampling ratio $0<\rho\leq\tau/(\tau+1)$ ensuring a strict reduction for $|K|>\tau$.

At level $\ell$, $\mathcal P^{(\ell)}$ retains the samples indexed by $I^{(\ell)}$ and $J^{(\ell)}$, renormalizes their original marginal weights over the retained samples, and uses the same pairwise cost function $c$.
The recursion continues until both index sets have cardinality at most $\tau$, and the resulting coarsest level is indexed by $L$.
The complete procedure is given in Algorithm~\ref{alg:recursive_random_split} in Appendix~\ref{app:hierarchy_details}.

\subsection{Dual-Guided Initialization} 
\label{sec:hierarchical_warmstart}
Existing hierarchical OT methods commonly construct finer-level initial active supports by propagating coarse primal supports through geometry-based refinement rules~\cite{schmitzer2016sparse,gerber2017multiscale,liu2022multiscale,xia2026memory}.
The \warmstart operator instead propagates the adjacent coarse dual solution and ranks finer-level candidate edges by the resulting dual scores. 

Given the adjacent coarse solution $\mathcal Z^{(\ell+1)}$, the \warmstart operator constructs the initialization for the finer problem $\mathcal P^{(\ell)}$.
It first inherits and completes the coarse dual potentials, then ranks candidate edges by the resulting finer-level dual scores, and finally solves restricted OT on the constructed support.
Algorithm~\ref{alg:warmStart} summarizes this procedure.
\begin{algorithm}[!t]
\caption{\warmstart}\label{alg:warmStart}
\begin{algorithmic}[1]
\STATE {\bfseries Input:} finer-level OT problem $\mathcal{P}^{(\ell)}$ on $I^{(\ell)}\times J^{(\ell)}$, adjacent coarser problem $\mathcal{P}^{(\ell+1)}$, coarse primal--dual solution $\mathcal{Z}^{(\ell+1)}$, feasible basis $\mathcal{B}^{(\ell)}$, assignment budget $\kappa$, and tolerance $\varepsilon$.
\STATE {\bfseries Output:} initial active support $\mathcal{N}^{(\ell)}$ and its restricted primal--dual solution $\mathcal{Z}^{(\ell)}$ computed to tolerance $\varepsilon$.
\medskip
\STATE \textit{The target-inherited case is shown; the other is symmetric.}
\STATE \textbf{1). Dual propagation} 
\STATE $\mathbf{g}^{(\ell)}\gets\mathbf{g}^{(\ell+1)}$ 
\FOR{$i\in I^{(\ell)}$}
    \STATE $f^{(\ell)}_i\gets\min_{j\in J^{(\ell)}}\{c_{ij}-g_j^{(\ell)}\}$ 
\ENDFOR
\medskip
\STATE \textbf{2). Dual assignment}
\STATE $\sigma_{ij}^{(\ell)}\gets f_i^{(\ell)}+g_j^{(\ell)}-c_{ij}$ for $(i,j)\in I^{(\ell)}\times J^{(\ell)}$
\FOR{$i\in I^{(\ell)}$}
    \STATE $\mathcal{J}_i\gets\operatorname{argtop}^{\kappa}_{j\in J^{(\ell)}}\{\sigma_{ij}^{(\ell)}\}$
\ENDFOR
\STATE $\mathcal{A}^{(\ell)}\gets\{(i,j):i\in I^{(\ell)},\,j\in\mathcal{J}_i\}$
\FOR{$j\in J^{(\ell)}$}
    \STATE $\mathcal{I}_j\gets\operatorname{argtop}^{\kappa}_{i\in I^{(\ell)}}\{\sigma_{ij}^{(\ell)}\}$
\ENDFOR
\STATE $\mathcal{A}^{(\ell)}\gets\mathcal{A}^{(\ell)}\cup\{(i,j):j\in J^{(\ell)},\,i\in\mathcal{I}_j\}$
\STATE $\mathcal{N}^{(\ell)}\gets\mathcal{A}^{(\ell)}\cup\mathcal{B}^{(\ell)}$ 

\STATE $\mathcal{Z}^{(\ell)}\gets\solve(\mathcal{P}^{(\ell)},\mathcal{N}^{(\ell)};\varepsilon)$
\medskip
\STATE \textbf{return} $\mathcal{N}^{(\ell)},\mathcal{Z}^{(\ell)}$.
\end{algorithmic}
\end{algorithm}

\subsubsection{Dual Propagation}
\label{sec:dual_inheritance_completion}
Consider an adjacent hierarchy step in which the target side is retained, so that $J^{(\ell+1)}=J^{(\ell)}$.
The coarse and fine problems then share the same target support and marginal weights, and their target-side dual potentials are indexed by the same coordinates.
Because the coarse problem differs only through a randomly subsampled source measure, its optimal target-side dual potential provides a candidate for the finer problem.
Theorem~\ref{thm:dual_guided_warmstart_error} bounds its fine-level semi-dual suboptimality by the Wasserstein distance between the coarse and fine source measures.

We therefore inherit the target-side dual as $\mathbf g^{(\ell)} = \mathbf g^{(\ell+1)}$ and complete the source-side potential through the $c$-transform
\[
f_i^{(\ell)}
=
\min_{j\in J^{(\ell)}} 
\{c_{ij}-g_j^{(\ell)}\},
\quad i\in I^{(\ell)}.
\]
This completion ensures
\[
f_i^{(\ell)}+g_j^{(\ell)}
\leq
c_{ij},
\quad
(i,j)\in I^{(\ell)}\times J^{(\ell)},
\]
so the propagated-and-completed pair is dual feasible for the finer problem $\mathcal P^{(\ell)}$.
The case in which the source side is retained is symmetric: $\mathbf f^{(\ell+1)}$ is inherited and $\mathbf g^{(\ell)}$ is completed through the corresponding $c$-transform.
The resulting dual scores measure how close the finer-level candidate edges are to tightness under this propagated dual pair and provide the ranking used by dual assignment.

\subsubsection{Dual Assignment}
\label{sec:dual_assignment}
\textit{Dual assignment} uses the propagated dual pair to select candidate edges for the finer-level initial active support.
Its construction is motivated by complementary slackness.
If $(\mathbf x^\star,\mathbf f^\star,\mathbf g^\star)$ is a primal--dual optimal solution of the finer problem $\mathcal P^{(\ell)}$, then dual feasibility and complementary slackness imply
\[
\sigma_{ij}^\star
=
f_i^\star+g_j^\star-c_{ij}
\leq
0,
\quad
x_{ij}^\star>0
\ \Longrightarrow\
\sigma_{ij}^\star=0.
\]
Thus, for every $(i,j)\in I^{(\ell)}\times J^{(\ell)}$ carrying positive mass,
\[
x_{ij}^\star>0
\quad\Longrightarrow\quad
j\in\operatorname*{arg\,max}_{k\in J^{(\ell)}}\sigma_{ik}^\star
\quad\text{and}\quad
i\in\operatorname*{arg\,max}_{k\in I^{(\ell)}}\sigma_{kj}^\star.
\]

The propagated dual pair only approximates a finer-level optimum, so dual assignment replaces the exact maximizers with their top-$\kappa$ candidates:
\[
\mathcal J_i
=
\operatorname{argtop}^{\kappa}_{k\in J^{(\ell)}}\{\sigma_{ik}^{(\ell)}\},
\quad
\mathcal I_j
=
\operatorname{argtop}^{\kappa}_{k\in I^{(\ell)}}\{\sigma_{kj}^{(\ell)}\}.
\]
The resulting dual-assigned support is
\[
\mathcal A^{(\ell)}
=
\{(i,j):i\in I^{(\ell)},\ j\in\mathcal J_i\}
\cup
\{(i,j):j\in J^{(\ell)},\ i\in\mathcal I_j\}.
\]
The required scores can be evaluated in parallel on GPUs for general pairwise cost functions, as detailed in Appendix~\ref{app:dual_score_evaluation}.

To guarantee feasibility, we augment the dual-assigned support $\mathcal A^{(\ell)}$ with the northwest-corner feasible basis $\mathcal B^{(\ell)}=\nw(\mathcal P^{(\ell)})$ defined in Appendix~\ref{app:northwest}.
Their union $\mathcal N^{(\ell)}=\mathcal A^{(\ell)}\cup\mathcal B^{(\ell)}$ forms the feasible initial active support for subsequent refinement.

\subsection{Dual-Guided Refinement}
\label{sec:algebra_update}
The hierarchical initializer constructs an active support intended to capture the support of a finer-level optimum.
Because the propagated dual pair only approximates a finer-level optimum, the initial restricted solution may not yet satisfy full dual feasibility.
The refinement therefore alternates between \solve and \update until the prescribed full-problem KKT tolerance is satisfied.
Each update uses dual scores to detect violated constraints and then prunes the active support under a linear-size budget.
Algorithm~\ref{alg:updateActive} summarizes the update operator.
Throughout this subsection, we fix the hierarchy level and omit the superscript $\ell$ for notational simplicity.

\begin{algorithm}[!t]
\caption{\update}
\label{alg:updateActive}
\begin{algorithmic}[1]
\STATE {\bfseries Input:} OT problem $\mathcal{P}$ on $I\times J$, primal--dual solution $\mathcal{Z}=(\mathbf{x},\mathbf{f},\mathbf{g})$, current active support $\mathcal{N}\subseteq I\times J$, feasible basis $\mathcal{B}\subseteq\mathcal{N}$, detection factor $\gamma$, and budget factor $\beta\geq\gamma+2$.
\STATE {\bfseries Output:} updated active support $\mathcal{N}^\prime$
\medskip
\STATE \textbf{1). Dual-Violation Detection}
\STATE Evaluate dual scores $\sigma_{ij}=f_i+g_j-c_{ij}$ for $(i,j)\in I\times J$.
\STATE $\mathcal{V}_\gamma\gets\emptyset$
\FOR{$i \in I$}
    \STATE $\mathcal{J}_i \gets \operatorname{argtop}^{\gamma}_{j \in J}\{ \sigma_{ij} \mid \sigma_{ij} > 0\}$
    \STATE $\mathcal{V}_\gamma\gets \mathcal{V}_\gamma\cup\{ (i,j) : j \in \mathcal{J}_i \}$
\ENDFOR
\FOR{$j \in J$}
    \STATE $\mathcal{I}_j \gets \operatorname{argtop}^{\gamma}_{i \in I}\{ \sigma_{ij} \mid \sigma_{ij} > 0\}$
    \STATE $\mathcal{V}_\gamma\gets\mathcal{V}_\gamma\cup \{ (i,j) : i \in \mathcal{I}_j \}$
\ENDFOR

\medskip
\STATE \textbf{2). Budgeted Pruning}
\STATE $\mathcal{M}\gets
\mathcal{B}\cup\operatorname{supp}(\mathbf{x})\cup\mathcal{V}_\gamma$
\STATE $\mathcal{N}^\prime\gets
\mathcal{M}\cup
\operatorname{argtop}^{\lceil\beta(|I|+|J|)\rceil-|\mathcal{M}|}_{(i,j)\in\mathcal{N}\setminus\mathcal{M}}
\{\sigma_{ij}\}$
\medskip
\STATE \textbf{return} $\mathcal{N}^\prime$
\end{algorithmic}
\end{algorithm}

\subsubsection{Dual-Violation Detection}
Given the current active support $\mathcal N\subseteq I\times J$, let $\mathcal Z=(\mathbf x,\mathbf f,\mathbf g)$ be a primal--dual optimal solution of the corresponding restricted OT problem.
By Proposition~\ref{prop:restricted_recovery}, full dual feasibility is the remaining condition for certifying this restricted optimum as a global optimum of the full OT problem.
The restricted dual problem imposes constraints only on $\mathcal N$, leaving those on $(I\times J)\setminus\mathcal N$ unchecked.
This gap may manifest as positive dual scores, and hence dual violations, among the excluded edges.
Once a detected violator is inserted into the next active support, its violated dual constraint is imposed and hence satisfied by the next restricted optimum.
This observation motivates \update, which uses positive dual scores to select edges for active-support refinement.

We detect the row-wise and column-wise top-$\gamma$ dual violators, where $\gamma$ is the detection factor:
\[
\begin{aligned}
\mathcal J_i
&=
\operatorname{argtop}^{\gamma}_{j\in J}
\{\sigma_{ij}\mid\sigma_{ij}>0\},
\\
\mathcal I_j
&=
\operatorname{argtop}^{\gamma}_{i\in I}
\{\sigma_{ij}\mid\sigma_{ij}>0\}.
\end{aligned}
\]
Here, $\operatorname{argtop}^{k}$ returns all eligible entries when fewer than $k$ exist.
The selected violation set is
\[
\mathcal V_\gamma
=
\{(i,j):i\in I,\ j\in\mathcal J_i\}
\cup
\{(i,j):j\in J,\ i\in\mathcal I_j\}.
\]
The dual scores are evaluated exactly over $I\times J$ using the streamed GPU implementation in Appendix~\ref{app:dual_score_evaluation}.

\subsubsection{Budgeted Pruning}
\label{sec:budgeted_active_support_pruning}
Each update caps the active support at $\lceil\beta(|I|+|J|)\rceil$ edges, where $\beta\geq\gamma+2$.
We first retain the mandatory set
\[
\mathcal M
=
\mathcal B
\cup
\operatorname{supp}(\mathbf x)
\cup
\mathcal V_\gamma.
\]
The basis $\mathcal B$ ensures feasibility, while retaining $\supp(\mathbf x)$ and $\mathcal V_\gamma$ enforces the support-preservation and violation-insertion conditions required by the convergence analysis.
For refinement, \solve returns a primal--dual solution satisfying the prescribed restricted KKT tolerance and sparse enough that $|\mathcal M|\leq\lceil\beta(|I|+|J|)\rceil$; see Appendix~\ref{app:sparse_restricted}.
The remaining slots retain the highest-scoring nonmandatory edges:
\[
\mathcal N^{\prime}
=
\mathcal M
\cup
\operatorname{argtop}^{\lceil\beta(|I|+|J|)\rceil-|\mathcal M|}_{(i,j)\in\mathcal N\setminus\mathcal M}
\{\sigma_{ij}\}.
\]
Thus, every updated active support uses $\mathcal O(|I|+|J|)$ memory.

\subsubsection{Cost-Perturbed Warm Start}
The preceding dual-guided refinement applies directly to general pairwise costs, but some challenging instances, including the $\ell_1$ and $\ell_\infty$ instances evaluated in Section~\ref{sec:generalization}, can require many update--solve iterations.
To accelerate refinement in this regime, we first solve an auxiliary OT problem with perturbed cost $c_{\eta}=c+\eta\Delta$.
Specifically, we use the final auxiliary dual potentials $(\mathbf f_\eta,\mathbf g_\eta)$ to reconstruct a fresh support by nodewise dual assignment under $c_\eta$, thereby discarding edges retained only from earlier auxiliary refinement iterations.
The resulting support initializes restricted solves under the original cost $c$, after which dual-violation detection and budgeted pruning proceed as usual.
Refinement stops when the prescribed KKT tolerance is met under the original cost $c$.
Algorithm~\ref{alg:cost_perturbed_initialization} details the procedure.

\subsection{Convergence Analysis}
\label{sec:convergence_analysis}
We analyze active-support refinement at a fixed hierarchy level under exact arithmetic ($\varepsilon=0$ for both restricted solves and full KKT stopping test).
At iteration $k$, let $\mathcal N^k\subseteq I\times J$ be the active support and let $(\mathbf x^k,\mathbf f^k,\mathbf g^k)$ be an optimal primal--dual solution of the corresponding restricted OT problem.
Define the full positive dual-violation set as $\mathcal V^k:=\{(i,j)\in I\times J:\sigma_{ij}^k>0\}$.
By Proposition~\ref{prop:restricted_recovery}, a restricted optimum with no positive full-space dual violation is globally optimal.

We call an active-support update \textit{admissible} if it preserves the current primal support and inserts at least one positive dual violator whenever any exist.
Support preservation keeps the current solution feasible, so the restricted optimal value cannot increase.
When full dual feasibility fails, violation insertion adds at least one violated constraint to the next active support.
Under its stated budget condition, \update in Algorithm~\ref{alg:updateActive} is admissible for $\gamma\geq1$.

Admissibility alone yields only non-increase because the current transport support may be disconnected.
Appendix~\ref{app:convergence} resolves this ambiguity by treating $\xi$ as a formal infinitesimal and interpreting the exact restricted oracle lexicographically, without instantiating a numerical perturbation.
This symbolic rule makes every symbolically feasible support graph connected; inserting a positive dual violator then yields strict descent, and finiteness of the possible active supports implies finite termination.

\begin{theorem}[Finite termination with an exact symbolic lexicographic oracle]
\label{thm:finite_convergence}
Starting from a symbolically feasible active support, active-support refinement with an exact symbolic restricted oracle and admissible updates terminates after finitely many iterations.
The constant term of the returned symbolic coupling is a global optimum of the original problem.
\end{theorem}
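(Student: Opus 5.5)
We prove this with the classical lexicographic (perturbation) argument for network simplex, lifted to the active-support setting. First, we make the symbolic oracle precise. Replace the marginals by perturbed ones $\mathbf a(\xi)$, $\mathbf b(\xi)$, with $\xi$ a formal positive infinitesimal. A standard choice is $a_i(\xi)=a_i+\xi$ for all $i$ and $b_n(\xi)=b_n+m\xi$, with the other $b_j$ unchanged, so that total mass balances. Couplings are then vectors over the ordered field $\mathbb R(\xi)$ with lexicographic order. The exact oracle returns a basic optimal solution of the perturbed restricted LP on $\mathcal N^k$. Symbolic feasibility of $\mathcal N^k$ means this LP is feasible over $\mathbb R(\xi)$.

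The key structural lemma is that every basic feasible solution of the perturbed problem is nondegenerate. Its support is a spanning tree of the bipartite graph on $I\cup J$, with all $m+n-1$ basic entries strictly positive in the lexicographic order. We would prove this by the usual leaf-peeling computation on a spanning tree. Each tree edge's flow is a signed sum of the node supplies on one side of the cut it induces. With the above perturbation, every cut that separates a nonempty source set from its complement carries a nonzero $\xi$-coefficient; otherwise it reduces to a nonzero real value (a positive marginal). So no basic variable vanishes, and the support graph of $\mathbf x^k$ is connected and spanning.

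Consequently, the restricted dual potentials $(\mathbf f^k,\mathbf g^k)$ are uniquely determined up to the global shift $(\mathbf f+t,\mathbf g-t)$ by tightness on the tree. This removes the ambiguity mentioned before the theorem.

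Next comes strict descent. Suppose $\mathcal V^k\neq\emptyset$. Admissibility keeps $\operatorname{supp}(\mathbf x^k)\subseteq\mathcal N^{k+1}$ and inserts some $(i,j)\in\mathcal V^k$ with $\sigma^k_{ij}>0$. The tree $T=\operatorname{supp}(\mathbf x^k)$ is a feasible basis of the restricted LP on $\mathcal N^{k+1}$, and edge $(i,j)$ has negative reduced cost $c_{ij}-f^k_i-g^k_j<0$. Adding $(i,j)$ creates a unique cycle in $T\cup\{(i,j)\}$. Pushing flow $\theta$ around it changes the objective by $\theta\,(c_{ij}-f_i^k-g_j^k)$, since costs telescope along tree edges where $\sigma=0$. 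By nondegeneracy the admissible step $\theta$ is a strictly positive element of $\mathbb R(\xi)$. Hence the optimal value of the restricted problem on $\mathcal N^{k+1}$ is strictly smaller, in the lexicographic order, than its value on $\mathcal N^k$.

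For termination, note that the symbolic optimal value $v(\mathcal N)$ of the restricted problem is a function of the support alone, and $v(\mathcal N^{k+1})<v(\mathcal N^k)$ strictly. Hence no active support can repeat. There are finitely many subsets of $I\times J$, so the iteration stops after at most $2^{|I||J|}$ steps, and it can only stop when $\mathcal V^k=\emptyset$.

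At that point, Proposition~\ref{prop:restricted_recovery}(2), applied over the ordered field $\mathbb R(\xi)$, shows that $(\mathbf x^k,\mathbf f^k,\mathbf g^k)$ is optimal for the full perturbed problem. The LP duality argument used there is purely algebraic, so it holds verbatim over an ordered field. Note that $\mathbf f^k,\mathbf g^k$ are real, since they are determined by the real costs on the tree.

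Finally, write $\mathbf x^k=\mathbf x_0+\xi\mathbf x_1$; this is exact, because the tree solution is affine in the supplies. Setting $\xi=0$ shows that $\mathbf x_0$ is feasible for the original marginals: the constraints are linear and coefficient matching gives $\mathbf A\mathbf x_0=\mathbf q$, and $\mathbf x_0\geq0$ because lexicographic positivity forces a nonnegative constant term. Moreover $\mathbf x_0$ is complementary with $(\mathbf f^k,\mathbf g^k)$, since $\operatorname{supp}(\mathbf x_0)\subseteq T$. Full dual feasibility together with complementary slackness then certifies that $\mathbf x_0$ is globally optimal.

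The main obstacle is the nondegeneracy lemma. We need a concrete perturbation under which every spanning-tree basic solution is strictly positive in $\mathbb R(\xi)$, so that connectedness, and hence strict descent, holds for every feasible support and not just generically. We would first attempt the cut argument sketched above. If that particular perturbation fails on some cut, we would fall back on the fully generic choice $\xi^{i}$ and $\xi^{m+j}$ with balancing on the last node, for which distinct cuts receive distinct leading monomials.
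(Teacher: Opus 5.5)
Your proof is correct and follows the paper's argument in all essentials: a lexicographic perturbation of the marginals makes every symbolically feasible support connected, and an inserted positive violator closes a negative-cost cycle with the current support, which gives strict symbolic descent. Supports therefore cannot repeat, and the constant term of the final coupling is optimal against the real, fully feasible tight duals.

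The differences are minor. You use the classical uniform perturbation $a_i+\xi$, $b_n+m\xi$, which works here because the real marginals are positive. You also prove nondegeneracy only for basic feasible solutions, so you implicitly require the oracle to return a basic optimum. The paper's distinct-power perturbation instead gives connectedness of the support of every feasible symbolic plan directly, so it needs no basicness assumption. Your lemma in fact implies the same thing, since a feasible plan with disconnected support would have a degenerate basic feasible solution supported inside it.
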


The detailed proof is provided in Appendix~\ref{app:convergence}.
The symbolic rule guarantees connected feasible supports for the strict-descent argument and is never instantiated as a floating-point perturbation.
Algorithm~\ref{alg:main} directly uses the unperturbed marginals, solves restricted problems to tolerance $\varepsilon>0$ (see Remark~\ref{rem:finite_precision_lp} for finite-precision details), and terminates when the verifiable full relative KKT residual does not exceed $\varepsilon$, as established in Proposition~\ref{prop:restricted_kkt_completion}.
\section{Beyond Discrete OT via the \ours Oracle}
\label{sec:ot_oracle}

Many OT variants admit algorithms based on repeated discrete-OT calls.
We illustrate this oracle perspective through three representative variants: semi-discrete OT (SDOT), Gromov--Wasserstein (GW), and unbalanced OT (UOT).
Throughout this section, we focus on instances with squared-Euclidean $\ell_2^2$ cost.
For a discrete instance $\mathcal P=(\mathcal S,\mathcal D,\mathbf a,\mathbf b,c)$, we denote the primal--dual output of \ours by $(\mathbf{x},\mathbf{f},\mathbf{g})=\ours(\mathcal P;\varepsilon)$, which satisfies the full relative KKT tolerance $\varepsilon$ and yields a sparse coupling alongside dual potentials.
We first develop a new empirical dual-potential averaging method for SDOT from independently sampled oracle calls.
In contrast, for GW and UOT, we instantiate \ours as a drop-in discrete oracle within established outer iterative frameworks.
Table~\ref{tab:ot_variant_oracle} summarizes the components varied across calls: sampled source support in SDOT, linearized cost in GW, and reweighted marginals in UOT.

\begin{table}[!t]
    \centering
    \caption{\textbf{Three representative OT-oracle reductions.}}
    \label{tab:ot_variant_oracle}
    \small
    \setlength{\tabcolsep}{0.5pt}
    \begin{tabular}{lcc}
        \toprule
        \textbf{Variant}
        & \textbf{Oracle instance}
        & \textbf{Varied component} \\
        \midrule
        Semi-discrete OT
        & $\mathcal P^{(r)}=(\mathcal S^{(r)},\mathcal D,\mathbf a,\mathbf b,c)$
        & $\mathcal S^{(r)}$ \\
        Gromov--Wasserstein
        & $\mathcal P^{(t)}=(\mathcal S,\mathcal D,\mathbf a,\mathbf b,c^{(t)})$
        & $c^{(t)}$ \\
        Unbalanced OT
        & $\mathcal P^{(t)}=(\mathcal S,\mathcal D,\widehat{\mathbf a}^{(t)},\widehat{\mathbf b}^{(t)},c)$
        & $(\widehat{\mathbf a}^{(t)},\widehat{\mathbf b}^{(t)})$ \\
        \bottomrule
    \end{tabular}
\end{table}

\subsection{Empirical Dual-Potential Averaging for Semi-Discrete OT}
\label{sec:semidiscrete_ot}

Let $\mu$ be a continuous source distribution, and let $\nu = \sum_{j=1}^{n} b_j \delta_{d_j}$ be a discrete target distribution supported on $\mathcal D = \{d_j\}_{j=1}^{n}$.
Semi-discrete OT can be solved via the finite-dimensional target potential~\cite{peyre2019computational}
\begin{equation*}
\mathbf g^\star
\in
\arg\max_{\mathbf g\in\mathbb R^n}
\left\{
\int
\min_{j\in[n]}
\bigl(c(s,d_j)-g_j\bigr)
\,\mathrm d\mu(s)
+
\langle\mathbf b,\mathbf g\rangle
\right\}.
\end{equation*}
Each potential $\mathbf g$ induces a Laguerre partition $(L_j(\mathbf g))_{j=1}^{n}$ (ties are assigned to the smallest index) and the population mass vector $\mathbf p(\mathbf g) := (\mu(L_j(\mathbf g)))_{j=1}^{n}$, with the optimal potential $\mathbf g^\star$ satisfying $\mathbf p(\mathbf g^\star) = \mathbf b$.

When $\mu$ is accessed through samples, existing methods typically estimate $\mathbf g^\star$ along a stochastic optimization trajectory on the population semi-dual~\cite{genevay2016stochastic,mousavi2026flow,kong2025alignflow}.
Crucially, because the target support $\mathcal D$ is fixed, target dual potentials from independently sampled empirical OT instances reside in the same $n$-dimensional coordinate space.
We therefore propose \ourssdot: given $R$ independent source batches $\mathcal S^{(r)}\sim\mu^m$, we compute target dual potentials $\mathbf g^{(r)}$ on $\mathcal D$ via $\ours(\mathcal S^{(r)},\mathcal D,m^{-1}\mathbf 1_m,\mathbf b,c;\varepsilon)$, and define the gauge-aligned averaged estimator
\begin{equation*}
\overline{\mathbf g}_{m,R}
:=
\frac{1}{R}
\sum_{r=1}^{R}
\bigl(\mathbf g^{(r)} - \langle\mathbf b,\mathbf g^{(r)}\rangle\mathbf 1_n\bigr).
\end{equation*}
To the best of our knowledge, this empirical dual-potential-averaging estimator has not previously been studied for semi-discrete OT; complete pseudocode is provided in Algorithm~\ref{alg:semidiscrete_dual_approx} (Appendix~\ref{app:semidiscrete_averaging}).
Sequential processing uses $mR$ source samples with only $\mathcal{O}(m+n)$ peak memory, independent of $R$ for fixed feature dimension.
For $\mathbf b\in\operatorname{int}\Delta_n$, define the population marginal error by
\begin{equation}
\Phi(\mathbf g)
:=
\chi^2\bigl(\mathbf p(\mathbf g)\|\mathbf b\bigr)
=
\sum_{j=1}^{n}
\frac{\bigl(p_j(\mathbf g)-b_j\bigr)^2}{b_j}.
\label{eq:semidiscrete_chi_square}
\end{equation}

\begin{theorem}[Error Scaling of \ourssdot]
\label{thm:informal_avg_dual_scaling}
Assume the regularity conditions in Appendix~\ref{app:semidiscrete_averaging} and exact empirical OT solves ($\varepsilon=0$).
For fixed target sites $\mathcal D$, $n$, target marginal $\mathbf b$, and uniformly over $1\leq R\leq R_{\max}$, as $m\to\infty$,
\[
\mathbb E\Phi(\overline{\mathbf g}_{m,R})
=
\frac{n-1}{mR}
+
o(m^{-1}),
\]
and, under the strengthened moment condition,
\[
\operatorname{Var}\!\left[\Phi(\overline{\mathbf g}_{m,R})\right]
=
\frac{2(n-1)}{m^2R^2}
+
o(m^{-2}).
\]
\end{theorem}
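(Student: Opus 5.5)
The plan is a delta-method argument for the empirical semi-dual maximizer, followed by a quadratic expansion of $\Phi$ around $\mathbf g^\star$.

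\emph{Step 1: linearize a single empirical potential.} Fix the gauge $\langle\mathbf b,\mathbf g\rangle=0$. Write the population semi-dual as $F(\mathbf g)=\int\min_j(c(s,d_j)-g_j)\,\mathrm d\mu(s)+\langle\mathbf b,\mathbf g\rangle$, so that $\nabla F(\mathbf g)=\mathbf b-\mathbf p(\mathbf g)$. The empirical counterpart $F_m$ uses $\mu_m^{(r)}$. With an exact solve, the discrete dual optimality conditions make $\mathbf g^{(r)}$ an exact maximizer of $F_m$; this is the place to use exactness $\varepsilon=0$. Two consequences follow:
\begin{itemize}
\item $\mathbf b$ lies in the superdifferential of $F_m$ at $\mathbf g^{(r)}$;
\item the empirical cell masses $\mathbf p_m(\mathbf g^{(r)})$ equal $\mathbf b$, up to tie-breaking on cell boundaries, which the regularity conditions make negligible.
\end{itemize}

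Under the regularity assumptions ($\mu$ has a density, so $\mathbf p$ is $C^1$ near $\mathbf g^\star$, and $H:=D\mathbf p(\mathbf g^\star)$ has rank $n-1$ with kernel $\mathbf 1_n$), a standard $Z$-estimator or empirical-process argument gives
\[
\mathbf p(\mathbf g^{(r)})-\mathbf b=-\bigl(\mathbf p_m(\mathbf g^\star)-\mathbf p(\mathbf g^\star)\bigr)+\mathbf R^{(r)}.
\]
Here $\mathbf R^{(r)}=o_P(m^{-1/2})$, controlled by stochastic equicontinuity of the VC class of Laguerre cells. The leading term $\mathbf Z^{(r)}:=\mathbf p_m(\mathbf g^\star)-\mathbf b$ is a centered multinomial frequency vector with covariance $m^{-1}(\operatorname{diag}\mathbf b-\mathbf b\mathbf b^\top)$.

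\emph{Step 2: averaging.} Since $\mathbf p$ is differentiable at $\mathbf g^\star$ and the $\mathbf g^{(r)}-\mathbf g^\star$ are $O_P(m^{-1/2})$ in the gauge, we get
\[
\mathbf p(\overline{\mathbf g}_{m,R})-\mathbf b=H(\overline{\mathbf g}_{m,R}-\mathbf g^\star)+O_P(m^{-1})=\tfrac1R\sum_r\bigl(\mathbf p(\mathbf g^{(r)})-\mathbf b\bigr)+O_P(m^{-1}).
\]
This equals $-\overline{\mathbf Z}+o_P(m^{-1/2})$ with $\overline{\mathbf Z}=R^{-1}\sum_r\mathbf Z^{(r)}$, which is exactly the centered frequency deviation of a multinomial sample of size $mR$. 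The $O_P(m^{-1})$ curvature term is uniform over $R\le R_{\max}$ because $R_{\max}$ is fixed. This is why only $o(m^{-1})$ is claimed and no faster rate.

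\emph{Step 3: moments.} Substituting into $\Phi$,
\[
\Phi(\overline{\mathbf g}_{m,R})=\sum_j\overline Z_j^2/b_j+o_P(m^{-1}).
\]
The main term is the Pearson statistic of a multinomial with $mR$ trials divided by $mR$. Its mean is exactly $(n-1)/(mR)$. Under the strengthened moment condition its variance is $2(n-1)/(mR)^2+O((mR)^{-3})$, using the chi-square limit with $n-1$ degrees of freedom and the exact multinomial fourth-moment formula.

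\emph{Main obstacle: from $o_P$ to moments.} Converting the remainder bounds into statements about $\mathbb E$ and $\operatorname{Var}$ requires uniform integrability of $m\,\Phi$ and of $m^2\Phi^2$. I would get this from the assumed moment bounds together with boundedness of $\Phi$. Indeed $\Phi\le\max_j 1/b_j$, since both $\mathbf p$ and $\mathbf b$ lie in the simplex. I would then split on the event $\{\|\overline{\mathbf g}_{m,R}-\mathbf g^\star\|\le\delta\}$, where the linearization holds with a quantitative remainder, and use an exponential concentration bound for the Laguerre-cell empirical process on its complement. I expect this step, a quantitative Bahadur-type remainder for the nonsmooth semi-dual maximizer, to be the hardest. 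I would try to prove it via strong concavity of $F$ on the gauge slice, which follows from $H$ being negative definite there, combined with a local maximal inequality for the cells.
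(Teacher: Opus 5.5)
Your proposal follows essentially the same route as the paper. In both, the exact-solve balance identity (empirical cell masses equal $\mathbf b$ up to a tie residual) plus a Taylor expansion of $\mathbf p$ at the averaged potential reduces $\mathbf p(\overline{\mathbf G}_{m,R})-\mathbf b$ to the averaged categorical fluctuation $-\overline{\boldsymbol\zeta}_{m,R}$, with $\mathbb E\|\mathbf B^{-1/2}\overline{\boldsymbol\zeta}_{m,R}\|^2=(n-1)/(mR)$. The moving-cell remainder is handled by a local VC bound over a boundary slab of mass $O(r)$. Root-$m$ moments come from the quadratic margin with peeling plus exponential localization. The variance follows from uniform integrability of $(mR\,\Phi)^2$ and the $\chi^2_{n-1}$ limit. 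The paper bounds the remainders directly in $L^2$ rather than via $o_P$ plus uniform integrability, which is equivalent here.

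Two details to tighten. First, the tie residual needs a quantitative bound, not just ``negligible'': pathwise $\|\boldsymbol\rho_m\|_1\le 2\binom{n}{2}/m$, since almost surely at most one sample per target pair can be tied. Second, your localization event should require every batch potential $\mathbf g^{(r)}$, not only their average, to lie near $\mathbf g^\star$, because it is the per-batch expansions that you average.
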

The limits imply that $\sqrt{\operatorname{Var}[\Phi]}/\mathbb E\Phi \to \sqrt{2/(n-1)}$, which decreases with target size $n$.
This suggests that, for the large fixed $n$ considered here, a typical execution satisfies $\Phi(\overline{\mathbf g}_{m,R})\approx(n-1)/(mR)$, as empirically validated in Section~\ref{sec:experiments}.
Formal statement and proof are given in Appendix~\ref{app:semidiscrete_averaging}.

\subsection{Oracle Reductions for GW and UOT}
\label{sec:gw_uot_oracles}
 
We next instantiate \ours as the discrete-OT oracle within established frameworks for two representative variants: GW and UOT.

\paragraph{Gromov--Wasserstein}
Let $(\mathcal S,\mathbf a)$ and $(\mathcal D,\mathbf b)$ be two weighted point clouds with intra-domain squared-Euclidean distance matrices $A_{ik}=\lVert s_i-s_k\rVert_2^2$ and $B_{jl}=\lVert d_j-d_l\rVert_2^2$.
Squared-loss Gromov--Wasserstein (GW)~\cite{peyre2016gromov} seeks $\mathbf X\in U(\mathbf a,\mathbf b)$ that optimally matches these metric structures.

In the standard Frank--Wolfe framework, linearizing the GW objective at iteration $t$ yields the discrete OT subproblem $\mathcal P^{(t)} = (\mathcal S,\mathcal D,\mathbf a,\mathbf b,c^{(t)})$ with surrogate cost
\begin{equation}
c^{(t)}(s_i,d_j)
=
-4\bigl(\mathbf A\mathbf X^{(t)}\mathbf B\bigr)_{ij},
\label{eq:gw_linearized_cost}
\end{equation}
which we solve at each outer step using \ours.

Benefiting from sparse proxy initialization~\cite{scetbon2022linear}, unit-step sparsity preservation, and on-demand low-rank cost factorizations, our GW solver maintains $\mathcal O(m+n)$ storage and $\mathcal O(m+n)$ cost-factor updates for fixed ambient dimensions (as detailed in Appendix~\ref{app:gw_oracle}).

\paragraph{Unbalanced OT}
\label{sec:unbalanced_ot}
For supports $\mathcal S$ and $\mathcal D$ with positive mass vectors $\mathbf a\in\mathbb R_{++}^{m}$ and $\mathbf b\in\mathbb R_{++}^{n}$ of possibly unequal total mass, the KL-penalized unbalanced OT problem~\cite{chizat2018scaling,sejourne2022faster} is
\begin{equation}
\min_{\mathbf X\geq0} \mathcal U(\mathbf X) := \langle\mathbf C,\mathbf X\rangle + \rho_{\mathcal S}\operatorname{KL}(\mathbf X\mathbf1_n\|\mathbf a) + \rho_{\mathcal D}\operatorname{KL}(\mathbf X^\top\mathbf1_m\|\mathbf b),
\label{eq:uot_objective}
\end{equation}
where $C_{ij}=c(s_i,d_j)$ and $\rho_{\mathcal S},\rho_{\mathcal D}>0$ control marginal penalties.

Building on the translation-invariant dual formulation~\cite{sejourne2022faster}, we use fully corrective Frank--Wolfe (FCFW), whose linear oracle is a balanced OT problem with reweighted marginals. After the optimal translation at iteration $t$, the dual potentials $\mathbf f^{(t)}, g^{(t)}$ induce
\begin{equation*}
\begin{aligned}
&\widetilde{\mathbf a}^{(t)} = \mathbf a\odot e^{-\mathbf f^{(t)}/\rho_{\mathcal S}}, \quad\widetilde{\mathbf b}^{(t)}=\mathbf b\odot e^{-\mathbf g^{(t)}/\rho_{\mathcal D}},\\
&\zeta^{(t)}=\|\widetilde{\mathbf a}^{(t)}\|_1=\|\widetilde{\mathbf b}^{(t)}\|_1,
(\widehat{\mathbf a}^{(t)},\widehat{\mathbf b}^{(t)})=(\zeta^{(t)})^{-1} (\widetilde{\mathbf a}^{(t)},\widetilde{\mathbf b}^{(t)}).
\end{aligned}
\end{equation*}
They define the marginal-varying oracle problem
\[
\mathcal P^{(t)}
=
(\mathcal S,\mathcal D,\widehat{\mathbf a}^{(t)},\widehat{\mathbf b}^{(t)},c),
\]
which is solved with \ours at each FCFW iteration.

Dual potentials from \ours form the new FCFW atom, while $\mathbf X^{(t)}=\zeta^{(t)}\widehat{\mathbf X}^{(t)}$ is a sparse UOT primal candidate. See Appendix~\ref{app:unbalanced_ot} for derivations and algorithmic details.

\section{Experiments}\label{sec:experiments}

We evaluate \ours along five aspects:
(1)~\textbf{numerical accuracy};
(2)~\textbf{scalability} across problem sizes and feature dimensions;
(3)~\textbf{generality} across costs and OT variants;
(4)~\textbf{downstream utility} in Flow Matching; and
(5)~\textbf{component contributions and robustness}.

\paragraph{Setup and metrics}
Unless otherwise specified, experiments run on a workstation with dual Intel Xeon 8480+ CPUs and an NVIDIA H100 GPU (80GB).
Our primary discrete-OT benchmarks comprise synthetic Monge problems with known ground truth and Gaussian-to-ImageNet instances reaching $n=1{,}281{,}167$ ($d=8192$).
The main baselines include the entropic solvers \sinkhorn~\cite{cuturi2022optimal} and \mdot~\cite{kemertas2025truncated}, the inexact proximal-point method \ipot~\cite{pmlr-v115-xie20b}, and the hierarchical low-rank method \hiref~\cite{halmos2025hierarchical}.
Solution accuracy is evaluated via the transport objective \obj (or relative error \objerr when an exact reference is available, evaluated after feasible rounding~\cite{altschuler2017near}), primal feasibility \pfeas, and, for \ours, the full relative KKT residual $\kkt = \max\{\pfeas, \dfeas, \gap\}$ (Definition~\ref{def:app_kkt}).
All main experiments use standard relative $\ell_2$ stopping criteria~\cite{lu2025cupdlpx,chen2026hpr}, with relative $\ell_\infty$ evaluations provided in Appendix~\ref{app:linf_stopping}.
See Appendices~\ref{app:dataset}, \ref{app:solvers}, and~\ref{app:optimality_certificate} for dataset, solver, and metric details, respectively.

\subsection{Numerical Accuracy against Exact References}
We evaluate \ours first on balanced Monge problems with known optimal couplings and then on rectangular problems with non-uniform marginals evaluated against an exact solver.

\subsubsection{Equal-Size Problems with Uniform Marginals}
We construct synthetic OT instances by mapping source points via the gradient of a strictly convex potential, yielding known optimal Monge couplings by Brenier's theorem~\cite{brenier1991polar}.
We compare \ours with \sinkhorn~\cite{cuturi2022optimal} and \hiref~\cite{halmos2025hierarchical} in Table~\ref{tab:synthetic_exactness}, reporting objective error (\objerr), assignment recall (\recall), and the full relative KKT residual (\kkt, available for \ours).
To evaluate \recall, the continuous couplings returned by \ours and \sinkhorn are converted into discrete matchings via row-wise argmax before comparison with the ground-truth Monge map.

As shown in Table~\ref{tab:synthetic_exactness}, \ours achieves objective errors and full KKT residuals below $10^{-6}$ together with perfect assignment recall across all dimensions.
In contrast, \sinkhorn exhibits higher objective errors, while \hiref incurs severe objective errors alongside substantial support degradation.

\begin{table}[!t]
    \centering
    \caption{\textbf{Numerical accuracy verification on the synthetic benchmark with ground truth.}
Results are reported for balanced problems with uniform marginals at $n=2^{16}$, averaged over five random seeds.
\sinkhorn uses normalized $\varepsilon=10^{-3}$.
Reference is the ground-truth optimal solution.
\kkt is reported only for \ours, which returns both primal and dual variables required to evaluate the full residual.}
                \label{tab:synthetic_exactness}
    \small
    \setlength{\tabcolsep}{6pt}
    \begin{tabular}{llccc}
    \toprule
    \textbf{Metric} & \textbf{Method} & \textbf{$d=4$} & \textbf{$d=128$} & \textbf{$d=4096$} \\
    \midrule
    \multirow{3}{*}{\objerr~$\downarrow$}
      & \sinkhorn
        & $8.0\mathrm{e}{-3}$ & $3.0\mathrm{e}{-3}$ & $3.1\mathrm{e}{-3}$ \\
      & \hiref
        & $6.0\mathrm{e}{-2}$ & $8.3\mathrm{e}{-2}$ & $8.5\mathrm{e}{-2}$ \\
      & \textbf{\ours}
        & $\bm{7.3\mathrm{e}{-8}}$ & $\bm{6.3\mathrm{e}{-10}}$ & $\bm{1.1\mathrm{e}{-8}}$ \\
    
    \midrule
    \multirow{3}{*}{\recall~$\uparrow$}
      & \sinkhorn
        & $0.993$ & $\bm{1.000}$ & $\bm{1.000}$ \\
      & \hiref
        & $0.005$ & $0.008$ & $0.008$ \\
      & \textbf{\ours}
        & $\bm{1.000}$ & $\bm{1.000}$ & $\bm{1.000}$ \\
    \midrule
    \multirow{1}{*}{\kkt~$\downarrow$}
      & \textbf{\ours}
        & $7.4\mathrm{e}{-7}$ & $1.8\mathrm{e}{-7}$ & $8.3\mathrm{e}{-7}$ \\
    \bottomrule
    \end{tabular}
\end{table}

Qualitative support visualizations, where \ours recovers the exact sparse structure while \sinkhorn remains diffuse, are provided in Appendix~\ref{app:brenier}.

\subsubsection{Rectangular Problems with Non-Uniform Marginals}
We further evaluate \ours against the exact \emd baseline on rectangular Gaussian-to-ImageNet instances ($m=2^{14}, n=2^{15}$) with non-uniform marginals, where the optimal coupling is almost surely unique (Proposition~\ref{prop:as_unique_ot}).
Across all dimensions ($d \in \{4,128,4096\}$), \ours maintains objective errors and full KKT residuals below $10^{-6}$ while recovering over $99\%$ of the exact optimal edges with precision exceeding $96.7\%$.
Complete numerical results are reported in Appendix~\ref{app:rectangular_ot}.

\subsection{Scalability Benchmarks on ImageNet Latents}
\label{sec:exp_imagenet}

We evaluate scalability on the Gaussian-to-ImageNet benchmark (Appendix~\ref{app:imagenet_setup}) across sample sizes up to $n\approx 10^{6}$ and feature dimensions $d\in\{4,32,256,2048\}$.
We first evaluate runtime--accuracy trade-offs against representative baselines, then compare large-scale performance up to $n=2^{20}$, analyze empirical complexity scaling, and demonstrate single-GPU scalability on complete ImageNet-1k latents in 8192 dimensions.

\subsubsection{Runtime--Accuracy Comparison}
\label{sec:runtime_accuracy}
We evaluate at $n=2^{16}$, where all compared solvers remain computationally tractable.
Relative objective errors are computed against the exact \emd solution.
For \sinkhorn, \ipot, and \mdot, we vary one primary parameter of each method, as detailed in Appendix~\ref{app:solvers_discrete}.

Figure~\ref{fig:runtime_accuracy} compares runtime and relative objective error across the evaluated dimensions.
Across these dimensions, competing configurations that match the objective accuracy of \ours require one to three orders of magnitude more runtime.
The runtime advantage tends to increase with the feature dimension.

\begin{figure}[!t]
\centering
\includegraphics[width=\linewidth]{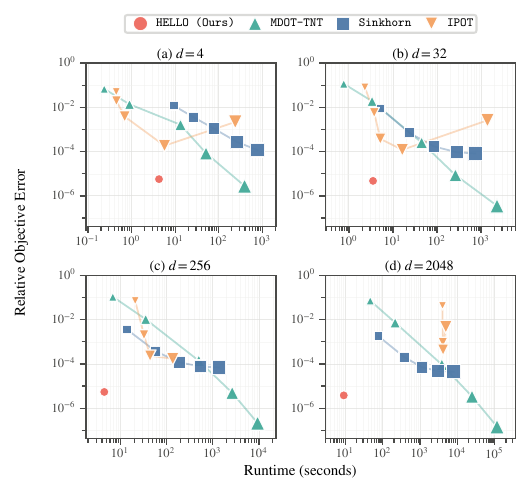}
\caption{\textbf{Runtime--accuracy comparison on Gaussian-to-ImageNet benchmark.}
Relative objective error against the exact \emd solution versus runtime at $m=n=2^{16}$ and $d\in\{4,32,256,2048\}$.
Markers denote parameter settings (Appendix~\ref{app:solvers_discrete}), with larger markers targeting higher accuracy; lines connect settings of the same solver.
Markers with relative error above $1$ are omitted.
Results are averaged over five random seeds.}
\label{fig:runtime_accuracy}
\end{figure}

\subsubsection{Large-Scale Comparison}
Since the baselines evaluated in Section~\ref{sec:runtime_accuracy} are computationally prohibitive in some large-scale settings, Table~\ref{tab:scalability_bench} compares \ours with \hiref, a scalable baseline with linear time complexity in $n$.
\ours attains lower transport objectives and shorter runtimes in every evaluated setting, with improvements of one to nearly two orders of magnitude at the million-point scale ($n=2^{20}$).
For \ours, peak GPU memory grows approximately linearly with $n$, governed by restricted-LP solves under the linear support budget (Section~\ref{sec:budgeted_active_support_pruning}) at low dimensions and feature storage at high dimensions.
At the most demanding setting ($n=2^{20}, d=2048$), \ours reduces peak GPU memory from $48.2$\,GiB to $8.7$\,GiB, corresponding to an $82.0\%$ reduction.

\begin{table*}[!t]
    \centering
                \caption{\textbf{Large-scale comparison with \hiref.}
The four panels correspond to feature dimensions $d\in\{4,32,256,2048\}$ on the Gaussian-to-ImageNet benchmark across sample sizes up to $n=2^{20}$, averaged over five random seeds.}
    \label{tab:scalability_bench}
    \small
    \setlength{\tabcolsep}{4pt} 
    \begin{tabular}{ll | ccccc | ccccc}
    \toprule
    \multirow{2}{*}{\textbf{Metric}} & \multirow{2}{*}{\textbf{Method}} & \multicolumn{5}{c|}{\textbf{Sample Size} $n$} & \multicolumn{5}{c}{\textbf{Sample Size} $n$} \\
     & & $2^{16}$ & $2^{17}$ & $2^{18}$ & $2^{19}$ & $2^{20}$ & $2^{16}$ & $2^{17}$ & $2^{18}$ & $2^{19}$ & $2^{20}$ \\
    \midrule
        & & \multicolumn{5}{l|}{\textit{(a) $d=4$}} & \multicolumn{5}{l}{\textit{(b) $d=32$}} \\
    \cmidrule(r){1-7} \cmidrule(l){8-12}

    \multirow{2}{*}{\obj~$\downarrow$} 
      & \hiref & $9.05\mathrm{e}{2}$ & $9.03\mathrm{e}{2}$ & $9.03\mathrm{e}{2}$ & $9.02\mathrm{e}{2}$ & $9.02\mathrm{e}{2}$ 
               & $1.97\mathrm{e}{3}$ & $1.96\mathrm{e}{3}$ & $1.95\mathrm{e}{3}$ & $1.95\mathrm{e}{3}$ & $1.95\mathrm{e}{3}$ \\
      & \textbf{\ours} & $\bm{8.97\mathrm{e}{2}}$ & $\bm{8.97\mathrm{e}{2}}$ & $\bm{8.97\mathrm{e}{2}}$ & $\bm{8.97\mathrm{e}{2}}$ & $\bm{8.97\mathrm{e}{2}}$ 
                       & $\bm{1.90\mathrm{e}{3}}$ & $\bm{1.89\mathrm{e}{3}}$ & $\bm{1.89\mathrm{e}{3}}$ & $\bm{1.88\mathrm{e}{3}}$ & $\bm{1.88\mathrm{e}{3}}$ \\
    \cmidrule(r){1-7} \cmidrule(l){8-12}
    
    \multirow{2}{*}{\timesec~$\downarrow$} 
      & \hiref & 412.00 & 719.63 & 1443.26 & 3138.10 & 5699.02 
               & 369.00 & 714.59 & 1612.45 & 3061.83 & 5667.95 \\
      & \textbf{\ours} & \textbf{3.45} & \textbf{5.77} & \textbf{13.03} & \textbf{26.30} & \textbf{70.68}
                       & \textbf{3.21} & \textbf{5.51} & \textbf{9.90} & \textbf{23.34} & \textbf{59.83} \\
    \cmidrule(r){1-7} \cmidrule(l){8-12}
    
    \multirow{2}{*}{\memgb~$\downarrow$} 
      & \hiref & \textbf{0.2} & 0.7 & 2.3 & 2.4 & \textbf{2.4} 
               & \textbf{0.2} & 0.8 & 2.5 & 2.6 & \textbf{2.9} \\
      & \textbf{\ours} & 0.5 & \textbf{0.5} & \textbf{1.0} & \textbf{2.1} & 4.1
                       & 0.5 & \textbf{0.6} & \textbf{1.1} & \textbf{2.3} & 4.5 \\    
    \midrule
        & & \multicolumn{5}{l|}{\textit{(c) $d=256$}} & \multicolumn{5}{l}{\textit{(d) $d=2048$}} \\
    \cmidrule(r){1-7} \cmidrule(l){8-12}
    
    \multirow{2}{*}{\obj~$\downarrow$} 
      & \hiref & $3.19\mathrm{e}{3}$ & $3.18\mathrm{e}{3}$ & $3.17\mathrm{e}{3}$ & $3.17\mathrm{e}{3}$ & $3.17\mathrm{e}{3}$ 
               & $6.00\mathrm{e}{3}$ & $5.98\mathrm{e}{3}$ & $5.98\mathrm{e}{3}$ & $5.98\mathrm{e}{3}$ & $5.98\mathrm{e}{3}$ \\
      & \textbf{\ours} & $\bm{3.06\mathrm{e}{3}}$ & $\bm{3.04\mathrm{e}{3}}$ & $\bm{3.03\mathrm{e}{3}}$ & $\bm{3.02\mathrm{e}{3}}$ & $\bm{3.01\mathrm{e}{3}}$ 
                       & $\bm{5.82\mathrm{e}{3}}$ & $\bm{5.80\mathrm{e}{3}}$ & $\bm{5.79\mathrm{e}{3}}$ & $\bm{5.77\mathrm{e}{3}}$ & $\bm{5.76\mathrm{e}{3}}$ \\
    \cmidrule(r){1-7} \cmidrule(l){8-12}
    
    \multirow{2}{*}{\timesec~$\downarrow$} 
      & \hiref & 362.30 & 714.36 & 1409.77 & 2926.48 & 5401.05 
               & 365.69 & 716.85 & 1399.86 & 2750.65 & 5470.85 \\
      & \textbf{\ours} & \textbf{4.02} & \textbf{6.94} & \textbf{14.69} & \textbf{37.75} & \textbf{119.08}
                       & \textbf{7.10} & \textbf{16.46} & \textbf{41.86} & \textbf{134.96} & \textbf{452.50} \\
    \cmidrule(r){1-7} \cmidrule(l){8-12}
    
    \multirow{2}{*}{\memgb~$\downarrow$} 
      & \hiref & \textbf{0.6} & 1.4 & 3.8 & 4.8 & 6.8 
               & 3.2 & 6.7 & 14.3 & 24.1 & 48.2 \\
      & \textbf{\ours} & \textbf{0.6} & \textbf{0.7} & \textbf{1.1} & \textbf{2.3} & \textbf{4.6}
                       & \textbf{0.7} & \textbf{1.2} & \textbf{2.2} & \textbf{4.3} & \textbf{8.7} \\
    \bottomrule
    \end{tabular}
\end{table*}

To further assess scalability, we run \mdot, the strongest high-accuracy baseline above, at $n=2^{20}$ across all four dimensions.
Even with a solver-time budget of $10\times$ that of \ours, its returned solutions have rounded objectives $0.62\%$--$10.33\%$ higher than those of \ours.

\subsubsection{Empirical Scaling and Computational Complexity}
\begin{figure}[!t]
    \centering
    \includegraphics[width=0.99\linewidth]{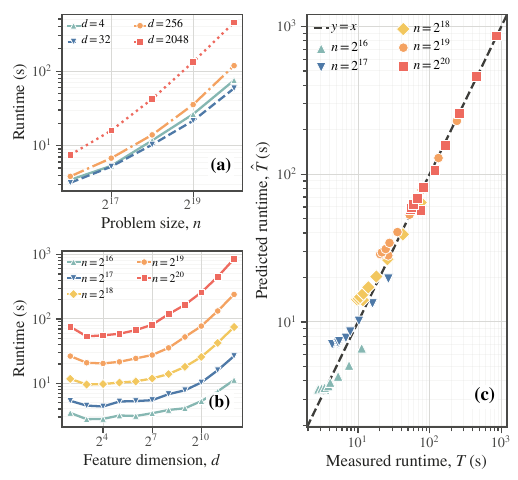}
        
    \caption{\textbf{Runtime scaling and computational complexity validation of \ours.}
    Runtime evaluation on Gaussian-to-ImageNet instances across sample sizes ($n \in [2^{16}, 2^{20}]$) and feature dimensions ($d \in [4, 4096]$).
    \textbf{(a)} Runtime vs.~sample size $n$ across representative dimensions.
    \textbf{(b)} Runtime vs.~feature dimension $d$ across all evaluated problem sizes.
    \textbf{(c)} Parity plot comparing measured runtime $T$ against the empirical model $\hat{T} = An + Bn^2d$, where the dashed line denotes the ideal parity $y = x$ ($R^2 = 0.9985$).
    All points report mean values over five random seeds.}
    \label{fig:scalability_runtime}
\end{figure}
Figure~\ref{fig:scalability_runtime} investigates the empirical scalability and computational complexity profile of \ours across problem sizes and feature dimensions.
As shown in Figure~\ref{fig:scalability_runtime}(a), runtime scales nearly linearly with problem size $n$ in low dimensions and transitions toward a quadratic regime as dimension increases.
Across feature dimensions in Figure~\ref{fig:scalability_runtime}(b), runtime remains relatively insensitive across low and moderate dimensions before growing linearly with $d$ in the higher-dimensional regime.
These trends reflect the two-stage computational structure of \ours: sparse restricted LP solves scale as $\mathcal{O}(n)$, whereas full dual-score updates under $\ell_2^2$ cost are dominated by GPU matrix multiplications ($\mathcal{O}(n^2 d)$ FLOPs).
We model the empirical runtime across scales via:
\begin{equation*}
    T(n, d) \approx A n + B n^2 d.
\end{equation*}
A least-squares fit yields $A \approx 5.34 \times 10^{-5}$ and $B \approx 1.79 \times 10^{-13}$ ($R^2 = 0.9985$).
As shown in the parity plot in Figure~\ref{fig:scalability_runtime}(c), this empirical model closely predicts actual execution times, providing a reliable complexity proxy for \ours across scales.

\subsubsection{Complete ImageNet-1k Alignment in 8192 Dimensions}
We finally evaluate \ours on the complete ImageNet-1k latent dataset at its highest available dimension, with $m=n=1{,}281{,}167$ and $d=8192$.
Across five random seeds, \ours completes the problem on a single GPU in an average of $2526$\,s with $41.6$\,GiB peak memory and a full relative KKT residual of $\kkt = 9.37\times 10^{-7}$, demonstrating practical single-GPU scalability for unregularized OT at million-scale and extremely high dimensions.

\subsection{Generality across Pairwise Costs and OT Variants}
\label{sec:generalization}

We evaluate the generality of \ours across pairwise costs and discrete-OT oracle applications.
We first vary the pairwise cost in discrete OT, then evaluate the proposed semi-discrete OT estimator and representative oracle reductions for GW and UOT.

\subsubsection{General Pairwise Costs}
\label{sec:exp_general_costs}
The preceding experiments use the $\ell_2^2$ cost $c(s,d)=\|s-d\|_2^2$.
To evaluate cost-level generality, we run \ours under the $\ell_1$, $\ell_2$, and $\ell_\infty$ costs on Gaussian-to-ImageNet benchmark.
The $\ell_1$ and $\ell_\infty$ instances use the cost-perturbed warm start of Appendix~\ref{app:cost_perturbation}, followed by refinement under the original cost.
As shown in Table~\ref{tab:general_costs}, \ours maintains objective errors and full relative KKT residuals at $10^{-6}$-level across all evaluated costs and dimensions, with higher runtimes for $\ell_1$ and $\ell_\infty$ at large dimensions reflecting lower GPU parallelism in corresponding dual pricing.

\begin{table}[!t]
    \centering
    \caption{\textbf{Discrete OT across pairwise costs.}
Results are reported for $m=n=2^{14}$ under $\ell_p$ costs ($p\in\{1,2,\infty\}$) on Gaussian-to-ImageNet instances, averaged over five random seeds.
Reference is the exact \emd solution.}
                
    \label{tab:general_costs}
    \small
    \setlength{\tabcolsep}{6pt}
    \begin{tabular}{llccc}
    \toprule
    \textbf{Metric} & \textbf{Cost} & \textbf{$d=4$} & \textbf{$d=128$} & \textbf{$d=4096$} \\
    \midrule
    \multirow{3}{*}{\timesec~$\downarrow$}
        & $\ell_1$
        & $4.2$ & $1.9$ & $34.2$ \\
        & $\ell_2$
        & $2.9$ & $2.8$ & $28.5$ \\
        & $\ell_\infty$
        & $7.4$ & $6.9$ & $41.2$ \\
    \midrule
    \multirow{3}{*}{\objerr~$\downarrow$}
        & $\ell_1$
        & $3.1\mathrm{e}{-6}$ & $7.9\mathrm{e}{-7}$ & $5.1\mathrm{e}{-7}$ \\
        & $\ell_2$
        & $2.2\mathrm{e}{-6}$ & $1.4\mathrm{e}{-6}$ & $1.1\mathrm{e}{-6}$ \\
        & $\ell_\infty$
        & $2.7\mathrm{e}{-6}$ & $4.3\mathrm{e}{-6}$ & $4.0\mathrm{e}{-6}$ \\
    \midrule
    \multirow{3}{*}{\kkt~$\downarrow$}
        & $\ell_1$
        & $7.9\mathrm{e}{-7}$ & $7.7\mathrm{e}{-7}$ & $9.2\mathrm{e}{-7}$ \\
        & $\ell_2$
        & $9.4\mathrm{e}{-7}$ & $9.1\mathrm{e}{-7}$ & $8.7\mathrm{e}{-7}$ \\
        & $\ell_\infty$
        & $6.1\mathrm{e}{-7}$ & $9.1\mathrm{e}{-7}$ & $9.1\mathrm{e}{-7}$ \\
    \bottomrule
    \end{tabular}
\end{table}

\subsubsection{Semi-discrete OT}
\label{sec:exp_sdot}

We evaluate the empirical potential-averaging method in Algorithm~\ref{alg:semidiscrete_dual_approx} through its population marginal-error scaling and computational cost relative to stochastic optimization.

\paragraph{Marginal-Error Scaling}
Figure~\ref{fig:semidiscrete_chi2_repeats} reports the population marginal error $\Phi(\overline{\mathbf g}_{m,R})$ as the number of independent repetitions $R$ increases, with $m=4n$ fixed throughout the experiment.
Theorem~\ref{thm:informal_avg_dual_scaling} predicts the typical single-execution behavior $\Phi(\overline{\mathbf g}_{m,R})\approx(n-1)/(mR)$ in the large-batch regime.
Although this prediction is asymptotic in $m$, a log--log fit over all $32$ repetitions gives $\Phi(\overline{\mathbf g}_{m,R})\approx1/(4.206R^{0.997})$, closely matching the predicted $1/(4R)$ scaling and indicating that this scaling can already emerge at a suitably chosen finite batch size.
Appendix~\ref{app:sdot_across_bs} provides extended verification across $m/n \in \{1, 2, 4, 8\}$.

\paragraph{Computational Comparison}
\begin{figure}[!t]
    \centering
    \includegraphics[width=0.9\linewidth]{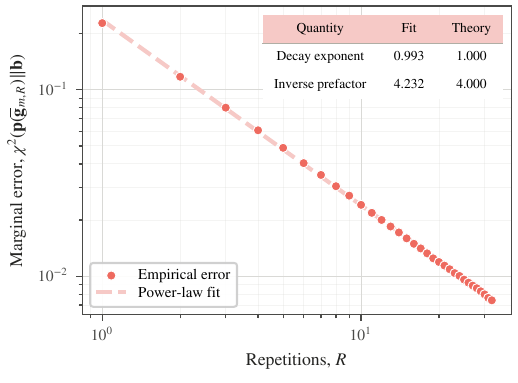}
    \caption{\textbf{Marginal-error scaling of empirical potential averaging.} 
    Population marginal error $\Phi(\overline{\mathbf g}_{m,R})$ versus the number of independent repetitions $R$ on Gaussian-to-ImageNet ($n=2^{17}, m=4n, d=32$).
    Dots show empirical errors and the dashed line is a power-law fit; the inset compares the fitted exponent and inverse prefactor against their theoretical values.}
    \label{fig:semidiscrete_chi2_repeats}
\end{figure}
Table~\ref{tab:sdot_solver_compare} compares empirical potential averaging with \so~\cite{mousavi2026flow} using matched population marginal errors.
Following~\cite{mousavi2026flow}, we select the first \so checkpoint with $\chi^2<5\times10^{-2}$ and the first \ourssdot checkpoint attaining a lower $\chi^2$.
Across the evaluated scales, \ourssdot attains this stricter accuracy criterion with a $4.4\times$--$5.8\times$ speedup over \so.
These results demonstrate the computational advantage of empirical potential averaging at comparable accuracy across the evaluated scales.

\begin{table}[!t]
\centering
\caption{\textbf{Semi-discrete OT solver comparison.}
Runtime and population marginal error on ImageNet latents ($d=3072$) under matched accuracy ($\chisq < 5\times10^{-2}$).
\so follows~\cite{mousavi2026flow}.
Results are averaged over five random seeds.}
\label{tab:sdot_solver_compare}
\small
\setlength{\tabcolsep}{5pt}
\begin{tabular}{llccc}
    \toprule
    \multirow{2}{*}{\textbf{Metric}}
    & \multirow{2}{*}{\textbf{Method}}
    & \multicolumn{3}{c}{\textbf{Target Size} $n$} \\
    \cmidrule(lr){3-5}
    & & $2^{17}$ & $2^{18}$ & $2^{19}$ \\
    \midrule

    \multirow{2}{*}{\timehr~$\downarrow$}
      & \so       & $0.63$ & $2.20$ & $8.35$ \\
      & \ourssdot & $\mathbf{0.14}$ & $\mathbf{0.38}$ & $\mathbf{1.91}$ \\
    \midrule

    \multirow{2}{*}{\chisq~$\downarrow$}
      & \so       & $0.0306$ & $0.0361$ & $0.0291$ \\
      & \ourssdot & $0.0291$ & $0.0359$ & $0.0264$ \\
    \bottomrule
\end{tabular}
\end{table}
\subsubsection{Oracle Reductions for GW and UOT}
\label{sec:exp_gw_uot}
We next evaluate \ours as the discrete-OT oracle within the established GW and UOT outer frameworks described in Section~\ref{sec:gw_uot_oracles}.
\paragraph{Gromov--Wasserstein}
We evaluate HELLO-GW on MNIST--Fashion-MNIST point clouds against \sinkhorngw and \lotgw~\cite{klein2025mapping}.
As summarized in Table~\ref{tab:oracle_benchmark}, \oursgw achieves the lowest objective and shortest runtime at the largest evaluated scale $n=2^{16}$, reducing the objective by $10.3\%$ relative to \lotgw and achieving a $5.1\times$ speedup over \sinkhorngw.
Full scaling results across problem sizes are provided in Appendix~\ref{app:gw_scaling}.

\paragraph{Unbalanced OT}
We evaluate \oursuot on the TOME benchmark~\cite{klein2025mapping} against \sinkhornuot.
As summarized in Table~\ref{tab:oracle_benchmark}, \oursuot achieves an $8.1\times$ speedup and a lower optimality gap at $n=2^{17}$.
See Appendix~\ref{app:uot_additional_experiments} for complete results and high-accuracy comparisons.

\begin{table}[!t]
    \centering
        \caption{\textbf{Downstream oracle performance at maximum scale.}
Solution quality (\obj or \gap) and runtime on the largest evaluated GW and UOT instances, averaged over five random seeds.}
    \label{tab:oracle_benchmark}
    \small
    \setlength{\tabcolsep}{5pt}
    \begin{tabular}{clcc}
        \toprule
        \textbf{Setting} & \textbf{Method} & \obj~$\downarrow$ & \timesec~$\downarrow$ \\
        \midrule
        \multirow{3}{*}{\shortstack{\textbf{GW}\\($n=2^{16}, d=32$)}}
          & \sinkhorngw      & $2948.36$          & $296.52$ \\
          & \lotgw           & $3280.10$          & $65.96$  \\
          & \textbf{\oursgw} & $\mathbf{2941.05}$ & $\mathbf{58.12}$  \\
        \midrule
        \textbf{Setting} & \textbf{Method} & \gap~$\downarrow$ & \timesec~$\downarrow$ \\
        \midrule
        \multirow{2}{*}{\shortstack{\textbf{UOT}\\($n=2^{17}, d=30$)}}
          & \sinkhornuot     & $1.93\mathrm{e}{-3}$          & $324.83$ \\
          & \textbf{\oursuot}& $\mathbf{5.06\mathrm{e}{-4}}$ & $\mathbf{40.22}$  \\
        \bottomrule
    \end{tabular}
\end{table}
\subsection{Downstream Utility in Flow Matching}
\label{sec:exp_flow_matching}

We evaluate the downstream utility of two outputs of \ours in Flow Matching~\cite{lipman2022flow}: target dual potentials for continuous-source generation and sparse couplings for discrete-source image translation.
Experimental protocols for both tasks are detailed in Appendix~\ref{app:fm_datasets}.

\paragraph{Continuous-Source Flow Matching}
For continuous-source generation on CIFAR-10, we compare the downstream performance of training pairs induced by target dual potentials estimated by \ourssdot and \so~\cite{mousavi2026flow} within the framework of~\cite{geng2026mean}.
\ourssdot achieves a \fid of $3.80$, closely matching the $3.82$ obtained by \so.
Together with the computational advantage demonstrated separately in Table~\ref{tab:sdot_solver_compare}, this result supports \ourssdot as an efficient potential estimator with comparable downstream generation quality.

\paragraph{Discrete-Source Flow Matching}
For discrete-source image translation, we follow~\cite{kornilov2024optimal} on FFHQ and report the Fr\'echet distance (\fd)  in the $512$-dimensional ALAE latent space.
Across five random seeds, \ours achieves $5.74 \pm 0.15$, compared with $6.20 \pm 0.42$ for \minibatch and $6.74 \pm 0.68$ for independent pairing.
The consistently lower \fd supports the benefit of using a global OT coupling for downstream image translation.

\subsection{Component Analysis and Robustness}
\label{sec:exp_ablation}
We analyze the design of \ours through comparisons with geometry-based alternatives, cumulative component ablations, and robustness analyses.

\subsubsection{Comparisons with Geometry-Based Designs}
\label{sec:exp_geometry_comparison}
We compare the dual-guided hierarchy construction and active-support update with their geometry-based counterparts.

\paragraph{Hierarchy Construction}
\begin{figure}[!t]
    \centering
    \includegraphics[width=\linewidth]{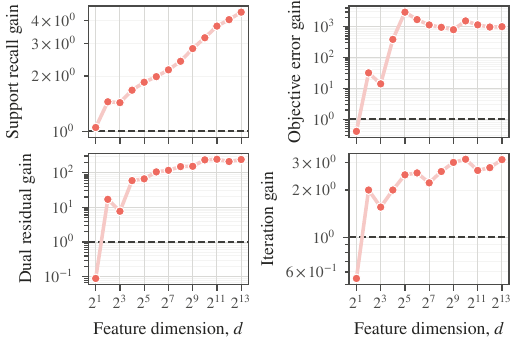}
        \caption{\textbf{Comparison of hierarchy constructions across dimensions.}
    Warm-start quality and refinement efficiency of the proposed dual-guided hierarchy relative to the clustering-based baseline~\cite{xia2026memory} on ImageNet-to-ImageNet instances ($n=m=2^{13}$).
    All panels plot gain ratios such that values $>1$ (above the dashed line $y=1$) favor the dual-guided hierarchy.
    Objective error, dual-feasibility residual, and support recall are evaluated after the initial restricted solve; iteration gain measures rounds required to reach the $10^{-6}$ relative KKT tolerance.
    Ratios are averaged over five random seeds.}
    \label{fig:hierarchy_ablation_ratios}
\end{figure}
We isolate the hierarchy construction by comparing the proposed dual-guided hierarchy with the clustering-based hierarchy~\cite{xia2026memory} under comparable active-support budgets, while keeping the restricted solver and subsequent refinement fixed.
Figure~\ref{fig:hierarchy_ablation_ratios} evaluates the resulting initialization through support recall, initial objective error, initial full dual-feasibility residual, and the number of subsequent refinement iterations.
At $d=2$, the clustering-based hierarchy yields lower initial errors and fewer refinement iterations, while the dual-guided hierarchy attains higher support recall.
From $d=4$ onward, the dual-guided hierarchy outperforms the clustering-based construction across all four diagnostics, with the advantage becoming more pronounced as the dimension increases.
This crossover demonstrates the effectiveness of dual-guided initialization in the evaluated medium- and high-dimensional regimes.

\paragraph{Active-Support Update}
\begin{figure}[!t]
    \centering
    \includegraphics[width=\linewidth]{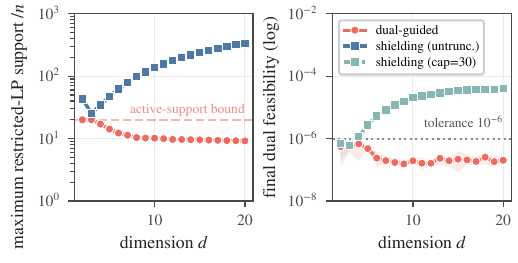}
        \caption{\textbf{Comparison of active-support update strategies across dimensions.}
    Evaluation of the proposed dual-guided update against geometry-based shielding~\cite{xia2026memory} on Gaussian-to-ImageNet instances ($m=n=2^{13}$).
    \textbf{Left:} Peak active-support size normalized by $n$ for the proposed update and untruncated shielding, where the dashed line marks the prescribed support budget. 
    \textbf{Right:} Final full dual-feasibility residual after at most 200 refinement iterations for the proposed update and truncated shielding ($30$ candidates per node), where the dotted line marks the target tolerance $10^{-6}$.
    Lines and shaded bands denote means and standard deviations over five random seeds.}
    \label{fig:update_strategy_ablation}
\end{figure}
We isolate the active-support update by replacing the proposed dual-violation detection and budgeted pruning with geometry-based shielding~\cite{xia2026memory}, while leaving the remaining pipeline unchanged.
Figure~\ref{fig:update_strategy_ablation} evaluates the two update strategies in terms of peak active-support size and final full dual-feasibility residual.
Without truncation, the shielding support grows rapidly with dimension, whereas the proposed update remains within its prescribed support bound.
For $d\geq4$, truncated shielding reaches the 200-iteration limit without attaining the target dual-feasibility tolerance, whereas the proposed update satisfies the tolerance throughout the evaluated range.
Together, the two panels show that dual-violation detection with budgeted pruning maintains both the prescribed support budget and the full-space relative KKT tolerance in the evaluated settings.
For completeness, Appendix~\ref{app:hello_vs_halo} provides an end-to-end comparison with the geometry-based \halo solver~\cite{xia2026memory}.

\subsubsection{Internal Component Ablations}
\label{sec:exp_component_ablation}
In Table~\ref{tab:ablation_cumulative}, both \batchcg and its hierarchical variant reach the target tolerance, with the hierarchy halving runtime and iterations, but both suffer from unbounded active-support growth.
Adding budgeted pruning enforces a linear-size support budget, yet refinement stalls on these $\ell_1$ instances.
Cost perturbation lowers the best \dfeas by two orders of magnitude, though convergence speed and residual still leave room for improvement.
Finally, nodewise selection completes \ours, meeting the target tolerance with the lowest runtime, iteration count, and peak support.
See Appendix~\ref{app:ablation_leave_one_out} for leave-one-out ablations.

\begin{table}[!t]
    \centering
    \caption{\textbf{Cumulative component ablation.}
Components are added cumulatively to \batchcg on Gaussian-to-ImageNet instances ($n=m=2^{14}, d=8$, $\ell_1$ cost); the final row is the complete \ours solver, and results are averaged over five random seeds.
\iter counts level-$0$ refinement iterations and is capped at $100$ per stage, with cost-perturbed variants reported as perturbed $+$ original stages.
\suppmetric is the peak active-support size normalized by $n+m$, and \dfeas is the best dual-feasibility residual over the run; the target tolerance is $10^{-6}$.}
        \label{tab:ablation_cumulative}
    \small
    \setlength{\tabcolsep}{3.5pt}
    \begin{tabular}{lcccc}
        \toprule
        \textbf{Variant} & \timesec~$\downarrow$ & \iter~$\downarrow$ & \suppmetric~$\downarrow$ & \dfeas~$\downarrow$ \\
        \midrule
        \batchcg & $51.8$ & $33$ & $65.0$ & $5.0\mathrm{e}{-7}$ \\
        $+$ Hierarchy & $22.9$ & $12$ & $39.8$ & $8.7\mathrm{e}{-7}$ \\
        $+$ Budgeted pruning & $122.0$ & $100$ & $19.0$ & $2.8\mathrm{e}{-4}$ \\
        $+$ Cost perturbation & $71.2$ & $11+74$ & $19.0$ & $3.2\mathrm{e}{-6}$ \\
        $+$ Nodewise selection & \textbf{7.4} & \textbf{1+5} & \textbf{11.9} & $\bm{3.3\mathrm{e}{-7}}$ \\
        \bottomrule
    \end{tabular}
\end{table}

\subsubsection{Robustness Analyses}
\label{sec:exp_robustness}

\paragraph{LP-Backend Robustness}
Replacing only the default \cupdlpx backend~\cite{lu2025cupdlpx} with \hprlp~\cite{chen2026hpr} on Gaussian-to-ImageNet instances ($m=n=2^{18}$) preserves transport objectives to the reported precision, while runtime and peak GPU memory differ by at most $5.5\%$ and $11.8\%$, respectively.
Complete results are reported in Appendix~\ref{app:ablation_lp_solvers}.

\paragraph{Hyperparameter Sensitivity}
We vary the hierarchy sampling ratio $\rho$, dual-assignment budget $\kappa$, detection factor $\gamma$, and support budget $\beta$ across dimensions $d\in\{4,32,256,2048\}$.
Heuristically, $\rho$ trades hierarchy overhead against cross-level warm-start fidelity, $\kappa$ initial-support coverage against initial restricted-LP size, $\gamma$ per-round violation coverage against update aggressiveness, and $\beta$ support capacity against per-solve LP cost.
Overall, \ours exhibits robust efficiency across wide parameter ranges: runtime variations driven by $\rho$ and $\kappa$ remain within a factor of two, while memory and the effects of $\gamma$ and $\beta$ are remarkably stable.
We use $(\rho=0.25,\kappa=16,\gamma=2,\beta=10.0)$ throughout; complete results are provided in Appendix~\ref{app:ablation_params_full}.

\section{Conclusion}
\label{sec:conclusion}
We presented \ours, a hierarchical solver that casts large-scale discrete OT as a dual-guided edge-localization problem.
By using a common dual score for cross-level assignment and within-level violation detection, \ours combines coarse-to-fine initialization and active-support refinement with full-space KKT verification under an $\mathcal{O}(m+n)$ support budget.
The implemented solver returns a solution with the full relative KKT residual below a prescribed tolerance.
For exact-arithmetic refinement, we prove finite termination at a global optimum under a symbolic lexicographic rule.
Experiments demonstrate high objective and KKT accuracy, scalability across problem sizes and feature dimensions, and applicability to general pairwise costs.
Beyond discrete OT, we extended \ours to semi-discrete OT via dual-potential averaging, embedded it as an oracle for Gromov--Wasserstein and unbalanced OT, and leveraged it to construct endpoint couplings for Flow Matching.

Future work includes accelerating dual-violation detection without exhaustive pairwise evaluation and extending the dual-guided active-support framework directly to multi-marginal and unbalanced OT.

    \bibliographystyle{IEEEtran}
    \bibliography{references}
    \ifPaperWriteAppendixCounters
        \PaperWriteCounters
    \fi
    \ifPaperIncludeBiographies
        \input{author_bios}
    \fi
\fi

\ifPaperIncludeAppendix
    \ifPaperAppendixOnly
        \input{appendix_title}
        \InputIfFileExists{\PaperCounterDocument-counters.tex}{}{
            \PackageError{hello-paper}{Missing \PaperCounterDocument-counters.tex}{Build the TPAMI main document before the appendix}        }
    \else
        \clearpage
    \fi
    \appendices
    This supplementary material is organized as follows:
Appendices~\ref{app:restricted_foundations}--\ref{app:convergence} establish the foundational properties, algorithmic components, and convergence guarantees of \ours;
Appendices~\ref{app:semidiscrete_averaging}--\ref{app:unbalanced_ot} develop the theoretical error analysis and algorithmic derivations for the three OT-oracle extensions (semi-discrete OT, Gromov--Wasserstein, and unbalanced OT);
finally, Appendices~\ref{app:dataset}--\ref{app:additional_experiments} provide benchmark construction protocols, reference-uniqueness justifications, baseline implementation details, and extensive supplementary experiments and ablations.

\section{Restricted-OT Foundations, Certificates, and Sparsity}
\label{app:restricted_foundations}

This appendix provides the foundational properties of the restricted optimal transport (OT) subproblems solved at each refinement iteration of \ours.
We establish that:
(i)~every active support containing a northwest-corner basis admits a feasible basic solution (Appendix~\ref{app:northwest});
(ii)~the restricted primal--dual output can be rigorously certified against the full-scale problem via relative KKT residuals and dual-score violations (Appendix~\ref{app:optimality_certificate});
and (iii)~restricted solves can satisfy both the prescribed KKT tolerance and the sparsity requirement of budgeted pruning (Appendix~\ref{app:sparse_restricted}).

\subsection{Northwest-Corner Feasible Basis}
\label{app:northwest}

For an OT problem $\mathcal{P}$ with positive marginals $\mathbf{a}\in\mathbb{R}_+^m$ and $\mathbf{b}\in\mathbb{R}_+^n$ satisfying
\[
\sum_{i=1}^{m}a_i
=
\sum_{j=1}^{n}b_j,
\]
we write
\[
\mathcal{B}:=\nw(\mathcal{P})
\]
for the feasible basis constructed by the northwest-corner rule~\cite{peyre2019computational}.
Algorithm~\ref{alg:northwest} gives the construction used in our implementation. 
\begin{algorithm}[!t]
\caption{\nw}
\label{alg:northwest}
\begin{algorithmic}[1]
\STATE {\bfseries Input:} OT problem $\mathcal{P}$ with marginals $\mathbf{a}\in\mathbb{R}_+^m$ and $\mathbf{b}\in\mathbb{R}_+^n$.
\STATE {\bfseries Output:} northwest-corner feasible basis $\mathcal{B}$.
\STATE $\widehat{\mathbf{a}}\gets\mathbf{a}$, $\widehat{\mathbf{b}}\gets\mathbf{b}$, $\mathcal{B}\gets\emptyset$, $\mathbf{X}^{\mathcal{B}}\gets\mathbf{0}$.
\STATE $i\gets1$, $j\gets1$.
\WHILE{$i\leq m$ and $j\leq n$}
    \STATE $\mathcal{B}\gets\mathcal{B}\cup\{(i,j)\}$.
    \STATE $\theta\gets\min\{\widehat a_i,\widehat b_j\}$.
    \STATE $X^{\mathcal{B}}_{ij}\gets\theta$.
    \STATE $\widehat a_i\gets\widehat a_i-\theta$, $\widehat b_j\gets\widehat b_j-\theta$.
    \IF{$\widehat a_i=0$}
        \STATE $i\gets i+1$.
    \ELSE
        \STATE $j\gets j+1$.
    \ENDIF
\ENDWHILE
\STATE \textbf{return} $\mathcal{B}$.
\end{algorithmic}
\end{algorithm}

Under positive balanced marginals, Algorithm~\ref{alg:northwest} constructs a feasible transport plan $\mathbf{X}^{\mathcal{B}}$ and an associated spanning-tree basis $\mathcal{B}$ satisfying
\[
\operatorname{supp}(\mathbf{X}^{\mathcal{B}})
\subseteq
\mathcal{B},
\]
where the inclusion may be strict under degeneracy.
Consequently, every active support $\mathcal{N}\supseteq\mathcal{B}$ admits $\mathbf{X}^{\mathcal{B}}$ as a feasible plan, and hence the restricted OT problem associated with $(\mathcal{P},\mathcal{N})$ is feasible.

\subsection{Optimality Certificates for Restricted OT}
\label{app:optimality_certificate}

This subsection connects restricted OT to full OT by first proving the exact recovery result from the main text and then deriving the finite-precision certificate used by \solve.
For the full OT problem
\[
    \min_{x\ge0}\ c^\top x,
    \qquad Ax=q,
\]
the dual problem is
\[
    \max_y\ q^\top y,
    \qquad A^\top y\le c,
\]
where $\mathbf{y}=[\mathbf{f};\mathbf{g}]$ and $\mathbf{q}=[\mathbf{a};\mathbf{b}]$.

\begin{proof}[Proof of Proposition~\ref{prop:restricted_recovery}]
Let $v$ and $v_{\mathcal{N}}$ denote the optimal values of the full and restricted OT problems, respectively.
Since the restricted feasible set is contained in the full feasible set, we have
\[
v\leq v_{\mathcal{N}}.
\]
By assumption, the full problem admits an optimal solution $\mathbf{x}^{\star}$ satisfying
\[
\supp(\mathbf{x}^{\star})\subseteq\mathcal{N}.
\]
Therefore, $\mathbf{x}^{\star}$ is also feasible for the restricted problem, and hence
\[
v_{\mathcal{N}}
\leq
\mathbf{c}^{\top}\mathbf{x}^{\star}
=
v.
\]
Combining the two inequalities gives
\[
v_{\mathcal{N}}=v.
\]
Now let $\hat{\mathbf{x}}$ be any optimal solution of the restricted problem.
By \eqref{eq:restricted_ot_combined}, $\hat{\mathbf{x}}$ is feasible for the full problem, and
\[
\mathbf{c}^{\top}\hat{\mathbf{x}}
=
v_{\mathcal{N}}
=
v.
\]
Therefore, $\hat{\mathbf{x}}$ is globally optimal for the full OT problem.

For the second statement, $\hat{\mathbf{x}}$ is feasible for the full primal problem by \eqref{eq:restricted_ot_combined}. Restricted dual feasibility enforces $\hat{f}_i+\hat{g}_j\leq c_{ij}$ on $\mathcal{N}$, and together with the assumed inequalities on the complement, $(\hat{\mathbf{f}},\hat{\mathbf{g}})$ is feasible for the full dual problem.
Strong duality for the restricted problem gives
\[
\mathbf{c}^{\top}\hat{\mathbf{x}}
=
\mathbf{a}^{\top}\hat{\mathbf{f}}
+
\mathbf{b}^{\top}\hat{\mathbf{g}}.
\]
The full primal and dual feasible solutions therefore attain the same objective value, so both are globally optimal for the full OT problem.
\end{proof}

The preceding proposition gives both a structural recovery condition based on support containment and an exact certificate based on full dual feasibility.
We next formalize the finite-precision certificate used by \solve and active-support refinement.

\begin{definition}[Full and restricted KKT residuals]
\label{def:app_kkt}
Let $\mathcal{P}$ be an OT problem, let $\mathcal{U}:=[m]\times[n]$ denote its full edge set, and let $\mathcal{Z}=(\mathbf{x},\mathbf{f},\mathbf{g})$ be a primal--dual candidate with $\mathbf{x}\geq\mathbf{0}$ and $\mathbf{y}:=[\mathbf{f};\mathbf{g}]$.
Following the relative $\ell_2$ optimality criteria used by PDLP~\cite{applegate2021practical}, \cupdlpx~\cite{lu2025cupdlpx} and \hprlp~\cite{chen2026hpr}, we define the primal-feasibility residual, measuring the marginal violations of a returned coupling,
\[
\mathrm{pfeas}(\mathbf{x})
:=
\frac{\|\mathbf{A}\mathbf{x}-\mathbf{q}\|_2}
{1+\|\mathbf{q}\|_2}.
\]
the dual-feasibility residual over the full edge set, measuring the dual-constraint violations,
\[
\mathrm{dfeas}(\mathbf{y})
:=
\frac{\|(\mathbf{A}^{\top}\mathbf{y}-\mathbf{c})_+\|_2}
{1+\|\mathbf{c}\|_2},
\]
and the primal--dual gap,
\[
\mathrm{gap}(\mathbf{x},\mathbf{y})
:=
\frac{|\mathbf{c}^{\top}\mathbf{x}-\mathbf{q}^{\top}\mathbf{y}|}
{1+|\mathbf{c}^{\top}\mathbf{x}|+|\mathbf{q}^{\top}\mathbf{y}|}.
\]
For a restricted edge set $\mathcal{E}\subseteq\mathcal{U}$, let $\mathbf{A}_{\mathcal{E}}$ and $\mathbf{c}_{\mathcal{E}}$ denote the corresponding columns of $\mathbf{A}$ and coordinates of $\mathbf{c}$; the restricted dual-feasibility residual $\mathrm{dfeas}_{\mathcal{E}}(\mathbf{y})$ is defined analogously, with $\mathbf{A}_{\mathcal{E}}$ and $\mathbf{c}_{\mathcal{E}}$ in place of $\mathbf{A}$ and $\mathbf{c}$.
The full KKT residual is
\[
\kkt(\mathcal{P},\mathcal{Z})
:=
\max\left\{
\mathrm{pfeas}(\mathbf{x}),
\mathrm{gap}(\mathbf{x},\mathbf{y}),
\mathrm{dfeas}(\mathbf{y})
\right\},
\]
whereas the KKT residual restricted to an active support $\mathcal{N}$ is
\[
\kkt_{\mathcal{N}}(\mathcal{P},\mathcal{Z})
:=
\max\left\{
\mathrm{pfeas}(\mathbf{x}),
\mathrm{gap}(\mathbf{x},\mathbf{y}),
\mathrm{dfeas}_{\mathcal{N}}(\mathbf{y})
\right\}.
\]
Here $(\cdot)_+$ is applied componentwise.
\end{definition}

The operator $\solve(\mathcal{P},\mathcal{N};\varepsilon)$ returns such a candidate with $\operatorname{supp}(\mathbf{x})\subseteq\mathcal{N}$ and $\kkt_{\mathcal{N}}(\mathcal{P},\mathcal{Z})\leq\varepsilon$.
The following proposition shows that full-space dual feasibility is then the only remaining condition for a full certificate.
\begin{proposition}[Full $\varepsilon$-KKT certificate after a restricted solve]
\label{prop:restricted_kkt_completion}
Let
\[
\mathcal{Z}
=
\solve(\mathcal{P},\mathcal{N};\varepsilon),
\qquad
\mathcal{Z}=(\mathbf{x},\mathbf{f},\mathbf{g}),
\qquad
\mathbf{y}=[\mathbf{f};\mathbf{g}].
\]
Then
\[
\kkt(\mathcal{P},\mathcal{Z})\leq\varepsilon
\quad\Longleftrightarrow\quad
\mathrm{dfeas}(\mathbf{y})\leq\varepsilon.
\]
In particular, when $\varepsilon=0$, full dual feasibility certifies that $\mathcal{Z}$ is a globally optimal primal--dual solution of the full OT problem.
\end{proposition}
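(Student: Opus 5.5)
The equivalence follows by comparing the two residual definitions term by term. By the contract of $\solve$, the output satisfies $\kkt_{\mathcal{N}}(\mathcal{P},\mathcal{Z})\leq\varepsilon$. Hence $\mathrm{pfeas}(\mathbf{x})\leq\varepsilon$ and $\mathrm{gap}(\mathbf{x},\mathbf{y})\leq\varepsilon$.

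The key point is that these two quantities are computed identically in the restricted and full settings. Since $\supp(\mathbf{x})\subseteq\mathcal{N}$, the vector $\mathbf{A}\mathbf{x}$ involves only columns indexed by $\mathcal{N}$, so $\mathrm{pfeas}$ is the same in both residuals. Likewise $\mathbf{c}^\top\mathbf{x}=\mathbf{c}_{\mathcal{N}}^\top\mathbf{x}_{\mathcal{N}}$, and $\mathbf{q}^\top\mathbf{y}$ does not depend on the edge set at all, so $\mathrm{gap}$ is also unchanged. The only term that differs between $\kkt$ and $\kkt_{\mathcal{N}}$ is therefore the dual-feasibility residual.

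For the direction ($\Rightarrow$), the full KKT residual is a maximum that includes $\mathrm{dfeas}(\mathbf{y})$, so $\kkt(\mathcal{P},\mathcal{Z})\leq\varepsilon$ immediately gives $\mathrm{dfeas}(\mathbf{y})\leq\varepsilon$.

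For the direction ($\Leftarrow$), all three terms of $\kkt(\mathcal{P},\mathcal{Z})$ are now bounded by $\varepsilon$: $\mathrm{pfeas}$ and $\mathrm{gap}$ by the solver contract, and $\mathrm{dfeas}$ by assumption. Hence their maximum is at most $\varepsilon$. It is worth noting that $\mathrm{dfeas}_{\mathcal{N}}\leq\mathrm{dfeas}$ holds because the former uses a sub-vector with the same or a smaller normalizer. This inequality is not actually needed for the equivalence, although it shows the two conditions are consistent.

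For the case $\varepsilon=0$, the residuals give exact conditions:
\begin{itemize}
\item $\mathrm{pfeas}=0$ gives $\mathbf{A}\mathbf{x}=\mathbf{q}$, and together with $\mathbf{x}\geq\mathbf{0}$ this makes $\mathbf{x}$ primal feasible.
\item $\mathrm{dfeas}=0$ gives $(\mathbf{A}^\top\mathbf{y}-\mathbf{c})_+=\mathbf{0}$, that is, $f_i+g_j\leq c_{ij}$ for all pairs, so $\mathbf{y}$ is full-dual feasible.
\item $\mathrm{gap}=0$ gives $\mathbf{c}^\top\mathbf{x}=\mathbf{q}^\top\mathbf{y}$.
\end{itemize}
Weak duality then shows that both $\mathbf{x}$ and $\mathbf{y}$ are optimal, exactly as in the second part of Proposition~\ref{prop:restricted_recovery}.

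The main step requiring care is justifying that $\mathrm{pfeas}$ and $\mathrm{gap}$ are invariant when passing from the restricted to the full problem. This rests entirely on the solver returning $\mathbf{x}$ with support in $\mathcal{N}$, embedded by zero-padding into $\mathbb{R}^{mn}$.
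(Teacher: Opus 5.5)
Your proof is correct and follows the paper's argument. $\mathrm{pfeas}$ and $\mathrm{gap}$ enter $\mathrm{KKT}$ and $\mathrm{KKT}_{\mathcal N}$ identically and are bounded by the \solve contract, so the equivalence reduces to the $\mathrm{dfeas}$ term, and the case $\varepsilon=0$ follows from primal feasibility, full dual feasibility, and zero gap via weak duality.

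Your side remark is wrong, though harmless since you do not use it. The inequality $\mathrm{dfeas}_{\mathcal N}\le\mathrm{dfeas}$ does not follow, because the smaller normalizer $1+\|\mathbf c_{\mathcal N}\|_2$ enlarges the ratio rather than shrinking it. In fact $\mathrm{dfeas}_{\mathcal N}>\mathrm{dfeas}$ whenever some violation exists, all violations lie in $\mathcal N$, and some cost outside $\mathcal N$ is nonzero.
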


\begin{proof}
By the definition of \solve,
\[
\kkt_{\mathcal{N}}(\mathcal{P},\mathcal{Z})\leq\varepsilon,
\]
and hence both $\mathrm{pfeas}(\mathbf{x})$ and $\mathrm{gap}(\mathbf{x},\mathbf{y})$ are at most $\varepsilon$.
Because the restricted primal is represented in the full edge space with coordinates outside $\mathcal{N}$ set to zero, it has the same marginal constraints and objective value in the restricted and full formulations.
Thus, the only component of the full KKT residual not already controlled by \solve is full dual feasibility.
The stated equivalence follows from the definition of $\kkt(\mathcal{P},\mathcal{Z})$.
When $\varepsilon=0$, primal feasibility, full dual feasibility, and zero primal--dual gap imply global optimality. 
\end{proof}

\subsection{Budget-Compatible Sparsification for Restricted OT}
\label{app:sparse_restricted}
\subsubsection{Theoretical Guarantees}
For refinement, \solve must return a primal--dual solution that satisfies the prescribed restricted KKT tolerance and leaves sufficient room for the feasible basis and detected dual violations within the pruning budget.
Define the primal-support cap
\begin{equation}
\label{eq:app_primal_support_cap}
s_{\beta,\gamma}:=\left\lceil\beta(m+n)\right\rceil-(m+n-1)-\gamma(m+n).
\end{equation}
Because $|\mathcal{B}|\leq m+n-1$ and $|\mathcal{V}_\gamma|\leq\gamma(m+n)$, any primal solution satisfying $|\supp(\mathbf{x})|\leq s_{\beta,\gamma}$ also satisfies
\[
|\mathcal{B}\cup\supp(\mathbf{x})\cup\mathcal{V}_\gamma|\leq\left\lceil\beta(m+n)\right\rceil.
\]
Moreover, $\beta\geq\gamma+2$ implies $s_{\beta,\gamma}\geq m+n-1$, so the following cycle-cancellation procedure can always reach this cap.

\begin{proposition}[Budget-compatible cycle cancellation]
\label{prop:app_cycle_cancellation}
Let $\mathbf{x}\geq\mathbf{0}$ satisfy $\supp(\mathbf{x})\subseteq\mathcal{N}$, and let $s\geq m+n-1$ be an integer.
Cycle cancellation produces $\widehat{\mathbf{x}}$ satisfying
\[
\mathbf{A}\widehat{\mathbf{x}}=\mathbf{A}\mathbf{x},
\qquad
\supp(\widehat{\mathbf{x}})\subseteq\supp(\mathbf{x}),
\qquad
|\supp(\widehat{\mathbf{x}})|\leq s,
\]
and
\[
\mathbf{c}^{\top}\widehat{\mathbf{x}}\leq\mathbf{c}^{\top}\mathbf{x}.
\]
The procedure requires at most $\max\{0,|\supp(\mathbf{x})|-s\}$ cycle updates.
If $\mathbf{x}$ is an exact restricted optimum, then $\widehat{\mathbf{x}}$ is also an exact restricted optimum.
\end{proposition}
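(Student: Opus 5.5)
The argument is the standard cycle-cancellation step from transportation-problem theory, run inductively on $|\supp(\mathbf{x})|$. Identify each edge $(i,j)$ with an edge of the bipartite graph on $m$ source nodes and $n$ target nodes. If $|\supp(\mathbf{x})|\le s$, we stop immediately and set $\widehat{\mathbf{x}}=\mathbf{x}$. Otherwise $|\supp(\mathbf{x})|>s\ge m+n-1$. A forest on $m+n$ vertices has at most $m+n-1$ edges, so the support graph contains a cycle. Bipartiteness makes this cycle even, say $(i_1,j_1),(i_2,j_1),(i_2,j_2),\dots,(i_1,j_k)$.

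Let $\mathbf{d}$ alternate $+1$ and $-1$ along the cycle edges and vanish elsewhere. Each node on the cycle meets exactly one $+1$ edge and one $-1$ edge, so $\mathbf{A}\mathbf{d}=\mathbf{0}$. Replacing $\mathbf{d}$ by $-\mathbf{d}$ if needed, we may assume $\mathbf{c}^\top\mathbf{d}\le 0$. Set
\[
t:=\min\{x_e/|d_e| : d_e<0\}>0,
\]
which is well defined because $\mathbf{d}$ has negative entries on half the cycle. Then $\mathbf{x}':=\mathbf{x}+t\mathbf{d}$ has the following properties:
\begin{itemize}
\item $\mathbf{x}'\ge\mathbf{0}$ and $\mathbf{A}\mathbf{x}'=\mathbf{A}\mathbf{x}$;
\item $\supp(\mathbf{x}')\subseteq\supp(\mathbf{x})$, since $\mathbf{d}$ is supported on cycle edges, all of which lie in $\supp(\mathbf{x})$;
\item at least one minimizing edge is driven to zero, so the support size drops by at least one;
\item $\mathbf{c}^\top\mathbf{x}'=\mathbf{c}^\top\mathbf{x}+t\,\mathbf{c}^\top\mathbf{d}\le\mathbf{c}^\top\mathbf{x}$.
\end{itemize}

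Iterating, each update removes at least one support edge while preserving the three invariants. These invariants are marginal preservation, support inclusion (hence $\supp\subseteq\mathcal{N}$), and cost monotonicity. The procedure therefore reaches $|\supp(\widehat{\mathbf{x}})|\le s$ after at most $|\supp(\mathbf{x})|-s$ updates, which gives the $\max\{0,\cdot\}$ bound. Cost monotonicity and support inclusion chain through the iterations by transitivity.

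For the final claim, suppose $\mathbf{x}$ is an exact restricted optimum. Then $\widehat{\mathbf{x}}$ is restricted-feasible, because it is nonnegative, satisfies $\mathbf{A}\widehat{\mathbf{x}}=\mathbf{A}\mathbf{x}=\mathbf{q}$, and has support in $\mathcal{N}$. Its cost is at most the optimal value, so it is optimal. In fact optimality forces $\mathbf{c}^\top\mathbf{d}=0$ at each step, but the argument does not need this.

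The only delicate point is sign bookkeeping: we must ensure that cancellation never increases cost and never leaves the support. Choosing the orientation with $\mathbf{c}^\top\mathbf{d}\le0$ and restricting the step to existing support edges handles both.
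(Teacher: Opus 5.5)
Your proposal is correct and follows essentially the same argument as the paper. Both find a cycle from the $m+n-1$ forest edge bound and cancel along an alternating direction oriented so that $\mathbf{c}^\top\mathbf{d}\le 0$ with the ratio-test step, iterate at most $|\supp(\mathbf{x})|-s$ times, and get preservation of optimality from restricted feasibility together with the cost not increasing.
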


\begin{proof}
If $|\supp(\mathbf{x})|\leq s$, take $\widehat{\mathbf{x}}=\mathbf{x}$.
Otherwise, construct the bipartite support graph whose source--target edges correspond to the positive entries of $\mathbf{x}$.
Since this graph has more than $s\geq m+n-1$ edges, it contains a cycle.
Define a direction $\mathbf{d}\in\mathbb{R}^{mn}$ whose entries alternate between $+1$ and $-1$ along this cycle and vanish elsewhere.
The alternating construction gives
\[
\mathbf{A}\mathbf{d}=\mathbf{0}.
\]
Replacing $\mathbf{d}$ by $-\mathbf{d}$ when necessary, we may choose its orientation such that
\[
\mathbf{c}^{\top}\mathbf{d}\leq 0.
\]
Define
\[
\theta
:=
\min_{e:d_e<0}
\frac{x_e}{-d_e}.
\]
Then $\theta>0$, and the update
\[
\mathbf{x}\leftarrow\mathbf{x}+\theta\mathbf{d}
\]
preserves $\mathbf{A}\mathbf{x}$, maintains nonnegativity, and does not increase the transport cost.
By the definition of $\theta$, at least one edge on the cycle becomes zero, while no new positive edge is introduced outside the current support.
Repeating the update until the support contains at most $s$ edges requires at most $\max\{0,|\supp(\mathbf{x})|-s\}$ cycle cancellations.
If $\mathbf{x}$ is an exact restricted optimum, then $\mathbf{A}\widehat{\mathbf{x}}=\mathbf{q}$, and a strict objective decrease would contradict its optimality.
\end{proof}

\begin{proposition}[KKT preservation after sparsification]
\label{prop:app_sparsification_kkt}
Let $\mathcal{Z}=(\mathbf{x},\mathbf{f},\mathbf{g})$ with $\mathbf{y}:=[\mathbf{f};\mathbf{g}]$, let $\widehat{\mathbf{x}}$ be produced by Proposition~\ref{prop:app_cycle_cancellation}, and define
\[
\begin{aligned}
P&:=\mathbf{c}^{\top}\mathbf{x},
&\widehat{P}&:=\mathbf{c}^{\top}\widehat{\mathbf{x}},\\
D&:=\mathbf{a}^{\top}\mathbf{f}+\mathbf{b}^{\top}\mathbf{g},\\
\omega&:=\mathbf{1}^{\top}\mathbf{a}=\mathbf{1}^{\top}\mathbf{b}.
\end{aligned}
\]
If $\widehat{P}\geq D$, set $(\widehat{\mathbf{f}},\widehat{\mathbf{g}}):=(\mathbf{f},\mathbf{g})$.
Otherwise, set
\[
\delta:=\frac{D-\widehat{P}}{\omega},
\qquad
\widehat{\mathbf{f}}:=\mathbf{f}-\frac{\delta}{2}\mathbf{1},
\qquad
\widehat{\mathbf{g}}:=\mathbf{g}-\frac{\delta}{2}\mathbf{1}.
\]
Let $\widehat{\mathbf{y}}:=[\widehat{\mathbf{f}};\widehat{\mathbf{g}}]$ and $\widehat{\mathcal{Z}}:=(\widehat{\mathbf{x}},\widehat{\mathbf{f}},\widehat{\mathbf{g}})$.
Then
\[
\begin{aligned}
\mathrm{pfeas}(\widehat{\mathbf{x}})
&=\mathrm{pfeas}(\mathbf{x}),\\
\mathrm{gap}(\widehat{\mathbf{x}},\widehat{\mathbf{y}})
&\leq\mathrm{gap}(\mathbf{x},\mathbf{y}),\\
\mathrm{dfeas}_{\mathcal{E}}(\widehat{\mathbf{y}})
&\leq\mathrm{dfeas}_{\mathcal{E}}(\mathbf{y})
\end{aligned}
\]
for every edge set $\mathcal{E}\subseteq\mathcal{U}$.
Consequently,
\[
\kkt_{\mathcal{N}}(\mathcal{P},\widehat{\mathcal{Z}})
\leq
\kkt_{\mathcal{N}}(\mathcal{P},\mathcal{Z}).
\]
\end{proposition}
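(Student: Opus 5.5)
We verify the three inequalities one at a time and then combine them through the definition of $\kkt_{\mathcal{N}}$.

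\emph{Primal feasibility.} Proposition~\ref{prop:app_cycle_cancellation} gives $\mathbf{A}\widehat{\mathbf{x}}=\mathbf{A}\mathbf{x}$. Hence $\|\mathbf{A}\widehat{\mathbf{x}}-\mathbf{q}\|_2=\|\mathbf{A}\mathbf{x}-\mathbf{q}\|_2$, and $\mathrm{pfeas}$ is unchanged. The same proposition gives $\widehat P\le P$, which is used in the gap step below.

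\emph{Dual feasibility.} In the first case $\widehat{\mathbf y}=\mathbf y$, so nothing needs to be shown. In the second case $\widehat P<D$, so $\delta>0$. For every edge $(i,j)$,
\[
\widehat f_i+\widehat g_j-c_{ij}=f_i+g_j-c_{ij}-\delta,
\]
so each entry of $(\mathbf{A}_{\mathcal{E}}^{\top}\widehat{\mathbf y}-\mathbf c_{\mathcal E})_+$ is at most the corresponding entry for $\mathbf y$. This holds because $t\mapsto t_+$ is nondecreasing. Monotonicity of the $\ell_2$ norm on nonnegative vectors then gives $\mathrm{dfeas}_{\mathcal E}(\widehat{\mathbf y})\le\mathrm{dfeas}_{\mathcal E}(\mathbf y)$ for every $\mathcal E$.

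\emph{Gap.} First compute the new dual value. In the second case,
\[
\widehat D=D-\tfrac{\delta}{2}(\mathbf 1^\top\mathbf a+\mathbf 1^\top\mathbf b)=D-\delta\omega=\widehat P.
\]
This is where $\omega$ being the common total mass is essential. So in this case the new gap is exactly zero, and the inequality is trivial.

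In the first case, $\widehat P\ge D$ and $\widehat{\mathbf y}=\mathbf y$. We compare
\[
h(\widehat P)=\frac{\widehat P-D}{1+|\widehat P|+|D|}
\quad\text{with}\quad
\frac{|P-D|}{1+|P|+|D|}.
\]
Since $D\le\widehat P\le P$, we have $P-D\ge\widehat P-D\ge0$. The function $h(t)=(t-D)/(1+|t|+|D|)$ is nondecreasing in $t$ on $[D,\infty)$:
\begin{itemize}
\item For $t\ge0$, the derivative has numerator $(1+t+|D|)-(t-D)=1+|D|+D\ge1>0$.
\item For $t<0$, the numerator increases while the denominator $1-t+|D|$ decreases, so $h$ increases.
\end{itemize}
Hence $h(\widehat P)\le h(P)=\mathrm{gap}(\mathbf x,\mathbf y)$. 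The main obstacle is this sign-sensitive monotonicity of the relative gap, because the denominator also moves with $\widehat P$. Handling it by the derivative computation above, split by the sign of $t$, is the plan.

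\emph{Conclusion.} Take $\mathcal E=\mathcal N$ and take the maximum of the three componentwise inequalities. This gives $\kkt_{\mathcal N}(\mathcal P,\widehat{\mathcal Z})\le\kkt_{\mathcal N}(\mathcal P,\mathcal Z)$.
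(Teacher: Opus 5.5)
Your proof is correct and follows the paper's own argument. The paper uses the same four ingredients: unchanged marginals for $\mathrm{pfeas}$, a uniform downward shift $\widehat\sigma_{ij}=\sigma_{ij}-\delta$ for every $\mathrm{dfeas}_{\mathcal E}$, an exactly zero gap after the correction since $D-\delta\omega=\widehat P$, and monotonicity of the relative gap when $D\le\widehat P\le P$. Your sign-split derivative argument for $h(t)=(t-D)/(1+|t|+|D|)$ supplies the detail behind the paper's one-line claim that replacing $P$ by $\widehat P$ cannot increase the relative gap.
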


\begin{proof}
Cycle cancellation gives $\mathbf{A}\widehat{\mathbf{x}}=\mathbf{A}\mathbf{x}$, so the primal-feasibility residual is unchanged.
When $\widehat{P}\geq D$, we have $D\leq\widehat{P}\leq P$, so keeping the dual potentials fixed and replacing $P$ by $\widehat{P}$ cannot increase the relative primal--dual gap.
When $\widehat{P}<D$, the corrected dual objective satisfies
\[
\mathbf{a}^{\top}\widehat{\mathbf{f}}+\mathbf{b}^{\top}\widehat{\mathbf{g}}
=
D-\delta\omega
=
\widehat{P},
\]
and hence the corrected primal--dual gap is zero.
Moreover, every dual score is shifted uniformly as
\[
\widehat{\sigma}_{ij}=\sigma_{ij}-\delta,
\]
so no dual-feasibility residual can increase.
If $\mathcal{Z}$ is an exact restricted optimum, Proposition~\ref{prop:app_cycle_cancellation} gives $\widehat{P}=P=D$, and therefore the correction is inactive.
\end{proof}

\subsubsection{Implementation and Overhead}
For refinement, we include this construction in the output routine of \solve.
If the primal support returned by the LP backend already satisfies~\eqref{eq:app_primal_support_cap}, the primal--dual output is unchanged.
Otherwise, Proposition~\ref{prop:app_cycle_cancellation} is applied with $s=s_{\beta,\gamma}$, followed by the dual correction in Proposition~\ref{prop:app_sparsification_kkt} when necessary.
The resulting output satisfies the original restricted KKT tolerance and the support requirement of budgeted pruning.

We implement cycle cancellation using a dynamic spanning forest maintained by a link--cut tree.
Let $E=|\supp(\mathbf{x})|$ and $V=m+n$.
A disjoint-set pass identifies an initial spanning forest in $\mathcal O(E\alpha(V))$ time, after which the dynamic forest is initialized in $\mathcal O(V\log V)$ time.
Each cycle update requires amortized $\mathcal O(\log V)$ time for path exposure, the alternating update, and a possible edge replacement.
Since at most $E-s$ updates are required, the total time is $\mathcal O(E\log V)$ and the memory cost is $\mathcal O(E+V)$.
Under the linear active-support budget $E=\mathcal O(m+n)$, these bounds reduce to $\mathcal O((m+n)\log(m+n))$ time and $\mathcal O(m+n)$ memory.

We evaluate this safeguard under the maximal initial-support bound induced by the default parameters $\kappa=16$, $\beta=10$, and $\gamma=2$.
For $m=n$, we construct a fully positive active support with $\max\left\{(\kappa+1)(m+n)-1,\;\beta(m+n)\right\}=34n-1$ edges using cyclic matchings and sparsify it to the cap $s_{\beta,\gamma}=14n+1$.
Source and target points are sampled independently from standard Gaussian distributions, and we evaluate $\ell_2^2$, $\ell_1$, $\ell_2$, and $\ell_\infty$ costs in dimensions $d\in\{4,128,4096\}$.
Figure~\ref{fig:support_sparsification_scaling} reports the resulting runtime scaling.

\begin{figure}[t]
    \centering
    \includegraphics[width=\columnwidth]{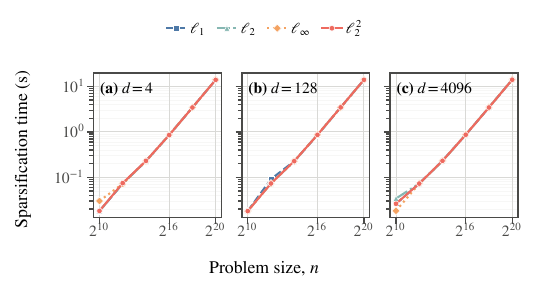}
    \caption{\textbf{Safeguard sparsification runtime.}
    Wall-clock time for reducing a fully positive active support from $34n-1$ edges to at most $14n+1$.
    Panels (a)--(c) use $d=4$, $128$, and $4096$, respectively.
    Edge costs are precomputed and excluded from timing; curves report medians over five runs.}
    \label{fig:support_sparsification_scaling}
\end{figure}

The near-overlapping curves show that, once edge costs are available, the sparsification overhead is insensitive to the ambient dimension and cost type and is consistent with the $\mathcal O(n\log n)$ bound.

Notably, this safeguard is rarely needed in practice.
Every restricted-LP output in our reported solver runs already satisfied~\eqref{eq:app_primal_support_cap}, so sparsification was never triggered during optimization.

\section{Hierarchy Construction and Dual-Inheritance Guarantees}
\label{app:hierarchy_details}

The hierarchy consists of nested source and target supports with their normalized marginal masses.
Each level retains the selected samples from the original supports and uses the same pairwise cost function $c$.
This section first specifies the hierarchy construction and then quantifies how random subsampling affects inherited dual potentials and dual-assigned support.
\subsection{Hierarchy Construction via Recursive Subsampling}

At each coarsening step, \hierarchy subsamples only the larger side and leaves the other side unchanged, which later permits direct inheritance of the shared-side dual potential.
\begin{algorithm}[!t]
\caption{\hierarchy: Recursive subsampling}
\label{alg:recursive_random_split}
\begin{algorithmic}[1]
\STATE {\bfseries Input:} finest-level OT problem $\mathcal P^{(0)}=(\mathcal S,\mathcal D,\mathbf a,\mathbf b,c)$, threshold $\tau\in\mathbb{Z}_{>0}$, and sampling ratio $0<\rho\leq\tau/(\tau+1)$.
\STATE {\bfseries Output:} hierarchy $\{\mathcal P^{(\ell)}\}_{\ell=0}^{L}$.
\medskip
\STATE Let $a_K=\sum\limits_{r\in K}a_r,\,b_K=\sum\limits_{r\in K}b_r$ for any index set $K$.
\STATE $I^{(0)}\gets[m]$, $J^{(0)}\gets[n]$, and $\ell\gets0$
\smallskip
\WHILE{$|I^{(\ell)}|>\tau$ or $|J^{(\ell)}|>\tau$}
\smallskip
    \IF{$|I^{(\ell)}|\ge |J^{(\ell)}|$}
        \STATE $I^{(\ell+1)}\gets\operatorname{Rand}_{\rho}(I^{(\ell)}),\quad J^{(\ell+1)}\gets J^{(\ell)}$
        
    \ELSE
        \STATE $I^{(\ell+1)}\gets I^{(\ell)}, \quad J^{(\ell+1)}\gets\operatorname{Rand}_{\rho}(J^{(\ell)})$
    \ENDIF
    \smallskip
    \STATE $\mathcal S^{(\ell+1)}\gets \{s_i\}_{i\in I^{(\ell+1)}}$
    \STATE $\mathcal D^{(\ell+1)} \gets \{d_j\}_{j\in J^{(\ell+1)}}$

    \STATE $\mathbf a^{(\ell+1)}\gets\left(a_i/a_{I^{(\ell+1)}}\right)_{i\in I^{(\ell+1)}}$
    \STATE $\mathbf b^{(\ell+1)}\gets\left(b_j/b_{J^{(\ell+1)}}\right)_{j\in J^{(\ell+1)}}$
    \smallskip
    \STATE $\mathcal P^{(\ell+1)}\gets(\mathcal S^{(\ell+1)},\mathcal D^{(\ell+1)},\mathbf a^{(\ell+1)},\mathbf b^{(\ell+1)},c)$
    \STATE $\ell\gets\ell+1$
\ENDWHILE
\STATE $L\gets\ell$
\medskip
\STATE \textbf{return} $\{\mathcal P^{(\ell)}\}_{\ell=0}^{L}$.
\end{algorithmic}
\end{algorithm}

\subsection{Dual Inheritance Accuracy under Random Subsampling}
\label{app:hierarchy_stability}

We analyze one target-inherited step of the hierarchy, where the target measure is fixed and the source side is randomly subsampled.
The result covers uniform source marginals generated from a regular latent distribution; the source-inherited case follows by exchanging the two marginals.

For an integer $N\geq 1$, define
\begin{equation}
    r_q(N)
    =
    \begin{cases}
        N^{-1/2}, & q=1,\\[1mm]
        \bigl(\log(eN)/N\bigr)^{1/2}, & q=2,\\[1mm]
        N^{-1/q}, & q\geq 3.
    \end{cases}
    \label{eq:app_empirical_w2_rate}
\end{equation}
Let $\mu_\star$ be a continuous probability measure supported on a compact $q$-dimensional regular set $\mathcal M\subset\mathbb R^d$.
We assume that there is a constant $C_\star>0$ such that, for the empirical measure $\mu_{\star,N}=N^{-1}\sum_{i=1}^N\delta_{z_i}$ of $N$ independent samples from $\mu_\star$,
\begin{equation}
    \mathbb E W_2(\mu_{\star,N},\mu_\star)
    \leq
    C_\star r_q(N),
    \qquad N\geq 1.
    \label{eq:app_latent_empirical_w2}
\end{equation}
For example,~\eqref{eq:app_latent_empirical_w2} holds when $\mu_\star$ has a density bounded above and away from zero on a $q$-dimensional flat torus; analogous rates hold for regular measures on compact manifolds~\cite{divol2021short,borda2023empirical}.
The following lemma provides the measure-level approximation bound needed for the subsequent dual-inheritance analysis.

\begin{lemma}[Wasserstein accuracy of random subsampling]
\label{lemma:random_hierarchy_wasserstein}
Let $s_1,\ldots,s_m$ be independent samples from $\mu_\star$, and define the fine empirical source measure
\[
    \mu_m=\frac{1}{m}\sum_{i=1}^m\delta_{s_i}.
\]
Let $S$ be a uniformly sampled $k$-element subset of $[m]$, independent of the samples, where $1\leq k<m$, and define the renormalized coarse measure
\[
    \widehat\mu_k=\frac{1}{k}\sum_{i\in S}\delta_{s_i}.
\]
Under~\eqref{eq:app_latent_empirical_w2},
\begin{equation}
    \mathbb E W_2(\mu_m,\widehat\mu_k)
    \leq
    C_\star
    \sqrt{1-\frac{k}{m}}
    \bigl(r_q(k)+r_q(m-k)\bigr),
    \label{eq:app_random_hierarchy_w2}
\end{equation}
where the expectation is over both the initial samples and the random subset.
In particular, for $k=\lceil\rho m\rceil<m$ with a fixed $\rho\in(0,1)$, there is a constant $C_{\star,\rho}$ independent of $m$ such that
\begin{equation}
    \mathbb E W_2(\mu_m,\widehat\mu_k)
    \leq
    C_{\star,\rho}r_q(m).
    \label{eq:app_proportional_hierarchy_w2}
\end{equation}
Thus, for fixed $q$ and $\rho$, the discrepancy vanishes as $m\to\infty$; the exponent is governed by the intrinsic dimension $q$ rather than the ambient dimension $d$.
\end{lemma}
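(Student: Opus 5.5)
The plan is to write the fine measure as a convex combination of the subsampled measure and its complement, and then control the distance through each piece's distance to $\mu_\star$. Let $S^c=[m]\setminus S$ and define the complementary empirical measure $\widetilde\mu_{m-k}=\frac{1}{m-k}\sum_{i\in S^c}\delta_{s_i}$. Then
\[
\mu_m=\tfrac{k}{m}\widehat\mu_k+\bigl(1-\tfrac{k}{m}\bigr)\widetilde\mu_{m-k}.
\]

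\textbf{Step 1 (mixing bound).} I will use a coupling that leaves the shared component $\frac{k}{m}\widehat\mu_k$ in place and transports the remaining mass $(1-\frac{k}{m})\widetilde\mu_{m-k}$ optimally onto $(1-\frac{k}{m})\widehat\mu_k$. This gives
\[
W_2^2(\mu_m,\widehat\mu_k)\leq \bigl(1-\tfrac{k}{m}\bigr)W_2^2(\widetilde\mu_{m-k},\widehat\mu_k),
\]
so that
\[
W_2(\mu_m,\widehat\mu_k)\leq\sqrt{1-\tfrac{k}{m}}\,W_2(\widetilde\mu_{m-k},\widehat\mu_k).
\]
This is exactly where the factor $\sqrt{1-k/m}$ in \eqref{eq:app_random_hierarchy_w2} comes from. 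Joint convexity of $W_2^2$ gives the same inequality directly.

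\textbf{Step 2 (triangle inequality).} Next I bound
\[
W_2(\widetilde\mu_{m-k},\widehat\mu_k)\leq W_2(\widehat\mu_k,\mu_\star)+W_2(\widetilde\mu_{m-k},\mu_\star).
\]
Because $S$ is uniform and independent of the i.i.d. samples $s_1,\dots,s_m$, conditioning on $S$ shows that $\{s_i\}_{i\in S}$ is an i.i.d. sample of size $k$ from $\mu_\star$. The same holds for $\{s_i\}_{i\in S^c}$, with size $m-k$. Applying \eqref{eq:app_latent_empirical_w2} conditionally on $S$ and then taking expectations yields $C_\star r_q(k)$ and $C_\star r_q(m-k)$, which proves \eqref{eq:app_random_hierarchy_w2}. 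The justification that needs care is this conditioning/exchangeability step: the subset indices must be chosen independently of the sample values.

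\textbf{Step 3 (proportional case).} With $k=\lceil\rho m\rceil<m$, we have $k\geq\rho m$. Also $m-k\geq1$, and more usefully $m-k\geq (1-\rho)m-1$. The point requiring attention is that $m-k$ can be small when $m$ is small, which I will handle with the prefactor. Each $r_q$ is nonincreasing and satisfies $r_q(cN)\leq C_c\, r_q(N)$ for fixed $c\in(0,1]$; for $q=2$ one checks that $\log(ecN)\leq\log(eN)$. Hence $r_q(k)\leq C_\rho r_q(m)$.

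For the complement term I would first try to use the prefactor:
\[
\sqrt{1-k/m}\leq\sqrt{(m-k)/m}.
\]
This gives
\[
\sqrt{(m-k)/m}\;r_q(m-k)\leq \sqrt{N/m}\;r_q(N)\quad\text{with }N=m-k\leq m.
\]
Since $\sqrt N\,r_q(N)$ is nondecreasing in $N$ (as $N^{1/2-1/q}$, or $\sqrt{\log(eN)}$, or constant), this is at most $r_q(m)$. This avoids any lower bound on $m-k$ and yields \eqref{eq:app_proportional_hierarchy_w2} with $C_{\star,\rho}=C_\star(C_\rho+1)$.

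The final remark of the lemma follows because $r_q(m)\to0$ and depends only on $q$.
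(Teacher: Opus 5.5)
Your proposal is correct and matches the paper's proof: the same split $\mu_m=\frac{k}{m}\widehat\mu_k+\frac{m-k}{m}\widetilde\mu_{m-k}$ with the identity-plus-optimal coupling producing the factor $\sqrt{1-k/m}$, the triangle inequality through $\mu_\star$, and exchangeability of the retained and omitted samples to invoke the empirical rate. The only difference is in the proportional case: the paper notes that $k$ and $m-k$ are proportional to $m$ for large $m$ and enlarges the constant for the finitely many small $m$, whereas you absorb $r_q(m-k)$ into the prefactor using the monotonicity of $\sqrt{N}\,r_q(N)$, which gives an explicit constant valid for every $m$.
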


\begin{proof}
Let $S^{\mathrm c}=[m]\setminus S$ and introduce the empirical measure of the omitted samples,
\[
    \widetilde\mu_{m-k}
    =
    \frac{1}{m-k}\sum_{i\in S^{\mathrm c}}\delta_{s_i}.
\]
The fine measure admits the exact mixture decomposition
\begin{equation*}
    \mu_m
    =
    \frac{k}{m}\widehat\mu_k
    +
    \frac{m-k}{m}\widetilde\mu_{m-k}.
\end{equation*}
Let $\pi$ be an optimal coupling between $\widetilde\mu_{m-k}$ and $\widehat\mu_k$.
Coupling the first component of this decomposition identically and using $\pi$ for the second component gives a coupling between $\mu_m$ and $\widehat\mu_k$ with squared transport cost
\[
    \frac{m-k}{m}
    W_2^2(\widetilde\mu_{m-k},\widehat\mu_k).
\]
Consequently,
\begin{align*}
    W_2(\mu_m,\widehat\mu_k)
    &\leq
    \sqrt{1-\frac{k}{m}}
    W_2(\widetilde\mu_{m-k},\widehat\mu_k)\\
    &\leq
    \sqrt{1-\frac{k}{m}}
    \left(
        W_2(\widetilde\mu_{m-k},\mu_\star)
        +
        W_2(\mu_\star,\widehat\mu_k)
    \right).
\end{align*}
By exchangeability, $\widehat\mu_k$ and $\widetilde\mu_{m-k}$ have the laws of empirical measures formed from $k$ and $m-k$ independent samples from $\mu_\star$, respectively.
Taking expectations in this bound and applying~\eqref{eq:app_latent_empirical_w2} proves~\eqref{eq:app_random_hierarchy_w2}.
When $k=\lceil\rho m\rceil$, both $k$ and $m-k$ are proportional to $m$ for all sufficiently large $m$, so~\eqref{eq:app_empirical_w2_rate} implies~\eqref{eq:app_proportional_hierarchy_w2} after adjusting the constant for finitely many smaller values of $m$.
\end{proof}

We next translate the measure approximation from Lemma~\ref{lemma:random_hierarchy_wasserstein} into an error bound for the target-side dual potential inherited from the coarse problem.
For a fixed target measure
\[
    \nu=\sum_{j=1}^n b_j\delta_{d_j},
    \qquad \mathbf b\in\Delta_n,
\]
define, for any source measure $\sigma$, the target-side semi-dual objective
\[
    \Phi_\sigma(\mathbf g)
    =
    \mathbf b^\top\mathbf g
    +
    \int
    \min_{j\in[n]}
    \{c(s,d_j)-g_j\}
    \,\mathrm d\sigma(s).
\]
Let
\[
    \mathcal G_\sigma
    =
    \operatorname*{arg\,max}_{\mathbf g\in\mathbb R^n}
    \Phi_\sigma(\mathbf g),
    \qquad
    \Phi_\sigma^\star
    =
    \max_{\mathbf g\in\mathbb R^n}
    \Phi_\sigma(\mathbf g).
\]
Throughout this subsection, the distance to a set is measured in the Euclidean norm:
\[
    \operatorname{dist}(\mathbf g,\mathcal G)
    :=
    \inf_{\mathbf h\in\mathcal G}
    \|\mathbf g-\mathbf h\|_2.
\]

\begin{theorem}[Dual-inheritance stability]
\label{thm:dual_guided_warmstart_error}
Assume that $s\mapsto c(s,d_j)$ is $L_c$-Lipschitz on $\mathcal M$ for every $j\in[n]$.
For any coarse target-side optimum $\widehat{\mathbf g}_k\in\mathcal G_{\widehat\mu_k}$,
\begin{equation}
    0
    \leq
    \Phi_{\mu_m}^\star-\Phi_{\mu_m}(\widehat{\mathbf g}_k)
    \leq
    2L_cW_1(\mu_m,\widehat\mu_k)
    \leq
    2L_cW_2(\mu_m,\widehat\mu_k).
    \label{eq:app_dual_warmstart_objective}
\end{equation}
If, in addition,~\eqref{eq:app_latent_empirical_w2} holds, then
\begin{equation}
    \mathbb E\!\left[
    \Phi_{\mu_m}^\star-\Phi_{\mu_m}(\widehat{\mathbf g}_k)
    \right]
    \leq
    2L_cC_\star
    \sqrt{1-\frac{k}{m}}
    \bigl(r_q(k)+r_q(m-k)\bigr).
    \label{eq:app_dual_warmstart_expected}
\end{equation}
In particular, if $k=\lceil\rho m\rceil<m$ for a fixed $\rho\in(0,1)$, then
\begin{equation}
    \mathbb E\!\left[
    \Phi_{\mu_m}^\star-\Phi_{\mu_m}(\widehat{\mathbf g}_k)
    \right]
    \leq
    2L_cC_{\star,\rho}r_q(m).
    \label{eq:app_dual_warmstart_proportional}
\end{equation}
If the fine semi-dual objective further satisfies the instance-dependent error bound
\begin{equation}
    \Phi_{\mu_m}^\star-\Phi_{\mu_m}(\mathbf g)
    \geq
    \kappa_m
    \operatorname{dist}
    (\mathbf g,\mathcal G_{\mu_m}),
    \qquad \mathbf g\in\mathbb R^n,
    \label{eq:app_semidual_sharpness}
\end{equation}
for some $\kappa_m>0$, then
\begin{equation}
    \operatorname{dist}
    (\widehat{\mathbf g}_k,\mathcal G_{\mu_m})
    \leq
    \frac{2L_c}{\kappa_m}
    W_1(\mu_m,\widehat\mu_k)
    \leq
    \frac{2L_c}{\kappa_m}
    W_2(\mu_m,\widehat\mu_k).
    \label{eq:app_dual_warmstart_deterministic}
\end{equation}
\end{theorem}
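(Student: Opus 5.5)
The plan is to prove the deterministic objective bound \eqref{eq:app_dual_warmstart_objective} through a uniform Lipschitz estimate for the semi-dual in the source measure. The expected and distance bounds then follow from Lemma~\ref{lemma:random_hierarchy_wasserstein} and from the sharpness assumption \eqref{eq:app_semidual_sharpness}.

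\textbf{Step 1: a uniform $W_1$-Lipschitz estimate.} Fix $\mathbf g\in\mathbb R^n$ and set $h_{\mathbf g}(s):=\min_{j\in[n]}\{c(s,d_j)-g_j\}$. Each $s\mapsto c(s,d_j)-g_j$ is $L_c$-Lipschitz on $\mathcal M$, and a pointwise minimum of finitely many $L_c$-Lipschitz functions is again $L_c$-Lipschitz. Both measures are supported on $\mathcal M$, and Kantorovich--Rubinstein duality applies with test functions on $\mathcal M$, since any $L_c$-Lipschitz function on $\mathcal M$ is an admissible potential for the cost $\|s-s'\|$ there. Hence, for all $\mathbf g$,
\[
|\Phi_{\mu_m}(\mathbf g)-\Phi_{\widehat\mu_k}(\mathbf g)|
=
\Bigl|\int h_{\mathbf g}\,\mathrm d(\mu_m-\widehat\mu_k)\Bigr|
\leq L_c W_1(\mu_m,\widehat\mu_k).
\]
The term $\mathbf b^\top\mathbf g$ cancels because the target is shared.

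\textbf{Step 2: the standard two-sided argmax comparison.} The lower bound $0\le\cdot$ is immediate from the definition of $\Phi_{\mu_m}^\star$. The fine maximum is attained, because $\Phi_\sigma$ is concave, invariant under $\mathbf g\mapsto\mathbf g+t\mathbf 1$, and bounded above; attainment is the standard semi-dual fact for a discrete target, and I would take it as given. Pick $\mathbf g^\star\in\mathcal G_{\mu_m}$. Then
\[
\Phi_{\mu_m}^\star-\Phi_{\mu_m}(\widehat{\mathbf g}_k)
=[\Phi_{\mu_m}(\mathbf g^\star)-\Phi_{\widehat\mu_k}(\mathbf g^\star)]
+[\Phi_{\widehat\mu_k}(\mathbf g^\star)-\Phi_{\widehat\mu_k}(\widehat{\mathbf g}_k)]
+[\Phi_{\widehat\mu_k}(\widehat{\mathbf g}_k)-\Phi_{\mu_m}(\widehat{\mathbf g}_k)].
\]
The middle bracket is $\le0$ by the optimality of $\widehat{\mathbf g}_k$ for the coarse problem. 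Each outer bracket is at most $L_cW_1$ by Step 1. The final inequality $W_1\le W_2$ follows from Jensen's inequality applied to any coupling.

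\textbf{Step 3: expectation, proportional rate, and distance bound.} Taking expectations in \eqref{eq:app_dual_warmstart_objective} and inserting \eqref{eq:app_random_hierarchy_w2} gives \eqref{eq:app_dual_warmstart_expected}. Inserting \eqref{eq:app_proportional_hierarchy_w2} gives \eqref{eq:app_dual_warmstart_proportional}. Finally, applying the error bound \eqref{eq:app_semidual_sharpness} at $\mathbf g=\widehat{\mathbf g}_k$ and chaining it with \eqref{eq:app_dual_warmstart_objective}, then dividing by $\kappa_m>0$, yields \eqref{eq:app_dual_warmstart_deterministic}.

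\textbf{Main obstacle.} Little here is hard. The only delicate points are in Steps 1 and 2: the Lipschitz property must be checked on $\mathcal M$ rather than on all of $\mathbb R^d$ (the duality step above handles this), and the attainment of $\mathcal G_{\mu_m}$ must be justified. If attainment were in doubt, I would instead work with $\varepsilon$-maximizers $\mathbf g^\star$ and let $\varepsilon\to0$.
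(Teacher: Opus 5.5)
Your proposal is correct and follows essentially the same route as the paper's proof: a uniform $L_cW_1$ Kantorovich--Rubinstein bound on $\int h_{\mathbf g}\,\mathrm d(\mu_m-\widehat\mu_k)$, then a comparison at a fine maximizer using the coarse optimality of $\widehat{\mathbf g}_k$, then the subsampling lemma for the expected bounds and the sharpness condition for the distance bound. Your extra remarks on working on $\mathcal M$ and on attainment only make explicit what the paper takes for granted. Attainment is in fact immediate here, because both measures are discrete and the semi-dual is therefore a finite LP.
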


\begin{proof}
For $\mathbf g\in\mathbb R^n$, let
\[
    h_{\mathbf g}(s)
    =
    \min_{j\in[n]}\{c(s,d_j)-g_j\}.
\]
The pointwise minimum preserves the common Lipschitz constant, so $h_{\mathbf g}$ is $L_c$-Lipschitz for every $\mathbf g$.
The Kantorovich--Rubinstein dual representation therefore gives
\begin{align*}
    \left|
    \Phi_{\mu_m}(\mathbf g)
    -
    \Phi_{\widehat\mu_k}(\mathbf g)
    \right|
    &=
    \left|
    \int h_{\mathbf g}\,\mathrm d(\mu_m-\widehat\mu_k)
    \right|\\
    &\leq
    L_cW_1(\mu_m,\widehat\mu_k).
\end{align*}
Choose $\mathbf g_m^\star\in\mathcal G_{\mu_m}$.
The optimality of $\widehat{\mathbf g}_k$ for the coarse semi-dual problem yields
\begin{align*}
    \Phi_{\mu_m}^\star
    -
    \Phi_{\mu_m}(\widehat{\mathbf g}_k)
    &\leq
    \left|
    \Phi_{\mu_m}(\mathbf g_m^\star)
    -
    \Phi_{\widehat\mu_k}(\mathbf g_m^\star)
    \right|\notag\\
    &\quad+
    \left|
    \Phi_{\widehat\mu_k}(\widehat{\mathbf g}_k)
    -
    \Phi_{\mu_m}(\widehat{\mathbf g}_k)
    \right|\\
    &\leq
    2L_cW_1(\mu_m,\widehat\mu_k).
\end{align*}
Together with nonnegativity and $W_1\leq W_2$ on probability measures with finite second moments, this proves~\eqref{eq:app_dual_warmstart_objective}.
Taking expectations and applying Lemma~\ref{lemma:random_hierarchy_wasserstein} proves~\eqref{eq:app_dual_warmstart_expected} and~\eqref{eq:app_dual_warmstart_proportional}.
When~\eqref{eq:app_semidual_sharpness} holds, combining it with the fine-level objective-gap bound proves~\eqref{eq:app_dual_warmstart_deterministic}.
\end{proof}

The objective-value bound also accommodates an inexact coarse potential
$\widetilde{\mathbf g}_k$ with absolute semi-dual gap
$\Phi_{\widehat\mu_k}^{\star}
-\Phi_{\widehat\mu_k}(\widetilde{\mathbf g}_k)\leq\eta_k$:
the same argument gives
\begin{equation}
    \Phi_{\mu_m}^{\star}
    -\Phi_{\mu_m}(\widetilde{\mathbf g}_k)
    \leq
    \eta_k+2L_cW_1(\mu_m,\widehat\mu_k).
    \label{eq:app_dual_warmstart_inexact}
\end{equation}

For the target-inherited hierarchy step in Section~\ref{sec:hierarchical_warmstart}, identify $\mu_m=\mu^{(\ell)}$, $\widehat\mu_k=\mu^{(\ell+1)}$, and $\nu=\nu^{(\ell)}=\nu^{(\ell+1)}$.
Then $\widehat{\mathbf g}_k=\mathbf g^{(\ell+1)}$ is precisely the target-side potential inherited by the finer problem, and Theorem~\ref{thm:dual_guided_warmstart_error} controls its fine-level semi-dual suboptimality.
Under the instance-dependent sharpness condition, the theorem additionally controls its distance to the finer optimal-potential set.
The expectation in the objective-gap rate covers both the sampling of the finest source cloud and the random split used to construct the adjacent hierarchy level.
The stated rate applies to uniform source marginals; renormalized nonuniform marginals require a mass-aware sampling rule or additional weight-balance assumptions.
For the squared Euclidean cost on bounded supports with $\|s\|_2\leq R_{\mathcal S}$ and $\|d_j\|_2\leq R_{\mathcal D}$, one may take $L_c=2(R_{\mathcal S}+R_{\mathcal D})$.

Theorem~\ref{thm:dual_guided_warmstart_error} provides a Wasserstein-based objective-value guarantee for the inherited dual potential, together with an instance-dependent distance bound under sharpness.
The following corollary shows directly that, when the inherited potential is sufficiently close to the fine optimal-potential set, score separation and a sufficient assignment budget make dual assignment capture the support of every fine optimal plan.

\begin{corollary}[Margin-separated support recovery]
\label{cor:margin_support_recovery}
Let $\widehat{\mathbf g}_k\in\mathcal G_{\widehat\mu_k}$ be a coarse target-side optimum and set
\[
    \delta_m
    =
    \operatorname{dist}(\widehat{\mathbf g}_k,\mathcal G_{\mu_m}).
\]
Choose a Euclidean projection $\mathbf g_m^\star\in\mathcal G_{\mu_m}$ of $\widehat{\mathbf g}_k$ onto $\mathcal G_{\mu_m}$.
Then
\[
    \|\widehat{\mathbf g}_k-\mathbf g_m^\star\|_\infty
    \leq
    \|\widehat{\mathbf g}_k-\mathbf g_m^\star\|_2
    =
    \delta_m.
\]
For each source index $i$, define the score
$\theta_i^\star=\max_{j\in[n]}(g_{m,j}^\star-c_{ij})$, its argmax set
\[
    M_i
    =
    \left\{
        j\in[n]:
        g_{m,j}^\star-c_{ij}=\theta_i^\star
    \right\},
\]
and the score margin
\[
    \gamma
    =
    \min_{i\in[m]}
    \left(
        \theta_i^\star
        -
        \max_{j\notin M_i}(g_{m,j}^\star-c_{ij})
    \right),
\]
with the convention that $\max_{j\notin M_i}(g_{m,j}^\star-c_{ij})=-\infty$ whenever $M_i=[n]$, so such a row contributes $+\infty$ to the outer minimum.
Adding a constant to $\mathbf g_m^\star$ shifts every score by the same amount, so $M_i$ and $\gamma$ depend on $\mathbf g_m^\star$ only through its equivalence class modulo constants.
If the fine problem is separated in the sense that $\gamma>2\delta_m$, and the dual-assignment budget satisfies $\kappa\geq\max_{i\in[m]}|M_i|$, then the dual-assigned support $\mathcal A$ of Algorithm~\ref{alg:warmStart} contains $\operatorname{supp}(\mathbf x^\star)$ for every fine optimal plan $\mathbf x^\star$.
Consequently, $\operatorname{supp}(\mathbf x^\star)\subseteq\mathcal N=\mathcal A\cup\mathcal B$, and the first restricted solve on $\mathcal N$ attains the fine optimal value.
\end{corollary}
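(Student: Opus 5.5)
The plan is to make the dual-assignment scores explicit in the target-inherited case, show that row-wise top-$\kappa$ selection recovers each argmax set $M_i$ under the margin condition, and then use complementary slackness to place every optimal edge inside some $M_i$.

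\textbf{Step 1: projection and norms.} The projection exists because $\mathcal G_{\mu_m}$ is nonempty, closed and convex: it is the argmax set of a concave, finite-valued function that is invariant under constant shifts and attains its maximum. The inequality $\|\cdot\|_\infty\le\|\cdot\|_2$ is standard, and the projection attains the distance $\delta_m$ by definition.

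\textbf{Step 2: rewrite the scores.} In Algorithm~\ref{alg:warmStart} we have $\mathbf g=\widehat{\mathbf g}_k$ and $f_i=\min_j\{c_{ij}-\widehat g_{k,j}\}$. Hence
\[
\sigma_{ij}=(\widehat g_{k,j}-c_{ij})-\max_{l}(\widehat g_{k,l}-c_{il}).
\]
Within row $i$ the subtracted term is a constant, so the row-wise top-$\kappa$ set $\mathcal J_i$ coincides with the top-$\kappa$ set of $j\mapsto \widehat g_{k,j}-c_{ij}$.

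\textbf{Step 3: top-$\kappa$ recovers $M_i$.} Since $|\widehat g_{k,j}-g^\star_{m,j}|\le\delta_m$ for every $j$, we get the following two bounds.
\begin{itemize}
\item For $j\in M_i$: $\widehat g_{k,j}-c_{ij}\ge\theta_i^\star-\delta_m$.
\item For $j\notin M_i$: $\widehat g_{k,j}-c_{ij}\le\theta_i^\star-\gamma+\delta_m<\theta_i^\star-\delta_m$, using $\gamma>2\delta_m$.
\end{itemize}
So every index in $M_i$ scores strictly above every index outside $M_i$. Consequently any top-$\kappa$ selection with $\kappa\ge|M_i|$ contains $M_i$, whatever tie-breaking rule is used; strict separation is exactly what makes ties harmless. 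Rows with $M_i=[n]$ are trivial. This gives $\{(i,j):j\in M_i\}\subseteq\mathcal A$, using only the row-wise half of $\mathcal A$.

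\textbf{Step 4: optimal edges lie in $M_i$.} Define $f^\star_i:=\min_j\{c_{ij}-g^\star_{m,j}\}=-\theta_i^\star$. I will argue that $(\mathbf f^\star,\mathbf g^\star_m)$ is an optimal dual solution of the fine LP, as follows.
\begin{itemize}
\item With the uniform source weights $a_i=1/m$ of $\mu_m$, the semi-dual $\Phi_{\mu_m}(\mathbf g)$ equals the LP dual objective evaluated at $(\mathbf f^{c},\mathbf g)$, where $\mathbf f^{c}$ is the $c$-transform of $\mathbf g$.
\item The $c$-transform pair is always dual feasible.
\item Every LP dual-feasible pair satisfies $f_i\le f^{c}_i$, so the maximal semi-dual value equals the LP dual optimum.
\end{itemize}
Hence the semi-dual maximizer $\mathbf g^\star_m$, paired with $\mathbf f^\star$, is LP-dual optimal. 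Complementary slackness holds between any primal optimum and any dual optimum, so $x^\star_{ij}>0$ forces $f^\star_i+g^\star_{m,j}=c_{ij}$, that is, $g^\star_{m,j}-c_{ij}=\theta_i^\star$ and $j\in M_i$. Thus $\operatorname{supp}(\mathbf x^\star)\subseteq\mathcal A\subseteq\mathcal N$ for every fine optimal plan $\mathbf x^\star$.

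\textbf{Step 5: conclude.} Part~1 of Proposition~\ref{prop:restricted_recovery} then shows that the first restricted solve on $\mathcal N$ attains the fine optimal value.

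\textbf{Main obstacle.} The step needing most care is Step~4: identifying a semi-dual maximizer with an LP dual optimum so that complementary slackness applies to \emph{every} optimal plan. I would write out the weak-duality chain explicitly rather than cite it. Step~3's tie-insensitivity is the other subtle point, and it is handled entirely by the strict inequality $\gamma>2\delta_m$.
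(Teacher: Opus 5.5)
Your proposal is correct and follows essentially the same route as the paper's proof: the $\pm\delta_m$ sandwich together with $\gamma>2\delta_m$ makes every $j\in M_i$ strictly outrank every $j\notin M_i$ in the row-wise top-$\kappa$ selection, and complementary slackness then places $\operatorname{supp}(\mathbf x^\star)$ inside $\bigcup_{i}\{i\}\times M_i\subseteq\mathcal A$. Your Steps~2 and~4 make explicit two points the paper leaves implicit: the $c$-transform term $f_i$ is constant along each row, so ranking by $\sigma_{ij}$ equals ranking by $\widehat g_{k,j}-c_{ij}$; and a semi-dual maximizer paired with its $c$-transform is an LP-dual optimum, so complementary slackness applies to every fine optimal plan.
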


\begin{proof}
For each $i$ and every $j\in M_i$, the inherited score satisfies
\[
\widehat g_{k,j}-c_{ij}
\;\geq\;
g_{m,j}^\star-c_{ij}-\delta_m
=
\theta_i^\star-\delta_m,
\]
while for every $j\notin M_i$,
\[
\widehat g_{k,j}-c_{ij}
\;\leq\;
g_{m,j}^\star-c_{ij}+\delta_m
\;\leq\;
\theta_i^\star-\gamma+\delta_m
<
\theta_i^\star-\delta_m,
\]
where the last inequality uses $\gamma>2\delta_m$.
Every edge of $M_i$ therefore outranks every edge outside $M_i$ under the inherited dual, so the top-$\kappa$ assignment with $\kappa\geq|M_i|$ includes all of $M_i$.
By complementary slackness, any fine optimal plan $\mathbf x^\star$ satisfies $\operatorname{supp}(\mathbf x^\star)\subseteq\cup_{i\in[m]}\{i\}\times M_i$, hence $\operatorname{supp}(\mathbf x^\star)\subseteq\mathcal A\subseteq\mathcal N$.
The restricted problem on $\mathcal N$ then contains a feasible globally optimal plan, so its optimal value equals the fine optimal value.
\end{proof}

Corollary~\ref{cor:margin_support_recovery} identifies a sufficient regime in which an accurate inherited dual makes the initial restricted solve exact.
Outside this regime, dual assignment still provides an initialization: active-support refinement inserts missing dual violators, and the returned solution is certified by the full relative KKT residual.
Thus, warm-start quality affects the amount of refinement rather than the final certification criterion.
The hierarchy ablation in Section~\ref{sec:exp_ablation} evaluates the benefit of the warm start beyond this sufficient regime.
The source-inherited case follows by exchanging the roles of the two marginals.

\section{Efficient Dual-Score Evaluation}
\label{app:dual_score_evaluation}

Both dual assignment and dual-violation detection use the dual score defined in~\eqref{eq:dual_score}.
For each fixed source index \(i\), adding \(f_i\) preserves the target ranking, and hence
\begin{equation*}
\operatorname{argtop}_{j}^{\kappa}\{g_j-c_{ij}\}
=
\operatorname{argtop}_{j}^{\kappa}\{\sigma_{ij}\}.
\end{equation*}
The target-to-source ranking follows analogously.

This equivalence allows dual assignment and dual-violation detection to use the same pairwise-scoring primitive with different selection rules.
We first describe its streamed GPU implementation for general pairwise costs and then present inner-product formulations for squared-Euclidean and low-rank costs.

\subsection{Streaming and Fused Scans}
For clarity, we suppress hierarchy-level superscripts and let $m$ and $n$ denote the two marginal sizes at the current level.
For the squared-Euclidean cost, and likewise for the $\ell_1$, $\ell_2$, and $\ell_\infty$ costs considered in this paper, evaluating one pairwise cost and dual score takes $\Theta(d)$ operations.
An exhaustive scan therefore takes $\Theta(mnd)$ operations, or $\Theta(n^2d)$ when $m=n$; the $Bn^2d$ term in the empirical runtime model of Figure~\ref{fig:scalability_runtime} accounts for this computation.
For a pairwise cost function that admits batched GPU evaluation, the dual scores can be computed directly over source--target blocks.
Let \(I_p\subseteq I\) and \(J_q\subseteq J\) denote a source block and a target block, respectively.
The score block is
\begin{equation*}
\boldsymbol{\Sigma}_{I_p,J_q}
=
\mathbf f_{I_p}\mathbf 1^\top
+
\mathbf 1\mathbf g_{J_q}^\top
-
\mathbf C_{I_p,J_q},\,
(\mathbf C_{I_p,J_q})_{ij}=c(s_i,d_j).
\end{equation*}
The input features are stored in FP32 and processed by tiled CUDA kernels that keep one complete feature set resident on the GPU while streaming the other in blocks.
Only the dominant $\Theta(mnd)$ feature-dependent cost computation uses FP32, and its output is then promoted to FP64.
All subsequent dual-score arithmetic, candidate selection, and residual reduction use FP64.
This layout exhaustively covers all source--target pairs with linear feature storage and without materializing an $m\times n$ cost or score matrix.
For example, at $m=n=1{,}281{,}167$ and $d=8192$, one resident feature set occupies approximately $39.1$\,GiB of the reported $41.6$\,GiB peak; streamed tiles and solver state account for the remainder.

At each non-coarsest hierarchy level, initialization uses two directional scans.
The first returns the top-$\kappa$ assignments in one direction together with the completion seed for the c-transform.
After completing the missing dual potential, the second scan returns the top-$\kappa$ assignments in the opposite direction.

After each restricted solve, a fused full-space scan returns the row-wise and column-wise top-$\gamma$ dual violations together with the full dual-feasibility residual statistics, evaluating each dual score only once.

\subsection{Inner-Product Formulations}

For structured costs, the pairwise-scoring primitive can admit a more efficient inner-product representation while retaining the streamed evaluation above.
We first derive this representation for squared-Euclidean costs and then reduce exact low-rank costs to the same setting.

\subsubsection{Squared-Euclidean Costs}

For \(c_{ij}=\|\mathbf s_i-\mathbf d_j\|_2^2\), define
\begin{equation*}
\widetilde{\mathbf s}_i
=
[\mathbf s_i^\top,1,f_i-\|\mathbf s_i\|_2^2]^\top,
\qquad
\widetilde{\mathbf d}_j
=
[2\mathbf d_j^\top,g_j-\|\mathbf d_j\|_2^2,1]^\top.
\end{equation*}
The dual score satisfies
\begin{equation*}
\sigma_{ij}
=
f_i+g_j-\|\mathbf s_i-\mathbf d_j\|_2^2
=
\langle\widetilde{\mathbf s}_i,\widetilde{\mathbf d}_j\rangle.
\end{equation*}
Hence both the top-\(\kappa\) dual assignments and the top-\(\gamma\) dual violations can be obtained through maximum inner-product search (MIPS).
Our custom tiled CUDA kernels evaluate these inner products within the streamed scans above, while exact Faiss routines~\cite{johnson2019billion} provide an alternative for top-$k$ retrieval.

\subsubsection{Low-Rank Costs}
The following proposition shows that a low-rank cost can be transformed into a squared-Euclidean cost without changing the optimal transport plans, allowing the preceding MIPS formulation to be reused in factor space.
\begin{proposition}[Reduction of Low-Rank Costs to Squared-Euclidean Costs]
Let
\begin{equation*}
\mathbf C=\mathbf L\mathbf R^\top,
\qquad
\mathbf L\in\mathbb R^{m\times r},
\qquad
\mathbf R\in\mathbb R^{n\times r},
\end{equation*}
where \(\mathbf l_i^\top\) and \(\mathbf r_j^\top\) are the \(i\)-th and \(j\)-th rows of \(\mathbf L\) and \(\mathbf R\), respectively.
Define the factor-space embeddings
\begin{equation*}
\widehat{\mathbf s}_i
=
\frac{1}{\sqrt{2}}\mathbf l_i,
\qquad
\widehat{\mathbf d}_j
=
-\frac{1}{\sqrt{2}}\mathbf r_j,
\end{equation*}
and the squared-Euclidean cost
\begin{equation*}
\widehat C_{ij}
=
\left\|
\widehat{\mathbf s}_i-\widehat{\mathbf d}_j
\right\|_2^2.
\end{equation*}
For fixed marginals \(\mathbf a\) and \(\mathbf b\), the OT problems
\begin{equation*}
\min_{\mathbf X\in U(\mathbf a,\mathbf b)}
\langle\mathbf C,\mathbf X\rangle
\qquad\text{and}\qquad
\min_{\mathbf X\in U(\mathbf a,\mathbf b)}
\langle\widehat{\mathbf C},\mathbf X\rangle
\end{equation*}
have the same set of optimal transport plans.
Moreover, defining $\widehat f_i=f_i+\frac{1}{2}\|\mathbf l_i\|_2^2$ and $\widehat g_j=g_j+\frac{1}{2}\|\mathbf r_j\|_2^2$ preserves every dual score, since $\widehat f_i+\widehat g_j-\widehat C_{ij}=f_i+g_j-C_{ij}$ for all $(i,j)$.
\end{proposition}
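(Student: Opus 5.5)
The plan is to expand the squared-Euclidean cost entrywise and show that it differs from $C_{ij}$ only by terms that depend on $i$ alone or on $j$ alone. Such separable terms add a constant to the objective over $U(\mathbf a,\mathbf b)$ and shift the dual potentials. First compute, using $\widehat{\mathbf s}_i=\mathbf l_i/\sqrt2$ and $\widehat{\mathbf d}_j=-\mathbf r_j/\sqrt2$,
\[
\widehat C_{ij}
=\tfrac12\|\mathbf l_i\|_2^2+\tfrac12\|\mathbf r_j\|_2^2-2\langle\widehat{\mathbf s}_i,\widehat{\mathbf d}_j\rangle
=\tfrac12\|\mathbf l_i\|_2^2+\tfrac12\|\mathbf r_j\|_2^2+\langle\mathbf l_i,\mathbf r_j\rangle .
\]
Since $C_{ij}=\langle\mathbf l_i,\mathbf r_j\rangle$, this gives $\widehat C_{ij}=C_{ij}+u_i+v_j$ with $u_i=\tfrac12\|\mathbf l_i\|_2^2$ and $v_j=\tfrac12\|\mathbf r_j\|_2^2$. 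The sign choice on $\widehat{\mathbf d}_j$ is what turns the cross term into $+\langle\mathbf l_i,\mathbf r_j\rangle$, so I would double-check that step.

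Next, for any $\mathbf X\in U(\mathbf a,\mathbf b)$, the marginal constraints $\mathbf X\mathbf 1_n=\mathbf a$ and $\mathbf X^\top\mathbf 1_m=\mathbf b$ give
\[
\langle\widehat{\mathbf C},\mathbf X\rangle
=\langle\mathbf C,\mathbf X\rangle+\sum_i u_i a_i+\sum_j v_j b_j .
\]
The added quantity is a constant independent of $\mathbf X$. The two objectives therefore differ by a constant on the common feasible polytope, so they have identical minimizer sets. That proves the first claim.

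For the dual statement, substitute directly:
\[
\widehat f_i+\widehat g_j-\widehat C_{ij}
=f_i+u_i+g_j+v_j-(C_{ij}+u_i+v_j)
=f_i+g_j-C_{ij}.
\]
This identity also shows that dual feasibility and complementary slackness transfer between the two problems. No step is genuinely hard; the only care needed is in the sign and scaling of the factor-space embeddings in the first step.
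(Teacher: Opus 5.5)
Your proposal is correct and follows essentially the same route as the paper's proof: expand $\widehat C_{ij}$ as $C_{ij}+\tfrac12\|\mathbf l_i\|_2^2+\tfrac12\|\mathbf r_j\|_2^2$ (your sign check on the cross term is right), use the marginal constraints to show that the two objectives differ by a constant on $U(\mathbf a,\mathbf b)$, and obtain the dual-score identity by direct substitution.
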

\begin{proof}
For every pair \((i,j)\),
\begin{equation*}
    \begin{aligned}
        \widehat C_{ij}
        &=
        \left\|
        \frac{1}{\sqrt{2}}\mathbf l_i
        +
        \frac{1}{\sqrt{2}}\mathbf r_j
        \right\|_2^2 \\
        &=
        \mathbf l_i^\top\mathbf r_j
        +
        \frac{1}{2}\|\mathbf l_i\|_2^2
        +
        \frac{1}{2}\|\mathbf r_j\|_2^2 \\
        &=
        C_{ij}
        +
        \frac{1}{2}\|\mathbf l_i\|_2^2
        +
        \frac{1}{2}\|\mathbf r_j\|_2^2.
    \end{aligned}
\end{equation*}
Therefore, every \(\mathbf X\in U(\mathbf a,\mathbf b)\) satisfies
\begin{equation*}
    \begin{aligned}
        &\langle\widehat{\mathbf C},\mathbf X\rangle\\
        =&
        \langle\mathbf C,\mathbf X\rangle
        +
        \frac{1}{2}
        \sum_{i=1}^{m}
        \|\mathbf l_i\|_2^2
        \sum_{j=1}^{n}X_{ij}
        +
        \frac{1}{2}
        \sum_{j=1}^{n}
        \|\mathbf r_j\|_2^2
        \sum_{i=1}^{m}X_{ij} \\
        =&
        \langle\mathbf C,\mathbf X\rangle
        +
        \frac{1}{2}
        \sum_{i=1}^{m}
        a_i\|\mathbf l_i\|_2^2
        +
        \frac{1}{2}
        \sum_{j=1}^{n}
        b_j\|\mathbf r_j\|_2^2.
    \end{aligned}
\end{equation*}

The final two terms depend only on the prescribed marginals.
Hence the two objectives differ by a constant over \(U(\mathbf a,\mathbf b)\), and their sets of optimal transport plans coincide.
The dual-score identity follows by substituting the shifted potentials into the expression for $\widehat C_{ij}$.
\end{proof}
The reduction covers balanced OT problems with fixed marginals and an exact low-rank cost factorization.

\section{Cost-Perturbed Warm Start}
\label{app:cost_perturbation}

This section details a cost-perturbed warm start for accelerating active-support refinement under the original cost.
The auxiliary perturbation is designed to remove cycle-induced ties, heuristically making dual-score rankings more informative during both dual-guided initialization and dual-guided refinement of the perturbed problem.
The resulting dual potentials then construct an initial support whose first restricted solve is near-optimal under the conditions established below.
We extend the notation $U(\mathbf a,\mathbf b)$ to nonnegative marginals with equal total mass $M:=\mathbf 1_m^\top\mathbf a=\mathbf 1_n^\top\mathbf b$, retaining the same marginal constraints.
For the probability marginals considered in the main paper, $M=1$.
We write $c_{ij}:=c(s_i,d_j)$ for the pairwise costs.
We first solve an auxiliary OT problem with the perturbed cost
\begin{equation*}
    c_\eta=c+\eta\Delta,
    \qquad
    (c_\eta)_{ij}=c_{ij}+\eta\Delta_{ij},
\end{equation*}
where $\Delta_{ij}\ge0$ are per-pair perturbation values.

\subsection{Perturb-and-Refine Procedure}
\label{app:cost_perturbation_algorithm}
The auxiliary problem is solved only to obtain its dual potentials $(\mathbf f_\eta,\mathbf g_\eta)$.
Algorithm~\ref{alg:cost_perturbed_initialization} summarizes the complete procedure:
\begin{itemize}
    \item The first stage invokes the full hierarchical solver on the auxiliary problem and retains the dual potentials of its output.
    \item The second stage applies nodewise dual assignment to $(\mathbf f_\eta,\mathbf g_\eta)$ under $c_\eta$ to construct the initial active support for the original problem.
    \item The refinement then alternates restricted solves and dual-violation support updates under the original cost $c$. 
\end{itemize}

\begin{algorithm}[!t]
\caption{Cost-Perturbed Warm Start and Refinement}
\label{alg:cost_perturbed_initialization}
\begin{algorithmic}[1]
\STATE {\bfseries Input:} finest-level original OT problem $\mathcal P$ on $I\times J$, perturbation scale $\eta$ and per-pair perturbation $\Delta$, hierarchy parameters $(\rho,\tau)$, assignment budget $\kappa$, refinement factors $(\beta,\gamma)$, and tolerance $\varepsilon$.
\STATE {\bfseries Output:} primal--dual solution $\mathcal Z$ certified for $\mathcal P$.
\medskip

\STATE \textbf{1). Solve the auxiliary problem.}
\STATE $c_\eta\gets c+\eta\Delta$
\STATE Construct $\mathcal P_\eta$ by replacing $c$ with $c_\eta$.
\STATE $(\mathbf x_\eta,\mathbf f_\eta,\mathbf g_\eta)\gets\ours(\mathcal P_\eta;\rho,\tau,\kappa,\beta,\gamma,\varepsilon)$

\medskip
\STATE \textbf{2). Construct a support from the auxiliary duals.}
\STATE $\sigma^\eta_{ij}\gets(f_\eta)_i+(g_\eta)_j-(c_\eta)_{ij}$ for $(i,j)\in I\times J$
\STATE $\mathcal A_\eta\gets\{(i,j):i\in I,\ j\in\operatorname{argtop}^{\kappa}_{j'\in J}\{\sigma^\eta_{ij'}\}\}$
\STATE $\mathcal A_\eta\gets\mathcal A_\eta\cup\{(i,j):j\in J,\ i\in\operatorname{argtop}^{\kappa}_{i'\in I}\{\sigma^\eta_{i'j}\}\}$
\STATE $\mathcal B\gets\nw(\mathcal P)$
\STATE $\mathcal N\gets\mathcal A_\eta\cup\mathcal B$

\medskip
\STATE \textbf{3). Refine under the original cost.}
\STATE $\mathcal Z\gets\solve(\mathcal P,\mathcal N;\varepsilon)$
\WHILE{$\kkt(\mathcal P,\mathcal Z)>\varepsilon$}
    \STATE $\mathcal N\gets\update(\mathcal P,\mathcal Z,\mathcal N,\mathcal B;\beta,\gamma)$
    \STATE $\mathcal Z\gets\solve(\mathcal P,\mathcal N;\varepsilon)$
\ENDWHILE
\STATE \textbf{return} $\mathcal Z$.
\end{algorithmic}
\end{algorithm}

The auxiliary optimizer is neither reported as the solution of the original problem nor used as its certificate.
Only its dual potentials are carried into the original solve.
They construct the initial active support by dual assignment; standard active-support refinement then proceeds under the original cost.
Algorithm~\ref{alg:cost_perturbed_initialization} is therefore the instance of Algorithm~\ref{alg:related_dual_refinement} with target cost $c$ and $\widetilde c=c_\eta$ for dual completion and assignment.
Correctness of the returned solution follows from the original-cost refinement and stopping condition; Theorem~\ref{thm:cost_perturbation_stability} and Corollary~\ref{cor:cost_perturbation_support_coverage} quantify the bias and support quality of the initialization.

\subsection{Motivation}
\label{app:cost_perturbation_motivation}

The ambiguity relevant to edge localization arises from transportation cycles rather than from repeated individual cost values.
For a simple bipartite cycle $C$, let $C_+$ and $C_-$ denote its two alternating edge sets.
If
\begin{equation*}
\sum_{e\in C_+}c_e=\sum_{e\in C_-}c_e,
\end{equation*}
then moving mass around the cycle can preserve both the marginals and the transport cost, leaving multiple transport directions indistinguishable.
A small per-edge perturbation acts as a secondary tie breaker for these directions and gives the auxiliary problem a more definite sparse structure for edge localization.

We call a cost $\widetilde c$ \emph{cycle-generic} if its alternating sum is nonzero on every simple cycle.
For a dual solution on a support $\mathcal N$, its tight-edge graph contains the edges $(i,j)\in\mathcal N$ satisfying $f_i+g_j=\widetilde c_{ij}$, equivalently those with zero dual score.
The following proposition shows that this condition removes cycle-induced primal ambiguity and restricts every optimal dual tight-edge graph to a forest.
\begin{proposition}[Generic removal of cycle degeneracy]
\label{prop:cost_perturbation_generic}
Suppose $a_i,b_j>0$ for all $i,j$ and that a cost $\widetilde c$ is cycle-generic.
Then the restricted OT problem under $\widetilde c$ has a unique primal optimizer on every feasible support $\mathcal N\subseteq[m]\times[n]$.
Moreover, the tight-edge graph of every optimal restricted dual solution is a forest and therefore contains at most $m+n-1$ edges.
For any fixed $\eta>0$, if the perturbations $\Delta_{ij}$ are independent and drawn from absolutely continuous distributions on $[0,\infty)$, then $c_\eta=c+\eta\Delta$ is cycle-generic almost surely.
\end{proposition}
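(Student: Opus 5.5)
The plan has three parts, taken in order: uniqueness of the primal optimizer on each feasible support, the forest structure of optimal dual tight-edge graphs, and almost-sure cycle-genericity of $c_\eta$. All three reduce to one observation: a simple cycle in a bipartite support graph gives a direction $\mathbf d$ whose entries alternate between $+1$ on $C_+$ and $-1$ on $C_-$. This direction satisfies $\mathbf A\mathbf d=\mathbf 0$, as in the proof of Proposition~\ref{prop:app_cycle_cancellation}, and $\widetilde{\mathbf c}^\top\mathbf d=\sum_{C_+}\widetilde c_e-\sum_{C_-}\widetilde c_e$, which is nonzero by cycle-genericity.

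\emph{Uniqueness.} The restricted problem is feasible and bounded, since $\mathbf x\ge0$ and the marginals are fixed. Suppose it has two distinct optimizers $\mathbf x\neq\mathbf x'$. Their midpoint $\bar{\mathbf x}$ is also optimal, and $\mathbf x-\mathbf x'$ is a nonzero vector in $\ker\mathbf A$ supported in $\supp(\bar{\mathbf x})$. Any nonzero kernel vector of the bipartite incidence matrix contains a cycle in its support graph. To see this, note that every vertex it touches has degree at least $2$, because each row and column sum of the difference vanishes; a finite graph with all degrees at least $2$ contains a cycle.

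Take the alternating direction $\mathbf d$ on such a cycle $C\subseteq\supp(\bar{\mathbf x})$. Because $\bar{\mathbf x}$ is positive on $C$, both $\bar{\mathbf x}\pm t\mathbf d$ are feasible for small $t>0$. Optimality of $\bar{\mathbf x}$ then forces $\widetilde{\mathbf c}^\top\mathbf d=0$, which contradicts cycle-genericity. Hence the primal optimizer is unique.

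\emph{Forest.} Let $(\mathbf f,\mathbf g)$ be any restricted dual feasible solution, and suppose its tight-edge graph contains a simple cycle $C$. Summing $f_i+g_j=\widetilde c_{ij}$ over $C_+$ and over $C_-$ gives the same total, since each cycle vertex appears exactly once in each alternating set. Therefore $\sum_{C_+}\widetilde c_e=\sum_{C_-}\widetilde c_e$, contradicting cycle-genericity. So the tight-edge graph is acyclic, and a forest on $m+n$ vertices has at most $m+n-1$ edges.

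This argument uses only dual feasibility and tightness, not optimality. The positivity assumption $a_i,b_j>0$ is not needed for it; I expect it to matter only for nondegeneracy remarks.

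\emph{Almost-sure genericity.} There are finitely many simple cycles in $K_{m,n}$. For a fixed cycle $C$, the alternating sum of $c_\eta$ is
\[
\sum_{C_+}c_e-\sum_{C_-}c_e+\eta\Bigl(\sum_{C_+}\Delta_e-\sum_{C_-}\Delta_e\Bigr).
\]
Condition on all $\Delta_e$ with $e\neq e_0$, where $e_0$ is a fixed edge in $C_+$. Given this conditioning, the sum vanishes only if $\Delta_{e_0}$ equals one specific value. Since $\eta>0$ and $\Delta_{e_0}$ is absolutely continuous and independent of the other entries, this happens with probability zero. A union bound over the finitely many cycles then gives cycle-genericity almost surely.

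\emph{Main obstacle.} I expect the only delicate step to be the kernel-implies-cycle argument in the uniqueness part, specifically ensuring that the cycle lies inside the support of an optimal point so that perturbations in both directions stay feasible. Using the midpoint $\bar{\mathbf x}$ resolves this, because $\supp(\mathbf x-\mathbf x')\subseteq\supp(\bar{\mathbf x})$.
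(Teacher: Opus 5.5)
Your proof is correct. The forest and almost-sure parts match the paper's argument. The paper phrases the latter as the failure set being a proper hyperplane in the perturbation vector, hence of probability zero. Your conditioning on every entry except one $\Delta_{e_0}$ spells out, via independence and Fubini, why independent absolutely continuous marginals suffice.

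The genuine difference is in the uniqueness step. The paper proves the forest property first and derives uniqueness from it. The difference of two optimizers is a nonzero circulation, and complementary slackness puts its support inside the tight-edge graph of an optimal restricted dual. A nonzero circulation contains a cycle, which contradicts acyclicity. You instead argue purely on the primal side. The midpoint $\bar{\mathbf x}$ is optimal and strictly positive on a cycle of $\supp(\mathbf x-\mathbf x')$. Feasibility in both directions along the alternating vector $\mathbf d$ then forces $\widetilde{\mathbf c}^\top\mathbf d=0$, which contradicts cycle-genericity directly.

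Your route needs neither strong duality nor complementary slackness, and it makes uniqueness independent of the forest claim. The paper's route is more economical, since it reduces both conclusions to the single fact that a tight cycle is impossible.

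Your side remarks are also right: acyclicity of the tight-edge graph holds for any dual vector, not only optimal ones, and positivity of the marginals is never used. Both observations apply equally to the paper's proof.
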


\begin{proof}
Fix a feasible support $\mathcal N$ and an optimal restricted dual solution.
If its tight-edge graph contained a simple cycle $C\subseteq\mathcal N$, alternatingly summing the tight equations around $C$ would cancel all dual potentials and give
\begin{equation*}
\sum_{e\in C_+}\widetilde c_e=\sum_{e\in C_-}\widetilde c_e,
\end{equation*}
contradicting cycle genericity.
The tight-edge graph is therefore a forest and has at most $m+n-1$ edges.

If the restricted problem had two distinct primal optimizers, their difference would be a nonzero circulation.
Complementary slackness places every edge supporting this circulation in the tight-edge graph of any optimal restricted dual solution.
Every nonzero bipartite circulation contains a cycle, contradicting the forest property.
Hence the restricted primal optimizer is unique.

For the perturbed cost, failure of cycle genericity on a fixed simple cycle is equivalent to
\begin{equation*}
\eta\sum_{e\in C}\chi_e\Delta_e=\sum_{e\in C_-}c_e-\sum_{e\in C_+}c_e,
\end{equation*}
where $\chi_e=1$ on $C_+$ and $\chi_e=-1$ on $C_-$.
This equality defines a proper hyperplane in the perturbation vector and therefore has probability zero.
The complete bipartite graph has finitely many simple cycles, so almost surely none has zero alternating perturbed-cost sum.
\end{proof}

Proposition~\ref{prop:cost_perturbation_generic} turns a potentially cyclic set of zero-score ties into a linear-size forest, making the auxiliary edge-localization problem structurally less ambiguous.
For the full auxiliary problem, the unique primal support lies in every such forest.
The nodewise condition under which bidirectional dual assignment retains that support is given below after controlling the perturbation bias.

\subsection{Transfer to the Original Cost}
\label{app:cost_perturbation_guarantees}

We next quantify how the auxiliary solution transfers back to the original cost.
The following theorem shows that the perturbation magnitude controls both the original-cost suboptimality of the auxiliary optimizer and the error of a shifted dual certificate.
Define
\begin{align*}
    V
    &:=\min_{\mathbf X\in U(\mathbf a,\mathbf b)}
       \sum_{ij} c_{ij}X_{ij},\\
    V_\eta
    &:=\min_{\mathbf X\in U(\mathbf a,\mathbf b)}
       \sum_{ij} (c_{ij}+\eta\Delta_{ij})X_{ij}. 
\end{align*}

\begin{theorem}[Stability under a bounded nonnegative cost perturbation]
\label{thm:cost_perturbation_stability}
Suppose $0\leq\eta\Delta_{ij}\leq\delta$ for all $(i,j)$. Let $\mathbf X_\eta$ be any optimizer of the perturbed problem, and let $(\mathbf f_\eta,\mathbf g_\eta)$ be any optimal perturbed dual solution, so that
\[
    (f_\eta)_i+(g_\eta)_j
    \leq c_{ij}+\eta\Delta_{ij}.
\]
Then:
\begin{align}
    0\leq V_\eta-V &\leq M\delta,
    \label{eq:perturbed_value_bound}\\
    0\leq \sum_{ij} c_{ij}(X_\eta)_{ij}-V
    &\leq M\delta,
    \label{eq:perturbed_primal_bound}\\
    (f_\eta)_i+(g_\eta)_j-c_{ij}
    &\leq\delta \quad\text{for all }(i,j).
    \label{eq:perturbed_dual_violation}
\end{align}
Moreover, the shifted potentials
\begin{equation}
    \bar{\mathbf f}_\eta=\mathbf f_\eta-\delta\mathbf 1_m,
    \qquad
    \bar{\mathbf g}_\eta=\mathbf g_\eta
    \label{eq:shifted_perturbed_dual}
\end{equation}
are feasible for the original dual and satisfy
\begin{equation}
    0\leq
    V-\left(\mathbf a^\top\bar{\mathbf f}_\eta
    +\mathbf b^\top\bar{\mathbf g}_\eta\right)
    \leq M\delta.
    \label{eq:perturbed_dual_objective_bound}
\end{equation}
For every edge with $(X_\eta)_{ij}>0$, its original-cost slack under the shifted dual obeys
\begin{equation}
    0\leq c_{ij}-(\bar f_\eta)_i-(\bar g_\eta)_j
    =\delta-\eta\Delta_{ij}\leq\delta.
    \label{eq:active_edge_shifted_slack}
\end{equation}
\end{theorem}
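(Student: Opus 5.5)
The argument is a direct sandwich between the original and perturbed problems. It uses only two facts: the perturbation is nonnegative and bounded by $\delta$, and every plan in $U(\mathbf a,\mathbf b)$ has total mass $M$. I would take the four claims in order, reusing each bound in the next.

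\emph{Step 1 (value bound~\eqref{eq:perturbed_value_bound}).} For any $\mathbf X\in U(\mathbf a,\mathbf b)$, the assumption $0\le\eta\Delta_{ij}\le\delta$ gives
\[
\langle \mathbf C,\mathbf X\rangle\le\langle \mathbf C+\eta\Delta,\mathbf X\rangle\le\langle \mathbf C,\mathbf X\rangle+\delta\textstyle\sum_{ij}X_{ij}=\langle \mathbf C,\mathbf X\rangle+M\delta .
\]
Minimizing each side over $U(\mathbf a,\mathbf b)$ yields $V\le V_\eta\le V+M\delta$. (Both minima are attained because the polytope is compact and nonempty.)

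\emph{Step 2 (primal bound~\eqref{eq:perturbed_primal_bound}).} Since $\mathbf X_\eta$ is feasible, $\sum c_{ij}(X_\eta)_{ij}\ge V$. Nonnegativity of $\eta\Delta$ gives $\sum c_{ij}(X_\eta)_{ij}\le\sum(c_{ij}+\eta\Delta_{ij})(X_\eta)_{ij}=V_\eta$, and Step 1 then bounds this by $V+M\delta$.

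\emph{Step 3 (dual claims).} Inequality~\eqref{eq:perturbed_dual_violation} follows immediately from perturbed dual feasibility and $\eta\Delta_{ij}\le\delta$. Subtracting $\delta$ from every $f_i$ lowers every score by $\delta$, so $(\bar f_\eta)_i+(\bar g_\eta)_j\le c_{ij}$ and the shifted pair is feasible for the original dual. The shifted objective is $\mathbf a^\top\mathbf f_\eta+\mathbf b^\top\mathbf g_\eta-\delta M$. By strong duality for the perturbed LP, this equals $V_\eta-M\delta$. Weak duality for the original problem gives $V_\eta-M\delta\le V$, and Step 1 gives $V_\eta\ge V$. Together these yield~\eqref{eq:perturbed_dual_objective_bound}.

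\emph{Step 4 (active-edge slack~\eqref{eq:active_edge_shifted_slack}).} The perturbed primal--dual pair is optimal, so complementary slackness makes every edge with $(X_\eta)_{ij}>0$ tight: $(f_\eta)_i+(g_\eta)_j=c_{ij}+\eta\Delta_{ij}$. Substituting the shift gives slack exactly $\delta-\eta\Delta_{ij}$, which lies in $[0,\delta]$. Here I must apply complementary slackness to an arbitrary optimal primal paired with an arbitrary optimal dual. This is valid in LP because every optimal primal is complementary to every optimal dual, via zero duality gap.

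None of these steps is a real obstacle. The main thing to get right is the interplay of weak and strong duality in Step 3, together with the careful use of the total mass $M$ rather than $1$ throughout.
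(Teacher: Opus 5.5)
Your proof is correct and follows essentially the same route as the paper. It uses the total-mass sandwich $0\le\sum_{ij}\eta\Delta_{ij}X_{ij}\le M\delta$ for the value and primal bounds, a uniform $\delta$-shift of $\mathbf f_\eta$ together with perturbed strong duality for the dual certificate, and complementary slackness for the active-edge slack. The only cosmetic difference is that you derive $V-(\mathbf a^\top\bar{\mathbf f}_\eta+\mathbf b^\top\bar{\mathbf g}_\eta)\ge 0$ from weak duality for the original problem, whereas the paper reads it off from $V_\eta\le V+M\delta$; the two are equivalent here.
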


\begin{proof}
Every feasible transport plan has total mass $M$, and therefore
\[
    0\leq\sum_{ij} \eta\Delta_{ij}X_{ij}\leq M\delta
    \qquad\text{for all }\mathbf X\in U(\mathbf a,\mathbf b).
\]
Let $\mathbf X^\star$ be an original optimizer. Nonnegativity gives $V_\eta\geq V$, whereas optimality of $\mathbf X_\eta$ for the perturbed cost gives
\[
    V_\eta
    \leq\sum_{ij} (c_{ij}+\eta\Delta_{ij})X^\star_{ij}
    \leq V+M\delta,
\]
which proves \eqref{eq:perturbed_value_bound}. Furthermore,
\[
    \sum_{ij} c_{ij}(X_\eta)_{ij}
    \leq V_\eta\leq V+M\delta,
\]
and primal optimality of $V$ gives the lower bound in \eqref{eq:perturbed_primal_bound}.

Perturbed dual feasibility and $\eta\Delta_{ij}\leq\delta$ directly imply \eqref{eq:perturbed_dual_violation}. Subtracting $\delta$ from every source potential yields
\[
    (\bar f_\eta)_i+(\bar g_\eta)_j
    \leq c_{ij}+\eta\Delta_{ij}-\delta
    \leq c_{ij},
\]
so $(\bar{\mathbf f}_\eta,\bar{\mathbf g}_\eta)$ is feasible for the original dual. By strong duality for the perturbed problem, its objective equals
\[
    \mathbf a^\top\bar{\mathbf f}_\eta
    +\mathbf b^\top\bar{\mathbf g}_\eta
    =V_\eta-M\delta.
\]
Combining this identity with \eqref{eq:perturbed_value_bound} proves \eqref{eq:perturbed_dual_objective_bound}. Finally, complementary slackness on an active perturbed edge gives
$(f_\eta)_i+(g_\eta)_j=c_{ij}+\eta\Delta_{ij}$; substituting \eqref{eq:shifted_perturbed_dual} proves \eqref{eq:active_edge_shifted_slack}.
\end{proof}

The following corollary uses the auxiliary optimizer as a witness to show when bidirectional nodewise dual assignment constructs a near-optimal initial support.
\begin{corollary}[Near-optimal initialization by nodewise dual assignment]
\label{cor:cost_perturbation_support_coverage}
Under the conditions of Theorem~\ref{thm:cost_perturbation_stability}, define the auxiliary dual score by
\begin{equation*}
\sigma_{ij}^{\eta}:=(f_\eta)_i+(g_\eta)_j-(c_\eta)_{ij},
\end{equation*}
and define its row- and column-wise tight sets by
\begin{equation*}
T_i^\eta:=\{j:\sigma_{ij}^{\eta}=0\},
\qquad
U_j^\eta:=\{i:\sigma_{ij}^{\eta}=0\}.
\end{equation*}
Every positive-support edge of the auxiliary optimizer is tight:
\begin{equation}
(X_\eta)_{ij}>0
\quad\Longrightarrow\quad
j\in T_i^\eta
\ \text{and}\ 
i\in U_j^\eta.
\label{eq:cost_perturbation_auxiliary_support_tight}
\end{equation}
Suppose that every positive-support edge satisfies the endpoint-wise budget condition
\begin{equation}
(X_\eta)_{ij}>0
\quad\Longrightarrow\quad
|T_i^\eta|\leq\kappa
\quad\text{or}\quad
|U_j^\eta|\leq\kappa.
\label{eq:cost_perturbation_endpoint_budget}
\end{equation}
Then the bidirectional top-$\kappa$ assignment in Algorithm~\ref{alg:cost_perturbed_initialization} contains $\supp(\mathbf X_\eta)$.
Consequently, if $V_{\mathrm{init}}$ denotes the optimal value of the first original-cost restricted solve on $\mathcal N=\mathcal A_\eta\cup\mathcal B$, then
\begin{equation}
V\leq V_{\mathrm{init}}\leq\sum_{ij}c_{ij}(X_\eta)_{ij}\leq V+M\delta.
\label{eq:cost_perturbation_initial_value_bound}
\end{equation}
\end{corollary}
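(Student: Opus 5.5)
The plan has three steps: show that every edge carrying mass in $\mathbf X_\eta$ is tight, show that the bidirectional top-$\kappa$ selection picks up every such edge, and then bound the first restricted value by using $\mathbf X_\eta$ as a feasible witness.

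\textbf{Step 1: tightness.} I would obtain \eqref{eq:cost_perturbation_auxiliary_support_tight} directly from complementary slackness for the perturbed problem. The pair $(\mathbf X_\eta,\mathbf f_\eta,\mathbf g_\eta)$ is primal--dual optimal for the cost $c_\eta$. Hence $(X_\eta)_{ij}>0$ forces $(f_\eta)_i+(g_\eta)_j=(c_\eta)_{ij}$, that is, $\sigma^\eta_{ij}=0$. Dual feasibility gives $\sigma^\eta_{ij'}\le 0$ for every pair. So $0$ is the maximum score in row $i$ and in column $j$, and $T_i^\eta$ and $U_j^\eta$ are exactly the row-wise and column-wise argmax sets.

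\textbf{Step 2: coverage.} Fix an edge with $(X_\eta)_{ij}>0$ and apply \eqref{eq:cost_perturbation_endpoint_budget}.
\begin{itemize}
\item If $|T_i^\eta|\le\kappa$: every $j'\in T_i^\eta$ scores $0$ and every $j'\notin T_i^\eta$ scores strictly below $0$. The row-wise $\operatorname{argtop}^{\kappa}$ over row $i$ therefore returns all of $T_i^\eta$ before any other entry, and $j$ lies in it by Step 1.
\item If $|U_j^\eta|\le\kappa$: the column-wise selection for $j$ contains $i$ by the same argument.
\end{itemize}
The one point needing care is tie-breaking inside $\operatorname{argtop}^\kappa$. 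Because every tied top entry fits within the budget, no arbitrary tie-breaking rule can exclude an element of the tight set. In either case $(i,j)\in\mathcal A_\eta$, so $\supp(\mathbf X_\eta)\subseteq\mathcal A_\eta\subseteq\mathcal N$.

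\textbf{Step 3: the value bound.}
\begin{itemize}
\item $\mathbf X_\eta\in U(\mathbf a,\mathbf b)$ and its support lies in $\mathcal N$, so it is feasible for the original-cost restricted problem on $\mathcal N$. This gives $V_{\mathrm{init}}\le\sum_{ij}c_{ij}(X_\eta)_{ij}$.
\item The restricted feasible set is contained in the full one, so $V\le V_{\mathrm{init}}$, as in the proof of Proposition~\ref{prop:restricted_recovery}.
\item The last inequality, $\sum_{ij}c_{ij}(X_\eta)_{ij}\le V+M\delta$, is \eqref{eq:perturbed_primal_bound} of Theorem~\ref{thm:cost_perturbation_stability}.
\end{itemize}
Restricted feasibility also follows independently from $\mathcal B\subseteq\mathcal N$.

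The main obstacle I expect is the tie-handling in Step 2. The argument works only because all maximizers fit within the budget; I would state this explicitly so that $\operatorname{argtop}^\kappa$ is robust to any ordering among equal scores. Everything else is a direct invocation of earlier results.
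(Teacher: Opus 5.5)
Your proposal is correct and follows the paper's proof essentially step for step. Complementary slackness gives tightness, and since zero is the largest possible auxiliary score, the row- or column-wise top-$\kappa$ selection captures any tight set that fits within the budget. Finally, $\mathbf X_\eta$ serves as a feasible witness for the first restricted solve, and combining this with \eqref{eq:perturbed_primal_bound} yields the value bound; your explicit argument that tie-breaking inside $\operatorname{argtop}^\kappa$ cannot exclude a tight entry is a slightly more careful rendering of the paper's one-line remark.
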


\begin{proof}
Auxiliary dual feasibility gives $\sigma_{ij}^{\eta}\leq0$ for every edge, while complementary slackness gives $\sigma_{ij}^{\eta}=0$ whenever $(X_\eta)_{ij}>0$.
This proves~\eqref{eq:cost_perturbation_auxiliary_support_tight}.
Because zero is the largest possible auxiliary dual score, the row-wise top-$\kappa$ assignment contains every edge in $T_i^\eta$ whenever $|T_i^\eta|\leq\kappa$; the column-wise case is symmetric.
By~\eqref{eq:cost_perturbation_endpoint_budget}, every edge in $\supp(\mathbf X_\eta)$ is therefore selected from at least one of its endpoints.
Thus $\supp(\mathbf X_\eta)\subseteq\mathcal A_\eta\subseteq\mathcal N$.
The auxiliary optimizer is therefore feasible for the first restricted problem under the original cost, and~\eqref{eq:cost_perturbation_initial_value_bound} follows from~\eqref{eq:perturbed_primal_bound}.
\end{proof}

For the $\ell_1$ and $\ell_\infty$ experiments, we use the deterministic perturbation $\Delta_{ij}=\bar C R_{ij}$, where $0<R_{ij}<1$ is generated reproducibly from the pair indices by an index hash and $\bar C$ is a sampled mean cost.
Its bounded range invokes Theorem~\ref{thm:cost_perturbation_stability} with $\delta=\eta\bar C$, while Proposition~\ref{prop:cost_perturbation_generic} characterizes the continuous-random construction.
For probability measures, $M=1$, so both the original-cost primal suboptimality and the shifted-dual objective gap are bounded by $\eta\bar C$; when $V>0$, the corresponding relative primal bound is at most $\eta\bar C/V$.

\section{Convergence Analysis of Active-Support Refinement}
\label{app:convergence}

This appendix gives the formal statement and proof of the objective-perspective convergence result stated in Theorem~\ref{thm:finite_convergence}.
The proof separates the role of the support update from the symbolic construction used to guarantee connected feasible supports.
It uses only two abstract update properties: preservation of the current primal support and insertion of at least one positive dual violator over the full edge set whenever such a violator exists.
Therefore, it applies to the symbolic interpretation of Algorithm~\ref{alg:updateActive} as an instance of admissible active-support refinement.

We fix one level of the hierarchy and omit the level superscript.
Consider the full OT linear program
\[
    \min_{\mathbf x\ge 0}\ \mathbf c^\top \mathbf x,
    \qquad \mathbf A\mathbf x=\mathbf q,
\]
where $\mathbf{q}=[\mathbf{a};\mathbf{b}]$, $a_i>0$, $b_j>0$, and $\sum_i a_i=\sum_j b_j$.
Let $\mathbb K=\mathbb R((\xi))$ be the ordered field of formal Laurent series in an infinitesimal $\xi$.
For a nonzero series, its sign is the sign of the coefficient of its lowest-order nonzero term; in particular, $0<\xi<r$ for every real $r>0$.
Define the symbolic lexicographic marginals by
\begin{equation*}
\begin{aligned}
    a_i^\xi &:= a_i+\xi^i, \qquad i=1,\ldots,m,\\
    b_j^\xi &:= b_j+\xi^{m+j}, \qquad j=1,\ldots,n-1,\\
    b_n^\xi &:= b_n+\sum_{i=1}^m \xi^i-\sum_{j=1}^{n-1}\xi^{m+j}.
\end{aligned}
\end{equation*}
The two symbolic marginals have the same total mass and are positive in the order of $\mathbb K$.
Let $\mathbf q^\xi=(\mathbf a^\xi,\mathbf b^\xi)$.
No numerical value is assigned to $\xi$; the higher-order coefficients are consulted only when lower-order coefficients are tied.

For an active support $\mathcal N\subseteq[m]\times[n]$, define the symbolic restricted value
\[
    z_\xi(\mathcal N)
    :=
    \min\left\{
    \mathbf c^\top\mathbf x:
    \mathbf A\mathbf x=\mathbf q^\xi,
    \mathbf x\geq0,
    \supp_\xi(\mathbf x)\subseteq\mathcal N
    \right\},
\]
where inequalities are interpreted in $\mathbb K$ and $\supp_\xi(\mathbf x):=\{(i,j):x_{ij}>0\}$ is the symbolic support.
An exact symbolic restricted oracle solves this problem exactly over $\mathbb K$ and uses its lexicographic order for feasibility, support membership, and all comparisons.
For a basis $\mathbf B$, the associated basic flow is $\mathbf x_{\mathbf B}(\xi)=\mathbf B^{-1}\mathbf q^\xi$, so this oracle can be realized by finite coefficient-vector arithmetic and lexicographic comparisons rather than by choosing a floating-point value of $\xi$.
We assume that the initial active support $\mathcal N^0$ is symbolically feasible; such a support can be obtained by applying the Northwest-corner construction to $(\mathbf a^\xi,\mathbf b^\xi)$ with all comparisons interpreted in $\mathbb K$.
At iteration $k$, let $\mathcal N^k$ be the current active support, let $(\mathbf x^k,\mathbf f^k,\mathbf g^k)$ be the returned symbolic primal--dual optimum, and define
\[
    \mathcal V^k
    :=
    \left\{(i,j)\in[m]\times[n]:
    f_i^k+g_j^k-c_{ij}>0
    \right\}.
\]

The following definition isolates the two update properties needed by the convergence argument.

\begin{definition}[Admissible active-support refinement]
\label{def:app_admissible_refinement}
An active-support update is called admissible if, at every iteration $k$, it satisfies
\[
    \mathcal N^{k+1}\supseteq\supp_\xi(\mathbf x^k),
    \qquad
    \mathcal V^k\neq\emptyset
    \ \Longrightarrow\
    \mathcal N^{k+1}\cap\mathcal V^k\neq\emptyset.
\]
\end{definition}

The theorem below combines admissibility with the connectivity induced by the symbolic marginals to obtain strict descent and finite termination.

\begin{theorem}[Finite termination with an exact symbolic lexicographic oracle]
\label{thm:app_objective_convergence}
Assume that the initial active support is symbolically feasible, every restricted OT problem is solved by the exact symbolic oracle, and the active-support update is admissible in the sense of Definition~\ref{def:app_admissible_refinement}.
Then active-support refinement terminates after finitely many iterations at a global symbolic optimum $\widehat{\mathbf x}(\xi)$.
Moreover, $\widehat{\mathbf x}(\xi)$ has no negative powers of $\xi$, and its constant term $[\widehat{\mathbf x}]_0$ is a global optimum of the original OT problem with marginals $(\mathbf a,\mathbf b)$.
\end{theorem}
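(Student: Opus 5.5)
The plan is to run the classical perturbation (anti-cycling) argument for the simplex method over the ordered field $\mathbb K=\mathbb R((\xi))$. There are four steps: show that the symbolic marginals make every feasible support connected; use this to show that restricted dual potentials are unique up to a constant; turn admissibility into strict descent of $z_\xi(\mathcal N^k)$; and finally read off the constant term. LP duality, complementary slackness and the cycle-cancellation argument of Proposition~\ref{prop:app_cycle_cancellation} are purely algebraic, so they hold verbatim over $\mathbb K$. I will use them there freely, including the proof of Proposition~\ref{prop:restricted_recovery}. By Proposition~\ref{prop:app_cycle_cancellation}, I may also assume without loss of generality that the oracle returns a basic (forest-supported) optimum.

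\textbf{Step 1 (connectivity and nondegeneracy).} Let $\mathbf x\geq0$ be symbolically feasible, and suppose its support graph splits off a connected component with node set $S\subsetneq[m]\sqcup[n]$, $S\neq\emptyset$. Mass balance on that component forces
\[
\sum_{i\in S}a_i^\xi=\sum_{j\in S}b_j^\xi .
\]
Replace $S$ by its complement if needed so that target $n\notin S$. The $\xi$-part of the difference is then $\sum_{i\in S}\xi^i-\sum_{j\in S}\xi^{m+j}$. This is a signed sum of distinct monomials, so it vanishes only if $S=\emptyset$, which is a contradiction. Hence every symbolically feasible support graph is connected. In particular, a basic optimum is supported on a spanning tree. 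The same cut computation applied to a tree edge shows that every basic flow is strictly positive in $\mathbb K$, so no degeneracy occurs.

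\textbf{Step 2 (unique duals).} Take any optimal restricted dual $(\mathbf f,\mathbf g)$ on a support $\mathcal N$ that contains the support of a symbolic primal optimum $\mathbf x$. Complementary slackness forces $f_i+g_j=c_{ij}$ on $\supp_\xi(\mathbf x)$. Because this support spans $[m]\sqcup[n]$ connectedly by Step 1, these equations determine $(\mathbf f,\mathbf g)$ up to the gauge $(\mathbf f+t\mathbf 1,\mathbf g-t\mathbf 1)$, and that gauge leaves every dual score unchanged. The potentials solve $\mathbf B^\top\mathbf y=\mathbf c_{\mathbf B}$, so they are in fact real.

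\textbf{Step 3 (strict descent and finiteness).} Suppose $\mathcal V^k\neq\emptyset$. Admissibility gives $\supp_\xi(\mathbf x^k)\subseteq\mathcal N^{k+1}$, so $\mathbf x^k$ remains feasible and $z_\xi(\mathcal N^{k+1})\leq z_\xi(\mathcal N^k)$. Assume for contradiction that equality holds. Then $\mathbf x^k$ is optimal on $\mathcal N^{k+1}$. Any optimal dual on $\mathcal N^{k+1}$ must be complementary to $\mathbf x^k$, so by Step 2 it coincides with $(\mathbf f^k,\mathbf g^k)$ modulo gauge. Restricted dual feasibility on $\mathcal N^{k+1}$ would then require $\sigma^k_e\leq0$ for the inserted violator $e\in\mathcal N^{k+1}\cap\mathcal V^k$, which is a contradiction. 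Hence $z_\xi(\mathcal N^{k})$ strictly decreases along the iteration. Since $z_\xi$ is a function of the support and there are at most $2^{mn}$ supports, only finitely many values are available, and the process stops at some $k$ with $\mathcal V^k=\emptyset$. At that point, the second part of Proposition~\ref{prop:restricted_recovery} over $\mathbb K$ shows that $\widehat{\mathbf x}(\xi)=\mathbf x^k$ is a global symbolic optimum.

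\textbf{Step 4 (constant term).} The basic optimum is $\widehat{\mathbf x}(\xi)=\mathbf B^{-1}\mathbf q^\xi$, where $\mathbf B$ is a real tree basis and $\mathbf q^\xi$ is polynomial in $\xi$. Hence $\widehat{\mathbf x}(\xi)$ is a polynomial and has no negative powers. Its constant term equals $\mathbf B^{-1}\mathbf q$, which satisfies $\mathbf A[\widehat{\mathbf x}]_0=\mathbf q$. Nonnegativity in $\mathbb K$ forces the lowest-order coefficient of each entry to be nonnegative, so $[\widehat{\mathbf x}]_0\geq0$ and the constant term is feasible for the original problem. The final potentials are real and full-dual feasible. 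If $[\widehat x]_{0,ij}>0$, then $\widehat x_{ij}>0$ in $\mathbb K$, so edge $(i,j)$ is tight. Thus $[\widehat{\mathbf x}]_0$ together with these real potentials satisfies real complementary slackness and is a global optimum of the original problem.

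I expect Step 3 to be the main obstacle, specifically the claim that equality of restricted values forces identical duals. It relies entirely on the connectivity from Step 1, since without it the potentials on different components could shift independently and absorb the violator. The fallback there is to argue directly with the tree basis of $\mathbf x^k$: entering $e$ has negative reduced cost, and a nondegenerate pivot then gives a strictly positive step length, hence strict decrease.
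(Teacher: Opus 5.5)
Your proposal is correct, but it reaches strict descent by a different route from the paper. Your Step 1 is the paper's Lemma~\ref{lemma:app_connected_support}, with your complement trick replacing its two-case split. Your Step 2 is the same connectivity argument the paper uses at the end to make the final potentials real.

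The paper proves Lemma~\ref{lemma:app_strict_descent} constructively. Connectivity of $\operatorname{supp}_\xi(\mathbf x^k)$ gives a path between the endpoints of the inserted violator $e_0$. Closing that path yields an alternating direction $\mathbf d$ with $\mathbf A\mathbf d=0$ and $\mathbf c^\top\mathbf d=-\sigma^k_{e_0}<0$. Since every decreasing entry lies on $\operatorname{supp}_\xi(\mathbf x^k)$, the step $\theta>0$ gives an explicitly cheaper plan inside $\mathcal N^{k+1}$; this is your fallback pivot.

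You argue by contradiction instead. Equal values make $\mathbf x^k$ optimal on $\mathcal N^{k+1}$, so every optimal dual there is complementary to it. Connectivity then pins that dual to $(\mathbf f^k,\mathbf g^k)$ up to gauge, and the inserted violator breaks restricted dual feasibility. Your route is shorter once Step 2 is available, and it reuses Step 2 for the real potentials in Step 4. The cost is invoking strong duality over $\mathbb K$ on the new support, which is harmless because LP duality holds over any ordered field. The paper's route is purely primal and exhibits the improving plan explicitly.

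One small caveat concerns Step 4. The polynomial form $\mathbf B^{-1}\mathbf q^\xi$ applies to the basic optimum you substituted via cycle cancellation, not to whatever coupling the oracle actually returns. The paper avoids that reduction with a valuation argument: entries are nonnegative in $\mathbb K$ and each row sums to $a_i^\xi$, whose lowest order is zero. Hence no entry can have a negative-order leading term. With that argument in place, your real complementary-slackness argument for $[\widehat{\mathbf x}]_0$ works for any returned optimum, and you can drop the ``without loss of generality basic'' assumption.
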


We begin with the weak descent property supplied by support preservation.

\begin{lemma}[Monotonicity of the restricted objective]
\label{lemma:app_objective_monotonicity}
If $\mathcal N^{k+1}\supseteq\supp_\xi(\mathbf x^k)$, then
\[
    z_\xi(\mathcal N^{k+1})\leq z_\xi(\mathcal N^k).
\]
\end{lemma}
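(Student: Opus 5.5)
The plan is a feasibility-inclusion argument. The current symbolic optimum $\mathbf x^k$ is feasible for the next restricted problem, so the minimum over $\mathcal N^{k+1}$ cannot exceed its value. All steps will be carried out in the ordered field $\mathbb K$, using only its order axioms. No numerical value of $\xi$ is needed.

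\emph{Step 1: attainment.} By definition of the exact symbolic oracle, $\mathbf x^k$ attains $z_\xi(\mathcal N^k)$. That is, $\mathbf A\mathbf x^k=\mathbf q^\xi$, $\mathbf x^k\geq0$ in $\mathbb K$, $\supp_\xi(\mathbf x^k)\subseteq\mathcal N^k$, and $\mathbf c^\top\mathbf x^k=z_\xi(\mathcal N^k)$. I take existence of this minimizer as part of the oracle's specification. It can also be justified directly: the problem is a feasible LP over the ordered field $\mathbb K$, and it is bounded below because $\mathbf x\geq0$ and the marginals sum to a fixed total. The simplex method with a finite basis enumeration therefore terminates at a basic optimum $\mathbf B^{-1}\mathbf q^\xi$, since LP duality and the basic-solution theorem hold over any ordered field.

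\emph{Step 2: feasibility for the next support.} I will check that $\mathbf x^k$ is admissible in the problem defining $z_\xi(\mathcal N^{k+1})$. The equality constraints and nonnegativity do not depend on the support, so they hold unchanged. The support constraint follows from the hypothesis $\mathcal N^{k+1}\supseteq\supp_\xi(\mathbf x^k)$, which gives $\supp_\xi(\mathbf x^k)\subseteq\mathcal N^{k+1}$. In particular, the restricted problem on $\mathcal N^{k+1}$ is symbolically feasible. This also shows that symbolic feasibility propagates along admissible iterations.

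\emph{Step 3: conclusion.} Because $z_\xi(\mathcal N^{k+1})$ is a minimum over a set containing $\mathbf x^k$, we get $z_\xi(\mathcal N^{k+1})\leq\mathbf c^\top\mathbf x^k=z_\xi(\mathcal N^k)$, with the inequality taken in the lexicographic order of $\mathbb K$.

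The only step needing care is Step 1, namely that the minimum in $\mathbb K$ is attained rather than being only an infimum. I would handle it by citing the oracle definition, with the basic-solution argument over ordered fields as a backup.
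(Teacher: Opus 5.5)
Your proposal is correct and is essentially the paper's proof: by support preservation, $\mathbf x^k$ is feasible on $\mathcal N^{k+1}$, so $z_\xi(\mathcal N^{k+1})\leq\mathbf c^\top\mathbf x^k=z_\xi(\mathcal N^k)$. Your extra argument that the minimum over the ordered field $\mathbb K$ is attained is a sound refinement; the paper leaves this implicit in its exact-oracle assumption.
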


\begin{proof}
Since $\supp_\xi(\mathbf x^k)\subseteq\mathcal N^{k+1}$, the previous restricted solution $\mathbf x^k$ is feasible for the next restricted problem.
By the optimality of $\mathbf x^{k+1}$ on $\mathcal{N}^{k+1}$,
\[
    z_\xi(\mathcal{N}^{k+1})
    \le \mathbf c^\top \mathbf x^k
    =z_\xi(\mathcal{N}^k).
\]
\end{proof}

Weak descent alone does not rule out repeated objective values.
The next lemma rules out nontrivial equalities between source and target subset masses, thereby ensuring that every symbolic feasible support is connected.

\begin{lemma}[Symbolic lexicographic marginals imply connected supports]
\label{lemma:app_connected_support}
For every nontrivial subset pair $(I,J)$, i.e., every $(I,J)\neq(\emptyset,\emptyset)$ and $(I,J)\neq([m],[n])$, the symbolic marginals satisfy
\begin{equation}
\label{eq:app_transport_non_degenerate}
    \sum_{i\in I}a_i^\xi \neq \sum_{j\in J}b_j^\xi
\end{equation}
as an identity in $\mathbb K$.
Consequently, the bipartite graph induced by the symbolic support of any feasible transport plan is connected.
\end{lemma}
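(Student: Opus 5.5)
The argument has two parts. First we show the non-equality \eqref{eq:app_transport_non_degenerate} by looking at the coefficients of the infinitesimal powers $\xi^1,\ldots,\xi^{m+n-1}$. Then we turn it into connectivity with a standard block-decomposition argument for transport plans.

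For the first part, expand both sides as elements of $\mathbb K$. Only the powers $\xi^1,\ldots,\xi^{m+n-1}$ appear. In $\sum_{i\in I}a_i^\xi$ the coefficient of $\xi^i$ (for $i\le m$) is $\mathbf 1[i\in I]$. In $\sum_{j\in J}b_j^\xi$ the same coefficient is $\mathbf 1[n\in J]$, because $b_n^\xi$ carries $+\sum_i\xi^i$. Likewise, for $j\le n-1$ the coefficient of $\xi^{m+j}$ is $0$ on the left. On the right it is $\mathbf 1[j\in J]-\mathbf 1[n\in J]$.

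Suppose the two sides agree as formal series. Matching the coefficients of $\xi^i$ forces $\mathbf 1[i\in I]=\mathbf 1[n\in J]$ for every $i$, so $I=\emptyset$ or $I=[m]$. Matching the coefficients of $\xi^{m+j}$ forces $\mathbf 1[j\in J]=\mathbf 1[n\in J]$ for every $j\le n-1$, so $J=\emptyset$ or $J=[n]$. Combining these, $n\in J$ exactly when $I=[m]$. The only surviving pairs are therefore $(\emptyset,\emptyset)$ and $([m],[n])$, which are the excluded trivial pairs.

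For the connectivity claim, let $\mathbf x\ge0$ be feasible with $\mathbf A\mathbf x=\mathbf q^\xi$. Suppose its symbolic support graph on the vertex set $[m]\sqcup[n]$ is disconnected, and let $(I,J)$ be the vertex set of one connected component. This component is a proper nonempty subset of the vertices, so $(I,J)$ is nontrivial. Every node has positive marginal in $\mathbb K$, hence at least one incident support edge. It follows that no component is an isolated vertex, though the argument does not actually need this.

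No support edge leaves the component. Summing the row constraints over $I$ therefore gives $\sum_{i\in I}a_i^\xi=\sum_{i\in I,j\in J}x_{ij}$, and summing the column constraints over $J$ gives the same quantity $\sum_{j\in J}b_j^\xi$. This equality contradicts \eqref{eq:app_transport_non_degenerate}. Hence the support graph is connected.

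The main point needing care is the first part: the coefficient bookkeeping must correctly handle the compensating term in $b_n^\xi$. The case split on whether $n\in J$ is what ties the source coefficients to the target coefficients. Everything takes place in the ordered field $\mathbb K$, where equality means equality of all coefficients, so no genericity or measure-zero argument is required.
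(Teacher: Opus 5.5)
Your proof is correct and takes essentially the same route as the paper. The paper splits on whether $n\in J$ and exhibits a surviving monomial among the distinct powers $\xi^1,\ldots,\xi^{m+n-1}$, which is exactly what your coefficient matching with the indicator $\mathbf 1[n\in J]$ encodes; your connectivity step is the same mass-conservation argument on a single connected component, and your remark that the no-isolated-vertex observation is not needed is also right.
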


\begin{proof}
For fixed subsets $I\subseteq[m]$ and $J\subseteq[n]$, define
\[
    D_{I,J}(\xi):=
    \sum_{i\in I}a_i^\xi-
    \sum_{j\in J}b_j^\xi .
\]
We show that $D_{I,J}$ is not the zero polynomial for every nontrivial subset pair.
If $n\notin J$, then
\[
    D_{I,J}(\xi)
    =D_{I,J}(0)+\sum_{i\in I}\xi^i-
    \sum_{j\in J}\xi^{m+j}.
\]
All exponents $1,\ldots,m+n-1$ are distinct, so the polynomial above contains a nonzero monomial unless $I=\emptyset$ and $J=\emptyset$.
Thus, for every nonempty pair with $n\notin J$, $D_{I,J}$ is not identically zero.
If $n\in J$, substituting the definition of $b_n^\xi$ gives
\[
    D_{I,J}(\xi)
    =D_{I,J}(0)
    -\sum_{i\notin I}\xi^i
    +\sum_{\substack{j\notin J\\ j<n}}\xi^{m+j}.
\]
Again, this polynomial contains a nonzero monomial unless $I=[m]$ and $J=[n]$.
Hence $D_{I,J}$ is not identically zero for every nontrivial subset pair.
This proves~\eqref{eq:app_transport_non_degenerate}.

Now let $\mathbf x$ be any symbolic feasible plan, and let
\[
    G(\mathbf x)=([m]\cup[n],\supp_\xi(\mathbf x))
\]
be its bipartite support graph.
Suppose, toward a contradiction, that $G(\mathbf x)$ is disconnected.
Take one connected component and denote its source and target vertex sets by $I$ and $J$.
Because all symbolic marginals are positive, every vertex is incident to at least one symbolically positive edge, so $(I,J)$ is nontrivial.
Since no positive-flow edge connects this component to its complement, conservation of mass on the component yields
\[
    \sum_{i\in I}a_i^\xi
    =\sum_{i\in I}\sum_{j=1}^n x_{ij}
    =\sum_{i\in I}\sum_{j\in J}x_{ij}
    =\sum_{j\in J}\sum_{i=1}^m x_{ij}
    =\sum_{j\in J}b_j^\xi,
\]
which contradicts~\eqref{eq:app_transport_non_degenerate}.
Therefore $G(\mathbf x)$ must be connected.
\end{proof}

This connectivity turns any inserted positive dual violator into a feasible negative-cost cycle direction, yielding strict descent.

\begin{lemma}[Strict descent from a positive dual violator]
\label{lemma:app_strict_descent}
If $\mathcal N^{k+1}\supseteq\supp_\xi(\mathbf x^k)$ and $\mathcal N^{k+1}\cap\mathcal V^k\neq\emptyset$, then
\[
    z_\xi(\mathcal{N}^{k+1})<z_\xi(\mathcal{N}^k).
\]
\end{lemma}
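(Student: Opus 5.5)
The argument is the classical network-simplex pivot, carried out over the ordered field $\mathbb K$. Pick an inserted violator $(p,r)\in\mathcal N^{k+1}\cap\mathcal V^k$, so that
\[
\sigma^k_{pr}=f^k_p+g^k_r-c_{pr}>0 .
\]
By Lemma~\ref{lemma:app_connected_support}, the support graph $G(\mathbf x^k)$ of the current symbolic optimum is connected. Hence it contains a path $P$ from target node $r$ to source node $p$ that uses only edges of $\supp_\xi(\mathbf x^k)$. Closing $P$ with $(p,r)$ gives a simple cycle $C$.

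We orient $C$ so that $(p,r)$ gets $+1$, and alternate signs $-1,+1,\dots$ along $P$. The resulting direction $\mathbf d$ satisfies $\mathbf A\mathbf d=\mathbf 0$. Its support lies in $\{(p,r)\}\cup\supp_\xi(\mathbf x^k)\subseteq\mathcal N^{k+1}$, using the support-preservation hypothesis.

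Next we compute the cost of moving along $\mathbf d$. Since $(\mathbf x^k,\mathbf f^k,\mathbf g^k)$ is an exact restricted primal--dual optimum, complementary slackness holds on $\mathcal N^k$. Every edge $(i,j)$ of $P$ lies in $\supp_\xi(\mathbf x^k)\subseteq\mathcal N^k$ and carries positive flow, so it is tight: $c_{ij}=f^k_i+g^k_j$. Because $A\mathbf d=0$, we have $\sum_e d_e(f^k_{i(e)}+g^k_{j(e)})=0$, and therefore
\[
\mathbf c^\top\mathbf d=\sum_e d_e\,(c_e-f^k_{i(e)}-g^k_{j(e)}).
\]
On the edges of $P$ each summand vanishes by tightness. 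The only remaining term is the one for $(p,r)$, so
\[
\mathbf c^\top\mathbf d=-\sigma^k_{pr}<0 .
\]
This is a real negative number, hence negative in $\mathbb K$ as well.

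Now we take a step along $\mathbf d$. Set
\[
\theta:=\min_{e:\,d_e<0} x^k_e .
\]
The minimum runs over the $-1$ edges of $P$, which form a nonempty set since $P$ has odd length at least one. Each such $x^k_e$ is symbolically positive, so $\theta>0$ in $\mathbb K$. The point $\mathbf x^k+\theta\mathbf d$ is nonnegative in $\mathbb K$, satisfies $\mathbf A\mathbf x=\mathbf q^\xi$, and has support inside $\mathcal N^{k+1}$. It is therefore feasible for the restricted problem on $\mathcal N^{k+1}$, and
\[
z_\xi(\mathcal N^{k+1})\le \mathbf c^\top\mathbf x^k+\theta\,\mathbf c^\top\mathbf d<z_\xi(\mathcal N^k),
\]
because in an ordered field the product of a positive and a negative element is negative.

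The main delicate point is confirming that the path $P$ exists and consists of tight edges. Existence relies entirely on connectivity, which could fail without the symbolic marginals; in that case $\theta$ could be zero or the path could be absent. Tightness relies on the oracle returning an exact complementary pair over $\mathbb K$, which I take as part of the oracle assumption. I would also check the degenerate case $(p,r)\in\supp_\xi(\mathbf x^k)$. It cannot occur: such an edge lies in $\mathcal N^k$, so it would be tight, contradicting $\sigma^k_{pr}>0$.
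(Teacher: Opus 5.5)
Your proof is correct and follows the paper's argument essentially step for step. Connectivity from Lemma~\ref{lemma:app_connected_support} gives a support path that closes into an alternating cycle through the inserted violator, complementary slackness yields $\mathbf c^\top\mathbf d=-\sigma^k_{pr}<0$, and the ratio step $\theta=\min_{d_e<0}x^k_e>0$ produces a strictly cheaper feasible point on $\mathcal N^{k+1}$. One small point: your remark that $\sigma^k_{pr}$ is a real number is not justified as written, but it is also not needed, since $\sigma^k_{pr}>0$ in $\mathbb K$ is exactly the definition of $\mathcal V^k$.
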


\begin{proof}
Choose an edge
\[
    e_0=(i_0,j_0)\in \mathcal{N}^{k+1}\cap\mathcal{V}^k.
\]
Then
\[
    f_{i_0}^k+g_{j_0}^k-c_{i_0j_0}>0.
\]
Since $(\mathbf f^k,\mathbf g^k)$ is dual feasible on $\mathcal{N}^k$, the edge $e_0$ cannot belong to $\mathcal{N}^k$.
By Lemma~\ref{lemma:app_connected_support}, the symbolic support graph of the feasible plan $\mathbf x^k$ is connected.
Hence there exists a path in $\supp_\xi(\mathbf x^k)$ connecting $i_0$ and $j_0$.
Adding the edge $e_0$ to this path creates an even cycle in the bipartite graph.

Define a signed cycle direction $d$ by setting $d_{i_0j_0}=1$, assigning alternating values $-1,+1,-1,+1,\ldots$ along the path, and setting all other entries to zero.
Since this is an alternating cycle, the signed increments cancel at every source and target vertex, and therefore
\[
    \mathbf A\mathbf d=0.
\]
We now compute the objective change along $\mathbf d$.
All path edges belong to $\supp_\xi(\mathbf x^k)$, so complementary slackness for the restricted primal-dual optimum gives
\[
    x_{ij}^k>0 \quad\Longrightarrow\quad f_i^k+g_j^k=c_{ij}
\]
on every path edge.
Thus
\[
    \mathbf c^\top \mathbf d
    =c_{i_0j_0}+
    \sum_{(i,j)\in \mathrm{path}}d_{ij}c_{ij}
    =c_{i_0j_0}+
    \sum_{(i,j)\in \mathrm{path}}d_{ij}(f_i^k+g_j^k).
\]
On the other hand, since $\mathbf A\mathbf d=0$,
\[
    \sum_{(i,j)\in\mathrm{cycle}}d_{ij}(f_i^k+g_j^k)
    =
    \sum_i f_i^k\sum_j d_{ij}
    +
    \sum_j g_j^k\sum_i d_{ij}
    =0.
\]
Using $d_{i_0j_0}=1$, we obtain
\[
    \sum_{(i,j)\in \mathrm{path}}d_{ij}(f_i^k+g_j^k)
    =-(f_{i_0}^k+g_{j_0}^k).
\]
Therefore
\[
    \mathbf c^\top \mathbf d
    =c_{i_0j_0}-(f_{i_0}^k+g_{j_0}^k)
    =-\bigl(f_{i_0}^k+g_{j_0}^k-c_{i_0j_0}\bigr)<0.
\]

All entries with $d_{ij}<0$ lie on the path and hence satisfy $x_{ij}^k>0$.
Let
\[
    \theta:=\min_{d_{ij}<0} x_{ij}^k>0.
\]
Then $\mathbf x^k+\theta \mathbf d\ge0$, and $\mathbf A(\mathbf x^k+\theta \mathbf d)=\mathbf q^\xi$ because $\mathbf A\mathbf d=0$.
Moreover,
\[
    \supp_\xi(\mathbf x^k+\theta \mathbf d)
    \subseteq
    \supp_\xi(\mathbf x^k)\cup\{e_0\}
    \subseteq
    \mathcal{N}^{k+1},
\]
where the last inclusion follows from admissibility.
Thus $\mathbf x^k+\theta \mathbf d$ is feasible for the next restricted problem, and
\[
\begin{aligned}
    z_\xi(\mathcal{N}^{k+1})
    &\leq \mathbf c^\top(\mathbf x^k+\theta \mathbf d)\\
    &=\mathbf c^\top \mathbf x^k+\theta \mathbf c^\top \mathbf d
    <\mathbf c^\top \mathbf x^k
    =z_\xi(\mathcal{N}^k).
\end{aligned}
\]
\end{proof}

\begin{proof}[Proof of Theorem~\ref{thm:app_objective_convergence}]
By Lemma~\ref{lemma:app_objective_monotonicity}, admissibility implies that the restricted objective values are non-increasing.
If the algorithm has not terminated at iteration $k$, then $\mathcal{V}^k\neq\emptyset$.
Admissibility gives $\mathcal{N}^{k+1}\cap\mathcal{V}^k\neq\emptyset$, and Lemma~\ref{lemma:app_strict_descent} yields
\[
    z_\xi(\mathcal{N}^{k+1})<z_\xi(\mathcal{N}^k).
\]
Thus every nonterminal iteration strictly decreases the restricted optimal value in the order of $\mathbb K$.
Since the active support is a subset of the finite set $[m]\times[n]$, there are only finitely many possible active supports, and each support has a fixed restricted optimal value.
A support cannot reappear after a strict decrease.
Therefore the active-support refinement cannot run indefinitely and must terminate after finitely many iterations.

At termination, $\mathcal{V}^k=\emptyset$.
Hence $(\mathbf f^k,\mathbf g^k)$ is feasible for the full dual problem, because restricted dual feasibility already holds on $\mathcal{N}^k$ and no positive dual violation exists outside it.
The vector $\mathbf x^k$ is feasible for the full symbolic primal problem.
Since $(\mathbf x^k,\mathbf f^k,\mathbf g^k)$ is primal-dual optimal for the restricted problem,
\[
    \mathbf c^\top \mathbf x^k=(\mathbf a^\xi)^\top \mathbf f^k+(\mathbf b^\xi)^\top \mathbf g^k.
\]
Thus we have a full primal feasible solution and a full dual feasible solution with equal objective values.
Hence the returned coupling $\widehat{\mathbf x}(\xi):=\mathbf x^k$ is globally optimal for the full symbolic problem.

It remains to recover an optimum of the original problem.
Every entry of $\widehat{\mathbf x}(\xi)$ is nonnegative in $\mathbb K$, and each row sum has lowest power zero because it equals some $a_i^\xi$.
No entry can therefore contain a negative power of $\xi$: a positive leading term of negative order could not be canceled by the other nonnegative entries in its row.
Thus the constant term $[\widehat{\mathbf x}]_0$ is well defined, nonnegative, and satisfies
\[
    \mathbf A[\widehat{\mathbf x}]_0=\mathbf q
\]
by taking constant terms in $\mathbf A\widehat{\mathbf x}(\xi)=\mathbf q^\xi$.

By Lemma~\ref{lemma:app_connected_support}, the symbolic support of $\widehat{\mathbf x}(\xi)$ is connected.
Complementary slackness makes every edge in this support dual-tight.
After the gauge normalization $f_1^k=0$, these tight equalities determine all dual potentials along the connected support using only the real costs $c_{ij}$.
We may therefore take $(\mathbf f^k,\mathbf g^k)$ to be real-valued while retaining full dual feasibility and the equality
\[
    \mathbf c^\top\widehat{\mathbf x}(\xi)
    =
    (\mathbf a^\xi)^\top\mathbf f^k
    +
    (\mathbf b^\xi)^\top\mathbf g^k.
\]
Taking constant terms gives
\[
    \mathbf c^\top[\widehat{\mathbf x}]_0
    =
    \mathbf a^\top\mathbf f^k
    +
    \mathbf b^\top\mathbf g^k.
\]
Hence $[\widehat{\mathbf x}]_0$ is feasible for the original primal, $(\mathbf f^k,\mathbf g^k)$ is feasible for the original dual, and their objective values agree.
By strong duality, $[\widehat{\mathbf x}]_0$ is globally optimal for the original OT problem.
\end{proof}

\begin{remark}
\label{rem:finite_precision_lp}
The preceding analysis assumes exact restricted LP solves, whereas the implementation solves them to a finite numerical tolerance.
If insufficient LP accuracy prevents the active support from being updated, we keep the current support and warm-start the restricted LP solver with a tolerance reduced by a factor of $0.1$.
We repeat this fallback until the full KKT certificate passes or the active support is successfully updated.
In practice, such cases appear to be rare: the fallback was not triggered in any experiment reported in this paper.
\end{remark}

\section{Error Analysis of Empirical Semi-Discrete Potential Averaging}
\label{app:semidiscrete_averaging}

This section gives the formal version of
Theorem~\ref{thm:informal_avg_dual_scaling}.
The proof reduces the marginal error of the averaged potential to an
average of categorical fluctuations.
We state this linearization immediately after the main theorem and
defer the remainder estimates and probabilistic tools to the end.
Algorithm~\ref{alg:semidiscrete_dual_approx} summarizes the full procedural pipeline of \ourssdot.

\begin{algorithm}[htbp]
\caption{\ourssdot: Empirical Dual-Potential Averaging}
\label{alg:semidiscrete_dual_approx}
\begin{algorithmic}[1]
\STATE {\bfseries Input:} source distribution $\mu$, target support $\mathcal D$, target marginal $\mathbf b$, cost $c$, batch size $m$, repetitions $R$, and tolerance $\varepsilon$.
\FOR{$r=1,\ldots,R$}
    \STATE Sample $\mathcal S^{(r)}=\{s_i^{(r)}\}_{i=1}^{m}$ independently from $\mu$.
    \STATE $\mathcal P^{(r)} \gets (\mathcal S^{(r)},\mathcal D,m^{-1}\mathbf 1_m,\mathbf b,c)$
    \STATE $(\mathbf x^{(r)},\mathbf f^{(r)},\mathbf g^{(r)}) \gets \ours(\mathcal P^{(r)};\varepsilon)$
    \STATE $\widetilde{\mathbf g}^{(r)} \gets \mathbf g^{(r)} - \langle\mathbf b,\mathbf g^{(r)}\rangle\mathbf 1_n$
\ENDFOR
\STATE {\bfseries Return:} $\overline{\mathbf g}_{m,R} \gets R^{-1}\sum_{r=1}^{R}\widetilde{\mathbf g}^{(r)}$.
\end{algorithmic}
\end{algorithm}

As in Section~\ref{sec:convergence_analysis}, the analysis below takes $\varepsilon=0$ and treats each batch potential as an exact empirical maximizer; the implementation uses the prescribed finite KKT tolerance.

\subsection{Setup and Assumptions}

Fix distinct target sites
$d_1,\ldots,d_n\in\mathbb R^d$ and
$\mathbf b=(b_1,\ldots,b_n)\in\operatorname{int}\Delta_n$.
Let
\[
\mathbf B:=\operatorname{diag}(\mathbf b),
\qquad
b_{\min}:=\min_{j\in[n]}b_j,
\]
and define the gauge subspace
\[
\mathcal G_{\mathbf b}
:=
\{\mathbf g\in\mathbb R^n:\mathbf b^\top\mathbf g=0\}.
\]
All potentials below use this $\mathbf b$-centered gauge.
For vectors $\mathbf x,\mathbf y\in\mathbb R^n$, write
\[
\langle\mathbf x,\mathbf y\rangle_{\mathbf B^{-1}}
:=
\mathbf x^\top\mathbf B^{-1}\mathbf y.
\]
The cost convention is $c(s,d)=\|s-d\|_2^2$.
Let $P$ denote integration with respect to the source distribution
$\mu$, and define
\[
\phi_{\mathbf g}(s)
:=
\min_{j\in[n]}\{c(s,d_j)-g_j\},
\qquad
\Psi(\mathbf g)
:=
\mathbf b^\top\mathbf g+P\phi_{\mathbf g}.
\]

For the cell indicators, fix the measurable rule that selects the
smallest index among the minimizers, and write
\[
\begin{aligned}
\mathbf a(s;\mathbf g)
&:=
\mathbf e_{\min\left(\operatorname*{arg\,min}_{j\in[n]}
\{c(s,d_j)-g_j\}\right)},\\
\mathbf p(\mathbf g)
&:=P\mathbf a(\cdot;\mathbf g).
\end{aligned}
\]
Under the anti-concentration condition below, every pairwise Laguerre boundary has zero $P$-mass.
Thus this definition agrees at the population level with the Laguerre cell definition in Section~\ref{sec:semidiscrete_ot}.

Given iid source samples $S_1,\ldots,S_m\sim\mu$, let
\[
P_m:=\frac1m\sum_{k=1}^m\delta_{S_k},
\qquad
\Psi_m(\mathbf g)
:=
\mathbf b^\top\mathbf g+P_m\phi_{\mathbf g}.
\]
The random vector $\mathbf G_m$ is the appendix counterpart of the
normalized single-batch output $\widetilde{\mathbf g}_m$ in
Section~\ref{sec:semidiscrete_ot}.
It is an arbitrary measurable selection satisfying
\begin{equation}
\mathbf G_m
\in
\operatorname*{arg\,max}_{\mathbf g\in\mathcal G_{\mathbf b}}
\Psi_m(\mathbf g).
\label{eq:app_empirical_semidual_maximizer}
\end{equation}
Independent repetitions use iid copies
$\mathbf G_m^{(1)},\ldots,\mathbf G_m^{(R)}$, including independent
copies of any internal solver randomness, and
\[
\overline{\mathbf G}_{m,R}
:=
\frac1R\sum_{r=1}^R\mathbf G_m^{(r)}.
\]
The population marginal error is
\[
\Phi(\mathbf g)
:=
\|
\mathbf B^{-1/2}(\mathbf p(\mathbf g)-\mathbf b)
\|_2^2.
\]
Because $\mathbf p(\mathbf g)\in\Delta_n$ and $b_{\min}>0$,
$\Phi$ is globally bounded.

The following assumption provides the moment, boundary regularity, and local curvature needed for the asymptotic analysis.

\begin{assumption}[Population regularity and curvature]
\label{ass:app_sdot_population}
There exists $q_0>2$ such that
\[
P\|S\|^{q_0}<\infty.
\]
For
\[
\mathbf u_{ij}
:=
\frac{d_j-d_i}{\|d_j-d_i\|},
\qquad i\ne j,
\]
there is a constant $C_{\mathrm{ac}}<\infty$ such that, for all
$t\in\mathbb R$ and $s>0$,
\begin{equation}
P\bigl(
|\langle\mathbf u_{ij},S\rangle-t|\le s
\bigr)
\le C_{\mathrm{ac}}s.
\label{eq:app_directional_anticoncentration}
\end{equation}
There exists $\mathbf g^\star\in\mathcal G_{\mathbf b}$ satisfying
$\mathbf p(\mathbf g^\star)=\mathbf b$.
The map $\mathbf p$ is $C^1$ near $\mathbf g^\star$, and, with
\[
J:=D\mathbf p(\mathbf g^\star),
\]
there is $\lambda_\star>0$ such that
\begin{equation}
\mathbf u^\top J\mathbf u
\ge
\lambda_\star\|\mathbf u\|^2,
\qquad
\mathbf u\in\mathcal G_{\mathbf b}.
\label{eq:app_gauge_curvature}
\end{equation}
\end{assumption}

The curvature condition~\eqref{eq:app_gauge_curvature} is imposed
independently of the moment and anti-concentration conditions.

\subsection{Main Theorem}

The main theorem quantifies how the population marginal error scales with the batch size $m$ and number of independent repetitions $R$.
\begin{theorem}[Large-batch scaling of empirical potential averaging]
\label{thm:app_large_batch_scaling}
Suppose Assumption~\ref{ass:app_sdot_population} holds and
$\mathbf p$ is $C^2$ near $\mathbf g^\star$.
Keep the target sites, $n$, $d$, and
$\mathbf b\in\operatorname{int}\Delta_n$ fixed.
For each $m$, let $\mathbf G_m$ be any measurable selection
satisfying~\eqref{eq:app_empirical_semidual_maximizer}, and fix an
integer $R_{\max}\ge1$.

\emph{Basic regime ($q_0>2$).}
As $m\to\infty$,
\begin{equation}
\sup_{1\le R\le R_{\max}}
\left|
m\mathbb E\Phi(\overline{\mathbf G}_{m,R})
-\frac{n-1}{R}
\right|
\longrightarrow0,
\label{eq:app_fixed_R_large_m}
\end{equation}
and, for every fixed $1\le R\le R_{\max}$,
\begin{equation}
mR\,\Phi(\overline{\mathbf G}_{m,R})
\xrightarrow{\mathrm d}
\chi^2_{n-1}.
\label{eq:app_fixed_R_chi_square_limit}
\end{equation}

\emph{Enhanced moment regime ($q_0>4$).}
If the moment exponent can be chosen with $q_0>4$, then
\begin{equation}
\sup_{1\le R\le R_{\max}}
\left|
m^2\operatorname{Var}\!\left[
\Phi(\overline{\mathbf G}_{m,R})
\right]
-\frac{2(n-1)}{R^2}
\right|
\longrightarrow0.
\label{eq:app_fixed_R_large_m_variance}
\end{equation}
All three conclusions hold for every sequence of measurable empirical
maximizer selections.
\end{theorem}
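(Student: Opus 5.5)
The plan is the standard M-estimation route: a linearization (Bahadur-type) representation of the empirical maximizer, followed by a delta-method expansion of $\Phi$ and a quadratic-form calculation. First, by concavity of $\Psi_m$ and $\Psi$ on $\mathcal G_{\mathbf b}$, the strict curvature~\eqref{eq:app_gauge_curvature}, and a uniform law of large numbers for $P_m\phi_{\mathbf g}$ (each $\phi_{\mathbf g}$ is dominated by $\|S\|^2$ plus a constant, and is Lipschitz in $\mathbf g$), we obtain $\mathbf G_m\to\mathbf g^\star$ in probability. The first-order condition for~\eqref{eq:app_empirical_semidual_maximizer} says that the empirical subgradient contains zero, i.e.\ $\mathbf b-P_m\mathbf a(\cdot;\mathbf G_m)$ is small. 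Up to tie-breaking, this residual is of size $O(n/m)$: with probability tending to one, at most finitely many samples lie on Laguerre boundaries, by~\eqref{eq:app_directional_anticoncentration}. Next we write
\[
\mathbf p(\mathbf G_m)-\mathbf b=-(P_m-P)\mathbf a(\cdot;\mathbf G_m)+O_P(m^{-1}),
\]
and replace $\mathbf a(\cdot;\mathbf G_m)$ by $\mathbf a(\cdot;\mathbf g^\star)$. The stochastic-equicontinuity error is controlled because the cell-indicator class is VC (each cell is an intersection of $n-1$ half-spaces), and the $L^2(P)$ distance $\|\mathbf a(\cdot;\mathbf g)-\mathbf a(\cdot;\mathbf g^\star)\|$ is $O(\|\mathbf g-\mathbf g^\star\|^{1/2})$ by anti-concentration. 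This yields
\[
\mathbf p(\mathbf G_m)-\mathbf b=-\mathbf Z_m+o_P(m^{-1/2}),\qquad \mathbf Z_m:=(P_m-P)\mathbf a(\cdot;\mathbf g^\star),
\]
together with $\mathbf G_m-\mathbf g^\star=-J^{-1}\mathbf Z_m+o_P(m^{-1/2})$, where $J^{-1}$ is taken on the gauge subspace. The $C^1$ property of $\mathbf p$ is what lets us pass between the two forms.

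For the averaged estimator, linearity gives $\overline{\mathbf G}_{m,R}-\mathbf g^\star=-J^{-1}\overline{\mathbf Z}+o_P(m^{-1/2})$, where $\overline{\mathbf Z}$ is the average of $R$ independent copies, i.e.\ a centered categorical mean over $mR$ samples with cell probabilities $\mathbf b$. Applying $\mathbf p$ with a first-order Taylor step (the $C^2$ assumption bounds the remainder by $O(\|\cdot\|^2)$) gives
\[
\mathbf p(\overline{\mathbf G}_{m,R})-\mathbf b=-\overline{\mathbf Z}+o_P(m^{-1/2}).
\]
Hence
\[
mR\,\Phi(\overline{\mathbf G}_{m,R})=mR\,\overline{\mathbf Z}^\top\mathbf B^{-1}\overline{\mathbf Z}+o_P(1).
\]
The leading term is Pearson's statistic. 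Its covariance is $\mathbf B-\mathbf b\mathbf b^\top$, so $\mathbf B^{-1/2}(\mathbf B-\mathbf b\mathbf b^\top)\mathbf B^{-1/2}$ is the projection orthogonal to $\sqrt{\mathbf b}$, of rank $n-1$. The CLT therefore gives~\eqref{eq:app_fixed_R_chi_square_limit}. The exact moments of the Pearson statistic are $\mathbb E=(n-1)/(mR)$ and variance $2(n-1)/(mR)^2+O(m^{-3})$.

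To upgrade from convergence in distribution to convergence of moments in~\eqref{eq:app_fixed_R_large_m} and~\eqref{eq:app_fixed_R_large_m_variance}, we need uniform integrability of $m\Phi$ (respectively $m^2\Phi^2$). Since $\Phi$ is globally bounded, it suffices to show the linearization has an $L^1$ (respectively $L^2$) remainder of order $o(m^{-1})$ on a good event, together with polynomial tail control on the bad event. On the bad event $\{\|\mathbf G_m-\mathbf g^\star\|>\delta\}$, we would show exponential or polynomial decay using the moment condition $q_0>2$ (respectively $q_0>4$ for squares) and Bernstein/VC concentration of $\sup_{\mathbf g}\|(P_m-P)\mathbf a(\cdot;\mathbf g)\|$. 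Uniformity over $R\le R_{\max}$ is automatic because the range is finite.

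The main obstacle is this moment-level remainder control. Specifically, we need $\mathbb E\|\mathbf p(\overline{\mathbf G}_{m,R})-\mathbf b+\overline{\mathbf Z}\|^2=o(m^{-1})$, which requires a quantitative Bahadur bound for the nonsmooth indicator class in expectation rather than in probability. We would first try a localized maximal inequality, in which the $L^2$ bracket size $\asymp r^{1/2}$ at radius $r$ gives a $m^{-3/4}$-type rate. This would be combined with an $L^{q_0/2}$ bound on $\|\mathbf G_m-\mathbf g^\star\|$ obtained from the curvature inequality and the empirical-process bound on the subgradient.
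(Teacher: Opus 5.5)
Your outline has the same skeleton as the paper's proof. The paper also writes $\mathbf p(\mathbf G_m)-\mathbf b$ in two ways to get $J\mathbf h_m+\boldsymbol\zeta_m=\boldsymbol\rho_m-\boldsymbol\Delta_{\mathrm{pop},m}-\boldsymbol\Delta_{\mathrm{emp},m}$, bounds the tie residual almost surely by $2\binom{n}{2}/m$, and averages the resulting identity for $J\mathbf h_m$ across batches. It controls the moving-cell term with a localized VC second-moment bound whose envelope $\mathbf 1_{B_r}$ has $P(B_r)\le Cr$; this is your $r^{1/2}$ bracket. The paper fixes $r$, gets $\limsup_m m\,\mathbb E\|\boldsymbol\Delta_{\mathrm{emp},m}\|^2\le Cr$, and lets $r\downarrow 0$, which is simpler than tuning $r\asymp m^{-1/2}$. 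It then gets the chi-square law by CLT and Slutsky, and the moments by uniform integrability.

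The genuine difference is the step you leave open: localization and root-$m$ moments of $\mathbf G_m$. The paper takes an M-estimation route. A quadratic margin for the semidual $\Psi$ on $\mathcal G_{\mathbf b}$, a moment bound for the Lipschitz class $\phi_{\mathbf g^\star+\mathbf h}-\phi_{\mathbf g^\star}$, and dyadic peeling give local root-$m$ tails of every order. Exponential localization comes from a deterministic growth lemma plus Hoeffding over a net. Because its remainder $\boldsymbol\Delta_{\mathrm{pop},m}$ contains the unbounded $J\mathbf h_m$, the paper also needs moments of $\|\mathbf h_m\|$ off the good event; that is where $q_0>2$ enters, via $\max_k\|S_k\|\in L^{q_0}$ and H\"older. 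Your Z-estimation route instead uses $\|\mathbf p(\mathbf G_m)-\mathbf b\|\le\|\boldsymbol\rho_m\|+\sup_{\mathbf g}\|(P_m-P)\mathbf a(\cdot;\mathbf g)\|$ together with strong monotonicity of $\mathbf p$. This reuses the one VC class of cell indicators and gives root-$m$ moments of all orders on the good event without peeling. Since $\Phi$ and $\mathbf p(\overline{\mathbf G}_{m,R})-\mathbf b+\overline{\mathbf Z}$ are bounded, only the probability of the bad event matters, so the growth lemma and the $q_0$-based H\"older step become unnecessary. The paper uses this boundedness trick only in its variance argument.

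To turn your sketch into a proof you must supply three points.
\begin{enumerate}
\item The tie bound must hold almost surely and simultaneously for all maximizers, not just with probability tending to one. For each $i<j$ the values $c(S_k,d_i)-c(S_k,d_j)$ are a.s.\ distinct in $k$, so for every $\mathbf g$ at most $\binom{n}{2}$ rows are tied. You need this because the theorem is stated for every measurable selection and you want moments.
\item Localization needs a global identifiability bound, not only local curvature. From $\nabla\Psi=\mathbf b-\mathbf p$, concavity (which gives $\langle\mathbf b-\mathbf p(\mathbf g),\mathbf g^\star-\mathbf g\rangle\ge\Psi(\mathbf g^\star)-\Psi(\mathbf g)$), and the local quadratic margin extended linearly along rays, one gets $\|\mathbf p(\mathbf g)-\mathbf b\|\ge\frac{\lambda_\star}{4}\min\{\|\mathbf g-\mathbf g^\star\|,\rho\}$ on $\mathcal G_{\mathbf b}$ for some $\rho>0$. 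Combined with bounded-differences concentration of the indicator supremum, this makes the bad event exponentially small without any moment condition.
\item Fix the exponent bookkeeping. Uniform integrability of $m\Phi$ (respectively $m^2\Phi^2$) requires $\sup_m\mathbb E[(\sqrt m\|\overline{\mathbf h}_{m,R}\|)^p\mathbf 1_{\mathrm{good}}]<\infty$ for some $p>2$ (respectively $p>4$), which your monotonicity route delivers. An ``$L^{q_0/2}$ bound on $\|\mathbf G_m-\mathbf g^\star\|$'' is not the relevant quantity, and with $q_0$ just above $2$ it would not suffice.
\end{enumerate}
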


The theorem covers every fixed finite range $1\le R\le R_{\max}$, with constants depending on the fixed target instance and $R_{\max}$.
The following remark records the different limiting behavior obtained by increasing $R$ at a fixed batch size $m$.

\begin{remark}[Fixed-batch limit]
For fixed $m$, the strong law of large numbers gives
\[
\overline{\mathbf G}_{m,R}
\xrightarrow{\mathrm{a.s.}}
\mathbf G_m^\circ
:=
\mathbb E\mathbf G_m
\qquad
\text{as }R\to\infty.
\]
The limiting mean $\mathbf G_m^\circ$ can differ from the population potential and retain a positive population marginal error.
This fixed-batch regime therefore exhibits an approximation floor determined by $m$.
\end{remark}

\subsection{Key Asymptotic Linearization}

Write
\[
\mathbf h_m:=\mathbf G_m-\mathbf g^\star,
\qquad
\boldsymbol\zeta_m
:=
(P_m-P)\mathbf a(\cdot;\mathbf g^\star).
\]
Define the population and moving-cell remainders by
\[
\begin{aligned}
\boldsymbol\Delta_{\mathrm{pop},m}
&:=
\mathbf p(\mathbf G_m)-\mathbf p(\mathbf g^\star)-J\mathbf h_m,\\
\boldsymbol\Delta_{\mathrm{emp},m}
&:=
(P_m-P)
\{
\mathbf a(\cdot;\mathbf G_m)
-\mathbf a(\cdot;\mathbf g^\star)
\},
\end{aligned}
\]
and define the empirical tie residual by
\begin{equation}
\boldsymbol\rho_m
:=
P_m\mathbf a(\cdot;\mathbf G_m)-\mathbf b.
\label{eq:app_empirical_tie_residual}
\end{equation}

The root-$m$ estimate proved in
Proposition~\ref{prop:app_empirical_potential_ui} implies
\begin{equation}
\sup_m m\mathbb E\|\mathbf h_m\|^2<\infty
\label{eq:app_ui_second_moment}
\end{equation}
and, for every fixed $r>0$,
\begin{equation}
m\mathbb E[
\|\mathbf h_m\|^2
\mathbf1_{\{\|\mathbf h_m\|>r\}}
]
\longrightarrow0.
\label{eq:app_ui_fixed_tail}
\end{equation}

The key step is to reduce the nonlinear marginal error of the averaged potential to an average of independent categorical fluctuations.

\begin{proposition}[Master asymptotic linearization]
\label{prop:app_master_linearization}
Under the assumptions of
Theorem~\ref{thm:app_large_batch_scaling}, for every fixed
$R_{\max}\ge1$ there are random vectors
$\boldsymbol\eta_{m,R}$ such that, for
$1\le R\le R_{\max}$,
\begin{equation}
\mathbf p(\overline{\mathbf G}_{m,R})-\mathbf b
=
-\overline{\boldsymbol\zeta}_{m,R}
+\boldsymbol\eta_{m,R},
\qquad
\overline{\boldsymbol\zeta}_{m,R}
:=
\frac1R\sum_{r=1}^R\boldsymbol\zeta_m^{(r)},
\label{eq:app_master_linearization}
\end{equation}
and
\begin{equation}
\sup_{1\le R\le R_{\max}}
m\mathbb E
\|
\mathbf B^{-1/2}\boldsymbol\eta_{m,R}
\|^2
\longrightarrow0.
\label{eq:app_master_remainder}
\end{equation}
Moreover,
\begin{equation}
\mathbb E
\|
\mathbf B^{-1/2}\overline{\boldsymbol\zeta}_{m,R}
\|^2
=
\frac{n-1}{mR}.
\label{eq:app_averaged_categorical_trace}
\end{equation}
\end{proposition}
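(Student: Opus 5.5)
The plan is to derive an exact algebraic identity for one batch, average it over the $R$ repetitions, and then show that every term except $-\overline{\boldsymbol\zeta}_{m,R}$ is $o(m^{-1/2})$ in $L^2$.

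\emph{Single-batch identity.} Write
\[
P_m\mathbf a(\cdot;\mathbf G_m)=P\mathbf a(\cdot;\mathbf G_m)+(P_m-P)\mathbf a(\cdot;\mathbf G_m).
\]
The first term is $\mathbf p(\mathbf G_m)$. The second splits as $\boldsymbol\zeta_m+\boldsymbol\Delta_{\mathrm{emp},m}$. Using $\mathbf p(\mathbf g^\star)=\mathbf b$ and the definition of $\boldsymbol\Delta_{\mathrm{pop},m}$, the tie residual \eqref{eq:app_empirical_tie_residual} then gives
\[
J\mathbf h_m=\boldsymbol\rho_m-\boldsymbol\zeta_m-\boldsymbol\Delta_{\mathrm{emp},m}-\boldsymbol\Delta_{\mathrm{pop},m}.
\]

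\emph{Averaging.} Let $\overline{\mathbf h}:=\overline{\mathbf G}_{m,R}-\mathbf g^\star$. Then
\[
\mathbf p(\overline{\mathbf G}_{m,R})-\mathbf b=J\overline{\mathbf h}+\bigl[\mathbf p(\overline{\mathbf G}_{m,R})-\mathbf b-J\overline{\mathbf h}\bigr].
\]
Averaging the single-batch identity over the copies gives $J\overline{\mathbf h}$ explicitly. This yields \eqref{eq:app_master_linearization} with
\[
\boldsymbol\eta_{m,R}
:=\frac1R\sum_{r=1}^R\bigl(\boldsymbol\rho_m^{(r)}-\boldsymbol\Delta_{\mathrm{emp},m}^{(r)}-\boldsymbol\Delta_{\mathrm{pop},m}^{(r)}\bigr)
+\bigl[\mathbf p(\overline{\mathbf G}_{m,R})-\mathbf b-J\overline{\mathbf h}\bigr].
\]
Since $R\le R_{\max}$ and $\|\mathbf B^{-1/2}\cdot\|\le b_{\min}^{-1/2}\|\cdot\|$, the convexity bound $\|\frac1R\sum_r v_r\|^2\le\frac1R\sum_r\|v_r\|^2$ reduces \eqref{eq:app_master_remainder} to four single-batch facts:
\[
m\mathbb E\|\boldsymbol\rho_m\|^2\to0,\quad
m\mathbb E\|\boldsymbol\Delta_{\mathrm{pop},m}\|^2\to0,\quad
m\mathbb E\|\boldsymbol\Delta_{\mathrm{emp},m}\|^2\to0,
\]
together with the analogous statement for the population remainder at $\overline{\mathbf G}_{m,R}$.

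\emph{Population remainders.} Because $\mathbf p$ is $C^2$ near $\mathbf g^\star$, on $\{\|\mathbf h\|\le r\}$ we have $\|\boldsymbol\Delta_{\mathrm{pop}}\|\le C\|\mathbf h\|^2\le Cr\|\mathbf h\|$. Off that event, $\mathbf p$ is bounded, so $\|\boldsymbol\Delta_{\mathrm{pop}}\|\le C(1+\|\mathbf h\|)\le C'\|\mathbf h\|$ for $\|\mathbf h\|>r$. Hence
\[
m\mathbb E\|\boldsymbol\Delta_{\mathrm{pop}}\|^2\le C^2r^2\,m\mathbb E\|\mathbf h\|^2+C'^2\,m\mathbb E\bigl[\|\mathbf h\|^2\mathbf 1_{\{\|\mathbf h\|>r\}}\bigr].
\]
Applying \eqref{eq:app_ui_second_moment} and \eqref{eq:app_ui_fixed_tail}, then letting $r\downarrow0$, gives the bound. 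The same argument handles $\overline{\mathbf h}$, since $m\mathbb E\|\overline{\mathbf h}\|^2\le m\mathbb E\|\mathbf h_m\|^2$. Its tail is controlled by $\{\|\overline{\mathbf h}\|>r\}\subseteq\bigcup_r\{\|\mathbf h^{(r)}\|>r\}$ together with a union bound over at most $R_{\max}$ copies.

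\emph{Tie residual.} For $\boldsymbol\rho_m$, I would use the empirical semi-dual optimality condition: $\mathbf b$ lies in the superdifferential of $P_m\phi_{\mathbf g}$ at $\mathbf G_m$, which is the convex hull of tie-breaking assignments. Hence $\|\boldsymbol\rho_m\|\le 2N_m/m$, where $N_m$ is the number of samples lying on a Laguerre boundary of $\mathbf G_m$. A fixed boundary is a hyperplane with direction $\mathbf u_{ij}$, and \eqref{eq:app_directional_anticoncentration} makes each such hyperplane null. A general-position argument then shows that, almost surely, at most $n-1$ sample--boundary incidences can hold simultaneously, since $n-1$ free gauge parameters can satisfy at most that many independent equalities. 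This bounds $N_m$ by a constant, so $m\mathbb E\|\boldsymbol\rho_m\|^2=O(m^{-1})$.

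\emph{Moving-cell empirical term (main obstacle).} The hardest step is $\boldsymbol\Delta_{\mathrm{emp},m}$. I would bound it by a localized empirical-process argument. The class of differences of Laguerre-cell indicators is VC, since cells are intersections of $n-1$ half-spaces. Its $L^2(P)$ envelope over $\{\|\mathbf g-\mathbf g^\star\|\le\delta\}$ is $\lesssim C_{\mathrm{ac}}\delta$, by anti-concentration of the moving boundaries. A local maximal inequality, for instance Talagrand or a bracketing bound with VC entropy, then gives
\[
m\mathbb E\sup_{\|\mathbf h\|\le\delta}\|(P_m-P)(\mathbf a(\cdot;\mathbf g^\star+\mathbf h)-\mathbf a(\cdot;\mathbf g^\star))\|^2\lesssim\delta\log(1/\delta)+m^{-1}.
\]
On $\{\|\mathbf h_m\|>\delta\}$, the term is bounded by $2$ and the probability is $o(m^{-1})$ by \eqref{eq:app_ui_fixed_tail}, after first using \eqref{eq:app_ui_second_moment} to reduce to vanishing tail probability. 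The care needed is that $\mathbf G_m$ depends on the same sample, which is why I take the supremum over the ball rather than evaluating at a fixed potential.

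\emph{Trace identity.} Finally, \eqref{eq:app_averaged_categorical_trace} is exact. The vector $\mathbf a(S;\mathbf g^\star)$ is categorical with mean $\mathbf p(\mathbf g^\star)=\mathbf b$, because ties are null. Its covariance is $\mathbf B-\mathbf b\mathbf b^\top$. Hence
\[
\mathbb E\|\mathbf B^{-1/2}\boldsymbol\zeta_m\|^2=\frac1m\operatorname{tr}\bigl(\mathbf I-\mathbf B^{-1/2}\mathbf b\mathbf b^\top\mathbf B^{-1/2}\bigr)=\frac{n-1}{m},
\]
using $\mathbf b^\top\mathbf B^{-1}\mathbf b=1$. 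Independence across the $R$ copies then divides this by $R$.
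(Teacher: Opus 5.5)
Your proposal is correct and follows the paper's proof essentially step for step. It uses the same single-batch balance identity $J\mathbf h_m+\boldsymbol\zeta_m=\boldsymbol\rho_m-\boldsymbol\Delta_{\mathrm{pop},m}-\boldsymbol\Delta_{\mathrm{emp},m}$, the same remainder $\boldsymbol\eta_{m,R}=\overline{\mathbf e}_{m,R}+\overline{\boldsymbol\Delta}_{\mathrm{pop},m,R}$, the same Taylor partition and localized VC bound for the remainders, and the same categorical covariance computation for the trace identity. The only deviations are minor and valid: you bound the number of tied samples by $n-1$ via a cycle (general-position) argument, where the paper uses pairwise distinctness of the $Z_k^{ij}$ to get at most $\binom{n}{2}$ tied rows; and you control the tail of $\overline{\mathbf h}_{m,R}$ by a union bound over copies using only the second-moment and fixed-tail facts, where the paper invokes the $p_0>2$ moment bound and uniform integrability.
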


\begin{proof}
\medskip\noindent\emph{Single-batch balance decomposition.}
The definitions give
\[
\begin{aligned}
\mathbf p(\mathbf G_m)-\mathbf b
&=
J\mathbf h_m+\boldsymbol\Delta_{\mathrm{pop},m},\\
\mathbf p(\mathbf G_m)-\mathbf b
&=
\boldsymbol\rho_m
-(P_m-P)\mathbf a(\cdot;\mathbf G_m)\\
&=
\boldsymbol\rho_m
-\boldsymbol\zeta_m
-\boldsymbol\Delta_{\mathrm{emp},m}.
\end{aligned}
\]
Hence
\begin{equation}
J\mathbf h_m+\boldsymbol\zeta_m
=
\boldsymbol\rho_m
-\boldsymbol\Delta_{\mathrm{pop},m}
-\boldsymbol\Delta_{\mathrm{emp},m}.
\label{eq:app_balance_decomposition}
\end{equation}
Set
\[
\mathbf e_m
:=
J\mathbf h_m+\boldsymbol\zeta_m.
\]
Lemmas~\ref{lem:app_empirical_tie_residual},
\ref{lem:app_population_remainder}, and
\ref{lem:app_moving_cell_remainder} imply
\begin{equation}
\mathbb E
\|\mathbf B^{-1/2}\mathbf e_m\|^2
=o(m^{-1}).
\label{eq:app_linearization_error}
\end{equation}

\medskip\noindent\emph{Averaged potential linearization.}
Let
\[
\overline{\mathbf h}_{m,R}
:=
\frac1R\sum_{r=1}^R\mathbf h_m^{(r)},
\qquad
\overline{\mathbf e}_{m,R}
:=
\frac1R\sum_{r=1}^R\mathbf e_m^{(r)}.
\]
Averaging~\eqref{eq:app_balance_decomposition} gives
\begin{equation}
J\overline{\mathbf h}_{m,R}
=
-\overline{\boldsymbol\zeta}_{m,R}
+\overline{\mathbf e}_{m,R}.
\label{eq:app_averaged_linearization}
\end{equation}
By Jensen's inequality and~\eqref{eq:app_linearization_error},
\begin{equation}
\mathbb E
\|\mathbf B^{-1/2}\overline{\mathbf e}_{m,R}\|^2
=o(m^{-1}).
\label{eq:app_averaged_linearization_error}
\end{equation}

Define
\[
\overline{\boldsymbol\Delta}_{\mathrm{pop},m,R}
:=
\mathbf p(\mathbf g^\star+\overline{\mathbf h}_{m,R})
-\mathbf p(\mathbf g^\star)
-J\overline{\mathbf h}_{m,R}.
\]
Choose any $p_0\in(2,q_0)$.
Proposition~\ref{prop:app_empirical_potential_ui} and convexity give
\[
\sup_m
\mathbb E
\|\sqrt m\,\overline{\mathbf h}_{m,R}\|^{p_0}
<\infty.
\]
Thus
$\{m\|\overline{\mathbf h}_{m,R}\|^2:m\ge1\}$ is uniformly
integrable.
Consequently, for every fixed $r>0$,
\begin{equation}
m\mathbb E\left[
\|\overline{\mathbf h}_{m,R}\|^2
\mathbf1_{\{\|\overline{\mathbf h}_{m,R}\|>r\}}
\right]
\longrightarrow0.
\label{eq:app_averaged_ui_fixed_tail}
\end{equation}
The local $C^2$ expansion and
\eqref{eq:app_averaged_ui_fixed_tail} reproduce the Taylor partition
in the proof of Lemma~\ref{lem:app_population_remainder}, with
$\mathbf h_m$ replaced by $\overline{\mathbf h}_{m,R}$.
They yield
\begin{equation}
\mathbb E
\|
\mathbf B^{-1/2}
\overline{\boldsymbol\Delta}_{\mathrm{pop},m,R}
\|^2
=o(m^{-1}).
\label{eq:app_averaged_population_remainder}
\end{equation}
Set
\[
\boldsymbol\eta_{m,R}
:=
\overline{\mathbf e}_{m,R}
+\overline{\boldsymbol\Delta}_{\mathrm{pop},m,R}.
\]
Equations~\eqref{eq:app_averaged_linearization} and
\eqref{eq:app_averaged_population_remainder} give
\eqref{eq:app_master_linearization}--\eqref{eq:app_master_remainder}.
The maximum over $1\le R\le R_{\max}$ has the same limit because this
set is finite.

Finally, the population boundaries are $P$-null, so
$\mathbf a(S;\mathbf g^\star)$ is a categorical one-hot vector with
probabilities $\mathbf b$.
Therefore
\[
\mathbb E\boldsymbol\zeta_m=0,
\qquad
\operatorname{Cov}(\boldsymbol\zeta_m)
=
\frac1m(\mathbf B-\mathbf b\mathbf b^\top).
\]
Independence across repetitions gives
\eqref{eq:app_averaged_categorical_trace}.
\end{proof}

\subsection{Proof of the Main Theorem}

\begin{proof}[Proof of Theorem~\ref{thm:app_large_batch_scaling}]
\medskip\noindent\emph{Expectation.}
Fix $1\le R\le R_{\max}$.
By Proposition~\ref{prop:app_master_linearization},
\[
\Phi(\overline{\mathbf G}_{m,R})
=
\|
-\mathbf B^{-1/2}\overline{\boldsymbol\zeta}_{m,R}
+\mathbf B^{-1/2}\boldsymbol\eta_{m,R}
\|^2.
\]
Cauchy--Schwarz and
\eqref{eq:app_averaged_categorical_trace}--\eqref{eq:app_master_remainder}
give
\[
\begin{aligned}
\left|
\mathbb E\Phi(\overline{\mathbf G}_{m,R})
-\frac{n-1}{mR}
\right|
&\le
2
\sqrt{
\frac{n-1}{mR}
\mathbb E
\|\mathbf B^{-1/2}\boldsymbol\eta_{m,R}\|^2
}\\
&\quad+
\mathbb E
\|\mathbf B^{-1/2}\boldsymbol\eta_{m,R}\|^2
=o(m^{-1}).
\end{aligned}
\]
The remainder is uniform over the fixed finite set
$1\le R\le R_{\max}$.
This proves~\eqref{eq:app_fixed_R_large_m}.

\medskip\noindent\emph{Chi-square limit.}
Write $S_1^{(r)},\ldots,S_m^{(r)}$ for the iid source sample in
repetition $r$, and define
\[
\boldsymbol\xi_k^{(r)}
:=
\mathbf B^{-1/2}
\bigl(
\mathbf a(S_k^{(r)};\mathbf g^\star)-\mathbf b
\bigr).
\]
These vectors are iid, centered, and have covariance
\begin{equation}
\mathbb E[
\boldsymbol\xi_k^{(r)}
(\boldsymbol\xi_k^{(r)})^\top]
=
\mathbf I-
\sqrt{\mathbf b}\sqrt{\mathbf b}^{\,\top}
=:
\mathbf P_{\mathbf b},
\label{eq:app_categorical_projection}
\end{equation}
where
$\sqrt{\mathbf b}:=(\sqrt{b_1},\ldots,\sqrt{b_n})^\top$.
The matrix $\mathbf P_{\mathbf b}$ is the orthogonal projection onto
$\sqrt{\mathbf b}^{\,\perp}$ and has rank $n-1$.
Equation~\eqref{eq:app_master_linearization} becomes
\begin{align}
&\sqrt{mR}\,\mathbf B^{-1/2}
\bigl(
\mathbf p(\overline{\mathbf G}_{m,R})-\mathbf b
\bigr)
\notag\\
&\qquad=
-\frac1{\sqrt{mR}}
\sum_{r=1}^R\sum_{k=1}^m
\boldsymbol\xi_k^{(r)}
+
\sqrt{mR}\,\mathbf B^{-1/2}
\boldsymbol\eta_{m,R}.
\label{eq:app_averaged_clt_decomposition}
\end{align}
For fixed $R$, the last term converges to zero in $L^2$.
The multivariate central limit theorem and Slutsky's theorem imply
\[
\sqrt{mR}\,\mathbf B^{-1/2}
\bigl(
\mathbf p(\overline{\mathbf G}_{m,R})-\mathbf b
\bigr)
\xrightarrow{\mathrm d}
\mathcal N(\mathbf0,\mathbf P_{\mathbf b}).
\]
Taking squared Euclidean norms proves
\eqref{eq:app_fixed_R_chi_square_limit}.

\medskip\noindent\emph{Variance.}
Suppose $q_0>4$ and choose $p_0\in(4,q_0)$.
Set
\[
Y_{m,R}:=mR\,\Phi(\overline{\mathbf G}_{m,R}),
\qquad
\delta:=\frac{p_0}{2}-2>0.
\]
Choose $r_0\in(0,\rho)$ such that $\mathbf p$ is $C^1$ on the
$r_0$-ball about $\mathbf g^\star$, where $\rho$ is from
Lemma~\ref{lem:app_population_margin}, and let
\[
A_{m,R}
:=
\bigcap_{r=1}^R
\{\|\mathbf h_m^{(r)}\|\le r_0\}.
\]
On $A_{m,R}$, convexity of the ball and
$\mathbf p(\mathbf g^\star)=\mathbf b$ give
\[
\Phi(\overline{\mathbf G}_{m,R})
\le C\|\overline{\mathbf h}_{m,R}\|^2.
\]
Proposition~\ref{prop:app_empirical_potential_ui} and convexity yield
\[
\sup_m
\mathbb E
\|\sqrt m\,\overline{\mathbf h}_{m,R}\|^{p_0}
<\infty.
\]
Since $4+2\delta=p_0$,
\begin{equation}
\sup_m
\mathbb E[
Y_{m,R}^{2+\delta}\mathbf1_{A_{m,R}}]
<\infty.
\label{eq:app_chi_square_local_high_moment}
\end{equation}

The function $\Phi$ is globally bounded.
For every fixed $q\ge1$, the union bound,
Lemma~\ref{lem:app_local_root_m_tail}, and
Lemma~\ref{lem:app_exponential_localization} give
\[
\mathbb P(A_{m,R}^c)
\le
C_{q,R}m^{-q/2}+C_Re^{-cm}.
\]
Choose $q>4+2\delta$.
Then
\begin{equation}
\mathbb E[
Y_{m,R}^{2+\delta}\mathbf1_{A_{m,R}^c}]
\le
C_Rm^{2+\delta}
\bigl(m^{-q/2}+e^{-cm}\bigr)
=o(1).
\label{eq:app_chi_square_tail_high_moment}
\end{equation}
Equations~\eqref{eq:app_chi_square_local_high_moment} and
\eqref{eq:app_chi_square_tail_high_moment} show that
$\{Y_{m,R}^2:m\ge1\}$ is uniformly integrable.
The chi-square limit therefore gives
\[
\mathbb E Y_{m,R}^2
\longrightarrow
\mathbb E Z^2
=(n-1)^2+2(n-1),
\qquad
Z\sim\chi^2_{n-1}.
\]
The expectation result gives
$\mathbb E Y_{m,R}\to n-1$, and hence
\[
\operatorname{Var}(Y_{m,R})
\longrightarrow2(n-1).
\]
Since
\[
\operatorname{Var}(Y_{m,R})
=
m^2R^2
\operatorname{Var}[
\Phi(\overline{\mathbf G}_{m,R})],
\]
the variance limit holds for each fixed $R$.
Taking the maximum over the finite set
$1\le R\le R_{\max}$ proves
\eqref{eq:app_fixed_R_large_m_variance}.
\end{proof}

\subsection{Proofs of the Remainder Bounds}

The first bound controls the imbalance introduced by empirical ties.

\begin{lemma}[Empirical tie residual]
\label{lem:app_empirical_tie_residual}
Under the assumptions of
Theorem~\ref{thm:app_large_batch_scaling}, for every $m$,
\begin{equation}
\|\boldsymbol\rho_m\|_1
\le
\frac{2\binom{n}{2}}{m}
\qquad\text{almost surely}.
\label{eq:app_empirical_tie_residual_bound}
\end{equation}
Consequently,
\begin{equation}
\mathbb E
\|\mathbf B^{-1/2}\boldsymbol\rho_m\|^2
=O(m^{-2}).
\label{eq:app_empirical_tie_residual_l2}
\end{equation}
The bound holds simultaneously for all empirical dual maximizers of a
realized sample.
\end{lemma}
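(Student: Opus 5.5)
The approach uses the first-order optimality conditions of the empirical semi-dual $\Psi_m$ on the gauge subspace $\mathcal G_{\mathbf b}$. The function $\mathbf g\mapsto\Psi_m(\mathbf g)$ is concave and piecewise linear. Its superdifferential at $\mathbf G_m$ is $\mathbf b-\operatorname{conv}\{P_m\mathbf a'\}$, where $\mathbf a'$ ranges over measurable selections of minimizer indicators. In other words, it consists of $\mathbf b$ minus the convex hull of the empirical cell-mass vectors obtainable by reassigning tied sample points among their tied minimizers. Since $\mathbf G_m$ maximizes $\Psi_m$ over $\mathcal G_{\mathbf b}$ and $\Psi_m(\mathbf g+t\mathbf 1)=\Psi_m(\mathbf g)$ (because $\mathbf b^\top\mathbf 1=1$ and $P_m$ is a probability measure), $\mathbf G_m$ is in fact a global maximizer over $\mathbb R^n$. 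Hence $0\in\partial\Psi_m(\mathbf G_m)$. So there is a fractional tie-breaking $\mathbf w_k\in\Delta_n$ for each sample $S_k$, supported on its argmin set $T_k$, such that $\frac1m\sum_k\mathbf w_k=\mathbf b$. This is exactly the empirical discrete OT marginal condition.

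Next, compare with the smallest-index rule. We have $\boldsymbol\rho_m=\frac1m\sum_k(\mathbf e_{\min T_k}-\mathbf w_k)$, and only samples with $|T_k|\ge2$ contribute. Each such term has $\ell_1$ norm at most $2$. So $\|\boldsymbol\rho_m\|_1\le 2N_{\mathrm{tie}}/m$, where $N_{\mathrm{tie}}$ is the number of samples lying on some pairwise Laguerre boundary of $\mathbf G_m$.

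The key step is to show $N_{\mathrm{tie}}\le\binom n2$, or to refine the tie-breaking so that only that many samples are split. I expect this to be the main obstacle. The plan is to choose $\mathbf w_k$ as a vertex (basic) solution of the transportation-type polytope
\[
\Bigl\{(\mathbf w_k)_k:\ \mathbf w_k\in\Delta_n,\ \operatorname{supp}\mathbf w_k\subseteq T_k,\ \tfrac1m\textstyle\sum_k\mathbf w_k=\mathbf b\Bigr\},
\]
which is nonempty. This is the bipartite flow polytope of an OT problem between $m$ samples and $n$ sites restricted to tight edges. At a vertex the support forest has at most $m+n-1$ edges, so at most $n-1$ samples are fractionally split. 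That gives an even better count. For the remaining integrally assigned but tied samples, I would argue that reassigning such a sample to its smallest index changes $\boldsymbol\rho_m$. To bound these, I would exploit the freedom to choose the vertex, or more robustly count the tied samples directly. Assume generic samples, which hold almost surely by the anti-concentration condition~\eqref{eq:app_directional_anticoncentration}, since each hyperplane has $P$-measure zero. Then the tie equations $c(S_k,d_i)-g_i=c(S_k,d_j)-g_j$ are linear in $S_k$ for the squared Euclidean cost. A single equation for pair $(i,j)$ involves the unknown $g_i-g_j$, and among all pair hyperplanes the potential $\mathbf G_m$ has only $n-1$ degrees of freedom. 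A dimension-counting argument then shows that almost surely each pair boundary contains at most one sample: two samples on the same $(i,j)$ boundary would force $\langle d_j-d_i,S_k-S_{k'}\rangle=0$, which is a null event. Hence $N_{\mathrm{tie}}\le\binom n2$ almost surely, uniformly over all maximizers, since the null event does not depend on $\mathbf g$. This yields~\eqref{eq:app_empirical_tie_residual_bound}.

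Finally, \eqref{eq:app_empirical_tie_residual_l2} follows immediately from $\|\mathbf B^{-1/2}\boldsymbol\rho_m\|_2^2\le b_{\min}^{-1}\|\boldsymbol\rho_m\|_1^2\le 4\binom n2^2/(b_{\min}m^2)$.
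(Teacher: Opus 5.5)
Your proposal is correct and follows essentially the paper's route. The paper obtains the fractional assignment $\mathbf q_k=m\boldsymbol\pi_{k\cdot}$ from a primal optimal coupling via complementary slackness, whereas you obtain it from the superdifferential condition $0\in\partial\Psi_m(\mathbf G_m)$; from there both arguments use the same per-row bound $\ell_1\le 2$ on tied rows and the same almost-sure pairwise distinctness of $Z_k^{ij}=c(S_k,d_i)-c(S_k,d_j)$ across samples, which ensures each target pair is tied on at most one sample. Your vertex-solution detour and the ``$n-1$ degrees of freedom'' remark are unnecessary: the injective map from tied samples to target pairs already gives $N_{\mathrm{tie}}\le\binom n2$, simultaneously for every maximizer.
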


\begin{proof}
For $i<j$, set
\[
Z_k^{ij}
:=
c(S_k,d_i)-c(S_k,d_j)
=
2\langle S_k,d_j-d_i\rangle
+\|d_i\|^2-\|d_j\|^2.
\]
The anti-concentration
condition~\eqref{eq:app_directional_anticoncentration} implies that
$Z_k^{ij}$ has no atoms.
Thus, almost surely, the values
$Z_1^{ij},\ldots,Z_m^{ij}$ are pairwise distinct for every target pair
$i<j$.

Fix a sample with this property and any empirical dual maximizer
$\mathbf g$.
Let
\[
A_k(\mathbf g)
:=
\operatorname*{arg\,min}_{j\in[n]}
\{c(S_k,d_j)-g_j\}.
\]
If $i,j\in A_k(\mathbf g)$, then
$Z_k^{ij}=g_i-g_j$.
For each pair $i<j$, this can occur for at most one sample index $k$.
Hence the number of rows with $|A_k(\mathbf g)|\ge2$ is at most
$\binom{n}{2}$, simultaneously for every $\mathbf g$.

Let $\boldsymbol\pi$ be any primal optimum and set
$\mathbf q_k:=m\boldsymbol\pi_{k\cdot}$.
By Lemma~\ref{lem:app_empirical_growth},
$\mathbf q_k$ is a probability vector supported on $A_k(\mathbf g)$,
and
\[
\frac1m\sum_{k=1}^m\mathbf q_k=\mathbf b.
\]
On a row with a unique active target,
$\mathbf a(S_k;\mathbf g)=\mathbf q_k$.
On a tied row, both vectors are probability vectors, so
$\|\mathbf a(S_k;\mathbf g)-\mathbf q_k\|_1\le2$.
Therefore
\[
\left\|
P_m\mathbf a(\cdot;\mathbf g)-\mathbf b
\right\|_1
\le
\frac{2\binom{n}{2}}{m}.
\]
The argument is pathwise and applies to every empirical maximizer.
Finally,
\[
\|\mathbf B^{-1/2}\boldsymbol\rho_m\|^2
\le
b_{\min}^{-1}\|\boldsymbol\rho_m\|_1^2,
\]
which proves~\eqref{eq:app_empirical_tie_residual_l2}.
\end{proof}

The next lemma shows that the nonlinear population map contributes only a lower-order Taylor remainder.

\begin{lemma}[Population linearization remainder]
\label{lem:app_population_remainder}
Under the assumptions of
Theorem~\ref{thm:app_large_batch_scaling},
\begin{equation}
\mathbb E
\|\mathbf B^{-1/2}\boldsymbol\Delta_{\mathrm{pop},m}\|^2
=o(m^{-1}).
\label{eq:app_population_remainder}
\end{equation}
\end{lemma}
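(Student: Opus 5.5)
We have to prove $\mathbb E\|\mathbf B^{-1/2}\boldsymbol\Delta_{\mathrm{pop},m}\|^2=o(m^{-1})$, where $\boldsymbol\Delta_{\mathrm{pop},m}=\mathbf p(\mathbf g^\star+\mathbf h_m)-\mathbf p(\mathbf g^\star)-J\mathbf h_m$. The plan is a Taylor partition on the event $\{\|\mathbf h_m\|\le r\}$ versus its complement. We rely on the root-$m$ consequences \eqref{eq:app_ui_second_moment} and \eqref{eq:app_ui_fixed_tail}, and on the boundedness of $\mathbf p$, which holds because $\mathbf p(\mathbf g)\in\Delta_n$ for every $\mathbf g$.

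\emph{Choice of radius.} Fix $r_0>0$ such that $\mathbf p$ is $C^2$ on the closed ball $\bar B(\mathbf g^\star,r_0)$. Set
\[
K:=\sup_{\bar B(\mathbf g^\star,r_0)}\|D^2\mathbf p\|<\infty .
\]

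\emph{Local part.} On $\{\|\mathbf h_m\|\le r_0\}$, Taylor's theorem with integral remainder gives $\|\boldsymbol\Delta_{\mathrm{pop},m}\|\le \tfrac K2\|\mathbf h_m\|^2$. Fix any $r\in(0,r_0]$ and split this event further.
\begin{itemize}
\item On $\{\|\mathbf h_m\|\le r\}$ we have $\|\boldsymbol\Delta_{\mathrm{pop},m}\|^2\le \tfrac{K^2}{4}r^2\|\mathbf h_m\|^2$. Hence
\[
m\,\mathbb E\bigl[\|\mathbf B^{-1/2}\boldsymbol\Delta_{\mathrm{pop},m}\|^2\mathbf 1_{\{\|\mathbf h_m\|\le r\}}\bigr]\le b_{\min}^{-1}\tfrac{K^2}{4}r^2\sup_m m\,\mathbb E\|\mathbf h_m\|^2 ,
\]
and the right-hand side is finite by \eqref{eq:app_ui_second_moment}.
\item On $\{r<\|\mathbf h_m\|\le r_0\}$ we use the same quadratic bound, now $\|\boldsymbol\Delta_{\mathrm{pop},m}\|^2\le \tfrac{K^2}{4}r_0^2\|\mathbf h_m\|^2$. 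By \eqref{eq:app_ui_fixed_tail}, $m\,\mathbb E[\|\mathbf h_m\|^2\mathbf 1_{\{\|\mathbf h_m\|>r\}}]\to0$, so this piece is $o(m^{-1})$.
\end{itemize}

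\emph{Far part.} On $\{\|\mathbf h_m\|>r_0\}$ the Taylor bound is unavailable, so we bound crudely:
\[
\|\boldsymbol\Delta_{\mathrm{pop},m}\|\le \|\mathbf p(\mathbf G_m)-\mathbf b\|+\|J\|\|\mathbf h_m\|\le 2+\|J\|\|\mathbf h_m\|,
\]
using $\mathbf p(\mathbf G_m),\mathbf b\in\Delta_n$. Since $\|\mathbf h_m\|>r_0$ on this event, $2\le (2/r_0)\|\mathbf h_m\|$, so $\|\boldsymbol\Delta_{\mathrm{pop},m}\|^2\le C\|\mathbf h_m\|^2$ there. Applying \eqref{eq:app_ui_fixed_tail} with $r=r_0$ shows this contribution is also $o(m^{-1})$.

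\emph{Conclusion.} Combining the three pieces,
\[
\limsup_m m\,\mathbb E\|\mathbf B^{-1/2}\boldsymbol\Delta_{\mathrm{pop},m}\|^2\le C' r^2 ,
\]
with $C'$ independent of $r$. Letting $r\downarrow0$ gives the claim.

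\emph{Main obstacle.} The only delicate point is that the bound must be $o(m^{-1})$ rather than $O(m^{-1})$. That is exactly why we cannot just use the quadratic bound $\|\mathbf h_m\|^4$ without a fourth moment. The shrinking-radius argument avoids needing one: it uses only second-moment uniform integrability, supplied by Proposition~\ref{prop:app_empirical_potential_ui} through \eqref{eq:app_ui_second_moment}--\eqref{eq:app_ui_fixed_tail}. The averaged-potential version used in Proposition~\ref{prop:app_master_linearization} follows verbatim with $\mathbf h_m$ replaced by $\overline{\mathbf h}_{m,R}$.
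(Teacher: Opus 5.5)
Your proof is correct and follows essentially the same route as the paper. On $\{\|\mathbf h_m\|\le r\}$ you use the Taylor quadratic bound together with the uniform second moment, on the complement you use the simplex bound with fixed-radius tail uniform integrability, and then you let $r\downarrow0$. Your extra split at $r_0$ is cosmetic: the paper splits only at $r$ and absorbs the constant on $\{\|\mathbf h_m\|>r\}$ via $1\le r^{-1}\|\mathbf h_m\|$.
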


\begin{proof}
Choose $r_0>0$ inside the $C^2$ neighborhood of
$\mathbf g^\star$, and fix $0<r\le r_0$.
Taylor's theorem gives
\[
\|\boldsymbol\Delta_{\mathrm{pop},m}\|
\le C\|\mathbf h_m\|^2
\qquad
\text{on }\{\|\mathbf h_m\|\le r\}.
\]
Consequently,~\eqref{eq:app_ui_second_moment} implies
\[
m\mathbb E[
\|\boldsymbol\Delta_{\mathrm{pop},m}\|^2
\mathbf1_{\{\|\mathbf h_m\|\le r\}}
]
\le
Cr^2.
\]
Globally,
\[
\|\boldsymbol\Delta_{\mathrm{pop},m}\|
\le C(1+\|\mathbf h_m\|),
\]
because both mass vectors lie in $\Delta_n$.
On $\{\|\mathbf h_m\|>r\}$, the constant is bounded by
$r^{-1}\|\mathbf h_m\|$.
Thus~\eqref{eq:app_ui_fixed_tail} gives
\[
m\mathbb E[
\|\boldsymbol\Delta_{\mathrm{pop},m}\|^2
\mathbf1_{\{\|\mathbf h_m\|>r\}}
]
\longrightarrow0.
\]
Taking the upper limit and then letting $r\downarrow0$ proves the
unweighted claim.
The fixed factor $\mathbf B^{-1/2}$ only changes the constant.
\end{proof}

The final bound controls the empirical-process remainder caused by evaluating cell assignments at an empirical maximizer rather than at the population potential.
\begin{lemma}[Moving-cell empirical remainder]
\label{lem:app_moving_cell_remainder}
Under the assumptions of
Theorem~\ref{thm:app_large_batch_scaling},
\begin{equation}
\mathbb E
\|\mathbf B^{-1/2}\boldsymbol\Delta_{\mathrm{emp},m}\|^2
=o(m^{-1}).
\label{eq:app_moving_cell_remainder}
\end{equation}
\end{lemma}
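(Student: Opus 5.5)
The target is
\[
\mathbb E\|\mathbf B^{-1/2}\boldsymbol\Delta_{\mathrm{emp},m}\|^2=o(m^{-1}).
\]
We split on the event $\{\|\mathbf h_m\|\le r\}$ for a small fixed $r>0$ and then let $r\downarrow0$. This mirrors the proof of Lemma~\ref{lem:app_population_remainder}.

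\emph{Tail event.} Each coordinate of $\mathbf a(\cdot;\mathbf g)$ lies in $[0,1]$, so $\|\boldsymbol\Delta_{\mathrm{emp},m}\|\le C$ deterministically. The tail contribution is therefore at most $C\,\mathbb P(\|\mathbf h_m\|>r)$. We bound this probability by $C_q m^{-q/2}+Ce^{-cm}$ using Lemmas~\ref{lem:app_local_root_m_tail} and~\ref{lem:app_exponential_localization}, which are cited in the variance proof. Taking $q>2$, the tail is $o(m^{-1})$. If we prefer to avoid the high-order tail, a Markov bound $\mathbb P(\|\mathbf h_m\|>r)\le r^{-2}\mathbb E[\|\mathbf h_m\|^2\mathbf 1_{\{\|\mathbf h_m\|>r\}}]$ together with~\eqref{eq:app_ui_fixed_tail} also gives $o(m^{-1})$.

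\emph{Local event.} On $\{\|\mathbf h_m\|\le r\}$ we have
\[
\|\boldsymbol\Delta_{\mathrm{emp},m}\|\le \sup_{\|\mathbf g-\mathbf g^\star\|\le r}\bigl\|(P_m-P)\{\mathbf a(\cdot;\mathbf g)-\mathbf a(\cdot;\mathbf g^\star)\}\bigr\|=:Z_m(r).
\]
Write $\mathcal F_r$ for the class of coordinate differences $\mathbf 1\{s\in L_j(\mathbf g)\}-\mathbf 1\{s\in L_j(\mathbf g^\star)\}$ with $\|\mathbf g-\mathbf g^\star\|\le r$. Each Laguerre cell for the $\ell_2^2$ cost is an intersection of $n-1$ half-spaces
\[
\{\langle \mathbf u_{ij},s\rangle \lessgtr t_{ij}(\mathbf g)\},
\]
with the smallest-index tie rule, and the thresholds $t_{ij}$ depend Lipschitzly on $\mathbf g$. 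Hence $\mathcal F_r$ is a VC-type class of bounded functions with a constant envelope.

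The symmetric difference of the two cells is contained in a union of slabs $\{|\langle\mathbf u_{ij},s\rangle-t_{ij}(\mathbf g^\star)|\le Cr\}$. By the anti-concentration condition~\eqref{eq:app_directional_anticoncentration}, its $P$-measure is at most $C' r$. This gives $\sup_{f\in\mathcal F_r}Pf^2\le C'r$.

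A maximal inequality for VC classes in $L^2$ then yields
\[
m\,\mathbb E Z_m(r)^2\le C\bigl(r\log(1/r)+m^{-1}\log^2(1/r)\bigr).
\]
Candidates for this inequality are Talagrand/Bousquet, or the bracketing/uniform-entropy bound $\mathbb E\sup|\mathbb G_m f|\lesssim \sigma\sqrt{\log(1/\sigma)}+\log(1/\sigma)/\sqrt m$ with $\sigma^2=C'r$, squared via a Bousquet-type second-moment version. Consequently $\limsup_m m\,\mathbb E[\|\boldsymbol\Delta_{\mathrm{emp},m}\|^2\mathbf 1_{\{\|\mathbf h_m\|\le r\}}]\le Cr\log(1/r)$. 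Letting $r\downarrow0$ and absorbing the factor $\mathbf B^{-1/2}$ into the constant proves the claim.

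\emph{Main obstacle.} The hard step is obtaining a second-moment (not just first-moment) bound for the localized supremum $Z_m(r)$. This requires a VC/uniform-entropy bound for the class of differences of polyhedral cell indicators, including the tie-breaking convention, and a clean translation of $\|\mathbf g-\mathbf g^\star\|\le r$ into slab widths. I would first handle the VC bound by noting that each indicator is a Boolean combination of at most $n(n-1)$ half-space indicators, which preserves finite VC dimension. I would then apply the Giné--Koltchinskii moment inequality for $\mathbb E\|\mathbb G_m\|_{\mathcal F}^2$ directly.
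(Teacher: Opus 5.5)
Your proposal is correct and follows the paper's proof essentially step for step. You localize on $\{\|\mathbf h_m\|\le r\}$, bound the tail by the boundedness of $\boldsymbol\Delta_{\mathrm{emp},m}$ together with Markov's inequality and \eqref{eq:app_ui_fixed_tail}, control the moving-cell class by a VC argument plus an $O(r)$ anti-concentration slab bound, apply a Gin\'e--Koltchinskii second-moment inequality, and let $r\downarrow0$. The one difference is the envelope: the paper uses the localized envelope $\mathbf 1_{B_r}$, the indicator of the slab region where some assignment can change, and verifies the covering bound under $Q(\cdot\mid B_r)$. This gives $\mathbb E Z_m(r)^2\le C(r/m+1/m^2)$ without the $\log(1/r)$ factor your constant envelope incurs, but that factor is harmless because $r\log(1/r)\to0$.
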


\begin{proof}
For $r>0$, let
\[
B_r
:=
\left\{
s:\exists\mathbf g,\
\|\mathbf g-\mathbf g^\star\|_\infty\le r,\
\mathbf a(s;\mathbf g)\ne\mathbf a(s;\mathbf g^\star)
\right\}.
\]
If an assignment changes, one of the finitely many pairwise affine
comparisons changes sign.
For the squared-Euclidean cost, that comparison has the form
\[
2\langle s,d_j-d_i\rangle+\kappa_{ij},
\]
and its value at $\mathbf g^\star$ lies within $2r$ of zero.
The anti-concentration condition and the positive minimum site
separation give
\begin{equation}
P(B_r)\le Cr.
\label{eq:app_boundary_slab}
\end{equation}

For coordinate $i$, set
\[
\begin{aligned}
f_{i,\mathbf g}
&:=a_i(\cdot;\mathbf g)-a_i(\cdot;\mathbf g^\star),\\
\mathcal F_{i,r}
&:=\{f_{i,\mathbf g}:
\|\mathbf g-\mathbf g^\star\|_\infty\le r\}.
\end{aligned}
\]
A Laguerre cell is an intersection of at most $n-1$ halfspaces whose
normals are fixed and whose thresholds vary with $\mathbf g$.
The strict and weak boundary conventions are determined by the fixed
tie rule.
Each halfspace indicator belongs to a one-parameter threshold class
with a fixed normal.
Finite intersections of these threshold classes, the Boolean
operations induced by the tie rule, and subtraction of the fixed
indicator $a_i(\cdot;\mathbf g^\star)$ preserve a finite
VC-subgraph index.
Thus $\mathcal F_{i,r}$ has VC-subgraph index depending only on fixed
$n$.
Pointwise measurability follows by realizing the lexicographic rule as
the pointwise limit of vanishing deterministic index perturbations and
then using a countable dense grid for the perturbed thresholds.

Every $f\in\mathcal F_{i,r}$ vanishes on $B_r^c$.
For a probability measure $Q$ with $Q(B_r)>0$, let
$Q_r:=Q(\cdot\mid B_r)$.
Then, for $f,g\in\mathcal F_{i,r}$,
\[
\begin{aligned}
\|f-g\|_{Q,2}
&=
Q(B_r)^{1/2}\|f-g\|_{Q_r,2},\\
\|\mathbf1_{B_r}\|_{Q,2}
&=
Q(B_r)^{1/2}.
\end{aligned}
\]
Applying the uniform VC covering theorem
\cite[Theorem~2.6.7]{vanderVaart1996weak} under $Q_r$ gives constants
$A,v$ depending only on $n$ such that
\[
N\!\left(
\varepsilon\|\mathbf1_{B_r}\|_{Q,2},
\mathcal F_{i,r},
L^2(Q)
\right)
\le
\left(\frac A\varepsilon\right)^v,
\qquad
0<\varepsilon\le1.
\]
When $Q(B_r)=0$, the covering number equals one.
Hence this is the normalized covering bound required by
Lemma~\ref{lem:app_local_vc_second_moment}, with local envelope
$\mathbf1_{B_r}$.
The envelope $\mathbf1_{B_r}$ has squared $L^2(P)$ norm at most $Cr$
by~\eqref{eq:app_boundary_slab}.
Consequently,
\[
\mathbb E
\sup_{\|\mathbf g-\mathbf g^\star\|_\infty\le r}
|(P_m-P)f_{i,\mathbf g}|^2
\le
C\left(\frac r m+\frac1{m^2}\right).
\]
Summing the fixed number of coordinates gives
\begin{equation}
\mathbb E Z_m(r)^2
\le
C\left(\frac r m+\frac1{m^2}\right),
\label{eq:app_moving_cell_local_bound}
\end{equation}
where
\[
\begin{aligned}
\mathbf f_{\mathbf g}
&:=\mathbf a(\cdot;\mathbf g)-\mathbf a(\cdot;\mathbf g^\star),\\
Z_m(r)
&:=
\sup_{\|\mathbf g-\mathbf g^\star\|_\infty\le r}
\|
\mathbf B^{-1/2}(P_m-P)\mathbf f_{\mathbf g}
\|.
\end{aligned}
\]

On $\{\|\mathbf h_m\|_\infty\le r\}$, the empirical remainder is
bounded by $Z_m(r)$.
Its squared weighted norm is globally bounded by $4n/b_{\min}$.
Therefore~\eqref{eq:app_moving_cell_local_bound} and Markov's
inequality give
\[
\begin{aligned}
&\limsup_{m\to\infty}
m\mathbb E
\|\mathbf B^{-1/2}\boldsymbol\Delta_{\mathrm{emp},m}\|^2\\
&\qquad\le
Cr
+
\frac{4n}{b_{\min}}
\limsup_{m\to\infty}
m\mathbb P(\|\mathbf h_m\|>r)\\
&\qquad\le Cr,
\end{aligned}
\]
where the last term vanishes by~\eqref{eq:app_ui_fixed_tail}.
Letting $r\downarrow0$ proves the claim.
\end{proof}

\subsection{Technical Probabilistic Tools}

The remainder bounds rely on localization and uniform moment control for the empirical potentials, which we establish next.

\subsubsection{Quadratic margin and localization}

The first lemma converts local curvature into a quadratic growth bound and uniqueness of the population potential.

\begin{lemma}[Population margin and uniqueness]
\label{lem:app_population_margin}
Under Assumption~\ref{ass:app_sdot_population}, there exists $\rho>0$
such that
\begin{equation}
\Psi(\mathbf g^\star)-\Psi(\mathbf g^\star+\mathbf h)
\ge
\frac{\lambda_\star}{4}\|\mathbf h\|^2,
\qquad
\mathbf h\in\mathcal G_{\mathbf b},\quad
\|\mathbf h\|\le\rho.
\label{eq:app_population_margin}
\end{equation}
Moreover, $\mathbf g^\star$ is the unique maximizer of $\Psi$ over
$\mathcal G_{\mathbf b}$.
\end{lemma}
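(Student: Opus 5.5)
The plan has three steps: establish concavity and the gradient formula for \(\Psi\), turn curvature into a local quadratic margin via a Taylor expansion along segments, and get uniqueness from concavity.

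\textbf{Concavity and gradient.} The map \(\Psi(\mathbf g)=\mathbf b^\top\mathbf g+P\phi_{\mathbf g}\) is concave, since \(\phi_{\mathbf g}(s)\) is a pointwise minimum of affine functions of \(\mathbf g\). The anti-concentration condition~\eqref{eq:app_directional_anticoncentration} makes every pairwise Laguerre boundary \(P\)-null. Hence, for \(P\)-a.e.\ \(s\), \(\phi_{\mathbf g}(s)\) is differentiable in \(\mathbf g\) with gradient \(-\mathbf a(s;\mathbf g)\). The moment condition \(q_0>2\) gives integrability of \(\phi_{\mathbf g}\), and its difference quotients are bounded by 1, so dominated convergence yields
\[
\nabla\Psi(\mathbf g)=\mathbf b-\mathbf p(\mathbf g).
\]
In particular \(\nabla\Psi(\mathbf g^\star)=0\). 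Where \(\mathbf p\) is \(C^1\), this also gives \(\nabla^2\Psi=-D\mathbf p\), so \(\nabla^2\Psi(\mathbf g^\star)=-J\).

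\textbf{Quadratic margin.} Fix \(\mathbf h\in\mathcal G_{\mathbf b}\) and set \(u(t):=\Psi(\mathbf g^\star+t\mathbf h)\). Then \(u'(t)=\mathbf h^\top(\mathbf b-\mathbf p(\mathbf g^\star+t\mathbf h))\), and \(u'(0)=0\). Since \(\mathbf p\) is \(C^1\) near \(\mathbf g^\star\), choose \(\rho>0\) so that \(\|D\mathbf p(\mathbf g)-J\|\le\lambda_\star/2\) whenever \(\|\mathbf g-\mathbf g^\star\|\le\rho\). For \(\|\mathbf h\|\le\rho\), the mean value theorem along the segment gives
\[
\mathbf h^\top\bigl(\mathbf p(\mathbf g^\star+t\mathbf h)-\mathbf b\bigr)
= t\int_0^1 \mathbf h^\top D\mathbf p(\mathbf g^\star+\tau t\mathbf h)\,\mathbf h\,\mathrm d\tau
\ge t\,\tfrac{\lambda_\star}{2}\|\mathbf h\|^2 .
\]
Here we use \eqref{eq:app_gauge_curvature} for \(\mathbf h\in\mathcal G_{\mathbf b}\) together with the perturbation bound. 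Only the quadratic form restricted to \(\mathcal G_{\mathbf b}\) enters, so the non-symmetry of \(J\) is harmless. Hence \(u'(t)\le -t\lambda_\star\|\mathbf h\|^2/2\), and integrating over \(t\in[0,1]\) gives \(\Psi(\mathbf g^\star)-\Psi(\mathbf g^\star+\mathbf h)\ge\lambda_\star\|\mathbf h\|^2/4\). This is exactly \eqref{eq:app_population_margin}.

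\textbf{Uniqueness.} Let \(\mathbf g\in\mathcal G_{\mathbf b}\) with \(\mathbf g\neq\mathbf g^\star\), and set \(\mathbf h=\mathbf g-\mathbf g^\star\). Take \(t=\min\{1,\rho/\|\mathbf h\|\}\). Concavity of \(u\) along the ray gives
\[
\Psi(\mathbf g^\star)-\Psi(\mathbf g^\star+t\mathbf h)\le t\bigl(\Psi(\mathbf g^\star)-\Psi(\mathbf g)\bigr).
\]
The left side is at least \(\lambda_\star t^2\|\mathbf h\|^2/4>0\) by the margin. Therefore \(\Psi(\mathbf g)<\Psi(\mathbf g^\star)\), and \(\mathbf g^\star\) is the unique maximizer over the gauge subspace.

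\textbf{Main obstacle.} The delicate step is justifying the gradient formula, specifically the interchange of differentiation and the \(P\)-integral and the a.e.\ uniqueness of the argmin. Both rest on the null-boundary consequence of anti-concentration, and I would verify this carefully first. Everything after it is a routine Taylor argument.
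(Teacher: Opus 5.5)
Your proposal is correct and follows the paper's proof essentially step for step. The null-boundary consequence of anti-concentration gives $\nabla\Psi=\mathbf b-\mathbf p$. Perturbing the gauge-restricted curvature bound to $\lambda_\star/2$ on a $\rho$-ball and integrating along the segment gives the $\lambda_\star/4$ margin, and concavity then yields uniqueness. Your uniqueness step, the concavity inequality along the ray with $t=\min\{1,\rho/\|\mathbf h\|\}$, is a slightly more direct form of the paper's ``segment of maximizers'' argument; it has the minor advantage of also showing that $\mathbf g^\star$ is a global maximizer, which the paper instead takes from $\nabla\Psi(\mathbf g^\star)=\mathbf 0$ and concavity.
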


\begin{proof}
The anti-concentration condition makes every pairwise boundary
$P$-null, hence
\[
\nabla\Psi(\mathbf g)=\mathbf b-\mathbf p(\mathbf g)
\]
near $\mathbf g^\star$.
Continuity of $D\mathbf p$ and~\eqref{eq:app_gauge_curvature} allow
$\rho$ to be chosen so that
\[
\mathbf u^\top
D\mathbf p(\mathbf g^\star+\mathbf h)\mathbf u
\ge
\frac{\lambda_\star}{2}\|\mathbf u\|^2
\]
for $\mathbf u,\mathbf h\in\mathcal G_{\mathbf b}$ with
$\|\mathbf h\|\le\rho$.
Taylor's formula with integral remainder and
$D^2\Psi=-D\mathbf p$ give~\eqref{eq:app_population_margin}.

The function $\Psi$ is concave.
If it had another gauge-fixed maximizer, every point on the segment
joining it to $\mathbf g^\star$ would also be a maximizer.
A sufficiently short nonzero part of that segment contradicts the
strict inequality in~\eqref{eq:app_population_margin}.
\end{proof}

The next lemma bounds empirical maximizers in terms of the observed sample radius.

\begin{lemma}[Optimal-coupling support and deterministic growth]
\label{lem:app_empirical_growth}
Fix deterministic points $s_1,\ldots,s_m$ and an empirical maximizer
$\mathbf g\in\mathcal G_{\mathbf b}$.
Let $\boldsymbol\pi$ be any optimal coupling with row masses $1/m$ and
column masses $\mathbf b$, and define
\[
I_j(\boldsymbol\pi)
:=
\{k:\pi_{kj}>0\}.
\]
Then
\[
\begin{aligned}
\pi_{kj}>0
&\Longrightarrow
j\in
\operatorname*{arg\,min}_{\ell\in[n]}
\{c(s_k,d_\ell)-g_\ell\},\\
|I_j(\boldsymbol\pi)|&\ge mb_j.
\end{aligned}
\]
There is a deterministic constant $C_0$ such that
\begin{equation}
\|\mathbf g-\mathbf g^\star\|
\le
C_0\left(1+\max_{k\in[m]}\|s_k\|\right).
\label{eq:app_empirical_growth}
\end{equation}
If every $I_j(\boldsymbol\pi)$ contains a point of norm at most $R$,
then $\|\mathbf g\|\le B_R$ for a deterministic $B_R$.
\end{lemma}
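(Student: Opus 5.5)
The plan is to translate the empirical semi-dual maximization into the discrete OT linear program and then read all four claims off complementary slackness. Let $\mathbf g\in\mathcal G_{\mathbf b}$ maximize $\Psi_m$, and set $f_k:=\min_{\ell}\{c(s_k,d_\ell)-g_\ell\}$. Then $(\mathbf f,\mathbf g)$ is feasible for the dual of the discrete OT problem with row masses $m^{-1}\mathbf 1_m$ and column masses $\mathbf b$, and its dual objective equals $\Psi_m(\mathbf g)$. Any dual-feasible pair $(\mathbf f',\mathbf g')$ has dual objective at most $\Psi_m(\mathbf g')$, because $f'_k\le\min_\ell\{c(s_k,d_\ell)-g'_\ell\}$. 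Moreover, $\Psi_m$ is invariant under adding constants to $\mathbf g$, so maximizing over the gauge subspace $\mathcal G_{\mathbf b}$ loses nothing. Hence $(\mathbf f,\mathbf g)$ is dual optimal.

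By LP strong duality, every primal optimum $\boldsymbol\pi$ satisfies complementary slackness with every dual optimum. This step, pairing an arbitrary coupling with an arbitrary maximizer, is the one I expect to need the most care. It gives $\pi_{kj}>0\Rightarrow f_k+g_j=c(s_k,d_j)$, which is exactly $j\in\arg\min_\ell\{c(s_k,d_\ell)-g_\ell\}$. Next, $\pi_{kj}\le\sum_\ell\pi_{k\ell}=1/m$. Therefore
\[
b_j=\sum_{k\in I_j(\boldsymbol\pi)}\pi_{kj}\le\frac{|I_j(\boldsymbol\pi)|}{m},
\]
so $|I_j(\boldsymbol\pi)|\ge mb_j>0$. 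In particular, every $I_j(\boldsymbol\pi)$ is nonempty.

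For the growth bound, fix $j$ and choose any $k\in I_j(\boldsymbol\pi)$. The argmin property gives, for every $\ell$,
\[
g_\ell-g_j\le c(s_k,d_\ell)-c(s_k,d_j)=2\langle s_k,d_j-d_\ell\rangle+\|d_\ell\|^2-\|d_j\|^2\le C_1\bigl(1+\|s_k\|\bigr),
\]
where $C_1$ depends only on the fixed sites. Applying this for every $j$ bounds all differences two-sidedly:
\[
|g_\ell-g_j|\le C_1\Bigl(1+\max_{k\in[m]}\|s_k\|\Bigr).
\]
The gauge $\mathbf b^\top\mathbf g=0$ gives $g_\ell=\sum_j b_j(g_\ell-g_j)$, so $|g_\ell|$ obeys the same bound. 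Hence $\|\mathbf g\|\le\sqrt n\,C_1(1+\max_k\|s_k\|)$.

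Adding the fixed quantity $\|\mathbf g^\star\|$ and absorbing it into the constant yields \eqref{eq:app_empirical_growth} with a deterministic $C_0$. For the last claim, I would instead choose $k\in I_j(\boldsymbol\pi)$ with $\|s_k\|\le R$ for each $j$. The same chain then gives $|g_\ell-g_j|\le C_1(1+R)$, so $\|\mathbf g\|\le B_R:=\sqrt n\,C_1(1+R)$.
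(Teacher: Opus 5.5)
Your proposal is correct and follows the paper's proof essentially step for step: you complete $\mathbf g$ by the $c$-transform to a full dual optimum, apply complementary slackness against an arbitrary primal optimum, use $\pi_{kj}\le 1/m$ to get $|I_j(\boldsymbol\pi)|\ge mb_j$, bound potential differences two-sidedly through points of $I_j(\boldsymbol\pi)$, and finish with the $\mathbf b$-gauge identity (or with bounded support points for $B_R$). Your explicit argument that $(\mathbf f,\mathbf g)$ is dual optimal, including the invariance of $\Psi_m$ under constant shifts, fills in a step the paper only asserts.
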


\begin{proof}
Set $f_k:=\phi_{\mathbf g}(s_k)$.
Then $(\mathbf f,\mathbf g)$ is a full-dual optimum.
Primal--dual equality gives
\[
0
=
\sum_{k,j}
\pi_{kj}
\{c(s_k,d_j)-f_k-g_j\}.
\]
Every summand is nonnegative, proving the support assertion.
Because $\pi_{kj}\le1/m$,
\[
b_j
=
\sum_{k\in I_j(\boldsymbol\pi)}\pi_{kj}
\le
\frac{|I_j(\boldsymbol\pi)|}{m}.
\]

For $i\ne j$, choose
$k_i\in I_i(\boldsymbol\pi)$ and
$k_j\in I_j(\boldsymbol\pi)$.
Complementary slackness yields
\[
c(s_{k_i},d_i)-c(s_{k_i},d_j)
\le
g_i-g_j
\le
c(s_{k_j},d_i)-c(s_{k_j},d_j).
\]
For the present cost,
\[
|c(s,d_i)-c(s,d_j)|
\le
2\|s\|\|d_i-d_j\|
+
\bigl|\|d_i\|^2-\|d_j\|^2\bigr|.
\]
The target sites are fixed, so every pairwise potential difference is
bounded by a constant times $1+\max_k\|s_k\|$.
The $\mathbf b$-gauge gives
\[
g_i=\sum_{j=1}^n b_j(g_i-g_j).
\]
This proves~\eqref{eq:app_empirical_growth}.
Using bounded support points in the same argument proves the last
assertion.
\end{proof}

The preceding proof applies simultaneously to all empirical dual
maximizers: any primal optimum is complementary to every dual optimum.
In particular, the argument allows arbitrary positive target weights.

The preceding bounds yield exponential localization of every empirical maximizer around the population potential.

\begin{lemma}[Exponential localization]
\label{lem:app_exponential_localization}
There are constants $C,c>0$ such that
\begin{equation}
\mathbb P(
\|\mathbf G_m-\mathbf g^\star\|>\rho
)
\le
Ce^{-cm},
\qquad m\ge1,
\label{eq:app_exponential_localization}
\end{equation}
where $\rho$ is from Lemma~\ref{lem:app_population_margin}.
\end{lemma}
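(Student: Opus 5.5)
The plan is a deterministic reduction to a uniform deviation bound on a sphere, using the concavity of $\Psi_m$ and the quadratic margin of Lemma~\ref{lem:app_population_margin}. Then a finite-net union bound with Hoeffding's inequality gives exponential concentration. No moment conditions on $S$ are needed, because all functions involved turn out to be uniformly bounded.

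\emph{Step 1 (reduction to the sphere).} Suppose $\|\mathbf G_m-\mathbf g^\star\|>\rho$ and set $\mathbf u:=(\mathbf G_m-\mathbf g^\star)/\|\mathbf G_m-\mathbf g^\star\|$. Then $\mathbf u\in\mathcal G_{\mathbf b}$, since both endpoints lie in the gauge subspace. The map $\mathbf g\mapsto\phi_{\mathbf g}(s)$ is a minimum of affine functions, so $\Psi_m$ is concave. The point $\mathbf g^\star+\rho\mathbf u$ lies on the segment $[\mathbf g^\star,\mathbf G_m]$, so concavity gives
\[
\Psi_m(\mathbf g^\star+\rho\mathbf u)\ge\min\{\Psi_m(\mathbf g^\star),\Psi_m(\mathbf G_m)\}=\Psi_m(\mathbf g^\star),
\]
where the equality uses that $\mathbf G_m$ maximizes $\Psi_m$ over $\mathcal G_{\mathbf b}$. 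Write $\mathbb S_\rho:=\{\mathbf h\in\mathcal G_{\mathbf b}:\|\mathbf h\|=\rho\}$. Lemma~\ref{lem:app_population_margin} gives $\Psi(\mathbf g^\star)-\Psi(\mathbf g^\star+\mathbf h)\ge\lambda_\star\rho^2/4$ for every $\mathbf h\in\mathbb S_\rho$. Define $D_{\mathbf h}:=\phi_{\mathbf g^\star+\mathbf h}-\phi_{\mathbf g^\star}$. The linear terms $\mathbf b^\top\mathbf g$ cancel between $\Psi$ and $\Psi_m$. Hence the event in~\eqref{eq:app_exponential_localization} is contained in
\[
\Bigl\{\sup_{\mathbf h\in\mathbb S_\rho}|(P_m-P)D_{\mathbf h}|\ge\lambda_\star\rho^2/4\Bigr\}.
\]

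\emph{Step 2 (boundedness and Lipschitz control).} For every $s$, $\mathbf g$ and $\mathbf g'$, a pointwise minimum is $1$-Lipschitz in sup norm:
\[
|\phi_{\mathbf g}(s)-\phi_{\mathbf g'}(s)|\le\|\mathbf g-\mathbf g'\|_\infty\le\|\mathbf g-\mathbf g'\|.
\]
Consequently each $D_{\mathbf h}$ with $\mathbf h\in\mathbb S_\rho$ takes values in $[-\rho,\rho]$, independently of $s$. Moreover, $\mathbf h\mapsto(P_m-P)D_{\mathbf h}$ is $2$-Lipschitz, deterministically and for every sample.

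\emph{Step 3 (net and union bound).} Let $\varepsilon:=\lambda_\star\rho^2/32$. Because $n$ is fixed and $\mathbb S_\rho$ is compact and finite-dimensional, we can pick a finite $\varepsilon$-net $\mathcal H$ of $\mathbb S_\rho$ with $|\mathcal H|\le(3\rho/\varepsilon)^{n}$. By the Lipschitz bound of Step 2, the supremum over $\mathbb S_\rho$ is at most the maximum over $\mathcal H$ plus $2\varepsilon$. So on the bad event some $\mathbf h\in\mathcal H$ satisfies $|(P_m-P)D_{\mathbf h}|\ge\lambda_\star\rho^2/4-2\varepsilon=3\lambda_\star\rho^2/16$. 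For each fixed $\mathbf h$, $(P_m-P)D_{\mathbf h}$ is a centered average of $m$ iid variables with range $2\rho$. Hoeffding's inequality therefore gives
\[
\mathbb P\bigl(|(P_m-P)D_{\mathbf h}|\ge t\bigr)\le2e^{-mt^2/(2\rho^2)}.
\]
A union bound over $\mathcal H$ yields~\eqref{eq:app_exponential_localization} with $C=2|\mathcal H|$ and $c=(3\lambda_\star/16)^2\rho^2/2$. Neither constant depends on $m$, and the bound applies to every measurable maximizer selection, since Step 1 used only the maximizing property of $\mathbf G_m$.

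The only delicate point is Step 1. The containment must come from concavity combined with the local margin of Lemma~\ref{lem:app_population_margin}, not from a global curvature statement, because the margin bound is available only within radius $\rho$. The segment argument handles this, so I expect no real obstacle.
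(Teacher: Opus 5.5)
Your proof is correct, and it follows a different route from the paper's.

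\textbf{The paper's route.} It first confines \emph{every} empirical maximizer to a fixed compact set $K$ with exponentially high probability. To do this it picks $R$ with $P(\|S\|>R)<b_{\min}/4$ and applies Hoeffding to the number of samples with $\|S_k\|>R$. It then invokes Lemma~\ref{lem:app_empirical_growth}: each cell $I_j$ of an optimal coupling contains at least $mb_{\min}$ points, hence a sample of norm at most $R$, and this bounds the potential. On $K$, continuity of $\Psi$ and uniqueness of $\mathbf g^\star$ give a separation $\delta=\Psi(\mathbf g^\star)-\sup_{K_\rho}\Psi>0$, which is not explicit. The paper then runs the same net-plus-Hoeffding uniform deviation bound over $K$.

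\textbf{Your route.} You replace this probabilistic localization with a deterministic one. Concavity of $\Psi_m$, a minimum of affine functions of $\mathbf g$, moves the comparison to the sphere $\mathbb S_\rho$. There the deviation functions $D_{\mathbf h}$ are bounded by $\rho$ for every sample. The quantitative margin of Lemma~\ref{lem:app_population_margin} then supplies the gap $\lambda_\star\rho^2/4$ directly, so you need no compactness or continuity argument.

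\textbf{What each buys.} Your version is shorter and gives explicit constants, $c=(3\lambda_\star/16)^2\rho^2/2$ and $C=2|\mathcal H|$. It never controls sample norms and does not use Lemma~\ref{lem:app_empirical_growth}. Your net count $(3\rho/\varepsilon)^n$ is also legitimate: the bound $|\Psi(\mathbf g^\star)-\Psi(\mathbf g^\star+\mathbf h)|\le\|\mathbf h\|_\infty$ forces $\lambda_\star\rho\le 4$, so $\varepsilon\le\rho/8$. The paper's version follows the generic argmax-consistency template of compactness, identifiability and a uniform law of large numbers. It does not need concavity of the empirical criterion $\Psi_m$. It also reuses Lemma~\ref{lem:app_empirical_growth}, which the paper needs anyway for the tail part of Proposition~\ref{prop:app_empirical_potential_ui}. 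Since concavity is available here, your argument is the cleaner one for this lemma.
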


\begin{proof}
Choose $R>0$ such that
\[
q_R:=P(\|S\|>R)<b_{\min}/4
\]
and let
\[
A_m
:=
\left\{
\sum_{k=1}^m
\mathbf1_{\{\|S_k\|>R\}}
\le
\frac{b_{\min}m}{2}
\right\}.
\]
Hoeffding's inequality~\cite{hoeffding1963probability} gives
\[
\mathbb P(A_m^c)
\le
2\exp(-b_{\min}^2m/8).
\]
For any optimal coupling,
$|I_j|\ge mb_j\ge mb_{\min}$.
Thus, on $A_m$, each $I_j$ contains a sample of norm at most $R$.
Lemma~\ref{lem:app_empirical_growth} places every empirical maximizer
in a fixed compact set
$K\subset\mathcal G_{\mathbf b}$ containing $\mathbf g^\star$.

If
\[
K_\rho
:=
\{\mathbf g\in K:
\|\mathbf g-\mathbf g^\star\|\ge\rho\}
\]
is nonempty, continuity and uniqueness give
\[
\delta
:=
\Psi(\mathbf g^\star)
-\sup_{\mathbf g\in K_\rho}\Psi(\mathbf g)
>0.
\]
For $\mathbf g\in K$, set
$f_{\mathbf g}:=\phi_{\mathbf g}-\phi_{\mathbf g^\star}$.
The elementary inequality
\[
|\phi_{\mathbf g}(s)-\phi_{\mathbf g'}(s)|
\le
\|\mathbf g-\mathbf g'\|_\infty
\le
\|\mathbf g-\mathbf g'\|
\]
implies that
$\{f_{\mathbf g}:\mathbf g\in K\}$ is uniformly bounded and
Lipschitz in its parameter.
A finite $\delta/8$-net of $K$, Hoeffding's inequality, and a union
bound give constants $C_1,c_1>0$ such that
\[
\mathbb P\left(
\sup_{\mathbf g\in K}
|(P_m-P)f_{\mathbf g}|>\delta/2
\right)
\le C_1e^{-c_1m}.
\]
On the complementary event, an empirical maximizer in $K_\rho$ would
satisfy
\[
\Psi_m(\mathbf G_m)-\Psi_m(\mathbf g^\star)
\le
-\delta+\delta/2<0,
\]
a contradiction.
Combining this estimate with the bound on $A_m^c$ proves the claim.
The case $K_\rho=\varnothing$ is immediate.
\end{proof}

\subsubsection{Empirical-process inequalities}

The next estimate is a polynomial-entropy specialization of the moment
inequality following Theorem~3.1 of
\cite{gine2006concentration}, whose moment step uses
\cite[Proposition~3.1]{gine2000exponential}.

\begin{lemma}[VC-type empirical-process moment bound]
\label{lem:app_vc_moment}
Let $\mathcal H$ be a pointwise measurable class of $P$-centered
functions with measurable envelope $H\le1$.
Suppose that $\|H\|_{P,2}\le\sigma\le1$ and, for every probability
measure $Q$ and $0<\varepsilon\le1$,
\[
N\!\left(
\varepsilon\|H\|_{Q,2},
\mathcal H,
L^2(Q)
\right)
\le
\left(\frac A\varepsilon\right)^v
\]
for fixed $A\ge e$ and $v\ge1$.
Then, for every fixed $q\ge1$,
\begin{equation}
\mathbb E
\sup_{h\in\mathcal H}
\left|
\sum_{k=1}^m h(S_k)
\right|^q
\le
C_{q,A,v}
\left\{
(\sqrt m\,\sigma)^q+1
\right\}.
\label{eq:app_vc_q_moment}
\end{equation}
\end{lemma}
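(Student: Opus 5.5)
The plan is the standard route for moment bounds on suprema of empirical processes: symmetrize, integrate the VC entropy bound to control the mean, then lift the mean bound to $q$-th moments with a Hoffmann-J{\o}rgensen/Talagrand-type moment inequality. Write $Z:=\sup_{h\in\mathcal H}|\sum_{k=1}^m h(S_k)|$; pointwise measurability makes $Z$ measurable.

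\emph{Step 1: the first moment.} Since the class is $P$-centered, symmetrization gives $\mathbb E Z\le 2\,\mathbb E\sup_h|\sum_k\epsilon_k h(S_k)|$, where the $\epsilon_k$ are Rademacher signs. Conditionally on the sample, the Rademacher process is sub-Gaussian with respect to $\sqrt m\,\|\cdot\|_{P_m,2}$. Dudley's entropy integral, combined with the assumed uniform covering bound applied at $Q=P_m$, then yields
\[
\mathbb E_\epsilon \sup_h\Big|\sum_k\epsilon_k h(S_k)\Big|
\le C\sqrt m\int_0^{\sigma_m}\sqrt{v\log(A\|H\|_{P_m,2}/\varepsilon)}\,d\varepsilon ,
\]
with $\sigma_m^2:=\sup_h P_m h^2$.

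To finish this step I would follow Gin\'e--Guillou. The resulting bound reads
\[
\mathbb E Z\le C\Big(\sqrt{v m}\,\sigma\sqrt{\log(A/\sigma)}+vA'\Big),
\]
obtained by bounding $\mathbb E\sigma_m^2\le\sigma^2+C\,\mathbb E Z/m$ (via contraction, using $|h|\le1$) and solving the resulting quadratic inequality in $\mathbb E Z$.

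\emph{Step 2: the main obstacle, a logarithmic factor.} The target bound \eqref{eq:app_vc_q_moment} carries no $\sqrt{\log(1/\sigma)}$ factor. This is the delicate point. I would use the fact that the covering numbers are normalized by the envelope norm $\|H\|_{Q,2}$, with $\|H\|_{P,2}\le\sigma$. Then the entropy integral scales as
\[
\|H\|_{P_m,2}\int_0^1\sqrt{v\log(A/\varepsilon)}\,d\varepsilon\le C_{A,v}\|H\|_{P_m,2},
\]
with no $\log(1/\sigma)$ appearing. Taking expectations and applying Jensen gives $\mathbb E\|H\|_{P_m,2}\le\|H\|_{P,2}\le\sigma$. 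So, bounding the ratio $\sigma_m/\|H\|_{P_m,2}\le1$, the integral stays at $O(\sqrt m\,\sigma)$ and produces exactly the form $\sqrt m\sigma$. I would check this carefully against Gin\'e--Koltchinskii's statement, which is precisely the ``polynomial entropy with envelope normalization'' specialization cited in the text. This yields $\mathbb E Z\le C_{A,v}(\sqrt m\sigma+1)$.

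\emph{Step 3: from the mean to $q$-th moments.} I would apply the moment inequality of Gin\'e--Lata{\l}a--Zinn (Proposition~3.1 of \cite{gine2000exponential}), a Hoffmann-J{\o}rgensen-type bound for bounded classes:
\[
\big(\mathbb E Z^q\big)^{1/q}\le C_q\Big(\mathbb E Z+\sqrt m\,\sigma+\big(\mathbb E\max_k H(S_k)^q\big)^{1/q}\Big).
\]
The last term is at most $1$ because $H\le1$. Combining this with Step~2 and raising to the $q$-th power, using $(x+y)^q\le2^{q-1}(x^q+y^q)$, gives $C_{q,A,v}\{(\sqrt m\sigma)^q+1\}$. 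As an alternative to this inequality, one can use Bousquet/Talagrand concentration for the supremum around its mean, with variance proxy $m\sigma^2+2\mathbb E Z$, and integrate the tails.
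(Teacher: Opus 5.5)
Your argument is correct and follows essentially the paper's route: the paper invokes the Gin\'e--Koltchinskii moment inequality with entropy function $v\log(2Au)$ and the choice $\sigma_0=\|H\|_{P,2}$, which turns the logarithmic factor into a constant by the same envelope-normalization mechanism as your Step~2, and that inequality's moment step is the Gin\'e--Lata{\l}a--Zinn Proposition~3.1 you use in Step~3. The Gin\'e--Guillou quadratic-inequality detour in Step~1 is unnecessary, since Step~2 alone (using $\sigma_m\le\|H\|_{P_m,2}$ and Jensen) already gives $\mathbb E Z\le C_{A,v}\sqrt m\,\sigma$.
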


\begin{proof}
For the entropy function in the cited inequality, take
\[
H_0(u)
=
\begin{cases}
0,&u<1/2,\\
v\log(2Au),&u\ge1/2.
\end{cases}
\]
The factor $2$ extends the displayed covering estimate to every radius
used there.
In the notation of the cited moment inequality, take
$\sigma_0=\|H\|_{P,2}$.
Then
\[
\sup_{h\in\mathcal H}Ph^2
\le\sigma_0^2
\le\|H\|_{P,2}^2,
\qquad
\sigma_0\le\sigma,
\]
which gives~\eqref{eq:app_vc_q_moment}.
\end{proof}

The following consequence controls cell-boundary fluctuations through a localized second-moment bound.

\begin{lemma}[Local VC second-moment bound]
\label{lem:app_local_vc_second_moment}
Let $\mathcal F$ be pointwise measurable with envelope $F\le1$ and the
same polynomial covering bound as in
Lemma~\ref{lem:app_vc_moment}.
If $PF^2\le\sigma^2$, then
\begin{equation}
\mathbb E
\sup_{f\in\mathcal F}
|(P_m-P)f|^2
\le
C\left(
\frac{\sigma^2}{m}+\frac1{m^2}
\right).
\label{eq:app_local_vc_bound}
\end{equation}
\end{lemma}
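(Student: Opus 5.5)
Lemma~\ref{lem:app_local_vc_second_moment} follows from Lemma~\ref{lem:app_vc_moment} with $q=2$, after centering the class.

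\textbf{Step 1 (centering).} Define $\mathcal H:=\{f-Pf:f\in\mathcal F\}$. Each $h\in\mathcal H$ is $P$-centered, and
\[
\sum_{k=1}^m h(S_k)=m(P_m-P)f .
\]
Hence
\[
\mathbb E\sup_{f\in\mathcal F}|(P_m-P)f|^2=m^{-2}\,\mathbb E\sup_{h\in\mathcal H}\Bigl|\sum_{k}h(S_k)\Bigr|^2 .
\]
Everything reduces to bounding the right-hand side by $C(m\sigma^2+1)$.

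\textbf{Step 2 (envelope).} The natural envelope for $\mathcal H$ is $F+PF$. It is bounded by $2$, so divide by $2$ to satisfy $H\le1$. Its $L^2(P)$ norm is at most $\|F\|_{P,2}+PF\le 2\sigma$ by Jensen. After rescaling, $H:=(F+PF)/2$ has $\|H\|_{P,2}\le\sigma\le1$. If $\sigma>1$, note that $F\le1$ forces $PF^2\le1$, so we may replace $\sigma$ by $\min\{\sigma,1\}$; the claimed bound only weakens.

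Pointwise measurability of $\mathcal H$ is inherited from $\mathcal F$, since $Pf$ is a limit along the same countable approximating sequence.

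\textbf{Step 3 (covering numbers).} This is the main obstacle: transferring the normalized covering bound from $\mathcal F$ to $\mathcal H$ uniformly in $Q$.

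For $f,g\in\mathcal F$,
\[
\|(f-Pf)-(g-Pg)\|_{Q,2}\le\|f-g\|_{Q,2}+|P(f-g)| .
\]
The term $|P(f-g)|$ is not controlled by $L^2(Q)$. I would handle it by the standard device of covering in $L^2(\tfrac12(Q+P))$. Under that mixture,
\[
\|f-g\|_{Q,2}\le\sqrt2\,\|f-g\|_{(Q+P)/2,2},\qquad |P(f-g)|\le\|f-g\|_{P,2}\le\sqrt2\,\|f-g\|_{(Q+P)/2,2} .
\]
The normalization $\|F\|_{(Q+P)/2,2}$ must then be compared with $\|H\|_{Q,2}$. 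We have
\[
\|H\|_{Q,2}\ge PF/2 \quad\text{and}\quad \|H\|_{Q,2}\ge\|F\|_{Q,2}/2 .
\]
However, $\|F\|_{P,2}$ may exceed both of these. In that case I would instead control the constant shift directly: the set $\{Pf:f\in\mathcal F\}\subset[0,\|F\|_{P,1}]$ is covered by intervals, and $\|F\|_{P,1}=PF\le2\|H\|_{Q,2}$. This yields
\[
N\bigl(\varepsilon\|H\|_{Q,2},\mathcal H,L^2(Q)\bigr)\le (A'/\varepsilon)^{v+1}.
\]
Here the $\mathcal F$-part is covered in $L^2(Q)$ at scale $\varepsilon\|F\|_{Q,2}/2$ using the hypothesis, and the constants are covered by a one-dimensional grid of mesh $\varepsilon PF/2$. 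This avoids the mixture entirely, so I would try this first.

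\textbf{Step 4 (conclude).} Apply Lemma~\ref{lem:app_vc_moment} with $q=2$, exponent $v+1$, and constant $A'$ to get
\[
\mathbb E\sup_{h}\Bigl|\sum_k h(S_k)\Bigr|^2\le C\bigl(m\sigma^2+1\bigr).
\]
Dividing by $m^2$ gives~\eqref{eq:app_local_vc_bound}, with $C$ depending only on $A$ and $v$.
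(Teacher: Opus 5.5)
Your proof is correct, but it handles the covering-number transfer differently from the paper.

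\textbf{How the paper does it.} The paper takes the envelope $H=(F+\|F\|_{P,2})/2$ instead of $(F+PF)/2$. This is still an envelope for $\{(f-Pf)/2\}$, because $|Pf|\le PF\le\|F\|_{P,2}$. It still satisfies $\|H\|_{P,2}\le\|F\|_{P,2}\le\sigma$. Since $F\ge0$, it also gives $\|H\|_{Q,2}^2\ge\tfrac14(QF^2+PF^2)=\tfrac12\|F\|_{Q_0,2}^2$ with $Q_0=(P+Q)/2$. That is exactly the comparison you found missing in the mixture approach. With it, the mixture device goes through in one line, keeping the exponent $v$ and replacing $A$ by $2A$.

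\textbf{Your route.} You cover $\mathcal F$ in $L^2(Q)$ at relative scale $\varepsilon/2$ and cover the constants $Pf$ by a one-dimensional grid. This also works, using $\|F\|_{Q,2}\le2\|H\|_{Q,2}$ and $PF\le2\|H\|_{Q,2}$, and gives $N(\varepsilon\|H\|_{Q,2},\mathcal H,L^2(Q))\le(2A/\varepsilon)^v\lceil2/\varepsilon\rceil\le(2A/\varepsilon)^{v+1}$. The extra power in the exponent is harmless, since Lemma~\ref{lem:app_vc_moment} allows any fixed $v$.

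\textbf{Comparison.} Your version is slightly more elementary: it applies the hypothesis only at $Q$ itself and does not depend on a clever choice of envelope constant. The paper's version is shorter and preserves the entropy exponent.

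\textbf{Corrections.}
\begin{itemize}
\item Functions in $\mathcal F$ need not be nonnegative; in the application, $f_{i,\mathbf g}$ takes values in $\{-1,0,1\}$. So $\{Pf\}$ lies in $[-PF,PF]$, not $[0,PF]$. This only doubles the grid size.
\item Keep the factor $1/2$ in the rescaled class explicit, so the final division is by $m^2/4$ rather than $m^2$.
\end{itemize}
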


\begin{proof}
Let $a:=\|F\|_{P,2}$.
If $a=0$, the claim is immediate.
Otherwise center and rescale the class as
\[
\mathcal H
:=
\{(f-Pf)/2:f\in\mathcal F\},
\qquad
H:=(F+a)/2.
\]
Then $H\le1$ and $\|H\|_{P,2}\le a\le\sigma$.
For a probability measure $Q$, set $Q_0:=(P+Q)/2$.
The inequalities
\[
\begin{aligned}
\left\|
\frac{f-Pf}{2}-\frac{g-Pg}{2}
\right\|_{Q,2}
&\le\sqrt2\|f-g\|_{Q_0,2},\\
\|H\|_{Q,2}
&\ge2^{-1/2}\|F\|_{Q_0,2}
\end{aligned}
\]
show that $\mathcal H$ obeys the required covering bound with $A$
replaced by $2A$.
Apply Lemma~\ref{lem:app_vc_moment} with $q=2$ and $\sigma=a$, then
divide by $m^2/4$ to obtain~\eqref{eq:app_local_vc_bound}.
\end{proof}

The next lemma controls the local semidual fluctuations used to localize empirical maximizers.

\begin{lemma}[Local semidual empirical-process moments]
\label{lem:app_local_semidual_process}
For
\[
S_m(r)
:=
\sup_{\substack{\mathbf h\in\mathcal G_{\mathbf b}\\
\|\mathbf h\|\le r}}
\left|
(P_m-P)
(\phi_{\mathbf g^\star+\mathbf h}
-\phi_{\mathbf g^\star})
\right|,
\qquad
0<r\le\rho,
\]
and every fixed $q\ge1$,
\begin{equation}
\mathbb E S_m(r)^q
\le
C_{q,n}\frac{r^q}{m^{q/2}}.
\label{eq:app_local_semidual_moment}
\end{equation}
\end{lemma}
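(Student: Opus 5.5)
We would prove the bound \eqref{eq:app_local_semidual_moment} by applying the VC-type moment inequality, Lemma~\ref{lem:app_vc_moment}, to the class
\[
\mathcal H_r:=\left\{\tfrac{1}{2r}\bigl(f_{\mathbf h}-Pf_{\mathbf h}\bigr):\ \mathbf h\in\mathcal G_{\mathbf b},\ \|\mathbf h\|\le r\right\},
\qquad
f_{\mathbf h}:=\phi_{\mathbf g^\star+\mathbf h}-\phi_{\mathbf g^\star}.
\]
The first step is a uniform envelope. The elementary inequality $|\phi_{\mathbf g}(s)-\phi_{\mathbf g'}(s)|\le\|\mathbf g-\mathbf g'\|_\infty$, already used in Lemma~\ref{lem:app_exponential_localization}, gives $|f_{\mathbf h}|\le\|\mathbf h\|\le r$ pointwise. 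So after the normalization by $2r$, every member of $\mathcal H_r$ is $P$-centered and bounded by the constant envelope $H\equiv1$. We then take $\sigma=1$ in Lemma~\ref{lem:app_vc_moment}. Because the envelope is constant, we do not need to localize the variance; the factor $r$ is recovered purely from the rescaling.

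The second step is the covering bound. The parameter map $\mathbf h\mapsto f_{\mathbf h}$ is $1$-Lipschitz from $(\mathcal G_{\mathbf b},\|\cdot\|)$ into the supremum norm, and the centering map $f\mapsto f-Pf$ at most doubles sup distances. Hence, for any probability measure $Q$, an $\varepsilon r$-net of the Euclidean ball of radius $r$ in the $(n-1)$-dimensional space $\mathcal G_{\mathbf b}$ induces an $\varepsilon$-net of $\mathcal H_r$ in $L^2(Q)$, since $\|H\|_{Q,2}=1$. That ball's covering number is at most $(3/\varepsilon)^{n-1}$, which gives the polynomial bound of Lemma~\ref{lem:app_vc_moment} with $A=3e$ and $v=\max\{n-1,1\}$, both depending only on $n$.

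Pointwise measurability follows because $\mathbf h\mapsto f_{\mathbf h}(s)$ is continuous for each $s$, so a countable dense set of parameters suffices. Lemma~\ref{lem:app_vc_moment} then yields
\[
\mathbb E\sup_{h\in\mathcal H_r}\Bigl|\sum_{k=1}^m h(S_k)\Bigr|^q\le C_{q,n}\bigl(m^{q/2}+1\bigr)\le 2C_{q,n}m^{q/2}.
\]
Since $\sum_k h(S_k)=\frac{m}{2r}(P_m-P)f_{\mathbf h}$, we have $S_m(r)=\frac{2r}{m}\sup_{h}|\sum_k h(S_k)|$. Substituting gives $\mathbb E S_m(r)^q\le C'_{q,n}r^q m^{-q/2}$, as claimed. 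Alternatively, a direct chaining argument (Dudley's bound plus symmetrization for bounded Lipschitz-in-parameter processes) would give the same result.

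The only delicate point is checking that the cited moment inequality holds with a constant rather than a localized envelope. The conclusion $(\sqrt m\,\sigma)^q+1$ with $\sigma=1$ is exactly of order $m^{q/2}$, so this is harmless. Everything else is routine.
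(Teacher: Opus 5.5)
Your proposal is correct and matches the paper's proof: the same rescaled centered class $\mathcal H_r$ with constant envelope $1$, the $(3/\varepsilon)^{n-1}$ covering bound obtained from the sup-norm Lipschitz parametrization over the $(n-1)$-dimensional gauge space $\mathcal G_{\mathbf b}$, Lemma~\ref{lem:app_vc_moment} with $\sigma=1$, and a final rescaling by $(2r/m)^q$. The one point you leave implicit is that pointwise measurability also needs $\mathbf h\mapsto Pf_{\mathbf h}$ to be continuous, and this already follows from the sup-norm Lipschitz bound you establish.
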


\begin{proof}
For $\|\mathbf h\|\le r$, let
$f_{\mathbf h}:=
\phi_{\mathbf g^\star+\mathbf h}-\phi_{\mathbf g^\star}$.
Then
\[
|f_{\mathbf h}|\le r,
\qquad
\|f_{\mathbf h}-f_{\mathbf h'}\|_\infty
\le
\|\mathbf h-\mathbf h'\|.
\]
The centered class
\[
\mathcal H_r
:=
\left\{
\frac{f_{\mathbf h}-Pf_{\mathbf h}}{2r}:
\mathbf h\in\mathcal G_{\mathbf b},\
\|\mathbf h\|\le r
\right\}
\]
has envelope $1$.
Since $\dim(\mathcal G_{\mathbf b})=n-1$, its uniform covering numbers
are bounded by
\[
N(\varepsilon,\mathcal H_r,L^2(Q))
\le
(3/\varepsilon)^{n-1}.
\]
Both
$\mathbf h\mapsto f_{\mathbf h}(s)$ and
$\mathbf h\mapsto Pf_{\mathbf h}$ are Lipschitz, so restriction to a
countable dense parameter set gives pointwise measurability.
Lemma~\ref{lem:app_vc_moment}, with $\sigma=1$, gives
\[
\mathbb E
\sup_{u\in\mathcal H_r}
\left|\sum_{k=1}^m u(S_k)\right|^q
\le
C_{q,n}m^{q/2}.
\]
Multiplication by $(2r/m)^q$ proves
\eqref{eq:app_local_semidual_moment}.
\end{proof}

\subsubsection{Root-m moments and tails}

The following proposition provides the uniform moment control needed to make the asymptotic remainders uniformly integrable.

\begin{proposition}[Uniform root-$m$ moments]
\label{prop:app_empirical_potential_ui}
Under Assumption~\ref{ass:app_sdot_population}, for every fixed
$p_0\in(2,q_0)$ and every measurable selection
satisfying~\eqref{eq:app_empirical_semidual_maximizer},
\begin{equation}
\sup_{m\ge1}
\mathbb E
\left\|
\sqrt m\,(\mathbf G_m-\mathbf g^\star)
\right\|^{p_0}
<\infty.
\label{eq:app_root_m_high_moment}
\end{equation}
The bound can be chosen independently of the measurable selection.
Consequently,
\begin{equation}
\left\{
m\|\mathbf G_m-\mathbf g^\star\|^2:m\ge1
\right\}
\quad\text{is uniformly integrable}.
\label{eq:app_root_m_ui}
\end{equation}
\end{proposition}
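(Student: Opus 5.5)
The bound will come from a standard M-estimation argument: a peeling (shell) decomposition that combines three earlier results. These are the quadratic margin of Lemma~\ref{lem:app_population_margin}, the local semidual moment bound of Lemma~\ref{lem:app_local_semidual_process}, and the exponential localization of Lemma~\ref{lem:app_exponential_localization}. Outside the localization ball we use the deterministic growth bound of Lemma~\ref{lem:app_empirical_growth} together with the moment $P\|S\|^{q_0}<\infty$. Write $\mathbf h_m=\mathbf G_m-\mathbf g^\star\in\mathcal G_{\mathbf b}$. Split
\[
\mathbb E\|\sqrt m\,\mathbf h_m\|^{p_0}
=\mathbb E\bigl[\cdot\,\mathbf 1_{\{\|\mathbf h_m\|\le\rho\}}\bigr]
+\mathbb E\bigl[\cdot\,\mathbf 1_{\{\|\mathbf h_m\|>\rho\}}\bigr],
\]
and bound each term uniformly in $m$ and in the selection. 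All the tools used are pathwise or apply to every maximizer, so selection-independence is automatic.

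\emph{Local term.} On $\{2^{k-1}t/\sqrt m<\|\mathbf h_m\|\le 2^k t/\sqrt m\}\cap\{\|\mathbf h_m\|\le\rho\}$, empirical optimality gives $\Psi_m(\mathbf G_m)\ge\Psi_m(\mathbf g^\star)$. Combined with the margin, this gives
\[
\tfrac{\lambda_\star}{4}\|\mathbf h_m\|^2\le \Psi(\mathbf g^\star)-\Psi(\mathbf G_m)\le S_m(\|\mathbf h_m\|),
\]
so $S_m(r_k)\ge c\,4^{k}t^2/m$ with $r_k=\min(2^k t/\sqrt m,\rho)$. Markov's inequality with Lemma~\ref{lem:app_local_semidual_process} at a large exponent $q$ bounds the probability of shell $k$ by
\[
C_q\,\frac{r_k^q m^{-q/2}}{(4^k t^2/m)^q}\le C_q\,(2^k t)^{-q}.
\]
Summing over $k\ge1$ gives $\mathbb P(\sqrt m\|\mathbf h_m\|>t,\ \|\mathbf h_m\|\le\rho)\le C_q t^{-q}$ for all $t\ge1$. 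This is the local tail estimate, presumably the Lemma~\ref{lem:app_local_root_m_tail} cited later. Taking $q>p_0$ and integrating the tail bounds the local term uniformly in $m$.

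\emph{Global term.} By Lemma~\ref{lem:app_empirical_growth}, $\|\mathbf h_m\|\le C_0(1+M_m)$ with $M_m=\max_k\|S_k\|$. Hölder's inequality then gives
\[
m^{p_0/2}\,\mathbb E\bigl[(1+M_m)^{p_0}\mathbf 1_{\{\|\mathbf h_m\|>\rho\}}\bigr]
\le m^{p_0/2}\bigl(\mathbb E(1+M_m)^{q_0}\bigr)^{p_0/q_0}\,\mathbb P(\|\mathbf h_m\|>\rho)^{1-p_0/q_0}.
\]
Since $\mathbb E M_m^{q_0}\le m\,P\|S\|^{q_0}$, the first factor grows at most polynomially in $m$. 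The last factor decays like $e^{-cm(1-p_0/q_0)}$ by Lemma~\ref{lem:app_exponential_localization}, so the product is bounded uniformly in $m$. This is where $p_0<q_0$ is used.

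\emph{Main obstacle.} The delicate step is the peeling argument. The margin inequality holds only on $\mathcal G_{\mathbf b}$ within radius $\rho$, so the shells must be truncated at $\rho$. One must also check that the constant in Lemma~\ref{lem:app_local_semidual_process} is uniform in $r\le\rho$, so that the geometric sum over $k$ converges with constants independent of $m$. Uniform integrability of $\{m\|\mathbf h_m\|^2\}$ then follows directly from the bounded $p_0/2>1$ moment via de la Vallée-Poussin.
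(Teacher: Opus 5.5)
Your proposal is correct and follows the paper's proof essentially step for step. The paper packages your peeling argument as Lemma~\ref{lem:app_local_root_m_tail}: dyadic shells truncated at $\rho$, then the margin inequality combined with Lemma~\ref{lem:app_local_semidual_process} and Markov's inequality at an exponent $q>p_0$. It then integrates that tail on $E_m$, treats $E_m^{c}$ by the same growth bound, H\"older step and exponential localization, and gets uniform integrability from the bounded $p_0/2>1$ moment.
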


Its proof rests on the following local tail bound, which combines the quadratic margin with the local semidual process estimate.

\begin{lemma}[Local root-$m$ tail]
\label{lem:app_local_root_m_tail}
Let
\[
E_m:=\{\|\mathbf G_m-\mathbf g^\star\|\le\rho\}.
\]
For every fixed $q\ge1$, there is $C_q<\infty$ such that, for all
$a\ge1$,
\begin{equation}
\mathbb P\left(
\sqrt m\|\mathbf G_m-\mathbf g^\star\|>a,\ E_m
\right)
\le
C_qa^{-q}.
\label{eq:app_local_root_m_tail}
\end{equation}
\end{lemma}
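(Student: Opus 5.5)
The plan is the standard M-estimation peeling argument: combine the quadratic margin of Lemma~\ref{lem:app_population_margin} with the local empirical-process moment bound of Lemma~\ref{lem:app_local_semidual_process}.

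Write $\mathbf h:=\mathbf G_m-\mathbf g^\star\in\mathcal G_{\mathbf b}$. On $E_m$ we have $\|\mathbf h\|\le\rho$, and empirical optimality gives $\Psi_m(\mathbf G_m)\ge\Psi_m(\mathbf g^\star)$. Subtracting the population difference yields
\[
\Psi(\mathbf g^\star)-\Psi(\mathbf G_m)
\le
(P_m-P)\bigl(\phi_{\mathbf G_m}-\phi_{\mathbf g^\star}\bigr),
\]
because the linear terms $\mathbf b^\top\mathbf g$ cancel between $\Psi_m$ and $\Psi$. By~\eqref{eq:app_population_margin}, the left side is at least $\frac{\lambda_\star}{4}\|\mathbf h\|^2$. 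The right side is at most $S_m(\|\mathbf h\|)$.

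Next, peel into dyadic shells. For $a\ge1$, let $a_l:=2^l a/\sqrt m$ for $l\ge0$, and consider only the shells with $a_l\le\rho$, i.e.\ shells $\{a_l<\|\mathbf h\|\le\min(a_{l+1},\rho)\}$. On the $l$-th shell intersected with $E_m$, the inequality above forces
\[
S_m\bigl(\min(a_{l+1},\rho)\bigr)\ge\frac{\lambda_\star}{4}a_l^2 .
\]
Apply Markov's inequality with the $q$-th moment bound~\eqref{eq:app_local_semidual_moment}. This bounds the shell probability by
\[
C_{q,n}\,\frac{a_{l+1}^q}{m^{q/2}}\Bigl(\frac{4}{\lambda_\star a_l^2}\Bigr)^q
= C'_q\,\frac{2^q\,(2^la)^q}{(2^la)^{2q}}
= C''_q\,(2^l a)^{-q}.
\]
The factors of $m$ cancel exactly because $a_l\sqrt m=2^la$. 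The shell containing $\rho$ is handled the same way, using radius $\rho$ in place of $a_{l+1}$, since $\rho\le a_{l+1}$. Summing over $l\ge0$ gives a geometric series with ratio $2^{-q}$, so the total is at most $C_q a^{-q}$. This is~\eqref{eq:app_local_root_m_tail}.

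The point to check carefully is that $S_m(r)$ is monotone in $r$, which holds because it is a supremum over nested sets. This is what lets each shell be controlled by the supremum at its outer radius. The constants must depend only on $q$, $n$ and $\lambda_\star$, and in particular not on $m$ or on the measurable selection. That holds here because the argument uses only the basic inequality, which is valid for every empirical maximizer.

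Measurability of the events is inherited from the pointwise measurability already established in the proof of Lemma~\ref{lem:app_local_semidual_process}. When $a/\sqrt m>\rho$ there are no shells at all. In that case the event $\{\sqrt m\|\mathbf h\|>a\}\cap E_m$ is empty, so the bound holds trivially.
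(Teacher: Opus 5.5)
Your proposal is correct and follows essentially the same route as the paper. The paper's proof uses the same steps: the basic inequality from empirical optimality combined with the quadratic margin of Lemma~\ref{lem:app_population_margin}, dyadic peeling from radius $a/\sqrt m$ with the last radius capped at $\rho$, Markov's inequality with the $q$-th moment bound of Lemma~\ref{lem:app_local_semidual_process} on each shell (where the powers of $m$ cancel), and a geometric sum, with your handling of the monotonicity of $S_m(r)$ and of the capped final shell simply making explicit what the paper leaves implicit.
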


\begin{proof}
Write $\mathbf h_m:=\mathbf G_m-\mathbf g^\star$.
Empirical optimality and~\eqref{eq:app_population_margin} imply on
$E_m$ that
\[
\frac{\lambda_\star}{4}\|\mathbf h_m\|^2
\le
\left|
(P_m-P)
(\phi_{\mathbf G_m}-\phi_{\mathbf g^\star})
\right|.
\]
If $a\ge\rho\sqrt m$, the event in
\eqref{eq:app_local_root_m_tail} is empty.
Otherwise let $r_0=a/\sqrt m$, and form dyadic radii
$r_k=2^kr_0$ until the last radius is replaced by $\rho$.
On the shell
\[
r_{k-1}<\|\mathbf h_m\|\le r_k,
\]
we have $r_k\le2r_{k-1}$ and
\[
\frac{\lambda_\star}{4}r_{k-1}^2
\le S_m(r_k).
\]
Markov's inequality and
Lemma~\ref{lem:app_local_semidual_process} give
\[
\mathbb P(
r_{k-1}<\|\mathbf h_m\|\le r_k
)
\le
C_qa^{-q}2^{-q(k-1)}.
\]
Summing the geometric series proves the result.
\end{proof}

\begin{proof}[Proof of Proposition~\ref{prop:app_empirical_potential_ui}]
Fix $p_0\in(2,q_0)$ and choose $q>p_0$.
The tail-integral formula and
Lemma~\ref{lem:app_local_root_m_tail} give
\[
\sup_{m\ge1}
\mathbb E\left[
\{\sqrt m\|\mathbf G_m-\mathbf g^\star\|\}^{p_0}
\mathbf1_{E_m}
\right]
<\infty.
\]

Let $R_m:=\max_{k\in[m]}\|S_k\|$.
Lemma~\ref{lem:app_empirical_growth} gives
\[
\|\mathbf G_m-\mathbf g^\star\|\le C_0(1+R_m),
\]
and
\[
\mathbb E(1+R_m)^{q_0}
\le
2^{q_0-1}
\left(
1+mP\|S\|^{q_0}
\right)
\le C(1+m).
\]
Hölder's inequality and
Lemma~\ref{lem:app_exponential_localization} yield
\[
\begin{aligned}
&\mathbb E\left[
\{\sqrt m\|\mathbf G_m-\mathbf g^\star\|\}^{p_0}
\mathbf1_{E_m^c}
\right]\\
&\qquad\le
C
m^{p_0/2}(1+m)^{p_0/q_0}
\exp\left\{
-cm(1-p_0/q_0)
\right\},
\end{aligned}
\]
which is uniformly bounded in $m$.
This proves~\eqref{eq:app_root_m_high_moment}.

Set
$Y_m:=m\|\mathbf G_m-\mathbf g^\star\|^2$.
Let $C_{p_0}$ denote the finite supremum in
\eqref{eq:app_root_m_high_moment}.
Since $p_0/2>1$,
\[
\sup_m
\mathbb E[
Y_m\mathbf1_{\{Y_m>K\}}
]
\le
K^{1-p_0/2}C_{p_0}
\longrightarrow0.
\]
This is~\eqref{eq:app_root_m_ui}.
\end{proof}

\section{Additional Details for Gromov--Wasserstein}
\label{app:gw_oracle}

\subsection{Frank--Wolfe Linearization of GW}

Using the notation of Section~\ref{sec:ot_oracle}, the squared-loss GW objective (Section~\ref{sec:gw_uot_oracles}) admits the matrix form
\begin{equation*}
\mathcal Q_{\mathbf A,\mathbf B}(\mathbf X)
=
\left\langle\mathbf A^{\odot2}\mathbf a,\mathbf a\right\rangle
+
\left\langle\mathbf B^{\odot2}\mathbf b,\mathbf b\right\rangle
-
2\left\langle\mathbf A\mathbf X\mathbf B,\mathbf X\right\rangle,
\end{equation*}
where $\mathbf X\in U(\mathbf a,\mathbf b)\subset\mathbb R_+^{m\times n}$.
The first two terms depend only on the fixed marginals, so optimization over $\mathbf X$ is governed by the final quadratic term above.

At iteration $t$, a candidate coupling is obtained from the linearized OT subproblem
\[
\widetilde{\mathbf X}^{(t+1)}
\in
\arg\min_{\mathbf X\in U(\mathbf a,\mathbf b)}
\left\langle\mathbf C^{(t)},\mathbf X\right\rangle.
\]
where, for symmetric $\mathbf A$ and $\mathbf B$,
\[
\mathbf C^{(t)}
=
-4\mathbf A\mathbf X^{(t)}\mathbf B.
\]

\begin{algorithm}[t]
\caption{Gromov--Wasserstein with the \ours oracle}
\label{alg:gw}
\begin{algorithmic}[1]
\REQUIRE Supports $\mathcal S,\mathcal D$;
marginals $\mathbf a\in\Delta_m$, $\mathbf b\in\Delta_n$;
factors
$\mathbf A=\mathbf A_1\mathbf A_2^\top$,
$\mathbf A_1,\mathbf A_2\in\mathbb R^{m\times(d_{\mathcal S}+2)}$,
and
$\mathbf B=\mathbf B_1\mathbf B_2^\top$,
$\mathbf B_1,\mathbf B_2\in\mathbb R^{n\times(d_{\mathcal D}+2)}$;
outer tolerance $\tau_{\mathrm{out}}$, subproblem KKT tolerance $\varepsilon$
\medskip
\STATE Initialize $\mathbf X^{(0)}$ with the proxy coupling \COMMENT{Appendix~\ref{app:gw_initialization}}
\STATE $q^{(0)}\gets
\mathcal Q_{\mathbf A,\mathbf B}(\mathbf X^{(0)})$, $t\gets0$
\STATE $\mathbf R\gets\mathbf B_2\in\mathbb R^{n\times(d_{\mathcal D}+2)}$
\REPEAT
    \STATE $\mathbf L^{(t)}
    \gets
    -4\mathbf A_1
    (\mathbf A_2^\top\mathbf X^{(t)}\mathbf B_1)
    \in\mathbb R^{m\times(d_{\mathcal D}+2)}$
    \STATE $c^{(t)}(s_i,d_j)
    \gets
    (\mathbf L^{(t)}\mathbf R^\top)_{ij}$
    \COMMENT{evaluated on demand}
    \smallskip
    \STATE $\mathcal P^{(t)}
    \gets
    (\mathcal S,\mathcal D,\mathbf a,\mathbf b,c^{(t)})$
                    \STATE $(\widetilde{\mathbf X}^{(t+1)},\mathbf f^{(t+1)},\mathbf g^{(t+1)})\gets\ours(\mathcal P^{(t)};\varepsilon)$
\smallskip
\STATE $\mathbf X^{(t+1)}\gets\operatorname*{arg\,min}_{\mathbf Z\in\{\mathbf X^{(t)},\widetilde{\mathbf X}^{(t+1)}\}}\mathcal Q_{\mathbf A,\mathbf B}(\mathbf Z)$ \STATE $q^{(t+1)}\gets\mathcal Q_{\mathbf A,\mathbf B}(\mathbf X^{(t+1)})$
\STATE $t\gets t+1$
\UNTIL{$(q^{(t-1)}-q^{(t)})/\max\{|q^{(t-1)}|,10^{-12}\}\leq\tau_{\mathrm{out}}$}
\medskip
\STATE \textbf{return} $\mathbf X^{(t)}$
\end{algorithmic}
\end{algorithm}

\subsection{Proxy Initialization}
\label{app:gw_initialization}

Following the lower-bound proxy of~\cite{scetbon2022linear}, define the one-dimensional signatures
\[
u_i
=
\sqrt{\bigl(\mathbf A^{\odot2}\mathbf a\bigr)_i},
\qquad
v_j
=
\sqrt{\bigl(\mathbf B^{\odot2}\mathbf b\bigr)_j}.
\]
We initialize the GW iteration with an exact solution of
\[
\mathbf X^{(0)}
\in
\arg\min_{\mathbf X\in U(\mathbf a,\mathbf b)}
\sum_{i=1}^{m}
\sum_{j=1}^{n}
(u_i-v_j)^2X_{ij}.
\]
This is a one-dimensional squared-Euclidean OT problem, whose monotone optimal coupling can be computed exactly by sorting.
The resulting $\mathbf X^{(0)}$ is sparse.

\subsection{Low-Rank Representation of the Linearized GW Cost}

Following~\cite{scetbon2022linear}, assume that the intra-domain distance matrices admit the factorizations
\[
\mathbf A=\mathbf A_1\mathbf A_2^\top,
\qquad
\mathbf B=\mathbf B_1\mathbf B_2^\top,
\]
where $\mathbf A_1,\mathbf A_2\in\mathbb R^{m\times(d_{\mathcal S}+2)}$ and $\mathbf B_1,\mathbf B_2\in\mathbb R^{n\times(d_{\mathcal D}+2)}$.

Define
\[
\begin{aligned}
\mathbf M^{(t)}
&=
\mathbf A_2^\top
\mathbf X^{(t)}
\mathbf B_1\in\mathbb R^{(d_{\mathcal S}+2)\times(d_{\mathcal D}+2)},\\
\mathbf L^{(t)}
&=
-4\mathbf A_1\mathbf M^{(t)}\in\mathbb R^{m\times(d_{\mathcal D}+2)},\\
\mathbf R
&=
\mathbf B_2\in\mathbb R^{n\times(d_{\mathcal D}+2)}.
\end{aligned}
\]

The linearized cost then factors as
\[
\mathbf C^{(t)}
=
-4\mathbf A_1
\bigl(\mathbf A_2^\top\mathbf X^{(t)}\mathbf B_1\bigr)
\mathbf B_2^\top
=
\mathbf L^{(t)}\mathbf R^\top.
\]
Since $\mathbf X^{(t)}$ has $\mathcal O(m+n)$ nonzero entries, computing $\mathbf M^{(t)}$ and $\mathbf L^{(t)}$ requires $\mathcal O(m+n)$ operations for fixed ambient dimensions.
Storing $\mathbf L^{(t)}$ and $\mathbf R$ likewise requires $\mathcal O(m+n)$ memory.

Let $\mathbf Z_{\mathcal S}=[s_1,\ldots,s_m]^\top\in\mathbb R^{m\times d_{\mathcal S}}$, $\mathbf Z_{\mathcal D}=[d_1,\ldots,d_n]^\top\in\mathbb R^{n\times d_{\mathcal D}}$ and let $\mathbf z_{\mathcal S}=(\|s_1\|_2^2,\ldots,\|s_m\|_2^2)^\top$, $\mathbf z_{\mathcal D}=(\|d_1\|_2^2,\ldots,\|d_n\|_2^2)^\top$.
The source distance matrix satisfies
\[
\mathbf A
=
\mathbf z_{\mathcal S}\mathbf1_m^\top
+
\mathbf1_m\mathbf z_{\mathcal S}^\top
-
2\mathbf Z_{\mathcal S}\mathbf Z_{\mathcal S}^\top.
\]
This identity provides an exact factorization of width $d_{\mathcal S}+2$.
Analogously, $\mathbf B$ admits an exact factorization of width $d_{\mathcal D}+2$.

\subsection{Equivalent Squared-Euclidean OT Cost}
Algorithm~\ref{alg:gw} expresses each linearized OT subproblem through the bilinear cost $c^{(t)}(s_i,d_j)=(\mathbf L^{(t)}\mathbf R^\top)_{ij}$.
In our implementation, we instead invoke \ours on an equivalent squared-Euclidean OT problem.
This reduction places the subproblem in the standard squared-Euclidean cost form while preserving its optimal couplings.

For each pair $(i,j)$, define
\[
\widetilde c^{(t)}(s_i,d_j)
:=
\left\|
\mathbf L^{(t)}_{i:}
+
\frac12\mathbf R_{j:}
\right\|_2^2.
\]
Expanding the squared norm gives
\[
\widetilde c^{(t)}(s_i,d_j)
=
(\mathbf L^{(t)}\mathbf R^\top)_{ij}
+
\|\mathbf L^{(t)}_{i:}\|_2^2
+
\frac14\|\mathbf R_{j:}\|_2^2.
\]
For every $\mathbf X\in U(\mathbf a,\mathbf b)$, the final two terms contribute only
\[
\sum_i a_i\|\mathbf L^{(t)}_{i:}\|_2^2
+
\frac14\sum_j b_j\|\mathbf R_{j:}\|_2^2,
\]
which is independent of $\mathbf X$.
Consequently, the bilinear OT problem in Algorithm~\ref{alg:gw} and the squared-Euclidean OT problem used in our implementation have the same optimal couplings.

\subsection{Unit-Step Property}

Let $\mathbf H^{(t)}=\mathbf X^{(t+1)}-\mathbf X^{(t)}$ denote the feasible direction returned by the linear OT oracle.
Because the two couplings have identical marginals, $\mathbf H^{(t)}\mathbf1_n=\mathbf0$ and $(\mathbf H^{(t)})^\top\mathbf1_m=\mathbf0$.
For squared-Euclidean distance matrices, the GW objective along this direction satisfies
\[
\frac{\mathrm d^2}{\mathrm d\gamma^2}
\mathcal Q_{\mathbf A,\mathbf B}
\bigl(\mathbf X^{(t)}+\gamma\mathbf H^{(t)}\bigr)
=
-16
\left\|
\mathbf Z_{\mathcal S}^\top
\mathbf H^{(t)}
\mathbf Z_{\mathcal D}
\right\|_F^2
\leq0.
\]
The line-search objective is therefore concave on $\gamma\in[0,1]$.
At every nonstationary iteration, the exact linear OT oracle provides a strict descent direction, so exact line search selects $\gamma=1$.
Consequently, every updated coupling is the sparse coupling returned by \ours.

\section{Additional Details for Unbalanced OT}
\label{app:unbalanced_ot}

This section presents the translation-invariant dual formulation, the balanced OT oracle reduction, the primal--dual certificate, the warm-start initialization, and the fully corrective Frank--Wolfe algorithm.

\subsection{Translation-Invariant Dual Formulation}

For nonnegative $\mathbf p$ and positive $\mathbf q$, define the generalized KL divergence by
\[
\operatorname{KL}(\mathbf p\|\mathbf q)
=
\sum_i
\left(
p_i\log\frac{p_i}{q_i}
-p_i+q_i
\right),
\]
where $0\log0:=0$.

The dual of~\eqref{eq:uot_objective} is
\[
\max_{\mathbf f\oplus\mathbf g\leq\mathbf C}
F_0(\mathbf f,\mathbf g),
\]
where
\[
\begin{aligned}
F_0(\mathbf f,\mathbf g)
:={}&
\rho_{\mathcal S}
\left\langle
\mathbf a,
\mathbf1_m-
e^{-\mathbf f/\rho_{\mathcal S}}
\right\rangle\\
&+
\rho_{\mathcal D}
\left\langle
\mathbf b,
\mathbf1_n-
e^{-\mathbf g/\rho_{\mathcal D}}
\right\rangle,
\end{aligned}
\]
and
$(\mathbf f\oplus\mathbf g)_{ij}:=f_i+g_j$.
All vector exponentials are understood componentwise.

The feasible set is invariant under
\[
(\overline{\mathbf f},\overline{\mathbf g})
\mapsto
(
\overline{\mathbf f}+\lambda\mathbf1_m,
\overline{\mathbf g}-\lambda\mathbf1_n
).
\]
Following~\cite{sejourne2022faster}, define the translation-invariant dual functional
\begin{equation}
H_0(\overline{\mathbf f},\overline{\mathbf g})
:=
\max_{\lambda\in\mathbb R}
F_0(
\overline{\mathbf f}+\lambda\mathbf1_m,
\overline{\mathbf g}-\lambda\mathbf1_n
).
\label{eq:app_uot_translation_invariant_dual}
\end{equation}

For a feasible pair
$(\overline{\mathbf f},\overline{\mathbf g})$,
define the normalization factors
\begin{align*}
\mathsf Z_{\mathcal S}(\overline{\mathbf f})
&:=
\left\langle
\mathbf a,
e^{-\overline{\mathbf f}/\rho_{\mathcal S}}
\right\rangle,
\\
\mathsf Z_{\mathcal D}(\overline{\mathbf g})
&:=
\left\langle
\mathbf b,
e^{-\overline{\mathbf g}/\rho_{\mathcal D}}
\right\rangle.
\nonumber
\end{align*}
The maximizing translation in~\eqref{eq:app_uot_translation_invariant_dual} is
\begin{equation}
\lambda^\star
=
\frac{
\rho_{\mathcal S}\rho_{\mathcal D}
}{
\rho_{\mathcal S}+\rho_{\mathcal D}
}
\log
\frac{
\mathsf Z_{\mathcal S}(\overline{\mathbf f})
}{
\mathsf Z_{\mathcal D}(\overline{\mathbf g})
}.
\label{eq:app_uot_translation}
\end{equation}
The corresponding translated potentials are
\[
\mathbf f
=
\overline{\mathbf f}
+
\lambda^\star\mathbf1_m,
\qquad
\mathbf g
=
\overline{\mathbf g}
-
\lambda^\star\mathbf1_n.
\]

These potentials induce the reweighted marginals
\[
\widetilde{\mathbf a}
=
\mathbf a\odot
e^{-\mathbf f/\rho_{\mathcal S}},
\qquad
\widetilde{\mathbf b}
=
\mathbf b\odot
e^{-\mathbf g/\rho_{\mathcal D}}.
\]
The optimality condition for $\lambda^\star$ gives
\[
\|\widetilde{\mathbf a}\|_1
=
\|\widetilde{\mathbf b}\|_1
=:
\zeta,
\]
where
\begin{equation}
\zeta
=
\mathsf Z_{\mathcal S}(\overline{\mathbf f})^{
\frac{\rho_{\mathcal S}}
{\rho_{\mathcal S}+\rho_{\mathcal D}}
}
\mathsf Z_{\mathcal D}(\overline{\mathbf g})^{
\frac{\rho_{\mathcal D}}
{\rho_{\mathcal S}+\rho_{\mathcal D}}
}.
\label{eq:app_uot_common_mass}
\end{equation}

Since the balanced discrete oracle operates on probability marginals (Table~\ref{tab:ot_variant_oracle}), we define
\begin{equation}
\widehat{\mathbf a}
=
\frac{\widetilde{\mathbf a}}{\zeta},
\qquad
\widehat{\mathbf b}
=
\frac{\widetilde{\mathbf b}}{\zeta}.
\label{eq:app_uot_normalized_marginals}
\end{equation}
Equivalently,
\[
\widehat{\mathbf a}
=
\frac{
\mathbf a\odot
e^{-\overline{\mathbf f}/\rho_{\mathcal S}}
}{
\mathsf Z_{\mathcal S}(\overline{\mathbf f})
},
\qquad
\widehat{\mathbf b}
=
\frac{
\mathbf b\odot
e^{-\overline{\mathbf g}/\rho_{\mathcal D}}
}{
\mathsf Z_{\mathcal D}(\overline{\mathbf g})
}.
\]
We collect the common transported mass and normalized marginals in the reweighting operator
\begin{equation*}
\reweight(\overline{\mathbf f},\overline{\mathbf g})
:=
(\zeta,\widehat{\mathbf a},\widehat{\mathbf b}).
\end{equation*}
At the maximizing translation,
\begin{equation*}
\begin{aligned}
H_0(\overline{\mathbf f},\overline{\mathbf g})
&=
F_0(\mathbf f,\mathbf g)\\
&=
\rho_{\mathcal S}\|\mathbf a\|_1
+
\rho_{\mathcal D}\|\mathbf b\|_1
-
(\rho_{\mathcal S}+\rho_{\mathcal D})\zeta.
\end{aligned}
\end{equation*}

\subsection{Balanced OT Oracle and Primal--Dual Certificate}

At outer iteration $t$, let
$\widehat{\mathbf a}^{(t)}$ and
$\widehat{\mathbf b}^{(t)}$
be obtained from the current translation-invariant potentials through~\eqref{eq:app_uot_translation}--\eqref{eq:app_uot_normalized_marginals}.
The balanced OT oracle solves
\[
\widehat{\mathbf X}^{(t)}
\in
\arg\min_{
\mathbf X\in
U(
\widehat{\mathbf a}^{(t)},
\widehat{\mathbf b}^{(t)}
)}
\langle\mathbf C,\mathbf X\rangle.
\]
We write
\begin{equation}
(
\widehat{\mathbf X}^{(t)},
\mathbf u^{(t)},
\mathbf v^{(t)}
)
=
\ours(\mathcal P^{(t)};\varepsilon),
\label{eq:app_uot_oracle_call}
\end{equation}
where $\mathcal P^{(t)}=(\mathcal S,\mathcal D,\widehat{\mathbf a}^{(t)},\widehat{\mathbf b}^{(t)},c)$, and $\mathbf u^{(t)},\mathbf v^{(t)}$ are the balanced OT dual potentials.
The feasible pair $(\mathbf u^{(t)},\mathbf v^{(t)})$ defines the new atom supplied by the balanced OT linear minimization oracle.
They satisfy
\[
u_i^{(t)}+v_j^{(t)}
\leq
C_{ij},
\]
and exact primal--dual optimality gives
\begin{equation*}
\left\langle
\mathbf C,
\widehat{\mathbf X}^{(t)}
\right\rangle
=
\left\langle
\mathbf u^{(t)},
\widehat{\mathbf a}^{(t)}
\right\rangle
+
\left\langle
\mathbf v^{(t)},
\widehat{\mathbf b}^{(t)}
\right\rangle.
\end{equation*}

The current unbalanced coupling is
\[
\mathbf X^{(t)}
=
\zeta^{(t)}\widehat{\mathbf X}^{(t)}.
\]
Its source and target marginals are
$\widetilde{\mathbf a}^{(t)}$
and
$\widetilde{\mathbf b}^{(t)}$,
respectively.
Combining the translated dual value above with balanced strong duality yields
\begin{align*}
&
\mathcal U(\mathbf X^{(t)})
-
F_0(\mathbf f^{(t)},\mathbf g^{(t)})
\nonumber\\
&\quad=
\zeta^{(t)}
\Bigl[
\left\langle
\mathbf u^{(t)}-\mathbf f^{(t)},
\widehat{\mathbf a}^{(t)}
\right\rangle
+
\left\langle
\mathbf v^{(t)}-\mathbf g^{(t)},
\widehat{\mathbf b}^{(t)}
\right\rangle
\Bigr]
\geq0.
\end{align*}
This identity provides the Frank--Wolfe gap and the primal--dual certificate from the same balanced oracle call.
This identity assumes exact primal--dual optimality of the balanced OT subproblem.

Let $\mathcal C_t$ denote the primal candidates generated by the initializer and the first $t$ balanced oracle calls.
We maintain the best primal and dual bounds
\[
P_t^{\mathrm{best}}
=
\min_{\mathbf X\in\mathcal C_t}
\mathcal U(\mathbf X),
\qquad
D_t^{\mathrm{best}}
=
\max_{0\leq k\leq t}
F_0(\mathbf f^{(k)},\mathbf g^{(k)}).
\]
The relative primal--dual certificate is
\begin{equation*}
\eta_t
=
\frac{
P_t^{\mathrm{best}}-D_t^{\mathrm{best}}
}{
\max\left\{
|P_t^{\mathrm{best}}|,
|D_t^{\mathrm{best}}|,
10^{-15}
\right\}
}.
\end{equation*}
The algorithm terminates when $\eta_t\leq\tau_{\mathrm{out}}$ and returns the primal candidate attaining $P_t^{\mathrm{best}}$.

\subsection{Warm-Start Initialization}
The FCFW algorithm accepts a pair of nonnegative marginal estimates
$\mathbf r_{\mathcal S}^{\mathrm{init}}\in\mathbb R_+^m$
and
$\mathbf r_{\mathcal D}^{\mathrm{init}}\in\mathbb R_+^n$
with a common positive mass
\begin{equation}
\left\|
\mathbf r_{\mathcal S}^{\mathrm{init}}
\right\|_1
=
\left\|
\mathbf r_{\mathcal D}^{\mathrm{init}}
\right\|_1
=:
\xi_{\mathrm{init}}>0.
\label{eq:app_uot_init_equal_mass}
\end{equation}
We normalize these estimates as
\begin{equation}
\widehat{\mathbf a}^{\mathrm{init}}
=
\frac{
\mathbf r_{\mathcal S}^{\mathrm{init}}
}{
\xi_{\mathrm{init}}
},
\qquad
\widehat{\mathbf b}^{\mathrm{init}}
=
\frac{
\mathbf r_{\mathcal D}^{\mathrm{init}}
}{
\xi_{\mathrm{init}}
}.
\label{eq:app_uot_init_normalized_marginals}
\end{equation}
The corresponding balanced OT problem is solved once by \ours:
\begin{equation*}
\left(
\widehat{\mathbf X}^{\mathrm{init}},
\mathbf u^{\mathrm{init}},
\mathbf v^{\mathrm{init}}
\right)
=
\ours\!\left(
\mathcal S,
\mathcal D,
\widehat{\mathbf a}^{\mathrm{init}},
\widehat{\mathbf b}^{\mathrm{init}},
c;\varepsilon
\right).
\end{equation*}
The returned dual potentials initialize the FCFW active set as
\begin{equation*}
\mathcal A_0
=
\left\{
(\mathbf u^{\mathrm{init}},\mathbf v^{\mathrm{init}})
\right\},
\qquad
\boldsymbol\theta^{(0)}=(1).
\end{equation*}
Applying~\eqref{eq:app_uot_translation}--\eqref{eq:app_uot_common_mass} to $(\mathbf u^{\mathrm{init}},\mathbf v^{\mathrm{init}})$ gives the initial transported mass $\zeta_{\mathrm{init}}$.
The associated initial primal candidate is
\begin{equation*}
\mathbf X^{\mathrm{init}}
=
\zeta_{\mathrm{init}}
\widehat{\mathbf X}^{\mathrm{init}}.
\end{equation*}
The common mass $\xi_{\mathrm{init}}$ normalizes the supplied marginal estimates, whereas $\zeta_{\mathrm{init}}$ is determined by the resulting UOT dual state.

\subsection{Fully Corrective Frank--Wolfe Update}
Let
\[
\mathcal A_t
=
\left\{
(\mathbf u^{(k)},\mathbf v^{(k)})
\right\}_{k\in\mathcal K_t}
\]
denote the active set at outer iteration $t$.
For coefficients $\boldsymbol\theta^{(t)}\in\Delta_{|\mathcal K_t|}$, the current translation-invariant representatives are
\begin{equation*}
\overline{\mathbf f}^{(t)}
=
\sum_{k\in\mathcal K_t}
\theta_k^{(t)}\mathbf u^{(k)},
\qquad
\overline{\mathbf g}^{(t)}
=
\sum_{k\in\mathcal K_t}
\theta_k^{(t)}\mathbf v^{(k)}.
\end{equation*}
Every atom is dual feasible, so the representatives above are also dual feasible.
The balanced oracle call in~\eqref{eq:app_uot_oracle_call} produces a new atom $(\mathbf u^{\mathrm{new}},\mathbf v^{\mathrm{new}})$, which is appended to the active set.
FCFW updates all coefficients by solving
\begin{equation}
\boldsymbol\theta^{(t+1)}
\in
\arg\max_{
\boldsymbol\theta
\in
\Delta_{|\mathcal K_t|+1}
}
H_0\!\left(
\begin{aligned}
&\sum_{k\in\mathcal K_t}\theta_k\mathbf u^{(k)}
+\theta_{|\mathcal K_t|+1}\mathbf u^{\mathrm{new}},\\
&\sum_{k\in\mathcal K_t}\theta_k\mathbf v^{(k)}
+\theta_{|\mathcal K_t|+1}\mathbf v^{\mathrm{new}}
\end{aligned}
\right).
\label{eq:app_uot_fully_corrective_update}
\end{equation}
We initialize this correction problem with the ordinary Frank--Wolfe segment update.
Specifically, we compute
\begin{equation*}
\alpha_t
\in
\arg\max_{\alpha\in[0,1]}
H_0\!\left(
\begin{aligned}
&(1-\alpha)\overline{\mathbf f}^{(t)}+\alpha\mathbf u^{\mathrm{new}},\\
&(1-\alpha)\overline{\mathbf g}^{(t)}+\alpha\mathbf v^{\mathrm{new}}
\end{aligned}
\right)
\end{equation*}
and initialize the enlarged coefficient vector with
\[
\boldsymbol\theta_{\mathrm{init}}
=
\left(
(1-\alpha_t)\boldsymbol\theta^{(t)},
\alpha_t
\right).
\]
The segment search determines only the initial coefficients for~\eqref{eq:app_uot_fully_corrective_update}, while the corrected solution defines the next FCFW iterate.
After the correction, atoms with zero coefficients may be removed from the active set.

\subsection{Complete Algorithm}

Algorithm~\ref{alg:uot} summarizes the translation-invariant FCFW method with \ours as its balanced OT oracle.

\begin{algorithm}[t]
\caption{\oursuot}
\label{alg:uot}
\begin{algorithmic}[1]
\REQUIRE Supports $\mathcal S,\mathcal D$;
positive masses $\mathbf a,\mathbf b$;
cost $c$;
penalties $\rho_{\mathcal S},\rho_{\mathcal D}$;
equal-mass warm-start marginals
$\mathbf r_{\mathcal S}^{\mathrm{init}},
\mathbf r_{\mathcal D}^{\mathrm{init}}$;
outer tolerance $\tau_{\mathrm{out}}$, subproblem KKT tolerance $\varepsilon$;
maximum iterations $T_{\max}$
\medskip
\STATE $\xi_{\mathrm{init}}\gets\|\mathbf r_{\mathcal S}^{\mathrm{init}}\|_1$ and $(\widehat{\mathbf a}^{\mathrm{init}},\widehat{\mathbf b}^{\mathrm{init}})\gets(\mathbf r_{\mathcal S}^{\mathrm{init}},\mathbf r_{\mathcal D}^{\mathrm{init}})/\xi_{\mathrm{init}}$
\STATE $(\widehat{\mathbf X}^{\mathrm{init}},\mathbf u^{\mathrm{init}},\mathbf v^{\mathrm{init}})\gets\ours(\mathcal S,\mathcal D,\widehat{\mathbf a}^{\mathrm{init}},\widehat{\mathbf b}^{\mathrm{init}},c;\varepsilon)$
\STATE $(\zeta_{\mathrm{init}},\cdot,\cdot)\gets\reweight(\mathbf u^{\mathrm{init}},\mathbf v^{\mathrm{init}})$
\STATE $\mathcal A\gets[(\mathbf u^{\mathrm{init}},\mathbf v^{\mathrm{init}})]$ and $\boldsymbol\theta\gets(1)$
\STATE $\mathbf X^{\mathrm{best}}\gets\zeta_{\mathrm{init}}\widehat{\mathbf X}^{\mathrm{init}}$
\STATE $(P^{\mathrm{best}},D^{\mathrm{best}})\gets(\mathcal U(\mathbf X^{\mathrm{best}}),H_0(\mathbf u^{\mathrm{init}},\mathbf v^{\mathrm{init}}))$
\FOR{$t=0,\ldots,T_{\max}-1$}
    \STATE $(\overline{\mathbf f},\overline{\mathbf g})\gets\sum_{k=1}^{|\mathcal A|}\theta_k(\mathbf u^{(k)},\mathbf v^{(k)})$
    \STATE $(\zeta,\widehat{\mathbf a},\widehat{\mathbf b})\gets\reweight(\overline{\mathbf f},\overline{\mathbf g})$
    \STATE $(\widehat{\mathbf X},\mathbf u^{\mathrm{new}},\mathbf v^{\mathrm{new}})\gets\ours(\mathcal S,\mathcal D,\widehat{\mathbf a},\widehat{\mathbf b},c;\varepsilon)$
    \STATE $\mathbf X\gets\zeta\widehat{\mathbf X}$
    \IF{$\mathcal U(\mathbf X)<P^{\mathrm{best}}$}
        \STATE $(\mathbf X^{\mathrm{best}},P^{\mathrm{best}})\gets(\mathbf X,\mathcal U(\mathbf X))$
    \ENDIF
    \STATE $D^{\mathrm{best}}\gets\max\{D^{\mathrm{best}},H_0(\overline{\mathbf f},\overline{\mathbf g})\}$
    \STATE $\eta\gets(P^{\mathrm{best}}-D^{\mathrm{best}})/\max\{|P^{\mathrm{best}}|,|D^{\mathrm{best}}|,10^{-15}\}$
    \IF{$\eta\leq\tau_{\mathrm{out}}$}
        \STATE \textbf{return} $\mathbf X^{\mathrm{best}}$
    \ENDIF
    \STATE Append $(\mathbf u^{\mathrm{new}},\mathbf v^{\mathrm{new}})$ to $\mathcal A$
    \STATE $\displaystyle\boldsymbol\theta\gets\operatorname*{arg\,max}_{\boldsymbol\vartheta\in\Delta_{|\mathcal A|}}H_0\!\left(\sum_{k=1}^{|\mathcal A|}\vartheta_k\mathbf u^{(k)},\sum_{k=1}^{|\mathcal A|}\vartheta_k\mathbf v^{(k)}\right)$
\ENDFOR
\medskip
\STATE \textbf{return} $\mathbf X^{\mathrm{best}}$
\end{algorithmic}
\end{algorithm}

\subsection{Sinkhorn-Based Initialization}
In the experiments, we instantiate the abstract warm start using an entropy-regularized UOT coupling $\mathbf X_{\varepsilon}^{\mathrm{ent}}$ computed by Sinkhorn.
Its transported marginals are
\[
\mathbf r_{\mathcal S}^{\mathrm{init}}
=
\mathbf X_{\varepsilon}^{\mathrm{ent}}\mathbf1_n,
\qquad
\mathbf r_{\mathcal D}^{\mathrm{init}}
=
(\mathbf X_{\varepsilon}^{\mathrm{ent}})^\top\mathbf1_m.
\]
These marginals have the same total transported mass in exact arithmetic and satisfy~\eqref{eq:app_uot_init_equal_mass} up to the numerical tolerance of the Sinkhorn solve.
We normalize them using~\eqref{eq:app_uot_init_normalized_marginals} and solve the resulting balanced OT problem with \ours to obtain the initial FCFW atom.
This construction follows the marginal-first perspective that a KL-penalized UOT problem can be refined through its transported marginals and an associated balanced OT problem~\cite{liu2026solving}.
The entropy-regularized coupling is used only to initialize the FCFW active set, and all subsequent certificates are evaluated for the original unregularized UOT objective.

\section{Benchmark Construction and Evaluation Validity}
\label{app:dataset}
\subsection{Details of Synthetic Benchmark Construction}\label{app:brenier}

This subsection details the synthetic datasets used in Section~\ref{sec:experiments}, comprising high-dimensional Monge instances with ground-truth solutions and two-dimensional geometric instances for support visualization.

\subsubsection{Monge Benchmarks with Known Ground Truth}

Our benchmark generation relies on Brenier's theorem~\cite{brenier1991polar}, which establishes that for the squared Euclidean cost $c(x,y)=\|x-y\|_2^2$, the optimal transport map $T^\star$ from a continuous source distribution $\mu$ to target $\nu$ is uniquely characterized as the gradient of a strictly convex potential $\psi: \mathbb{R}^d \rightarrow \mathbb{R}$,
$$y = T^\star(x) = \nabla \psi(x).$$
We use the strictly convex quadratic potential
$$\psi(x)=\|x\|_2^2,\qquad \nabla\psi(x)=2x.$$
Its Hessian satisfies $\nabla^2\psi(x)=2\mathbf I_d\succ\mathbf 0$. 
Based on this potential, we construct instances with known optimal solutions via an inverse procedure:
\begin{enumerate}
        \item \textbf{Source Generation}: For each $i$, independently draw $z_i\sim\mathcal N(\mathbf 0,\mathbf I_2)$ and $\xi_i\sim\mathcal N(\mathbf 0,\mathbf I_d)$, and set
    $$s_{ij}=\sqrt{0.99}\,z_{i,1+((j-1)\bmod 2)}+\sqrt{0.01}\,\xi_{ij}.$$
    The independent noise makes the resulting Gaussian distribution nondegenerate.
    \item \textbf{Target Evaluation}: Set $d_i=\nabla\psi(s_i)=2s_i$.
    \item \textbf{Ground-Truth Coupling}: The pairs $(s_i,d_i)$ form the almost surely unique optimal Monge coupling, with ground-truth transport cost $\frac{1}{n}\sum_{i=1}^n\|s_i-d_i\|_2^2$.
        \item \textbf{Target Permutation}: We apply a random permutation $\pi$ to the target points $\{d_{\pi(i)}\}_{i=1}^n$, forcing the solver to localize the true correspondence without positional bias.
\end{enumerate}

\subsubsection{Two-Dimensional Distributions for Support Visualization}
\begin{figure}[!t]
    \centering
    \includegraphics[width=\linewidth]{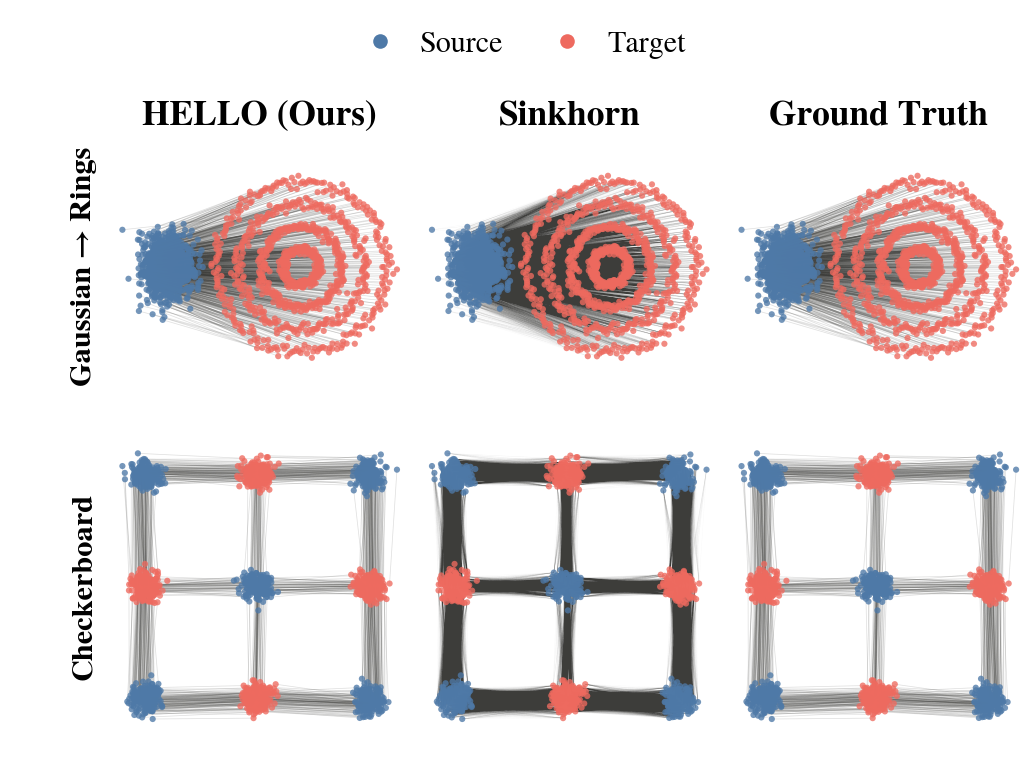} 
    \caption{\textbf{Visual comparison of transport supports on synthetic 2D instances.}
    Transport plans on Gaussian-to-Rings (top) and Checkerboard (bottom) instances ($n=1024, d=2$).
    Blue and red dots denote source and target points, respectively; gray lines indicate active transport pairs.
    \ours recovers the exact sparse ground-truth support, whereas \sinkhorn yields diffuse couplings under entropic regularization.}
    \label{fig:sparsity_vis}
\end{figure}
For the support-sparsity visualizations in Figure~\ref{fig:sparsity_vis}, we construct two 2D synthetic instances with $n=1024$:
\begin{itemize}
    \item \textbf{Gaussian-to-Rings}: The source distribution $\mathcal{S}$ is sampled from a 2D Gaussian $\mathcal{N}(\boldsymbol{\mu}, \boldsymbol{\Sigma})$ with mean $\boldsymbol{\mu}=[-6.0, 0]^\top$ and covariance $\boldsymbol{\Sigma}=\operatorname{diag}(0.3, 0.5)$. The target distribution $\mathcal{D}$ is evenly partitioned across $4$ concentric rings with nominal radii $r \in \{0.8, 1.8, 2.8, 3.8\}$. Points on the $k$-th ring have polar angle $\theta \sim \operatorname{Uniform}[0, 2\pi)$ and radius $r_k + \epsilon$, where $\epsilon \sim \mathcal{N}(0, \sigma_k^2)$ with radial noise $\sigma \in \{0.10, 0.12, 0.15, 0.15\}$.
    \item \textbf{Checkerboard}: Defined on a $3 \times 3$ spatial grid with spacing $5.0$ and cluster centers $\mathbf{c}_{ij} = [5i, 5j]^\top$ for $(i,j) \in \{-1, 0, 1\}^2$, each having isotropic covariance $\boldsymbol{\Sigma} = 0.1 \mathbf{I}_2$. The $n$ source points are partitioned uniformly across the $5$ clusters with $(i+j) \equiv 0 \pmod 2$ (the four corners and the center), while the $n$ target points are partitioned uniformly across the $4$ alternating clusters with $(i+j) \equiv 1 \pmod 2$.
\end{itemize}
As visualized in Figure~\ref{fig:sparsity_vis}, on both problems the supports returned by \ours and \emd coincide, whereas \sinkhorn produces diffuse couplings under entropic regularization.

\subsection{ImageNet Latent Feature Benchmarks}
\label{app:imagenet_setup}
To evaluate solver scalability and robustness on real-world representations, we construct benchmarks using latent features extracted from the ImageNet-1k training set (containing $1{,}281{,}167$ images).

\subsubsection{Feature Extraction and PCA Dimensionality Reduction}
Latent representations are extracted using the pre-trained VAE model weights released by~\cite{yao2025reconstruction}, yielding an original feature dimension of $8192$.
The raw latent vectors are standardized coordinate-wise using precomputed dataset-level empirical mean and standard deviation vectors.
To construct benchmarks across varying feature dimensions $d \in [2, 8192]$, we fit a PCA projection basis on $50{,}000$ randomly sampled standardized latent vectors and project the dataset onto the leading $d$ principal components.

\subsubsection{Alignment Configurations}
\label{app:imagenet_align_config}
We evaluate two alignment configurations:
\paragraph{Gaussian-to-ImageNet} The source support $\mathcal{S}$ is sampled from a standard Gaussian distribution $\mathcal{N}(\mathbf{0}, \mathbf{I}_d)$, while the target support $\mathcal{D}$ consists of $n$ projected ImageNet latent vectors. This configuration mirrors the noise-to-data transport setting in generative flow models and serves as our primary scalability benchmark in Section~\ref{sec:exp_imagenet}, Section~\ref{sec:exp_general_costs}, and Section~\ref{sec:exp_ablation}.

\paragraph{ImageNet-to-ImageNet} Both source $\mathcal{S}$ and target $\mathcal{D}$ are drawn from disjoint splits of projected ImageNet latent vectors. This setting is employed in the hierarchy ablation study of Figure~\ref{fig:hierarchy_ablation_ratios}.

\subsection{Evaluation Validity: Almost-Sure Uniqueness of the Quadratic-OT Reference}
\label{app:ot_uniqueness}

The following proposition establishes that the quadratic-OT reference used for support-recovery evaluation is almost surely unique under continuous source sampling.
\begin{proposition}[Almost-sure uniqueness of discrete quadratic OT]
\label{prop:as_unique_ot}
Let
\[
S=\{s_i\}_{i=1}^m \subset \mathbb{R}^d,
\qquad
D=\{d_j\}_{j=1}^n \subset \mathbb{R}^d,
\]
and let \(a\in \operatorname{int}(\Delta_m)\) and \(b\in \operatorname{int}(\Delta_n)\) be fixed marginal weights.
Assume that the target points \(d_1,\ldots,d_n\) are pairwise distinct and that the joint distribution of
\[
(s_1,\ldots,s_m)\in \mathbb{R}^{md}
\]
is absolutely continuous with respect to Lebesgue measure.
Consider the discrete optimal transport problem
\[
\min_{X\in U(a,b)}
\sum_{i=1}^m\sum_{j=1}^n
\|s_i-d_j\|_2^2 X_{ij}.
\]
Then, with probability one with respect to the draw of \(S\), the optimal transport plan is unique.

In particular, the conclusion holds when \(s_1,\ldots,s_m\) are independently sampled from any distribution admitting a density on \(\mathbb{R}^d\), including a nondegenerate Gaussian distribution.
\end{proposition}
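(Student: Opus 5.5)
We reduce uniqueness of the discrete OT plan to a genericity condition on the cost matrix: uniqueness fails only when two distinct vertices of $U(a,b)$ are both optimal. In that case we show that some simple bipartite cycle has zero alternating cost sum, and then that this event has probability zero.

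The first step is combinatorial and follows the argument of Proposition~\ref{prop:cost_perturbation_generic}. Suppose the optimal set is not a singleton. Since the feasible set is a polytope, the optimal face then contains two distinct optimal plans $X\neq X'$, and $H=X'-X$ is a nonzero circulation: it has zero row sums and zero column sums. Take any optimal dual pair $(f,g)$. By complementary slackness, every edge in $\operatorname{supp}(X)\cup\operatorname{supp}(X')$ satisfies $f_i+g_j=C_{ij}$. A nonzero bipartite circulation contains a simple cycle $\mathcal C$ in its support, and each edge of $\mathcal C$ is tight. Summing the tight equalities alternately around $\mathcal C$ cancels every potential and gives
\[
\sum_{e\in\mathcal C_+}C_e-\sum_{e\in\mathcal C_-}C_e=0 .
\]
Equivalently, by Proposition~\ref{prop:cost_perturbation_generic}, it suffices to show that the cost $C_{ij}=\|s_i-d_j\|_2^2$ is cycle-generic almost surely, since the full problem is a feasible restricted problem with $\mathcal N=[m]\times[n]$.

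The second step is the probabilistic part. There are finitely many simple cycles in $K_{m,n}$, so it suffices to show that for a fixed cycle the event of a zero alternating sum is Lebesgue-null in $\mathbb R^{md}$. Expanding the cost,
\[
\|s_i-d_j\|^2=\|s_i\|^2-2\langle s_i,d_j\rangle+\|d_j\|^2 .
\]
In a simple bipartite cycle each vertex is incident to exactly one $+$ edge and one $-$ edge, so all the $\|s_i\|^2$ and $\|d_j\|^2$ terms cancel. The alternating sum therefore equals
\[
-2\sum_{i\in\mathcal C}\langle s_i,\,d_{j_+(i)}-d_{j_-(i)}\rangle ,
\]
where $j_\pm(i)$ are the two targets adjacent to $i$ on the cycle. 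Because the cycle is simple, $j_+(i)\neq j_-(i)$, and pairwise distinctness of the targets gives $d_{j_+(i)}\neq d_{j_-(i)}$. Hence the alternating sum is a nonzero linear functional of $(s_1,\dots,s_m)$. Its zero set is a proper hyperplane in $\mathbb R^{md}$, which has Lebesgue measure zero. Absolute continuity of the law of $(s_1,\dots,s_m)$ then gives probability zero, and a union bound over the finitely many cycles finishes the proof. The i.i.d.\ density case follows because the product of densities is a joint density.

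The main obstacle is the cancellation check: the quadratic terms must vanish exactly, leaving a linear form that is genuinely nonzero. Both facts rest on two structural points. First, in a simple cycle every source appears with exactly two distinct neighbors of opposite sign. Second, the targets are distinct. It is this second point, and not $a,b$ being in the interior of the simplex, that guarantees a nonzero coefficient. The remaining steps are routine.
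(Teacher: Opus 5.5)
Your proof is correct and matches the paper's argument essentially step for step: the paper likewise expands the alternating cycle cost into the nonzero linear functional $2\sum_{\ell}\langle s_{i_\ell}, d_{j_{\ell-1}}-d_{j_\ell}\rangle$ and takes a finite union over simple cycles. It then derives a contradiction from two distinct optimal plans via complementary slackness and a simple cycle in the support of their difference. The only cosmetic difference is that you note the combinatorial step is an instance of Proposition~\ref{prop:cost_perturbation_generic} with $\mathcal N=[m]\times[n]$, whereas the paper reproves it inline.
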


\begin{proof}
Let
\[
C_{ij}:=\|s_i-d_j\|_2^2.
\]
We first show that, with probability one, no simple cycle in the complete bipartite graph \([m]\times[n]\) has zero alternating cost.

Consider an arbitrary simple bipartite cycle of length \(2k\), \(k\geq 2\), represented by distinct source indices \(i_1,\ldots,i_k\) and distinct target indices \(j_1,\ldots,j_k\).
With the convention \(i_{k+1}=i_1\), define its alternating cost by
\[
\Delta_C
:=
\sum_{\ell=1}^k
\left(
C_{i_\ell j_\ell}
-
C_{i_{\ell+1}j_\ell}
\right).
\]
Expanding the squared-Euclidean costs gives
\begin{align*}
\Delta_C
&=
\sum_{\ell=1}^k
\Bigl(
\|s_{i_\ell}-d_{j_\ell}\|_2^2
-
\|s_{i_{\ell+1}}-d_{j_\ell}\|_2^2
\Bigr) \\
&=
2\sum_{\ell=1}^k
\left\langle
s_{i_\ell},
d_{j_{\ell-1}}-d_{j_\ell}
\right\rangle,
\end{align*}
where \(j_0:=j_k\).
The quadratic terms in the source variables telescope, and the target-norm terms cancel.

Since the cycle is simple, \(j_{\ell-1}\neq j_\ell\) for every \(\ell\).
Because the target points are pairwise distinct,
\[
d_{j_{\ell-1}}-d_{j_\ell}\neq 0.
\]
Hence \(\Delta_C\) is a nonzero linear functional of the joint random vector \((s_1,\ldots,s_m)\).
Absolute continuity of its joint distribution therefore implies
\[
\mathbb{P}(\Delta_C=0)=0.
\]
There are only finitely many simple cycles in the finite complete bipartite graph.
Taking a finite union of probability-zero events shows that, with probability one,
\begin{equation}
\label{eq:no_zero_cycle}
\Delta_C\neq 0
\qquad
\text{for every simple bipartite cycle.}
\end{equation}

We now work on a realization satisfying \eqref{eq:no_zero_cycle}.
Suppose, for contradiction, that the OT problem admits two distinct optimal plans \(X\neq Y\).
Let \((f,g)\) be an optimal solution of the dual problem
\[
\max_{f,g}
\left\{
\langle a,f\rangle+\langle b,g\rangle:
f_i+g_j\leq C_{ij}
\right\}.
\]
By complementary slackness, every edge carrying positive mass in either optimal plan is dual tight:
\[
X_{ij}>0 \ \text{or}\ Y_{ij}>0
\quad\Longrightarrow\quad
f_i+g_j=C_{ij}.
\]

Set
\[
H:=X-Y.
\]
Since \(X\) and \(Y\) have the same prescribed marginals,
\[
H\mathbf{1}_n=0,
\qquad
H^\top\mathbf{1}_m=0,
\]
and \(H\neq 0\).
Consider the bipartite graph formed by the nonzero entries of \(H\).
At every vertex incident to an edge of this graph, the zero row- or column-sum condition implies that there must be at least one positive and one negative incident entry.
In particular, every such vertex has degree at least two.
Therefore the graph contains a simple cycle.

Every edge of this cycle belongs to \(\operatorname{supp}(X)\cup\operatorname{supp}(Y)\), and hence is dual tight.
Alternating the identities
\[
C_{ij}=f_i+g_j
\]
around the cycle cancels every source and target potential, yielding
\[
\Delta_C=0.
\]
This contradicts \eqref{eq:no_zero_cycle}.
Thus two distinct optimal transport plans cannot exist.

Therefore the discrete OT problem has a unique optimal transport plan with probability one.
\end{proof}

The next two remarks clarify the scope of the uniqueness result and its application to the Gaussian-to-ImageNet benchmark.

\begin{remark}[Marginal randomness is unnecessary]
\label{rem:marginal_randomness}
The almost-sure uniqueness in Proposition~\ref{prop:as_unique_ot} is induced by genericity of the squared-Euclidean cost matrix and does not require the marginal weights \(a\) and \(b\) themselves to be random.
Continuously sampled marginal weights can additionally remove degeneracies of the transportation polytope, such as equalities between nontrivial marginal subset sums, but they are not needed for the uniqueness conclusion above.
\end{remark}

\begin{remark}[Gaussian-to-ImageNet setting]
\label{rem:imagenet_unique_ot}
For the Gaussian-to-ImageNet experiments, the target ImageNet latent vectors are fixed while the source points are sampled independently from a standard Gaussian distribution.
Hence, provided that the retained target latent vectors are pairwise distinct, Proposition~\ref{prop:as_unique_ot} implies that the exact discrete OT plan is almost surely unique.
The corresponding optimal support therefore provides an unambiguous reference for support-recovery evaluation.
\end{remark}

\subsection{Cross-Domain Gromov--Wasserstein Benchmark (MNIST--Fashion-MNIST)}
\label{app:gw_dataset}

This benchmark evaluates Gromov--Wasserstein alignment across heterogeneous domains using MNIST and Fashion-MNIST.

\subsubsection{Data Sources and Sampling}
The source point cloud $\mathcal{S}$ is sampled from MNIST ($28 \times 28$ grayscale handwritten digits), and the target point cloud $\mathcal{D}$ is sampled from Fashion-MNIST ($28 \times 28$ grayscale clothing items).
Each dataset provides $60{,}000$ training and $10{,}000$ test images (totaling $70{,}000$ images per domain).
Images are flattened into $784$-dimensional vectors with pixel values scaled to $[0, 1]$.
For instances with $n \in \{2^{14}, 2^{15}, 2^{16}\}$, $n$ samples are drawn without replacement from the combined training and test pools.

\subsubsection{Dimensionality Reduction and Intra-Domain Metrics}
A PCA projection basis is fitted on $50{,}000$ randomly sampled images from the combined dataset to project both domains into a shared feature dimension $d_{\mathcal{S}} = d_{\mathcal{D}} = 32$.
For the resulting representations $\{s_i\}_{i=1}^n \subset \mathbb{R}^{32}$ and $\{d_j\}_{j=1}^n \subset \mathbb{R}^{32}$, the intra-domain squared Euclidean distance matrices $\mathbf{D}_{\mathcal{S}}, \mathbf{D}_{\mathcal{D}} \in \mathbb{R}^{n \times n}$ are defined by
$$(\mathbf{D}_{\mathcal{S}})_{ii^\prime} = \|s_i - s_{i^\prime}\|_2^2, \qquad (\mathbf{D}_{\mathcal{D}})_{jj^\prime} = \|d_j - d_{j^\prime}\|_2^2.$$

\subsection{Single-Cell Trajectory Benchmark for Unbalanced OT (TOME)}
\label{app:tome_dataset}

To evaluate unbalanced optimal transport across cell state transitions (Section~\ref{sec:generalization}), we construct instances from the Trajectories of Mouse Embryogenesis (TOME) single-cell dataset curated by~\cite{klein2025mapping}.
The source point cloud $\mathcal{S}$ and target point cloud $\mathcal{D}$ comprise single cells profiled at embryonic days E11.5 and E12.5, respectively.
Following the benchmark protocol in~\cite{klein2025mapping}, we select the top $2{,}000$ highly variable genes, project the normalized expressions onto $d=30$ principal components, and scale the feature coordinates such that the mean cross-stage squared Euclidean distance is normalized to one.
Instances with $n \in \{2^{15}, 2^{16}, 2^{17}\}$ are constructed by sampling cells without replacement from each stage.

\subsection{Flow Matching Protocols}
\label{app:fm_datasets}

We describe the training-pair construction and experimental protocols for the two tasks in Section~\ref{sec:exp_flow_matching}.
\begin{itemize}
    \item \textbf{Continuous-source generation (CIFAR-10).}
We follow the training protocol of~\cite{geng2026mean} on CIFAR-10, except that we disable image augmentation.
For \ourssdot, the averaged target potential $\overline{\mathbf g}_{m,R}$ assigns each sampled Gaussian source point $s$ to a target image $d_j$ minimizing $\|s-d_j\|_2^2-\overline g_{m,R,j}$, yielding a training pair $(s,d_j)$.
The \so baseline uses its estimated target potential for the same assignment, with all remaining training settings shared between the two methods.
For both methods, we report the EMA checkpoints at $168{,}000$ optimizer updates ($3{,}500$ epochs) and compute FID from $50{,}000$ samples generated with one function evaluation.
    \item \textbf{Discrete-source image translation (FFHQ).}
We follow the data preparation, training, and evaluation protocol of~\cite{kornilov2024optimal} for Adult-to-Children translation using $512$-dimensional ALAE latents.
\ours computes a global sparse OT coupling between the empirical source and target distributions, from which endpoint pairs are sampled according to their transported masses.
We compare it against two baselines: \minibatch, which solves an exact OT problem within each independently sampled mini-batch, and independent pairing, which samples the source and target endpoints independently.
Following the official implementation, we report the Fr\'echet distance in the $512$-dimensional ALAE latent space.
The target test split contains $1{,}819$ samples, fewer than the $2{,}048$ dimensions of the Inception features used by FID, so the corresponding empirical covariance is necessarily rank deficient.
By contrast, the $512$-dimensional ALAE representation does not impose this rank deficiency and directly evaluates the representation space used by the translation model.
Results are averaged over five random seeds, with variability reported as the sample standard deviation.
\end{itemize}

\section{Solvers and Implementation Details}
\label{app:solvers}

\subsection{Discrete Optimal Transport Solvers}
\label{app:solvers_discrete}

\paragraph{\ours}
We implement \ours in PyTorch with custom CUDA extensions for the fused dual-score scans described in Appendix~\ref{app:dual_score_evaluation}.
The internal restricted linear programs are solved in FP64 using \cupdlpx~\cite{lu2025cupdlpx} with a convergence tolerance of $10^{-6}$ across primal feasibility, dual feasibility, and relative objective gap.
All reported floating-point accuracy metrics are evaluated in FP64.
We fix the algorithmic hyperparameters to $\rho=0.25$ (sampling ratio, corresponding to $4$ partition blocks), $\kappa=16$ (dual-assignment budget per node), $\gamma=2.0$ (dual-violation detection factor), $\beta=10.0$ (budgeted pruning factor), $\tau=1024$ (coarsest-scale threshold), and $\varepsilon=10^{-6}$ (KKT tolerance).
The number of hierarchical levels $L$ is determined automatically based on the problem size (Appendix~\ref{app:hierarchy_details}).
We use the unperturbed warm start for the $\ell_2^2$ and $\ell_2$ instances.
For the $\ell_1$ and $\ell_\infty$ instances in Section~\ref{sec:exp_general_costs}, we apply the cost-perturbed warm start of Appendix~\ref{app:cost_perturbation} with $\eta=10^{-2}$, followed by active-support refinement under the original cost.

\paragraph{\hiref}
We run the official open-source implementation of the Hierarchical Refinement algorithm~\cite{halmos2025hierarchical} in FP32 through its complete rank-annealing schedule.
Following its ImageNet configuration, we use \texttt{hierarchy\_depth}=3, \texttt{max\_q}=2048, and \texttt{base\_rank}=1, while increasing \texttt{max\_rank} from 64 to 256 and using squared-Euclidean costs.
Empirically, larger \texttt{max\_rank} consistently improves solution accuracy across scales at the expense of a marginal increase in peak GPU memory, which does not alter the scalability conclusions.

\paragraph{\minibatch}
For the discrete-source Flow Matching baseline, we follow the exact mini-batch OT pairing of~\cite{kornilov2024optimal}.
At each training step, we sample $128$ source and $128$ target points without replacement, solve the resulting balanced OT problem exactly under the squared-Euclidean cost using the \emd network-simplex solver from \texttt{POT}~\cite{flamary2021pot}, and sample $128$ endpoint pairs from its transport plan.

\paragraph{\sinkhorn}
We implement \sinkhorn using the \texttt{ott-jax} library~\cite{cuturi2022optimal} with JIT compilation enabled on GPU.
Following the experimental setup of~\cite{mousavi2026flow}, the squared-Euclidean problem is solved via the equivalent negative dot-product cost $-\langle\mathbf{x},\mathbf{y}\rangle$, with the squared-norm terms absorbed into the dual potentials.
Unless otherwise specified, we set $\bar\varepsilon=\varepsilon/\operatorname{std}(\mathbf C_{\mathrm{negdot}})=10^{-3}$; Figure~\ref{fig:runtime_accuracy} sweeps $\bar\varepsilon\in\{10^{-1},3\times10^{-2},10^{-2},3\times10^{-3},10^{-3}\}$.
Iterations are executed in log-domain (\texttt{lse\_mode=True}) and terminated when the right-marginal $\ell_1$ error drops below $10^{-3}$, with a maximum of $50{,}000$ iterations.
Before timing, a single unmeasured solve is executed to complete JIT compilation.
We use FP64 throughout \sinkhorn to ensure stable implicit transport recovery and feasible rounding at small $\bar\varepsilon$.
The reported \objerr is computed after feasible rounding under the original unregularized squared-Euclidean cost, while \pfeas characterizes the raw returned coupling and \recall is computed from row-wise argmax assignments.

\paragraph{\ipot}
We implement \ipot~\cite{pmlr-v115-xie20b} in the log domain using the proximal-point recurrence in the original IPOT formulation.
Except for the proximal-parameter sweep described below, we follow the official defaults: FP32 arithmetic, one inner Bregman iteration, a stopping tolerance of $10^{-5}$, and at most $10{,}000$ outer iterations.
Following the original paper~\cite{pmlr-v115-xie20b}, we normalize the proximal parameter as $\bar\beta=\beta/\operatorname{std}(\mathbf C)$.
We use a finer sweep that augments its three settings with two intermediate values, evaluating $\bar\beta\in\{1,0.3,0.1,0.03,0.01\}$ in Figure~\ref{fig:runtime_accuracy}.

\paragraph{\mdot}
Following the low-memory scheme described in~\cite{kemertas2025truncated}, we extend the released \mdot implementation with PyKeOps to compute $\mathbf C$ and $\mathbf P$ on the fly.
Starting from $\gamma_i=16$, we sweep $\gamma_f\in\{2^6,2^9,2^{12},2^{15},2^{18}\}$, spanning the range considered in the original paper.
Following the released implementation's default precision rule, we use FP32 for $\gamma_f\leq2^{10}$ and FP64 otherwise.

\paragraph{\emd}
To obtain exact reference solutions on smaller-scale instances where dense LP solvers remain computationally tractable, we use \emd, the network-simplex solver provided by the Python Optimal Transport (\texttt{POT}) library~\cite{flamary2021pot}.
Computations are performed in double precision (FP64) with \texttt{numItermax}=$10^8$.

\paragraph{\halo}
We use the official implementation of \halo~\cite{xia2026memory} with its default parameters and \cupdlpx backend.

\paragraph{\batchcg}
For the internal component ablation in Section~\ref{sec:exp_component_ablation}, \batchcg implements a flat, single-level column-generation baseline without hierarchical warm start and without budgeted pruning.
Active supports are updated by appending the global top-$B$ dual-violating edges at each iteration, with the subproblems solved via \cupdlpx with tolerance $10^{-6}$.

\begin{remark}[Online baseline evaluation]
In Figure~\ref{fig:runtime_accuracy}, all three pairwise baselines avoid materializing dense cost or transport matrices: \sinkhorn uses the native online computation provided by \texttt{ott-jax}, while \ipot and \mdot use PyKeOps reductions.
Marker size increases with $1/\bar\varepsilon$, $1/\bar\beta$, and $\gamma_f$, respectively.
\end{remark}

\subsection{Optimal Transport Variants and Oracles}
\label{app:solvers_oracles}
\paragraph{Semi-Discrete OT}
For \ourssdot (Algorithm~\ref{alg:semidiscrete_dual_approx}), we draw $R$ independent source batches of size $m=4n$ and solve each discrete subproblem using \ours with its default hyperparameters ($\rho=0.25, \tau=1024, \kappa=16, \gamma=2.0, \beta=10.0$) and restricted-LP tolerance $10^{-6}$.
The returned target dual potentials are averaged across repetitions, centered, and converted to dot-product potentials $g_j^{\mathrm{dot}} = \frac{1}{2}g_j - \frac{1}{2}\|\mathbf{y}_j\|_2^2$ to assign freshly sampled source points via maximum inner-product search.
Following~\cite{mousavi2026flow}, we estimate population marginal errors on a set of $N_{\mathrm{eval}}$ fresh source samples, independent of all optimization batches.
Writing $K_j$ for the number assigned to target $j$, we report the unbiased estimator $\widehat{\Phi}=\sum_{j=1}^n K_j(K_j-1)/[N_{\mathrm{eval}}(N_{\mathrm{eval}}-1)b_j]-1$.
We use $N_{\mathrm{eval}}=100n$ for both the scaling experiments and the solver comparison.
For the stochastic subgradient baseline \so~\cite{mousavi2026flow}, we follow its official optimization protocol using AdaGrad ($\epsilon=10^{-8}$) for $300{,}000$ total steps with batch size $B\approx 0.0032n$ (rounded to the nearest power of $2$) and initial learning rate $\eta_0=\sqrt{n}$, where learning rate decay begins at step $200{,}000$ and iterate averaging starts at step $250{,}000$.

\paragraph{Gromov--Wasserstein}
For \oursgw proposed in Algorithm~\ref{alg:gw}, we instantiate the Frank--Wolfe outer loop with \ours as the discrete-OT linear oracle, using an outer convergence tolerance of $10^{-3}$ and a maximum of $100$ iterations.
Each inner linear OT subproblem is solved to tolerance $10^{-6}$.

For baseline solvers \sinkhorngw and \lotgw, both solves are implemented in \texttt{moscot}~\cite{klein2025mapping} (wrapping the underlying \texttt{ott-jax}~\cite{cuturi2022optimal} backend) and executed on GPU in FP32 with JIT compilation excluded from timings via an initial warmup pass:
\begin{itemize}
    \item \textbf{\sinkhorngw}~\cite{peyre2016gromov}: uses entropic regularization $\varepsilon = 10^{-3}$, outer tolerance $10^{-3}$, and up to $50$ inner Sinkhorn iterations.
    \item \textbf{\lotgw}~\cite{scetbon2022linear}: uses unregularized low-rank factorization ($\varepsilon=0.0$) with rank $r=256$, selected in ablation trials as an effective trade-off between runtime and objective accuracy.
\end{itemize}

\paragraph{Unbalanced OT}
For \oursuot (Algorithm~\ref{alg:uot}), we employ the translation-invariant Fully-Corrective Frank--Wolfe (FCFW) framework with marginal penalty parameters $\rho_{\mathcal S}=\rho_{\mathcal D}=1.0$, outer tolerance $10^{-4}$, and at most $100$ iterations.
Each linear subproblem is solved using \ours with inner tolerance $10^{-6}$, and the fully-corrective step is solved via L-BFGS-B with a maximum of $200$ iterations and function tolerance $10^{-12}$.
The initial transported marginals and target potential are obtained from a coarse Sinkhorn solve with $\varepsilon_{\mathrm{init}}=10^{-2}$.
The reported runtime includes this initialization.
For \sinkhornuot~\cite{chizat2018scaling}, we use the unbalanced log-domain Sinkhorn solver in OTT-JAX with $\rho_{\mathcal S}=\rho_{\mathcal D}=1.0$, entropic regularization $\varepsilon=10^{-3}$, stopping tolerance $10^{-3}$, and a maximum of $10^6$ iterations.

Across the three variants above, an eligible dual estimate warm-starts the next \ours call; calls without one use the full hierarchy.
Whether initialized through the full hierarchy or by a dual warm start from a related OT problem, refinement on $\mathcal P$ ensures that the returned primal--dual solution satisfies the prescribed full-problem KKT tolerance $\varepsilon$.
Algorithm~\ref{alg:related_dual_refinement} summarizes this shared implementation path.
The target cost $c$ determines refinement and certification, whereas $\widetilde c$ is used for dual completion and assignment.
Before invoking the algorithm, any missing side of the inherited dual estimate is completed by a $\widetilde c$-transform.

\begin{algorithm}[t]
\caption{Dual Warm Start and Refinement with a Proposal Cost}
\label{alg:related_dual_refinement}
\begin{algorithmic}[1]
\STATE {\bfseries Input:} target problem $\mathcal P$ with cost $c$; cost $\widetilde c$ for dual completion and assignment; two-sided dual estimate $(\mathbf f_0,\mathbf g_0)$; assignment budget $\kappa$; refinement parameters $(\beta,\gamma)$; tolerance $\varepsilon$.
\STATE {\bfseries Output:} primal--dual solution $\mathcal Z$ certified for $\mathcal P$.
\medskip

\STATE \textbf{1). Construct the initial active support.}
\STATE $\widetilde\sigma_{ij}\gets(f_0)_i+(g_0)_j-\widetilde c_{ij}$ for $(i,j)\in I\times J$
\STATE $\mathcal A\gets\{(i,j):i\in I,\ j\in\operatorname{argtop}^{\kappa}_{j'\in J}\{\widetilde\sigma_{ij'}\}\}$
\STATE $\mathcal A\gets\mathcal A\cup\{(i,j):j\in J,\ i\in\operatorname{argtop}^{\kappa}_{i'\in I}\{\widetilde\sigma_{i'j}\}\}$
\STATE $\mathcal B\gets\nw(\mathcal P)$
\STATE $\mathcal N\gets\mathcal A\cup\mathcal B$
\medskip

\STATE \textbf{2). Refine for $\mathcal P$.}
\STATE $\mathcal Z\gets\solve(\mathcal P,\mathcal N;\varepsilon)$
\WHILE{$\kkt(\mathcal P,\mathcal Z)>\varepsilon$}
    \STATE $\mathcal N\gets\update(\mathcal P,\mathcal Z,\mathcal N,\mathcal B;\beta,\gamma)$
    \STATE $\mathcal Z\gets\solve(\mathcal P,\mathcal N;\varepsilon)$
\ENDWHILE
\STATE \textbf{return} $\mathcal Z$.
\end{algorithmic}
\end{algorithm}

The three variants instantiate this path as follows:
\begin{itemize}
    \item \textbf{Semi-discrete OT.} The first batch uses the full hierarchy.
For each subsequent batch $r$, the target is $\mathcal P^{(r)}$, $\widetilde c=c^{(r)}$, and the inherited target potential is $\overline{\mathbf g}_{m,r-1}$.
The missing source potential is completed by a $c$-transform.
    \item \textbf{Gromov--Wasserstein.} The preceding dual pair is eligible when the GW relative objective reduction is below $10^{-2}$; otherwise, the call uses the full hierarchy.
For an eligible call, the target is $\mathcal P^{(t)}$ with cost $c^{(t)}$, and $\widetilde c=c^{(t-1)}$.
    \item \textbf{Unbalanced OT.} The preceding dual pair is eligible when the maximum total-variation change of the two normalized marginals is at most $10^{-2}$; otherwise, the call uses the full hierarchy.
For an eligible call, the target is the current balanced-OT subproblem $\mathcal P^{(t)}$ and $\widetilde c=c$.
The initial call instead inherits the Sinkhorn target potential and completes the source potential by a $c$-transform.
\end{itemize}

\section{Additional Experimental Results and Ablations}
\label{app:additional_experiments}

\subsection{Verification Across Batch Sizes for \ourssdot}
\label{app:sdot_across_bs}

To examine the theoretical predictions of Theorem~\ref{thm:informal_avg_dual_scaling} beyond the $m=4n$ configuration in Section~\ref{sec:exp_sdot}, we evaluate the empirical potential-averaging method across batch sizes $m \in \{1n, 2n, 4n, 8n\}$ on Gaussian-to-ImageNet instances ($n=2^{17}, d=32$).
Figure~\ref{fig:app_sdot} reports the empirical marginal errors $\Phi(\overline{\mathbf g}_{m,R})$ alongside power-law fits $\Phi \approx C_{\mathrm{fit}}^{-1} R^{-\alpha}$, where $\alpha$ is the decay exponent and $C_{\mathrm{fit}}$ is the inverse prefactor corresponding to the theoretical form $(n-1)/(mR)$.
Table~\ref{tab:app_sdot_multibatch_results} summarizes the fitted parameters against their theoretical counterparts, along with the coefficient of determination $R^2$.

\begin{figure}[!t]
    \centering
    \includegraphics[width=\linewidth]{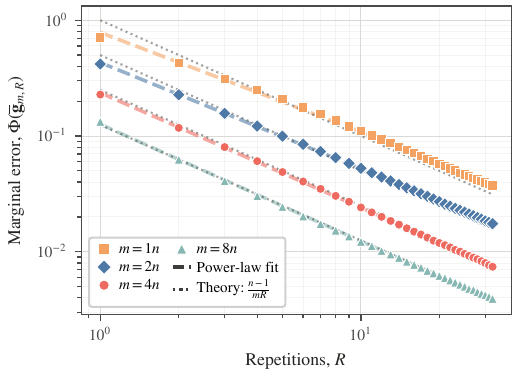}
    \caption{\textbf{Marginal error versus repetitions across batch sizes.}
    Population marginal error $\Phi(\overline{\mathbf g}_{m,R})$ versus repetitions $R$ on Gaussian-to-ImageNet instances ($n=2^{17}, d=32$) across batch ratios $m/n \in \{1, 2, 4, 8\}$.
    Dots show empirical errors, dashed lines denote power-law fits $\Phi \approx C_{\mathrm{fit}}^{-1} R^{-\alpha}$, and dotted lines denote theoretical predictions $(n-1)/(mR)$.
    Across all batch sizes, the empirical scaling closely aligns with $(n-1)/(mR)$, with decay exponent $\alpha$ converging to $1.000$ as $m$ increases (Table~\ref{tab:app_sdot_multibatch_results}).}
        \label{fig:app_sdot}
\end{figure}

\begin{table}[!t]
    \centering
        \caption{\textbf{Power-law fit parameters across batch sizes for semi-discrete OT.}
    Comparison of fitted parameters $\Phi(\overline{\mathbf g}_{m,R}) \approx C_{\mathrm{fit}}^{-1} R^{-\alpha}$ against theoretical asymptotic predictions on Gaussian-to-ImageNet instances ($n=2^{17}, d=32, R \le 32$).}
    \label{tab:app_sdot_multibatch_results}
    \small
    \setlength{\tabcolsep}{3pt}
    \renewcommand{\arraystretch}{1.15}
    \begin{tabular}{cccccc}
    \toprule
    \multirow{2}{*}{\textbf{$m/n$}}
    & \multicolumn{2}{c}{\textbf{Inverse Prefactor} $C$}
    & \multicolumn{2}{c}{\textbf{Decay Exponent} $\alpha$}
    & \multirow{2}{*}{\textbf{Fit} $R^2$} \\
    \cmidrule(lr){2-3} \cmidrule(lr){4-5}
    & Theory & Fit $C_{\mathrm{fit}}$ & Theory & Fit $\alpha$ & \\
    \midrule
    $1$ & $1.000$ & $1.230$ & $1.000$ & $0.882$ & $0.9976$ \\
    $2$ & $2.000$ & $2.289$ & $1.000$ & $0.927$ & $0.9997$ \\
    $4$ & $4.000$ & $4.206$ & $1.000$ & $0.997$ & $0.9998$ \\
    $8$ & $8.000$ & $8.060$ & $1.000$ & $0.999$ & $0.9996$ \\
    \bottomrule
    \end{tabular}
\end{table}

Across all evaluated batch sizes, the empirical marginal errors consistently follow power-law decays in $R$ ($R^2 \ge 0.997$).
As the batch size $m$ increases, both the decay exponent $\alpha$ and inverse prefactor $C_{\mathrm{fit}}$ systematically approach their theoretical predictions, consistent with the asymptotic guarantee of Theorem~\ref{thm:informal_avg_dual_scaling}.

\subsection{Full Scaling Results for Gromov--Wasserstein}
\label{app:gw_scaling}
We evaluate \oursgw (Algorithm~\ref{alg:gw}) on the MNIST--Fashion-MNIST benchmark ($d=32$) against GPU implementations of \sinkhorngw and \lotgw~\cite{scetbon2021low,scetbon2022linear,klein2025mapping}.

Table~\ref{tab:gw_benchmark_full} reports the final GW objective and end-to-end runtime across sample sizes $n \in \{2^{14}, 2^{15}, 2^{16}\}$.
The advantage of \oursgw becomes more pronounced as the sample size increases, ultimately achieving both the lowest GW objective and the shortest runtime on the largest instance.
At $n=2^{16}$, it reduces the objective by $10.3\%$ relative to \lotgw and runs $5.1\times$ faster than \sinkhorngw.
These results demonstrate that \ours serves as an accurate and scalable discrete-OT oracle within the evaluated GW framework.
\begin{table}[!t]
    \centering
        \caption{\textbf{Gromov--Wasserstein scaling on MNIST--Fashion-MNIST.}
    Final GW objective and end-to-end runtime across problem scales with source and target dimensions $d=32$, averaged over five random seeds.}
    \label{tab:gw_benchmark_full}
    \small
    \setlength{\tabcolsep}{5pt}
    \begin{tabular}{llccc}
    \toprule
    \multirow{2}{*}{\textbf{Metric}} 
    & \multirow{2}{*}{\textbf{Method}} 
    & \multicolumn{3}{c}{\textbf{Sample Size} $n$} \\
    \cmidrule(lr){3-5}
    & & $2^{14}$ & $2^{15}$ & $2^{16}$ \\
    \midrule
    \multirow{3}{*}{\obj~$\downarrow$}
      & \sinkhorngw & \textbf{2984.13} & 2966.57 & 2948.36 \\
      & \lotgw & 3288.99 & 3280.19 & 3280.10 \\
      & \textbf{\oursgw} & 2984.59 & \textbf{2952.46} & \textbf{2941.05} \\
    \midrule
    \multirow{3}{*}{\timesec~$\downarrow$}
      & \sinkhorngw & \textbf{19.95} & 77.89 & 296.52 \\
      & \lotgw & 21.91 & \textbf{34.51} & 65.96 \\
      & \textbf{\oursgw} & 23.78 & 36.28 & \textbf{58.12} \\
    \bottomrule
    \end{tabular}
\end{table}

\subsection{Additional Results for Unbalanced OT}
\label{app:uot_additional_experiments}
Throughout this subsection, we report the first FCFW checkpoint at which the running-best primal objective matches or improves upon the final \sinkhornuot primal objective; reported FCFW runtimes are cumulative and include initialization when used.

\subsubsection{Scaling Comparison}
We evaluate \oursuot on the TOME benchmark~\cite{klein2025mapping} with $\rho_{\mathcal S}=\rho_{\mathcal D}=1.0$ against \sinkhornuot implemented in \texttt{ott-jax} with $\varepsilon=10^{-3}$.
The \oursuot solver initializes FCFW using coarse Sinkhorn marginals computed at $\varepsilon_{\mathrm{init}}=10^{-2}$.

Table~\ref{tab:uot_benchmark_full} reports the mean relative optimality gap and cumulative runtime across sample sizes $n \in \{2^{15},2^{16},2^{17}\}$.
Across the evaluated scales, the selected \oursuot checkpoints attain lower optimality gaps, while the speedup over \sinkhornuot grows from $2.8\times$ at $n=2^{15}$ to $8.1\times$ at $n=2^{17}$.
\begin{table}[!t]
    \centering
    \caption{\textbf{UOT scaling on TOME.}
    Relative optimality gap and cumulative runtime for $d=30$, averaged over five random seeds.}
        \label{tab:uot_benchmark_full}
    \small
    \setlength{\tabcolsep}{3.5pt}
    \begin{tabular}{llccc}
        \toprule
        \multirow{2}{*}{\textbf{Metric}}
        & \multirow{2}{*}{\textbf{Method}}
        & \multicolumn{3}{c}{\textbf{Sample Size} $n$} \\
        \cmidrule(lr){3-5}
        & & $2^{15}$ & $2^{16}$ & $2^{17}$ \\
        \midrule
        \multirow{2}{*}{\gap~$\downarrow$}
        & \sinkhornuot
        & $1.36\mathrm{e}{-3}$
        & $1.62\mathrm{e}{-3}$
        & $1.93\mathrm{e}{-3}$ \\
        & \textbf{\oursuot}
        & $\mathbf{4.37\mathrm{e}{-4}}$
        & $\mathbf{4.54\mathrm{e}{-4}}$
        & $\mathbf{5.06\mathrm{e}{-4}}$ \\
        \midrule
        \multirow{2}{*}{\timesec~$\downarrow$}
        & \sinkhornuot
        & $25.96$
        & $88.10$
        & $324.83$ \\
        & \textbf{\oursuot}
        & $\mathbf{9.25}$
        & $\mathbf{15.67}$
        & $\mathbf{40.22}$ \\
        \bottomrule
    \end{tabular}
\end{table}
Applied to the coarse \sinkhornuot transported marginals, a single \ours call reduces the relative optimality gap by more than two orders of magnitude across all scales.

\subsubsection{High-Accuracy Comparison}
We next evaluate the high-accuracy regime at $n=2^{17}$ using \sinkhornuot with $\varepsilon=10^{-4}$.
We compare it with FCFW using \ours under both zero-potential and Sinkhorn-based warm starts ($\varepsilon_{\mathrm{init}}=10^{-2}$).
The Sinkhorn-initialized runtime includes its initial marginal computation.
\begin{table}[!t]
    \centering
        \caption{\textbf{High-accuracy UOT comparison at large scale.}
    Runtime and optimality gap on the TOME benchmark ($n=2^{17}, d=30$), evaluated against \sinkhornuot with $\varepsilon=10^{-4}$.}
    \label{tab:app_uot_high_accuracy}
    \small
    \setlength{\tabcolsep}{5pt}
    \begin{tabular}{lcc}
        \toprule
        \textbf{Method}
        & \gap~$\downarrow$
        & \timesec~$\downarrow$ \\
        \midrule
        \sinkhornuot
        & $1.31\mathrm{e}{-4}$
        & $3295.19$ \\
        \oursuot (zero init.)
        & $1.25\mathrm{e}{-4}$
        & $854.75$ \\
        \textbf{\oursuot (Sinkhorn init.)}
        & $\mathbf{4.98\mathrm{e}{-5}}$
        & $\mathbf{148.80}$ \\
        \bottomrule
    \end{tabular}
\end{table}
As reported in Table~\ref{tab:app_uot_high_accuracy}, \oursuot outperforms \sinkhornuot under both initialization strategies at $n=2^{17}$, achieving speedups of $3.9\times$ with zero initialization and $22.1\times$ with Sinkhorn initialization, while consistently attaining smaller optimality gaps.

\subsection{Rectangular OT with Non-Uniform Marginals}
\label{app:rectangular_ot}
To evaluate accuracy on asymmetric support sizes with non-uniform marginals, we construct rectangular instances with source sample size $m=2^{14}$ from Gaussian noise and target sample size $n=2^{15}$ from ImageNet latents, equipped with randomly sampled non-uniform marginal weights.
Table~\ref{tab:synthetic_exactness_n_neq_m} reports the numerical accuracy of \ours across feature dimensions evaluated against the exact network-simplex baseline \emd~\cite{flamary2021pot}.
Across all dimensions, \ours achieves relative objective errors and KKT residuals strictly below $10^{-6}$ while recovering over $99\%$ of the optimal support edges (precision exceeding $96.7\%$).

\begin{table}[!t]
    \centering
    \caption{\textbf{Rectangular OT with non-uniform marginals on ImageNet latents.}
Results are reported for \ours on the Gaussian-to-ImageNet benchmark with $m=2^{14}$ and $n=2^{15}$, averaged over five random seeds.
Reference is obtained by the exact network-simplex solver \emd~\cite{flamary2021pot}.}
            \label{tab:synthetic_exactness_n_neq_m}
    \small
    \renewcommand{\arraystretch}{1.15}
    \setlength{\tabcolsep}{8pt}
    \begin{tabular}{lccc}
    \toprule
    \textbf{Metric} & \textbf{$d=4$} & \textbf{$d=128$} & \textbf{$d=4096$} \\
    \midrule
    \objerr~$\downarrow$
        & $2.5\mathrm{e}{-7}$
        & $2.7\mathrm{e}{-7}$
        & $1.6\mathrm{e}{-7}$ \\
    \kkt~$\downarrow$
        & $4.7\mathrm{e}{-7}$
        & $9.3\mathrm{e}{-7}$
        & $7.8\mathrm{e}{-7}$ \\
    \recall~$\uparrow$
        & $0.991$
        & $0.998$
        & $0.997$ \\
    \bottomrule
    \end{tabular}
\end{table}

\subsection{Generality to \texorpdfstring{$\ell_\infty$}{L-Infinity} Stopping Criteria}
\label{app:linf_stopping}
The main experiments use relative $\ell_2$ stopping criteria, following common first-order LP solvers~\cite{applegate2021practical,lu2025cupdlpx,chen2026hpr}.
To verify that this choice is not essential to \ours, we rerun the rectangular non-uniform experiment of Table~\ref{tab:synthetic_exactness_n_neq_m} using relative $\ell_\infty$ criteria for the restricted LP solves and full dual-feasibility checks at the finest hierarchy level.
For the $\ell_\infty$ criterion, we replace the $\ell_2$ norms in Definition~\ref{def:app_kkt} by $\ell_\infty$ norms, except that primal feasibility is normalized without the additive constant:
\[
\mathrm{pfeas}_{\infty}(\mathbf{x})
:=
\frac{\|\mathbf{A}\mathbf{x}-\mathbf{q}\|_{\infty}}
{\|\mathbf{q}\|_{\infty}}.
\]
This prevents the primal-feasibility criterion from becoming vacuous when the marginal masses decrease with problem size.
We define $\mathrm{KKT}_\infty$ as the maximum of $\mathrm{pfeas}_\infty$, the resulting full $\ell_\infty$ dual-feasibility residual, and the relative primal--dual gap.
Table~\ref{tab:synthetic_exactness_n_neq_m_linf} shows that \ours retains high-accuracy objectives and support recovery while meeting the coordinatewise stopping tolerance.

\begin{table}[!t]
    \centering
    \caption{\textbf{Robustness to relative $\ell_\infty$ stopping.}
    Results are reported for \ours on the rectangular Gaussian-to-ImageNet benchmark with non-uniform marginals ($m=2^{14}$ and $n=2^{15}$), averaged over five random seeds.
    Reference is the exact \emd solution.}
        \label{tab:synthetic_exactness_n_neq_m_linf}
    \small
    \renewcommand{\arraystretch}{1.15}
    \setlength{\tabcolsep}{8pt}
    \begin{tabular}{lccc}
    \toprule
    \textbf{Metric} & \textbf{$d=4$} & \textbf{$d=128$} & \textbf{$d=4096$} \\
    \midrule
    \objerr~$\downarrow$
        & $1.1\mathrm{e}{-7}$
        & $1.3\mathrm{e}{-8}$
        & $1.5\mathrm{e}{-8}$ \\
    $\mathrm{KKT}_\infty$~$\downarrow$
        & $8.0\mathrm{e}{-7}$
        & $9.0\mathrm{e}{-7}$
        & $9.3\mathrm{e}{-7}$ \\
    \recall~$\uparrow$
        & $0.999$
        & $1.000$
        & $1.000$ \\
    \bottomrule
    \end{tabular}
\end{table}

To further verify that coordinatewise stopping generalizes across problem scales, we evaluate \ours with $\ell_\infty$ criteria across the full scalability benchmark grid of Section~\ref{sec:exp_imagenet} ($n \le 2^{20}, d \le 2048$).
Table~\ref{tab:scalability_bench_linf} reports the resulting $\mathrm{KKT}_\infty$ residuals, all of which remain below the prescribed tolerance of $10^{-6}$.
\begin{table}[!t]
    \centering
    \caption{\textbf{Large-scale $\mathrm{KKT}_\infty$ residuals under $\ell_\infty$ stopping.}
    Relative $\ell_\infty$ KKT residuals ($\mathrm{KKT}_\infty$) for \ours on the Gaussian-to-ImageNet benchmark, averaged over five random seeds.}
            \label{tab:scalability_bench_linf}
    \small
    \renewcommand{\arraystretch}{1.15}
    \setlength{\tabcolsep}{3.5pt}
    \begin{tabular}{lccccc}
    \toprule
    \multirow{2}{*}{\textbf{Dimension} $d$} & \multicolumn{5}{c}{\textbf{Sample Size} $n$} \\
    \cmidrule(lr){2-6}
     & $2^{16}$ & $2^{17}$ & $2^{18}$ & $2^{19}$ & $2^{20}$ \\
    \midrule
    $d=4$    & $8.5\mathrm{e}{-7}$ & $8.4\mathrm{e}{-7}$ & $7.8\mathrm{e}{-7}$ & $8.6\mathrm{e}{-7}$ & $8.0\mathrm{e}{-7}$ \\
    $d=32$   & $8.6\mathrm{e}{-7}$ & $8.5\mathrm{e}{-7}$ & $8.7\mathrm{e}{-7}$ & $8.5\mathrm{e}{-7}$ & $8.3\mathrm{e}{-7}$ \\
    $d=256$  & $9.0\mathrm{e}{-7}$ & $8.9\mathrm{e}{-7}$ & $8.9\mathrm{e}{-7}$ & $9.0\mathrm{e}{-7}$ & $8.4\mathrm{e}{-7}$ \\
    $d=2048$ & $9.4\mathrm{e}{-7}$ & $9.5\mathrm{e}{-7}$ & $9.4\mathrm{e}{-7}$ & $9.6\mathrm{e}{-7}$ & $9.4\mathrm{e}{-7}$ \\
    \bottomrule
    \end{tabular}
\end{table}

Finally, on the complete ImageNet-1k latent alignment problem with $m=n=1{,}281{,}167$ and $d=8192$, \ours converges under relative $\ell_\infty$ stopping in $4648$\,s using $41.6$\,GiB of peak GPU memory, with $\mathrm{KKT}_\infty=8.50\times10^{-7}$.
Results are averaged over five random seeds on a single H100 GPU.

\subsection{End-to-End Comparison with \halo}
\label{app:hello_vs_halo}

For completeness, we compare \ours with \halo~\cite{xia2026memory} on Gaussian-to-ImageNet instances ($n=m=2^{16}$) across dimensions $d\in\{4,8,12,16\}$.
We evaluate both the variant of \halo with capped shielding and the untruncated variant without per-node candidate limits.
Table~\ref{tab:hello_vs_halo} reports runtime and peak active-support size.
As dimension increases, \halo exhibits substantial active-support growth, leading to nonconvergence or out-of-memory behavior in several settings, whereas \ours maintains a bounded active support and converges across all evaluated dimensions.

\halo has demonstrated strong performance on geometric problems such as 2D image grids and 3D point clouds~\cite{xia2026memory}.
Its geometry-based hierarchy and shielding mechanisms, however, become increasingly costly as the feature dimension grows, as reflected by the support growth and computational failures above.
We therefore evaluate \halo separately in this appendix rather than include it in the scalability benchmarks of Section~\ref{sec:exp_imagenet}, which extend across dimensions up to $d=2048$.

\begin{table}[!t]
    \centering
    \caption{\textbf{End-to-end comparison between \ours and \halo across low-to-moderate dimensions.}
    Runtime and peak active support ratio $|\mathcal{N}|_{\max}/n$ on Gaussian-to-ImageNet instances ($n=m=2^{16}$), averaged over five random seeds.
    \halo (capped) restricts each source element to at most $30$ shielding candidates, whereas \halo (untrunc) retains all candidates without truncation.
    Entries marked $\dagger$ correspond to runs that reached the 100-iteration limit without satisfying the convergence tolerance; \OOM denotes out of memory.}
                    \label{tab:hello_vs_halo}
    \small
    \setlength{\tabcolsep}{2.5pt}
    \renewcommand{\arraystretch}{1.15}
    \begin{tabular}{ll | cccc}
    \toprule
    \multirow{2}{*}{\textbf{Metric}} & \multirow{2}{*}{\textbf{Method}} & \multicolumn{4}{c}{\textbf{Dimension} $d$} \\
     & & 4 & 8 & 12 & 16 \\
    \midrule
                    \multirow{3}{*}{\timesec~$\downarrow$}
      & \halo~(capped)   & 143.92 & $1122.12^{\dagger}$ & $6017.25^{\dagger}$ & \OOM \\
      & \halo~(untrunc)  & 21.09  & 109.60  & 476.17  & \OOM \\
      & \textbf{\ours}   & \textbf{3.86} & \textbf{3.55} & \textbf{4.73} & \textbf{5.68} \\
    \midrule
    \multirow{3}{*}{$|\mathcal{N}|_{\max} / n$~$\downarrow$}
      & \halo~(capped)   & $42.7$ & $293.6^{\dagger}$ & $229.3^{\dagger}$ & \OOM \\
      & \halo~(untrunc)  & $56.5$ & $416.7$ & $887.7$ & \OOM \\
      & \textbf{\ours}   & $\mathbf{23.1}$ & $\mathbf{24.5}$ & $\mathbf{24.9}$ & $\mathbf{25.0}$ \\
    \bottomrule
    \end{tabular}
\end{table}

\subsection{Leave-One-Out Component Ablation}
\label{app:ablation_leave_one_out}
We additionally ablate \ours by removing each component while keeping the others fixed.
Table~\ref{tab:ablation_leave_one_out} reports the resulting runtime, peak support, and convergence on the same Gaussian-to-ImageNet instances ($n=m=16384$, $d=8$, $\ell_1$ cost).
Consistent with the cumulative ablation of Section~\ref{sec:exp_component_ablation}, removing the hierarchy or the cost perturbation loses convergence and markedly increases runtime, and replacing nodewise selection with global top-$B$ is the most costly.
Removing budgeted pruning slightly lowers runtime at unchanged support and convergence, since the cleaning step is pure overhead when the support already stays within budget; pruning mainly provides the $\mathcal{O}(m+n)$ memory guarantee.

\begin{table}[!t]
    \centering
    \caption{\textbf{Leave-one-out component ablation.}
    Evaluation of \ours variants with individual components removed on Gaussian-to-ImageNet instances ($n=m=2^{14}, d=8$, $\ell_1$ cost).
    Metrics and convergence criteria follow Table~\ref{tab:ablation_cumulative}.}
    \label{tab:ablation_leave_one_out}
    \small
    \setlength{\tabcolsep}{6pt}
    \begin{tabular}{lccc}
        \toprule
        \textbf{Variant} & \timesec~$\downarrow$ & \suppmetric~$\downarrow$ & \conv \\
        \midrule
        \ours & $7.4$ & \textbf{11.9} & true \\
        $-$ Hierarchy & $34.2$ & $16.1$ & false \\
        $-$ Budgeted pruning & \textbf{7.0} & \textbf{11.9} & true \\
        $-$ Cost perturbation & $41.9$ & $15.1$ & false \\
        $-$ Nodewise selection & $72.4$ & $19.0$ & false \\
        \bottomrule
    \end{tabular}
\end{table}

\subsection{Robustness to LP Backends}\label{app:ablation_lp_solvers}
We evaluate the robustness of \ours to the choice of inner LP backend by comparing the default GPU solver \cupdlpx~\cite{lu2025cupdlpx} with the alternative first-order LP solver \hprlp~\cite{chen2026hpr}.
Table~\ref{tab:main_comparison_lpsolver} reports the transport objectives, runtimes, and peak GPU memories across dimensions on the Gaussian-to-ImageNet instances ($m=n=2^{18}$).
Across all tested settings, both solvers achieve identical objective values to the reported precision, with runtime and memory differing by at most $5.5\%$ and $11.8\%$, respectively.

\begin{table}[!t]
    \centering
    \caption{\textbf{LP-backend robustness.}
Comparison of \ours with \cupdlpx and \hprlp as the restricted LP backend on Gaussian-to-ImageNet instances ($m=n=2^{18}$), averaged over five random seeds.}
            \label{tab:main_comparison_lpsolver}
    \small
    \setlength{\tabcolsep}{6pt}
    \begin{tabular}{llccc}
        \toprule
        \textbf{Metric} & \textbf{Solver}
        & \textbf{$d=4$}
        & \textbf{$d=128$}
        & \textbf{$d=4096$} \\
        \midrule
        \multirow{2}{*}{\obj~$\downarrow$}
        & \cupdlpx
        & $8.97\mathrm{e}{2}$
        & $2.59\mathrm{e}{3}$
        & $8.08\mathrm{e}{3}$ \\
        & \hprlp
        & $8.97\mathrm{e}{2}$
        & $2.59\mathrm{e}{3}$
        & $8.08\mathrm{e}{3}$ \\
        \cmidrule(lr){1-5}
        \multirow{2}{*}{\timesec~$\downarrow$}
        & \cupdlpx
        & $\mathbf{11.74}$
        & $11.41$
        & $\mathbf{75.60}$ \\
        & \hprlp
        & $12.39$
        & $\mathbf{11.03}$
        & $76.13$ \\
        \cmidrule(lr){1-5}
        \multirow{2}{*}{\memgb~$\downarrow$}
        & \cupdlpx
        & $\mathbf{1.19}$
        & $\mathbf{1.33}$
        & $\mathbf{4.47}$ \\
        & \hprlp
        & $1.33$
        & $1.48$
        & $4.50$ \\
        \bottomrule
    \end{tabular}
\end{table}

\subsection{Full Parameter-Sensitivity Results Across Dimensions}\label{app:ablation_params_full}
We provide the complete sensitivity results across all tested feature dimensions in Table~\ref{tab:ablation_params_full}.

\begin{table}[!t]
    \centering
    \caption{\textbf{Full hyperparameter sensitivity analysis across feature dimensions.}
    Evaluation of \ours on Gaussian-to-ImageNet instances ($n=m=2^{18}$) across dimensions $d \in \{4, 32, 256, 2048\}$. 
    Each subtable varies one parameter while keeping the others at default values $(\rho=0.25, \kappa=16, \gamma=2, \beta=10.0)$.
    Bold and underlined values indicate the best and second-best performance within each group, respectively.}
    \label{tab:ablation_params_full}
                    \small
    \setlength{\tabcolsep}{3pt}
    \begin{tabular}{@{}l cc cc cc cc@{}}
        \toprule
        \multirow{2}{*}{\textbf{Variant}} & \multicolumn{2}{c}{$d=4$} & \multicolumn{2}{c}{$d=32$} & \multicolumn{2}{c}{$d=256$} & \multicolumn{2}{c}{$d=2048$} \\
        \cmidrule(lr){2-3} \cmidrule(lr){4-5} \cmidrule(lr){6-7} \cmidrule(lr){8-9}
        & \rtime & \mem & \rtime & \mem & \rtime & \mem & \rtime & \mem \\
        \midrule
        \multicolumn{9}{l}{\textit{(a) Hierarchy Sampling Ratio $\rho$ ($\kappa=16,\gamma=2,\beta=10.0$)}} \\
        $\rho = 0.5$ & 18.5 & \textbf{1.18} & \underline{14.2} & \textbf{1.23} & 18.9 & \textbf{1.45} & 66.4 & \textbf{2.33} \\
        $\rho = 0.25$ & \underline{12.8} & \underline{1.19} & \textbf{9.4} & \textbf{1.23} & \underline{13.1} & \underline{1.46} & \underline{41.8} & \underline{2.34} \\
        $\rho = 0.125$ & \textbf{10.0} & 1.20 & \textbf{9.4} & \underline{1.24} & \textbf{12.5} & \underline{1.46} & \textbf{36.4} & \underline{2.34} \\
        \midrule
        \multicolumn{9}{l}{\textit{(b) Dual-Assignment Budget $\kappa$ ($\rho=0.25,\gamma=2,\beta=10.0$)}} \\
        $\kappa = 8$ & \textbf{10.1} & \textbf{1.12} & \textbf{8.0} & \textbf{1.15} & \underline{14.1} & \textbf{1.38} & \underline{42.5} & \textbf{2.26} \\
        $\kappa = 16$ & \underline{12.8} & \underline{1.19} & \underline{9.4} & \underline{1.23} & \textbf{13.1} & \underline{1.46} & \textbf{41.8} & \underline{2.34} \\
        $\kappa = 32$ & 19.0 & 1.63 & 15.4 & 1.84 & 18.4 & 1.87 & 47.7 & 2.50 \\
        \midrule
        \multicolumn{9}{l}{\textit{(c) Detection Factor $\gamma$ ($\rho=0.25,\kappa=16,\beta=10.0$)}} \\
        $\gamma = 1$ & \textbf{11.9} & \textbf{1.18} & 13.2 & \textbf{1.22} & \textbf{13.0} & \textbf{1.45} & \underline{42.0} & \textbf{2.33} \\
        $\gamma = 2$ & \underline{12.8} & \underline{1.19} & \textbf{9.4} & \underline{1.23} & \underline{13.1} &       \underline{1.46} & \textbf{41.8} & \underline{2.34} \\
        $\gamma = 4$ & 12.9 & 1.21 & \underline{9.9} & 1.25 & \underline{13.1} & 1.47 & \underline{42.0} & 2.35 \\
        \midrule
        \multicolumn{9}{l}{\textit{(d) Budget Factor $\beta$ ($\rho=0.25,\kappa=16,\gamma=2$)}} \\
        $\beta = 5.0$ & \underline{12.6} & \textbf{1.19} & 10.1 & \textbf{1.23} & \textbf{13.1} & \textbf{1.46} & 42.0 & \textbf{2.34} \\
        $\beta = 10.0$ & 12.8 & \textbf{1.19} & \textbf{9.4} & \textbf{1.23} & \textbf{13.1} & \textbf{1.46} & \textbf{41.8} & \textbf{2.34} \\
        $\beta = 20.0$ & \textbf{12.2} & \textbf{1.19} & \underline{9.7} & \textbf{1.23} & \textbf{13.1} & \textbf{1.46} & \underline{41.9} & \textbf{2.34} \\
        \bottomrule
    \end{tabular}
\end{table}

    \ifPaperAppendixOnly
        \bibliographystyle{IEEEtran}
        \bibliography{references}

@book{bertsimas1997introduction,
  title     = {Introduction to linear optimization},
  author    = {Bertsimas, Dimitris and Tsitsiklis, John N},
  volume    = {6},
  year      = {1997},
  publisher = {Athena scientific Belmont, MA}
}

@article{gerber2017multiscale,
  title   = {Multiscale strategies for computing optimal transport},
  author  = {Gerber, Samuel and Maggioni, Mauro},
  journal = {Journal of Machine Learning Research},
  volume  = {18},
  number  = {72},
  pages   = {1--32},
  year    = {2017}
}

@article{liu2022multiscale,
  title     = {A multiscale semi-smooth Newton method for optimal transport},
  author    = {Liu, Yiyang and Wen, Zaiwen and Yin, Wotao},
  journal   = {Journal of Scientific Computing},
  volume    = {91},
  number    = {2},
  pages     = {39},
  year      = {2022},
  publisher = {Springer}
}

@article{lu2025cupdlpx,
  title   = {cuPDLPx: A Further Enhanced GPU-Based First-Order Solver for Linear Programming},
  author  = {Lu, Haihao and Peng, Zedong and Yang, Jinwen},
  journal = {arXiv preprint arXiv:2507.14051},
  year    = {2025}
}

@article{chen2026hpr,
  title     = {HPR-LP: An implementation of an HPR method for solving linear programming: K. Chen et al.},
  author    = {Chen, Kaihuang and Sun, Defeng and Yuan, Yancheng and Zhang, Guojun and Zhao, Xinyuan},
  journal   = {Mathematical Programming Computation},
  volume    = {18},
  number    = {1},
  pages     = {183--210},
  year      = {2026},
  publisher = {Springer}
}

@article{peyre2019computational,
  title     = {Computational optimal transport: With applications to data science},
  author    = {Peyr{\'e}, Gabriel and Cuturi, Marco and others},
  journal   = {Foundations and Trends{\textregistered} in Machine Learning},
  volume    = {11},
  number    = {5-6},
  pages     = {355--607},
  year      = {2019},
  publisher = {Now Publishers, Inc.}
}

@article{schmitzer2016sparse,
  title     = {A sparse multiscale algorithm for dense optimal transport},
  author    = {Schmitzer, Bernhard},
  journal   = {Journal of Mathematical Imaging and Vision},
  volume    = {56},
  number    = {2},
  pages     = {238--259},
  year      = {2016},
  publisher = {Springer}
}

@inproceedings{halmos2025hierarchical,
  title     = {Hierarchical Refinement: Optimal Transport to Infinity and Beyond},
  author    = {Peter Halmos and Julian Gold and Xinhao Liu and Benjamin Raphael},
  booktitle = {Forty-second International Conference on Machine Learning},
  year      = {2025},
  url       = {https://openreview.net/forum?id=EBNgREMoVD}
}

@article{cuturi2013sinkhorn,
  title   = {Sinkhorn distances: Lightspeed computation of optimal transport},
  author  = {Cuturi, Marco},
  journal = {Advances in neural information processing systems},
  volume  = {26},
  year    = {2013}
}

@article{cuturi2022optimal,
  title   = {Optimal transport tools (ott): A jax toolbox for all things wasserstein},
  author  = {Cuturi, Marco and Meng-Papaxanthos, Laetitia and Tian, Yingtao and Bunne, Charlotte and Davis, Geoff and Teboul, Olivier},
  journal = {arXiv preprint arXiv:2201.12324},
  year    = {2022}
}

@article{courty2016optimal,
  title     = {Optimal transport for domain adaptation},
  author    = {Courty, Nicolas and Flamary, R{\'e}mi and Tuia, Devis and Rakotomamonjy, Alain},
  journal   = {IEEE transactions on pattern analysis and machine intelligence},
  volume    = {39},
  number    = {9},
  pages     = {1853--1865},
  year      = {2016},
  publisher = {IEEE}
}

@article{schiebinger2019optimal,
  title     = {Optimal-transport analysis of single-cell gene expression identifies developmental trajectories in reprogramming},
  author    = {Schiebinger, Geoffrey and Shu, Jian and Tabaka, Marcin and Cleary, Brian and Subramanian, Vidya and Solomon, Aryeh and Gould, Joshua and Liu, Siyan and Lin, Stacie and Berube, Peter and others},
  journal   = {Cell},
  volume    = {176},
  number    = {4},
  pages     = {928--943},
  year      = {2019},
  publisher = {Elsevier}
}

@article{klein2025mapping,
  title     = {Mapping cells through time and space with moscot},
  author    = {Klein, Dominik and Palla, Giovanni and Lange, Marius and Klein, Michal and Piran, Zoe and Gander, Manuel and Meng-Papaxanthos, Laetitia and Sterr, Michael and Saber, Lama and Jing, Changying and others},
  journal   = {Nature},
  volume    = {638},
  number    = {8052},
  pages     = {1065--1075},
  year      = {2025},
  publisher = {Nature Publishing Group UK London}
}

@article{tong2023improving,
  title   = {Improving and generalizing flow-based generative models with minibatch optimal transport},
  author  = {Tong, Alexander and Fatras, Kilian and Malkin, Nikolay and Huguet, Guillaume and Zhang, Yanlei and Rector-Brooks, Jarrid and Wolf, Guy and Bengio, Yoshua},
  journal = {arXiv preprint arXiv:2302.00482},
  year    = {2023}
}

@inproceedings{pele2009fast,
  title        = {Fast and robust earth mover's distances},
  author       = {Pele, Ofir and Werman, Michael},
  booktitle    = {2009 IEEE 12th international conference on computer vision},
  pages        = {460--467},
  year         = {2009},
  organization = {IEEE}
}

@article{johnson2019billion,
  title     = {Billion-scale similarity search with GPUs},
  author    = {Johnson, Jeff and Douze, Matthijs and J{\'e}gou, Herv{\'e}},
  journal   = {IEEE Transactions on Big Data},
  volume    = {7},
  number    = {3},
  pages     = {535--547},
  year      = {2019},
  publisher = {IEEE}
}

@inproceedings{dvurechensky2018computational,
  title        = {Computational optimal transport: Complexity by accelerated gradient descent is better than by Sinkhorn’s algorithm},
  author       = {Dvurechensky, Pavel and Gasnikov, Alexander and Kroshnin, Alexey},
  booktitle    = {International conference on machine learning},
  pages        = {1367--1376},
  year         = {2018},
  organization = {PMLR}
}

@article{schmitzer2019stabilized,
  title     = {Stabilized sparse scaling algorithms for entropy regularized transport problems},
  author    = {Schmitzer, Bernhard},
  journal   = {SIAM Journal on Scientific Computing},
  volume    = {41},
  number    = {3},
  pages     = {A1443--A1481},
  year      = {2019},
  publisher = {SIAM}
}

@article{applegate2021practical,
  title   = {Practical large-scale linear programming using primal-dual hybrid gradient},
  author  = {Applegate, David and D{\'\i}az, Mateo and Hinder, Oliver and Lu, Haihao and Lubin, Miles and O'Donoghue, Brendan and Schudy, Warren},
  journal = {Advances in Neural Information Processing Systems},
  volume  = {34},
  pages   = {20243--20257},
  year    = {2021}
}

@inproceedings{xia2026memory,
  title     = {A Memory-Efficient Hierarchical Algorithm for Large-scale Optimal Transport Problems},
  author    = {Xia, Wenzhou and Zhu, Ya-Nan and Liang, Jingwei and Zhang, Xiaoqun},
  booktitle = {The Fourteenth International Conference on Learning Representations},
  year      = {2026}
}

@inproceedings{scetbon2021low,
  title        = {Low-rank sinkhorn factorization},
  author       = {Scetbon, Meyer and Cuturi, Marco and Peyr{\'e}, Gabriel},
  booktitle    = {International Conference on Machine Learning},
  pages        = {9344--9354},
  year         = {2021},
  organization = {PMLR}
}

@article{halmos2024low,
  title   = {Low-rank optimal transport through factor relaxation with latent coupling},
  author  = {Halmos, Peter and Liu, Xinhao and Gold, Julian and Raphael, Benjamin J},
  journal = {Advances in Neural Information Processing Systems},
  volume  = {37},
  pages   = {114374--114433},
  year    = {2024}
}

@article{flamary2021pot,
  title   = {Pot: Python optimal transport},
  author  = {Flamary, R{\'e}mi and Courty, Nicolas and Gramfort, Alexandre and Alaya, Mokhtar Z and Boisbunon, Aur{\'e}lie and Chambon, Stanislas and Chapel, Laetitia and Corenflos, Adrien and Fatras, Kilian and Fournier, Nemo and others},
  journal = {Journal of Machine Learning Research},
  volume  = {22},
  number  = {78},
  pages   = {1--8},
  year    = {2021}
}

@inproceedings{yao2025reconstruction,
  title     = {Reconstruction vs. generation: Taming optimization dilemma in latent diffusion models},
  author    = {Yao, Jingfeng and Yang, Bin and Wang, Xinggang},
  booktitle = {Proceedings of the Computer Vision and Pattern Recognition Conference},
  pages     = {15703--15712},
  year      = {2025}
}

@article{friesecke2025convergence,
  title   = {Convergence proof for the GenCol algorithm in the case of two-marginal optimal transport},
  author  = {Friesecke, Gero and Penka, Maximilian},
  journal = {Mathematics of Computation},
  volume  = {94},
  number  = {351},
  pages   = {263--275},
  year    = {2025}
}

@article{friesecke2022genetic,
  title     = {Genetic column generation: fast computation of high-dimensional multimarginal optimal transport problems},
  author    = {Friesecke, Gero and Schulz, Andreas S and V{\"o}gler, Daniela},
  journal   = {SIAM Journal on Scientific Computing},
  volume    = {44},
  number    = {3},
  pages     = {A1632--A1654},
  year      = {2022},
  publisher = {SIAM}
}

@article{brenier1991polar,
  title     = {Polar factorization and monotone rearrangement of vector-valued functions},
  author    = {Brenier, Yann},
  journal   = {Communications on pure and applied mathematics},
  volume    = {44},
  number    = {4},
  pages     = {375--417},
  year      = {1991},
  publisher = {Wiley Online Library}
}

@inproceedings{mousavi2026flow,
  title     = {Flow matching with semidiscrete couplings},
  author    = {Mousavi-Hosseini, Alireza and Zhang, Stephen and Klein, Michal and others},
  booktitle = {International Conference on Learning Representations},
  volume    = {2026},
  pages     = {95449--95486},
  year      = {2026}
}

@inproceedings{scetbon2022linear,
  title        = {Linear-time gromov wasserstein distances using low rank couplings and costs},
  author       = {Scetbon, Meyer and Peyr{\'e}, Gabriel and Cuturi, Marco},
  booktitle    = {International Conference on Machine Learning},
  pages        = {19347--19365},
  year         = {2022},
  organization = {PMLR}
}

@article{kornilov2024optimal,
  title   = {Optimal flow matching: Learning straight trajectories in just one step},
  author  = {Kornilov, Nikita and Mokrov, Petr and Gasnikov, Alexander and Korotin, Alexander},
  journal = {Advances in Neural Information Processing Systems},
  volume  = {37},
  pages   = {104180--104204},
  year    = {2024}
}

@article{kong2025alignflow,
  title   = {AlignFlow: Improving Flow-based Generative Models with Semi-Discrete Optimal Transport},
  author  = {Kong, Lingkai and Tao, Molei and Liu, Yang and Wang, Bryan and Fu, Jinmiao and Wang, Chien-Chih and Liu, Huidong},
  journal = {arXiv preprint arXiv:2510.15038},
  year    = {2025}
}

@article{divol2021short,
  title   = {A short proof on the rate of convergence of the empirical measure for the Wasserstein distance},
  author  = {Divol, Vincent},
  journal = {arXiv preprint arXiv:2101.08126},
  year    = {2021}
}

@inproceedings{borda2023empirical,
  title        = {Empirical measures and random walks on compact spaces in the quadratic Wasserstein metric},
  author       = {Borda, Bence},
  booktitle    = {Annales de l'Institut Henri Poincare (B) Probabilites et statistiques},
  volume       = {59},
  number       = {4},
  pages        = {2017--2035},
  year         = {2023},
  organization = {Institut Henri Poincar{\'e}}
}

@article{genevay2016stochastic,
  title   = {Stochastic optimization for large-scale optimal transport},
  author  = {Genevay, Aude and Cuturi, Marco and Peyr{\'e}, Gabriel and Bach, Francis},
  journal = {Advances in neural information processing systems},
  volume  = {29},
  year    = {2016}
}

@inproceedings{sejourne2022faster,
  title        = {Faster unbalanced optimal transport: Translation invariant sinkhorn and 1-d frank-wolfe},
  author       = {S{\'e}journ{\'e}, Thibault and Vialard, Fran{\c{c}}ois-Xavier and Peyr{\'e}, Gabriel},
  booktitle    = {International Conference on Artificial Intelligence and Statistics},
  pages        = {4995--5021},
  year         = {2022},
  organization = {PMLR}
}

@article{chizat2018scaling,
  title   = {Scaling algorithms for unbalanced optimal transport problems},
  author  = {Chizat, Lenaic and Peyr{\'e}, Gabriel and Schmitzer, Bernhard and Vialard, Fran{\c{c}}ois-Xavier},
  journal = {Mathematics of computation},
  volume  = {87},
  number  = {314},
  pages   = {2563--2609},
  year    = {2018}
}

@article{gine2006concentration,
  title   = {Concentration inequalities and asymptotic results for ratio type empirical processes},
  author  = {Gin{\'e}, Evarist and Koltchinskii, Vladimir},
  journal = {The Annals of Probability},
  volume  = {34},
  number  = {3},
  pages   = {1143--1216},
  year    = {2006}
}

@incollection{gine2000exponential,
  title     = {Exponential and moment inequalities for U-statistics},
  author    = {Gin{\'e}, Evarist and Lata{\l}a, Rafa{\l} and Zinn, Joel},
  booktitle = {High Dimensional Probability II},
  series    = {Progress in Probability},
  volume    = {47},
  pages     = {13--38},
  publisher = {Birkh{\"a}user},
  year      = {2000}
}

@book{vanderVaart1996weak,
  title     = {Weak Convergence and Empirical Processes: With Applications to Statistics},
  author    = {van der Vaart, Aad W. and Wellner, Jon A.},
  series    = {Springer Series in Statistics},
  publisher = {Springer},
  year      = {1996}
}

@article{hoeffding1963probability,
  title   = {Probability Inequalities for Sums of Bounded Random Variables},
  author  = {Hoeffding, Wassily},
  journal = {Journal of the American Statistical Association},
  volume  = {58},
  number  = {301},
  pages   = {13--30},
  year    = {1963}
}

@article{liu2026solving,
  title   = {Solving Discrete (Semi) Unbalanced Optimal Transport with Equivalent Transformation Mechanism and KKT-Multiplier Regularization},
  author  = {Liu, Weiming and Liao, Xinting and Dan, Jun and Wang, Fan and Yu, Hua and Dong, Junhao and Dong, Shunjie and Qi, Lianyong and Ong, Yew Soon},
  journal = {Advances in Neural Information Processing Systems},
  volume  = {38},
  pages   = {105330--105365},
  year    = {2026}
}

@article{peyre2025optimal,
  title   = {Optimal transport for machine learners},
  author  = {Peyr{\'e}, Gabriel},
  journal = {arXiv preprint arXiv:2505.06589},
  year    = {2025}
}

@article{lubbecke2005selected,
  title     = {Selected topics in column generation},
  author    = {L{\"u}bbecke, Marco E and Desrosiers, Jacques},
  journal   = {Operations research},
  volume    = {53},
  number    = {6},
  pages     = {1007--1023},
  year      = {2005},
  publisher = {INFORMS}
}

@article{lipman2022flow,
  title   = {Flow matching for generative modeling},
  author  = {Lipman, Yaron and Chen, Ricky TQ and Ben-Hamu, Heli and Nickel, Maximilian and Le, Matt},
  journal = {arXiv preprint arXiv:2210.02747},
  year    = {2022}
}

@article{geng2026mean,
  title   = {Mean flows for one-step generative modeling},
  author  = {Geng, Zhengyang and Deng, Mingyang and Bai, Xingjian and Kolter, Zico and He, Kaiming},
  journal = {Advances in Neural Information Processing Systems},
  volume  = {38},
  pages   = {75460--75482},
  year    = {2026}
}

@inproceedings{peyre2016gromov,
  title        = {Gromov-wasserstein averaging of kernel and distance matrices},
  author       = {Peyr{\'e}, Gabriel and Cuturi, Marco and Solomon, Justin},
  booktitle    = {International conference on machine learning},
  pages        = {2664--2672},
  year         = {2016},
  organization = {PMLR}
}

@article{altschuler2017near,
  title   = {Near-linear time approximation algorithms for optimal transport via Sinkhorn iteration},
  author  = {Altschuler, Jason and Niles-Weed, Jonathan and Rigollet, Philippe},
  journal = {Advances in neural information processing systems},
  volume  = {30},
  year    = {2017}
}

@inproceedings{pmlr-v115-xie20b,
  title     = {A Fast Proximal Point Method for Computing Exact Wasserstein Distance},
  author    = {Xie, Yujia and Wang, Xiangfeng and Wang, Ruijia and Zha, Hongyuan},
  booktitle = {Proceedings of The 35th Uncertainty in Artificial Intelligence Conference},
  pages     = {433--453},
  year      = {2020},
  editor    = {Adams, Ryan P. and Gogate, Vibhav},
  volume    = {115},
  series    = {Proceedings of Machine Learning Research},
  month     = {22--25 Jul},
  publisher = {PMLR},
  url       = {https://proceedings.mlr.press/v115/xie20b.html}
}

@inproceedings{kemertas2025truncated,
  title     = {A Truncated Newton Method for Optimal Transport},
  author    = {Kemertas, Mete and Farahmand, Amir-massoud and Jepson, Allan Douglas},
  booktitle = {The Thirteenth International Conference on Learning Representations},
  year      = {2025},
  url       = {https://openreview.net/forum?id=gWrWUaCbMa}
}

@article{zanetti2023interior,
  title     = {An interior point--inspired algorithm for linear programs arising in discrete optimal transport},
  author    = {Zanetti, Filippo and Gondzio, Jacek},
  journal   = {INFORMS Journal on Computing},
  volume    = {35},
  number    = {5},
  pages     = {1061--1078},
  year      = {2023},
  publisher = {INFORMS}
}
    \fi
\fi

\end{document}